\newcolumntype{C}[1]{>{\centering\let\newline\\\arraybackslash\hspace{0pt}}m{#1}}
\definecolor{white}{rgb}{1,1,1}
\definecolor{black}{rgb}{0,0,0}
\definecolor{grey}{rgb}{0.7,0.7,0.7}
\definecolor{dgrey}{rgb}{0.5,0.5,0.5}
\definecolor{lightgrey}{rgb}{0.88,0.88,0.88}
\definecolor{lgrey}{rgb}{0.9,0.9,0.9}
\definecolor{llgrey}{rgb}{0.93,0.93,0.93}
\definecolor{lllgrey}{rgb}{0.96,0.96,0.96}
\definecolor{tableHeadGray}{rgb}{0.85,0.85,0.85}
\definecolor{oddRowGrey}{rgb}{0.95,0.95,0.95}
\definecolor{evenRowGrey}{rgb}{0.85,0.85,0.85}
\definecolor{yellow}{rgb}{1.0, 1.0, 0.0}
\definecolor{lightyellow}{rgb}{1.0, 1.0, 0.88}
\definecolor{selectiveyellow}{rgb}{1.0, 0.73, 0.0}
\definecolor{shadered}{rgb}{1,0.85,0.85}
\definecolor{red}{rgb}{1,0,0}
\definecolor{shadegreen}{rgb}{0.95,1,0.95}
\definecolor{green}{rgb}{0,1,0}
\definecolor{darkgreen}{rgb}{0,0.3,0}
\definecolor{shadeblue}{rgb}{0.95,0.95,1}
\definecolor{blue}{rgb}{0,0,1}
\definecolor{darkblue}{rgb}{0,0,0.5}
\definecolor{darkpurple}{rgb}{0.4,0,0.4}
\definecolor{darkdarkpurple}{rgb}{0.3,0,0.3}
\definecolor{eqhighlightcolor}{rgb}{0.7,0,0.0}
\newcommand{\slurpit}[1]{}
\newcommand{\detailedproof}[2]{\ifbool{ShowDetailedProofs}{
\begin{proof}
#2
\end{proof}
}{\noindent\textit{Proof Sketch:}#1\qed}\smallskip}
\newrobustcmd{\ifnotdetailedproof}[1]{\ifbool{ShowDetailedProofs}{}{#1}}
\newrobustcmd{\ifnottechreport}[1]{\ifbool{ShowDetailedProofs}{}{#1}}
\newrobustcmd{\iftechreport}[1]{\ifbool{ShowDetailedProofs}{#1}{}}
\newcommand{\proofpar}[1]{\smallskip\noindent\underline{\emph{#1}}.\xspace}
\DeclareMathAlphabet{\mathbbold}{U}{bbold}{m}{n}
\newtheorem{definition}{Definition}[section]
\newtheorem{col}[definition]{Corollary}
\newtheorem{lem}[definition]{Lemma}
\newtheorem{theo}[definition]{Theorem}
\newtheorem{prop}[definition]{Proposition}
\newcommand{\algcall}[1]{\texttt{#1}}
\newcommand{\algfuncdeffont}[1]{\text{\upshape\texttt{#1}}}
\newcommand{\card}[1]{\vert\,{#1}\,\vert}
\DeclarePairedDelimiter\abs{\lvert}{\rvert}%
\DeclareMathAlphabet{\mathbbold}{U}{bbold}{m}{n}
\newcommand{\aDom}{\mathbb{D}}
\newcommand{\aggregation}[2]{%
  \IfEqCase{#1}{%
    {}{\gamma_{{#2}}}%
    }[\gamma_{{#1},{#2}}]%
  }
\newcommand{\powerset}[1]{\ensuremath{\mathcal{P}(#1)}\xspace}
\newcommand{\R}{\ensuremath{\mathbb{R}}\xspace}
\mathchardef\mhyphen="2D
\newcommand{\LinearSymbolicDomain}{\Lambda}
\newcommand{\PredRange}{V}
\newcommand{\aMatrix}{\ensuremath{\boldsymbol{M}}\xspace}
\newcommand{\Data}{\ensuremath{\boldsymbol{D}}\xspace}
\newcommand{\dtrain}{\ensuremath{\Data}\xspace}
\newcommand{\Xtest}{\ensuremath{{\mathbf{X}_{\text{test}}}}\xspace}
\newcommand{\Xtrain}{\ensuremath{\boldsymbol{X}}}
\newcommand{\feature}{\ensuremath{\mathcal{X}}\xspace}
\newcommand{\dpx}{\ensuremath{\boldsymbol{x}}\xspace}
\newcommand{\ytrain}{\ensuremath{\boldsymbol{y}}}
\newcommand{\ypred}{\ensuremath{\hat{\ytrain}}\xspace}
\newcommand{\y}{\ensuremath{y}}
\newcommand{\model}{\ensuremath{f}\xspace}
\newcommand{\ModelWeight}{\ensuremath{\boldsymbol{w}}}
\newcommand{\ModelWeighti}[1]{\ensuremath{{\ModelWeight}^{#1}}}
\newcommand{\ModelWeightij}[2]{\ensuremath{{\ModelWeight}_{#1}^{#2}}}
\newcommand{\ModelWeightOptimal}{\ModelWeight^{\ast}}
\newcommand{\ModelWeightOptimali}[1]{\ModelWeight_{#1}^{\ast}}
\newcommand{\TrainingLossFunc}{\ensuremath{L}\xspace}
\newcommand{\TrainingLoss}{\TrainingLossFunc(\Xtrain,\ytrain,\ModelWeight)}
\newcommand{\LearningRate}{\ensuremath{\eta}\xspace}
\newcommand{\RegularizationCoef}{\ensuremath{\lambda}\xspace}
\newcommand{\RCmin}{\ensuremath{\beta}\xspace}
\newcommand{\IdentityMat}{\ensuremath{I}}
\newcommand{\ErrSet}{\mathcal{S}}
\newcommand{\IndexSelected}{\delta_s}
\newcommand{\IndexKept}{\delta_k}
\newcommand{\func}{\ensuremath{F}\xspace}
\newcommand{\compose}{\ensuremath{\circ}\xspace}
\newcommand{\learner}{\ensuremath{\mathcal{A}}\xspace}
\newcommand{\possSymbol}{\odot}
\newcommand{\makePoss}[1]{\ensuremath{{{#1}^{\possSymbol}}\xspace}}
\newcommand{\makePossSupscript}[2]{\ensuremath{{{#1}^{\possSymbol{#2}}}\xspace}}
\newcommand{\dtrainPoss}{\ensuremath{\makePoss{\dtrain}}\xspace}
\newcommand{\ytrainPoss}{\ensuremath{\makePoss{\ytrain}}\xspace}
\newcommand{\XtestPoss}{\ensuremath{\makePoss{\Xtest}}\xspace}
\newcommand{\funcLinearization}[1]{\linearize(#1)}
\newcommand{\funcFixpoint}{\ensuremath{\Gamma}\xspace}
\newcommand{\funcFixpointSplit}{\ensuremath{\funcFixpoint_{\splitting_{\mu},\join}}\xspace}
\newcommand{\ModelWeightFixedPointPoss}{\ensuremath{\makePossSupscript{\ModelWeight}{*}}\xspace}
\newcommand{\sys}{\textsc{Zorro}\xspace}
\newcommand{\funcOrderReduction}{\OrderReduction}
\newcommand{\ConcreteTransFunc}{\Phi}
\newcommand{\evardom}{\ensuremath{\mathcal{E}}\xspace}
\newcommand{\evardomD}{\ensuremath{\evardom_{\symbolicD}}\xspace}
\newcommand{\evardomN}{\ensuremath{\evardom_{\symbolicND}}\xspace}
\newcommand{\evar}{\ensuremath{\epsilon}\xspace}
\newcommand{\abstractScalar}{\ensuremath{\psi}\xspace}
\newcommand{\abstractDom}{\ensuremath{\aDom^{\sharp }}\xspace}
\newcommand{\abstractScalarDom}{\ensuremath{\Psi}\xspace}
\newcommand{\assign}{\ensuremath{e}\xspace}
\newcommand{\monomialnum}{\ensuremath{\mathcal{\#M}}\xspace}
\newcommand{\aSet}{\ensuremath{S}\xspace}
\newcommand{\concretizeSymbol}{\ensuremath{\gamma}\xspace}
\newcommand{\concretization}[1]{\ensuremath{\concretizeSymbol\left(#1\right)}\xspace}
\newcommand{\abstractSymbol}{\ensuremath{\alpha}\xspace}
\newcommand{\abstraction}[1]{\ensuremath{\abstractSymbol(#1)}\xspace}
\newcommand{\eqA}{\ensuremath{\simeq_{\sharp}}\xspace}
\newcommand{\makeAbstract}[1]{{{#1}^{\sharp}}}
\newcommand{\makeAbstractSubscript}[2]{{{#1}_{#2}^{\sharp}}}
\newcommand{\makeAbstractSuperscriptOutside}[2]{{{#1}^{\sharp\,{#2}}}}
\newcommand{\makeAbstractSubscriptSuperscript}[3]{{{#1}_{#2}^{{\sharp}\,#3}}}
\newcommand{\makeAbstractSubscriptSuperscriptOutside}[3]{{{#1}_{#2}^{{\sharp}\,{#3}}}}
\newcommand{\AbstractModelWeight}{\ensuremath{\makeAbstract{\boldsymbol{w}}}\xspace}
\newcommand{\AbstractModelWeighti}[1]{\ensuremath{\makeAbstractSuperscriptOutside{\boldsymbol{w}}{#1}}\xspace}
\newcommand{\AbstractModelWeightFixedPoint}{\ensuremath{\makeAbstractSubscriptSuperscriptOutside{\boldsymbol{w}}{}{*}}\xspace}
\newcommand{\SymbolicModelWeightFixedPoint}{\ensuremath{\makeAbstractSubscriptSuperscriptOutside{\boldsymbol{w}}{S}{*}}\xspace}
\newcommand{\RealModelWeightFixedPoint}{\ensuremath{\boldsymbol{w}_R^*}\xspace}
\newcommand{\symbolicND}{\ensuremath{N}\xspace}
\newcommand{\symbolicD}{\ensuremath{D}\xspace}
\newcommand{\SymbolicModelWeightFixedPointND}{\ensuremath{\makeAbstractSubscriptSuperscriptOutside{\boldsymbol{w}}{\symbolicND}{*}}\xspace}
\newcommand{\SymbolicModelWeightFixedPointD}{\ensuremath{\makeAbstractSubscriptSuperscriptOutside{\boldsymbol{w}}{\symbolicD}{*}}\xspace}
\newcommand{\RealModelWeight}{\ensuremath{\textcolor{darkgreen}{\boldsymbol{w}_R}}\xspace}
\newcommand{\RealModelWeighti}[1]{\ensuremath{\boldsymbol{w}_R^{#1}}\xspace}
\newcommand{\SymbolicModelWeighti}[1]{\makeAbstractSubscriptSuperscript{\boldsymbol{w}}{S}{#1}}
\newcommand{\SymbolicModelWeightD}{\ensuremath{\textcolor{darkpurple}{\makeAbstractSubscript{\boldsymbol{w}}{\symbolicD}}}\xspace}
\newcommand{\SymbolicModelWeightND}{\ensuremath{\textcolor{darkblue}{\makeAbstractSubscript{\boldsymbol{w}}{\symbolicND}}}\xspace}
\newcommand{\SymbolicModelWeightDi}[1]{\ensuremath{\makeAbstractSubscriptSuperscript{\boldsymbol{w}}{\symbolicD}{#1}}\xspace}
\newcommand{\SymbolicModelWeightNDi}[1]{\ensuremath{\makeAbstractSubscriptSuperscript{\boldsymbol{w}}{\symbolicND}{#1}}\xspace}
\newcommand{\DummylinearZonotope}{\ensuremath{\makeAbstract{\boldsymbol{\ell}}}\xspace}
\newcommand{\DummylinearZonotopeNum}[1]{\ensuremath{\makeAbstractSubscript{\boldsymbol{\ell}}{#1}}\xspace}
\newcommand{\zcenter}{\ensuremath{\boldsymbol{c}}\xspace}
\newcommand{\DummyBox}{\ensuremath{\makeAbstract{\boldsymbol{b}}}\xspace}
\newcommand{\AbstractDtrain}{\ensuremath{\makeAbstract{\mathbf{\dtrain}}}\xspace}
\newcommand{\abstractDtrain}{\ensuremath{\AbstractDtrain}\xspace}
\newcommand{\AbstractXtrain}{\ensuremath{\makeAbstract{\boldsymbol{X}}}\xspace}
\newcommand{\Abstractytrain}{\ensuremath{\makeAbstract{\boldsymbol{y}}}\xspace}
\newcommand{\RealXtrain}{{{\boldsymbol{X}}_{R}}}
\newcommand{\Realytrain}{{{\boldsymbol{y}}_{R}}}
\newcommand{\SymbolicXtrain}{\makeAbstractSubscript{\boldsymbol{X}}{S}}
\newcommand{\Symbolicytrain}{\makeAbstractSubscript{\boldsymbol{y}}{S}}
\newcommand{\DummyZonotope}{\ensuremath{\makeAbstract{\boldsymbol{z}}}\xspace}
\newcommand{\DummyZonotopeNum}[1]{\ensuremath{\makeAbstractSubscript{\boldsymbol{z}}{#1}}\xspace}
\newcommand{\order}[1]{\ensuremath{\textsc{ord}(#1)}\xspace}
\newcommand{\funcAbstractTrans}{\ensuremath{\Phi^\sharp}\xspace}
\newcommand{\exactAbstractTrans}{\ensuremath{\Phi_{exact}^{\sharp}}\xspace}
\newcommand{\LRfuncAbstractTrans}[2]{\Phi_{(#1, #2)}^\sharp}
\newcommand{\fpeqs}{\Xi}
\newcommand{\extLinearfuncAbstractTrans}{\ensuremath{\LRfuncAbstractTrans{\linearize}{\OrderReduction}}\xspace}
\newcommand{\LinearfuncAbstractTrans}{{\funcAbstractTrans}}
\newcommand{\OurLinearfuncAbstractTrans}{\ensuremath{\LRfuncAbstractTrans{\linearize}{\funcOrderReductionND}}\xspace}
\newcommand{\FullGradientR}{\ensuremath{\ConcreteTransFunc_R(\RealModelWeight )}\xspace}
\newcommand{\FullGradientD}{\ensuremath{\LinearfuncAbstractTrans_{D}(\RealModelWeight , \SymbolicModelWeightD  )}\xspace}
\newcommand{\FullGradientND}{\ensuremath{\LinearfuncAbstractTrans_N(\RealModelWeight , \SymbolicModelWeightD , \SymbolicModelWeightND )}\xspace}
\newcommand{\symbolicLinear}{\ensuremath{L}\xspace}
\newcommand{\gradComponentReal}{\ensuremath{\mathcal{G}_R}\xspace}
\newcommand{\gradComponentLinearData}{\ensuremath{\mathcal{G}_{\symbolicLinear}^{\symbolicD}\xspace}}
\newcommand{\gradComponentLinearNoData}{\ensuremath{\mathcal{G}_{\symbolicLinear}^{\symbolicND}\xspace}}
\newcommand{\gradComponentHigh}{\ensuremath{\mathcal{G}_H}\xspace}
\newcommand{\A}{\ensuremath{\boldsymbol{A}}\xspace}
\newcommand{\ael}{\ensuremath{q}\xspace}
\newcommand{\C}{\ensuremath{\boldsymbol{C}}\xspace}
\newcommand{\cz}{\ensuremath{\boldsymbol{c}_0}\xspace}
\newcommand{\boldc}{\ensuremath{\boldsymbol{c}}\xspace}
\newcommand{\cpentry}[3]{\ensuremath{c^{#1}_{#2,#3}}\xspace}
\newcommand{\funcOrderReductionIH}{\ensuremath{\funcOrderReduction_{IH}}\xspace}
\newcommand{\funcOrderReductionProjected}{\ensuremath{\funcOrderReduction_{TIH}}\xspace}
\newcommand{\funcOrderReductionND}{\ensuremath{\funcOrderReduction_{\A}}\xspace}
\newcommand{\linearize}{\ensuremath{\mathbf{L}}\xspace}
\newcommand{\OrderReduction}{\ensuremath{\mathbf{R}}\xspace}
\newcommand{\splitting}{\ensuremath{\mathbf{S}}\xspace}
\newcommand{\musplitting}{\ensuremath{\splitting_{\mu}}\xspace}
\newcommand{\join}{\ensuremath{\sqcup}\xspace}
\newcommand{\bigjoin}{\ensuremath{\bigsqcup}\xspace}
\newcommand{\nphard}{NP-hard\xspace}
\DeclareMathSymbol{\mlq}{\mathord}{operators}{``}
\DeclareMathSymbol{\mrq}{\mathord}{operators}{`'}
\Crefname{section}{Sec.}{Sec.}
\Crefname{figure}{Fig.}{Fig.}
\Crefname{table}{Tab.}{Tab.}
\Crefname{equation}{Eq.}{Eq.}
\Crefname{lem}{Lem.}{Lem.}
\Crefname{theo}{Thm.}{Thm.}
\Crefname{definition}{Def.}{Def.}
\Crefname{example}{Ex.}{Ex.}
\Crefname{prop}{Prop.}{Prop.}
\Crefname{corollary}{Cor.}{Cor.}
\Crefname{appendix}{App.}{App.}
\Crefname{algocf}{Alg.}{Alg.}
\Crefname{algoline}{Line}{Lines}
\crefname{section}{sec.}{sec.}
\crefname{figure}{fig.}{fig.}
\crefname{table}{tab.}{tab.}
\crefname{equation}{eq.}{eq.}
\crefname{lem}{lem.}{lem.}
\crefname{theo}{thm.}{thm.}
\crefname{definition}{def.}{def.}
\crefname{example}{ex.}{ex.}
\crefname{prop}{prop.}{prop.}
\crefname{corollary}{cor.}{cor.}
\title{Learning from Uncertain Data: From Possible Worlds to Possible Models}
\author[1]{Jiongli Zhu}
\author[2]{Su Feng}
\author[3]{Boris Glavic}
\author[1]{Babak Salimi}
\affil[1]{University of California, San Diego}
\affil[2]{Nanjing Tech University}
\affil[3]{University of Illinois Chicago}
\affil[ ]{\texttt{jiz143@ucsd.edu, fengsu@me.com, bglavic@uic.edu, bsalimi@ucsd.edu}}
\begin{document}

\maketitle

%%%%%%%%%%%%%%%%%%%%%%%%%%%%%%%%%%%%%%%%%%%%%%%%%%%%%%%%%%%%%%%%%%%%%%%%%%%%%%%%
\begin{abstract}
We introduce an efficient method for learning linear models from uncertain data, where uncertainty is represented as a set of possible variations in the data, leading to predictive multiplicity. Our approach leverages abstract interpretation and zonotopes, a type of convex polytope, to compactly represent these dataset variations, enabling the symbolic execution of gradient descent on all possible worlds simultaneously. We develop techniques to ensure that this process converges to a fixed point and derive closed-form solutions for this fixed point. Our method provides sound over-approximations of all possible optimal models and viable prediction ranges. We demonstrate the effectiveness of our approach through theoretical and empirical analysis, highlighting its potential to reason about model and prediction uncertainty due to data quality issues in training data.
\end{abstract}

%%%%%%%%%%%%%%%%%%%%%%%%%%%%%%%%%%%%%%%%%%%%%%%%%%%%%%%%%%%%%%%%%%%%%%%%%%%%%%%%

\section{Introduction}\label{sec:intro}

This paper addresses the challenges of learning from uncertain datasets by employing the framework of \emph{possible world semantics}, a well-established concept in AI and database theory~\cite{reiter1987theory, ginsberg1986counterfactuals, lewis1973counterfactuals, imielinski1984incomplete}.
In this approach, uncertainty in a dataset \(\Data\) is conceptualized through a collection of possible datasets \(\{\Data_1, \Data_2, \ldots\}\), each representing a potential state of the real world, reflecting variations due to missing entries, errors, inconsistencies, and biases. Given this framework and a learning algorithm, our objective is to construct a set of models \(\{\model_1, \model_2, \ldots\}\), where each model \(\model_i\) is trained on a corresponding potential dataset \(\Data_i\).
This method, that we implement in a system called \sys (\textit{ZO}notope-based \emph{R}obustness Analysis for \emph{R}egression with \emph{O}ver-approximations), allows for a thorough evaluation of how data uncertainties affect the robustness, reliability, and fairness of models in predictive modeling and statistical inference, particularly in scenarios where the ground truth is unidentifiable, necessitating consideration of all possible dataset variations.
While the framework of possible world semantics is essential for modeling dataset uncertainties, it poses significant challenges due to the potentially infinite scenarios each dataset might represent. Exploring every possibility and training a model for each is impractical. The concept of {\em model multiplicity}, which highlights situations where models with similar accuracy differ in individual predictions, has gained traction~\cite{black2022model, marx2020predictive}, yet it primarily focuses on competing models without addressing the full range of dataset variations. Similarly, {\em dataset multiplicity} introduced in \cite{meyer2023dataset} recognizes data variations due to uncertainties but \cite{meyer2023dataset} only proposed a solution for linear models with label errors. Our approach expands these ideas by using possible world semantics to systematically manage uncertainties across all features and labels, thus creating a comprehensive framework for evaluating model robustness amid data uncertainties.

%\babak{Update: Certifying}

To address the challenges of learning from uncertain datasets, we employ the method of \emph{abstract interpretation}~\cite{cousot1977abstract}. Utilizing zonotopes—a type of convex polytope well-suited for compactly representing high-dimensional data spaces~\cite{winslett1988reasoning, althoff2011zonotope}—we over-approximate the set of all possible dataset variations. This framework allows for the simultaneous symbolic execution of gradient descent across all possible datasets, compactly over-approximating all possible optimal model parameters as a zonotope. We demonstrate that for linear regression with \(\ell_2\) regularization (Ridge), our method admits a non-trivial closed-form solution. The zonotope representation of model parameters enables efficient inference, facilitating reasoning about the ranges of predictions or specific parameters.

%%%%%%%%%%%%%%%%%%%%%%%%%%%%%%%%%%%%%%%%%%%%%%%%%%%%%%%%%%%%%%%%%%%%%%%%%%%%%%%%
\textbf{Contributions.} The key contributions of this research are:

\begin{enumerate}[leftmargin=0.5cm, itemsep=0pt]

\item We introduce an abstract gradient descent algorithm for learning linear regression models from uncertain data. This method over-approximates data variations using zonotopes and symbolically executes gradient descent on all possible datasets concurrently. We define and prove the existence of a fixed point that soundly over-approximates all potential models.

\item Symbolic execution generates intractable polynomial zonotopes for gradients due to non-linear terms and monomial growth that is exponential  in the number of iterations. We use linearization and order reduction to compactly over-approximate these polynomial zonotopes using linear zonotopes at each step, introducing an efficient version of abstract gradient descent.

\item The efficient version, however, does not guarantee a fixed point for arbitrary order reduction techniques. To address this, we develop advanced order reduction techniques that ensure fixed points and provide a non-trivial closed-form solution for these fixed points in ridge regression.

\item We implement our approach in a system called \sys{} and use it to evaluate the impact of data uncertainty on linear regression models. Our empirical results and analytical solutions validate the effectiveness of our approach, demonstrating its efficacy in computing prediction ranges and verifying robustness.
\end{enumerate}

\paragraph{Related Work.}\label{sec:related-work}

%\BGI{Related work structure:
%  - ``basic abstract interpretation'' + zonotopes (control theory) + order reduction (short)
%  - learning robust models (robust statistics, differentiable abstract interpretation (ETH)\cite{mirman2018differentiable} ) + certain models (Ce Zhang, Schelter, Zhu? (GTech), Loris D'Antoni, Vechev) \cite{jordan-22-zdlnnv}\& robustness verification (abstract interpretation one technique) - focus on data quality
%  - predictive multiplicity (competing models, but not from multiple datasets) and dataset multiplicity (what we are doing~\cite{meyer-22-cdbrlr, meyer-21-crpdbdt})
 % - positioning inspired by ``Sources of Uncertainty in Machine Learning - A Statisticians View''
 % - uncertainty estimation in ML (conformal prediction)
%}
Predictive multiplicity shows that variations in training processes can lead to multiple best-fitted models from a single dataset~\cite{bouthillier2021accounting, cooper2021hyperparameter, mehrer2020individual, shumailov2021manipulating, snoek2012practical, breiman1996heuristics, d2022underspecification, teney2022predicting, marx2020predictive, meyer2023dataset} due to factors such as random seeds and hyperparameters. Similar to our work, recent studies address predictive multiplicity due to uncertain data. Khosravi et al.~\cite{10.48550/arxiv.1903.01620} use a probabilistic method for expected predictions from all possible imputations but do not establish worst-case predictions. The most similar work to ours is by Meyer et al.~\cite{meyer2023dataset}, which establishes prediction ranges for linear regression due to data uncertainty but focuses only on label uncertainty and uses interval arithmetic for over-approximation. In contrast, our approach handles arbitrary uncertainty in both features and labels, employing zonotope-based learning to provide better over-approximations of model parameters and prediction ranges.

Our work is broadly related to robust model learning, which focuses on ensuring robustness against data quality issues such as attacks~\cite{jia2021intrinsic,zhang2018training,steinhardt2017certified,rosenfeld2020certified,paudice2019label,muller2020data,jagielski2018manipulating}. Distributional robustness~\cite{ben2013robust,namkoong2016stochastic,ShafieezadehAbadeh2015DistributionallyRL} studies model reliability against varying data distributions, while robust statistics~\cite{diakonikolas2019recent} examines model performance under outliers or data errors. Our approach provides exact provable robustness guarantees by exploring the entire range of models under extreme dataset variations, which is crucial for individual-level predictions and reasoning about the robustness of specific parameters.

Our work is also related to robustness certification, which aims to certify the robustness of machine learning models against data perturbations and uncertainties~\cite{huang2021statistical, singh2018fast, mirman2021robustness}. These efforts primarily focus on test-time robustness, validating predictions for inputs near a test sample. In contrast, we address training-time robustness by considering the effects of possible datasets on the training process. The closest related work includes Meyer et al.\cite{meyer-22-cdbrlr} for decision trees, Karlas et al.\cite{DBLP:journals/pvldb/KarlasLWGC0020} for nearest neighbor classifiers, and Zhu et al.\cite{zhu2023consistent, zhu2024overcoming} on fairness of ML models under selection bias. We use zonotopes to over-approximate prediction ranges for linear regression, thereby generating robustness certificates. While zonotopes have been utilized for test-time robustness\cite{mirman2018differentiable, du2021cert, gehr2018ai2, muller2023abstract}, our work is the first to apply zonotopes for training-time robustness.

%\textbf{Predictive Multiplicity} can be produced by different sources and data quality issues. Ignoring uncertainty can severely impact the validity of ML models~\cite{10.1145/3461702.3462630,10.1145/3461702.3462571,TOMSETT2020100049}. Predictive multiplicity shows that a given dataset may have multiple best fitted models due to the training~\cite{breiman1996heuristics,d2022underspecification,teney2022predicting,marx2020predictive} or modifying training parameters~\cite{bouthillier2021accounting,cooper2021hyperparameter,mehrer2020individual,shumailov2021manipulating,snoek2012practical}, so accuracy, fairness, robustness and other properties can be achieved simultaneously~\cite{black2022model,coston2021characterizing,semenova2022existence,teney2022predicting}.
%Meyer et al.\cite{meyer2023dataset} introduce the concept of dataset multiplicity to explore the impact of uncertain, biased, or noisy data in training datasets on the predictions made by models and demonstrate that many test samples in real-world datasets have predictions significantly affected by dataset multiplicity. We address uncertainty by encoding an over approximation of uncertainty info using abstract interpretation.
Approaches for uncertainty quantification (UQ) aim to understand the range of outcomes a model may produce using Bayesian methods, ensembling, conformal prediction, and bootstrapping~\cite{OLIVE20073115,5381815,10.5555/3495724.3496072,Dewolf_2022,defranca2022prediction}. UQ focuses on epistemic and aleatoric uncertainty, stemming from insufficient data, noisy data, or uncertainty about the model parameters, and does not account for uncertainty due to systematic data quality issues~\cite{10.48550/arxiv.2305.16703}. Our work addresses uncertainty due to dataset variations and is orthogonal. See \Cref{sec:ext-related-work} for an extended version of the related work.

\vspace{-2mm}
\section{Notation, Problem Formulation and Background}\label{sec:uncertainty-propagation}
\vspace{-2mm}
Denote a training dataset \( \dtrain = (\Xtrain, \ytrain) \) with \( \Xtrain = \left[\begin{matrix} \dpx_1 & \cdots & \dpx_n \end{matrix}\right]^T \in \mathbb{R}^{n \times d} \) as the matrix of features, and \( \ytrain = \left[\begin{matrix} y_1 & \cdots & y_n \end{matrix}\right]^T \in \mathbb{R}^n \) as the corresponding ground truth labels. Let \( f(\dpx; \ModelWeight) \) be a model parameterized by \( \ModelWeight \in \mathbb{R}^p \) that maps an input data point to a label. A learning algorithm \( \mathcal{A} \) maps a training dataset \( \dtrain \) to the parameters of the trained model, \(\ModelWeightOptimal = \mathcal{A}(\dtrain) \). Given a test dataset \( \Xtest \in \mathbb{R}^{n \times d} \), for any test sample \(\dpx\) from \(\Xtest\), the function \( \model \) computes a prediction \( \hat{y} = f(\dpx; \ModelWeightOptimal) \).

%%%%%%%%%%%%%%%%%%%%%%%%%%%%%%%%%%%%%%%%%%%%%%%%%%%%%%%%%%%%%%%%%%%%%%%%%%%%%%%%
\subsection{Learning Possible Models from Possible Worlds}\label{sec:poss-world-semant}
We use possible world semantics to represent the uncertainty in a dataset \Data.
% Given an uncertain dataset $\Data$, we use possible world semantics to represent hypothetical but complete and consistent versions of variations in the real world.

%%%%%%%%%%%%%%%%%%%%%%%%%%%%%%%%%%%%%%%%%%%%%%%%%%%%%%%%%%%%%%%%%%%%%%%%%%%%%%%%
\begin{definition}[Possible Datasets]
Given an uncertain dataset $\Data$, the uncertainty in \Data can be represented by a set of possible datasets $\dtrainPoss$: $\dtrainPoss = \{ \Data_1, \Data_2, \ldots \}$.
\end{definition}

 % hypothetical but complete and consistent version of an uncertain dataset $\Data$.
% Let $\Data$ be a dataset where not all information is fully known or certain.
% The set of all possible completions of $\Data$, deno
% ``ted by $\makePoss{\Data}$, is defined
% where each $\Data_i$ is a possible dataset that could plausibly exist given the known data and assumptions about the unknown data.
%%%%%%%%%%%%%%%%%%%%%%%%%%%%%%%%%%%%%%%%%%%%%%%%%%%%%%%%%%%%%%%%%%%%%%%%%%%%%%%%

% These possible datasets arise from uncertainty in the data.
Each dataset $\Data_i \in \dtrainPoss$ is a "possible world", i.e.,  a hypothetical variation of the dataset \Data that could potentially exist in the real world based our knowledge about the uncertainty in \Data.
% The collection $\makePoss{\Data}$ represents the different "possible worlds" consistent with these uncertainties.
In \Cref{sec:example-uncertainty-and-abstraction}, we discuss construction methods for common data quality issues. Our goal is to efficiently construct the set of all possible models from  uncertain data and understand their behavior in making predictions. % as formalized below.
%For example, consider an uncertain dataset $\Data$ where each observed data point is assumed to be the result of a perturbation within some $\epsilon$-ball of some unknown ground truth data point. Then each possible world is a dataset derived from the observed dataset by replacing each data point with a data point within its $\epsilon$-ball.

%%%%%%%%%%%%%%%%%%%%%%%%%%%%%%%%%%%%%%%%%%%%%%%%%%%%%%%%%%%%%%%%%%%%%%%%%%%%%%%%
\begin{definition}[Possible Models and Prediction Range]
Given a set of possible datasets $\makePoss{\dtrain}$ associated with an uncertain dataset $\dtrain$, the \textit{possible models}, denoted $\makePoss{\model}$, are obtained by applying the learning algorithm $\learner$ to each training dataset $\dtrain_i$ within $\makePoss{\dtrain}$ to obtain the set of all possible optimal model parameters $\ModelWeightFixedPointPoss$, i.e.,
\[
\ModelWeightFixedPointPoss = \{ \ModelWeightOptimali{i} \mid \ModelWeightOptimali{i} = \learner(\dtrain_i), \dtrain_i \in \makePoss{\dtrain} \}
\]
For a test data point $\dpx$, the \textit{viable prediction range} $\PredRange(\dpx)$ is defined as the interval between the least upper bound and the greatest lower bound of the outputs produced by all models in $\makePoss{f}$, i.e.,
\[
\PredRange(\dpx) = \left[ \inf_{\ModelWeightOptimal \in \ModelWeightFixedPointPoss } \model(\dpx, \ModelWeightOptimal), \sup_{\ModelWeightOptimal \in \ModelWeightFixedPointPoss} \model(\dpx, \ModelWeightOptimal) \right]
\]
\end{definition}

%%%%%%%%%%%%%%%%%%%%%%%%%%%%%%%%%%%%%%%%%%%%%%%%%%%%%%%%%%%%%%%%%%%%%%%%%%%%%%%%

This prediction range quantifies the minimum and maximum predictions that can be expected for $\dpx$, highlighting the variability in model outputs due to differences in the training data. Our framework supports uncertainty in training and test data. We discuss test data uncertainty in \Cref{sec:pred-with-abstr-based-on-fixed-points}.

%For this paper, we focus on training data uncertainty and defer the test data discussion to \Cref{sec:uncertain-test-data-PWS}.

%\paragraph{Overview of the Solution}

%For some use cases of our approach, the correlation between predictions of test data points is important, i.e., we would want to model the uncertainty in the joint prediction for \Xtest. We discuss this case in \Cref{sec:uncertain-test-data-PWS}.

%These definitions provide a structured approach to handling ambiguities in data and guide the derivation of insights or actions that are robust across all plausible data completions.

%%%%%%%%%%%%%%%%%%%%%%%%%%%%%%%%%%%%%%%%%%%%%%%%%%%%%%%%%%%%%%%%%%%%%%%%%%%%%%%%

%\paragraph{Overview of the solution}
%\section{Abstract Interpretation for Sound Approximation}\label{sec:abstract-domain}
%\label{sec:sound-appr-thro}
%\begin{figure}[t]
 % \centering
 % \begin{center}
 %   \includegraphics[width=0.45\textwidth]{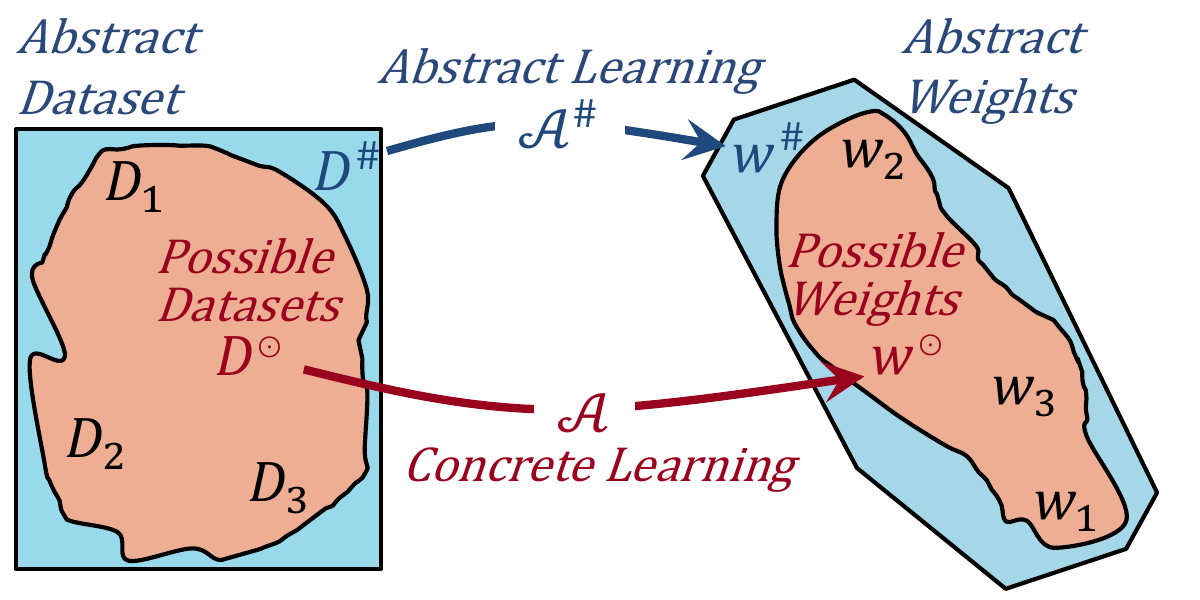}
 % \end{center}
 % \caption{Abstract Interpretation example.}
%\end{figure}
\subsection{Sound Approximation of Possible Models with Abstract Interpretation}\label{sec:abstract-domain}

The set of all possible datasets associated with uncertain data can be intractable. We use {\em abstract} %%%%%%%%%%%%%%%%%%%%
\setlength{\intextsep}{0pt}
\setlength{\columnsep}{2pt}
\begin{wrapfigure}[8]{r}{0.37\textwidth}
  \centering
   \vspace{-1mm}
  \includegraphics[width=0.38\textwidth]{figs/zonotope_example.pdf}
% \scriptsize  \caption{Sound Approximation of all possible models.}
% \vspace{-0.4cm}
 \label{fig:soundapp}
\end{wrapfigure}
% \jiongli{shorten, change background color}
%%%%%%%%%%%%%%%%%%%% to compactly over-approximate sets of worlds from a %
{interpretation}~\cite{C96a} to over-approximate sets of elements of a
concrete domain $\aDom$ (the training data and model weights) with elements from an abstract domain $\abstractDom$. Specifically, we use the abstract domain of zonotopes, a type of convex polytope, to over-approximate the possible datasets $\dtrainPoss$ using a zonotope $\makeAbstract{\Data}$ that has a compact symbolic representation. Instead of applying the learning algorithm $\learner$ to each possible dataset to compute all possible optimal model parameters $\ModelWeightFixedPointPoss$, we develop an abstract learning algorithm $\makeAbstract{\learner}$ that operates directly on the abstract domain of zonotopes. Given \AbstractDtrain, $\makeAbstract{\learner}$ generates a zonotope $\AbstractModelWeight = \makeAbstract{\learner}(\abstractDtrain)$ that over-approximates $\ModelWeightFixedPointPoss$ (demonstrated in the graph on the right). Intuitively, this represents the symbolic execution of the learning algorithm across all possible datasets simultaneously. % In this section, we overview abstract interpretation and zonotopes. \babak{Shorten for one line}

%%%%%%%%%%%%%%%%%%%%%%%%%%%%%%%%%%%%%%%%%%%%%%%%%%%%%%%%%%%%%%%%%%%%%%%%%%%%%%%%
\begin{definition}[Abstract Domain]\label{def:abstract-domain}
  Let $\aDom$ be a concrete domain. An \emph{abstract domain} for $\aDom$ is a set $\abstractDom$  paired with two functions:
  \begin{align*}
    &\mathbf{Abstraction}\,\, \abstractSymbol: \powerset{\aDom} \to \abstractDom &
    &\mathbf{Concretization}\,\, \concretizeSymbol: \abstractDom \to \powerset{\aDom}
  \end{align*}
  which satisfy the following condition for any subset $\aSet \subseteq \aDom$: $\concretization{\abstraction{\aSet}} \supseteq \aSet$.
  %%%%%%%%%%%%%%%%%%%%%%%%%%%%%%%%%%%%%%%%
  % \begin{align}
  %   \label{eq:abstraction-over-approxmiates}
  %   \concretization{\abstraction{\aSet}} \supseteq \aSet
  % \end{align}
  %%%%%%%%%%%%%%%%%%%%%%%%%%%%%%%%%%%%%%%%
  Two abstract elements $d_1$ and $d_2$ are equivalent, written as $d_1 \eqA d_2$, if $\concretization{d_1} = \concretization{d_2}$.
\end{definition}

%%%%%%%%%%%%%%%%%%%%%%%%%%%%%%%%%%%%%%%%%%%%%%%%%%%%%%%%%%%%%%%%%%%%%%%%%%%%%%%%

% \Cref{eq:abstraction-over-approxmiates}
\Cref{def:abstract-domain} ensures that the abstract element $\abstraction{\aSet}$ associated with a set $\aSet$ through application of the abstraction function $\abstractSymbol$ encodes an over-approximation of $\aSet$. % Equivalence allows us to identify when two syntactically different abstract elements represent the same sets of possible worlds.
 We will use an abstract element $\abstractDtrain =  \abstraction{\makePoss{\dtrain}} $ to over-approximate the possible worlds of an uncertain training dataset $ \dtrainPoss $. We discuss abstraction functions $\abstractSymbol$ for specific types of training data uncertainty in \Cref{sec:example-uncertainty-and-abstraction}.

%%%%%%%%%%%%%%%%%%%%%%%%%%%%%%%%%%%%%%%%%%%%%%%%%%%%%%%%%%%%%%%%%%%%%%%%%%%%%%%%
\begin{definition}[Abstract Transformer]\label{def:abstract-transformer}
  Consider a function $\func: \aDom_1 \to \aDom_2$ on concrete domains $\aDom_1$ and $\aDom_2$. % We extend $\func$ to sets $\aSet \in \powerset{\aDom}$ through point-wise application (PWS) as $\func(\aSet) = \{\func(e) \mid e \in \aSet\}$.
  An \emph{abstract transformer} $\makeAbstract{\func}: \makeAbstract{\aDom_1} \to \makeAbstract{\aDom_2}$  over-approximates $\func$ in the abstract domain: % It must satisfy the following \emph{soundness} condition:
  \[
    \forall \aSet \in \powerset{\aDom}: \concretization{\makeAbstract{\func}(\abstraction{\aSet})} \supseteq \func(\aSet)
  \]
  % This ensures that $\makeAbstract{\func}$ provides a sound approximation of $\func$.
  An abstract transformer is  \emph{exact} (does not loose precision) if
  $
    \forall d \in \makeAbstract{\aDom}: \concretization{\makeAbstract{\func}(d)} = \func(\concretization{d})
  $.
\end{definition}
%%%%%%%%%%%%%%%%%%%%%%%%%%%%%%%%%%%%%%%%%%%%%%%%%%%%%%%%%%%%%%%%%%%%%%%%%%%%%%%%

 % Exactness indicates that the transformation by $\makeAbstract{\func}$ mirrors the concrete transformation by $\func$ precisely on all elements in the concretization $\concretization{d}$ of an abstract element $d$. That is, exact abstract transformers do not loose precision.
 Importantly, (exact) abstract transformers compose (see \Cref{sec:app-abstract-transformers-compose},  \Cref{prop:abstract-transformer}) and, thus, we can construct an abstract transformer for complex functions from simpler parts.

To over-approximate the set of possible model parameters \ModelWeightFixedPointPoss,  % $ \makePoss{\model} $,
we will develop
an abstract transformer $ \makeAbstract{\learner} $ for the learning algorithm $ \learner $ to get $\concretization{\makeAbstract{\learner}(\abstractDtrain)} \supseteq \ModelWeightFixedPointPoss$.

\paragraph{Symbolic Abstract Domains and Zonotopes.} %\label{sec:symb-abstr-doma}
We consider a symbolic abstract domain \abstractScalarDom of vectors and matrices (marked with $\makeAbstract{\cdot}$) with elements that are polynomials  \abstractScalar over variables $\evardom = \{\evar_i\}$.  % We use  to denote the set of all symbolic expressions over \evardom.
The concretization of a polynomial \abstractScalar is the result of evaluating \abstractScalar on all assignments $e: \evardom \to [-1,1]$, encoded as vectors $[-1,1]^{\card{\evardom}}$:
$\concretization{\abstractScalar} = \{ \abstractScalar(\assign) \mid \assign \in [-1,1]^{\card{\evardom}} \}$.
We lift concretization to vectors and matrices through point-wise application. Such an object \DummyZonotope is typically referred to as  a \emph{polynomial zonotope} or \emph{zonotope} if all symbolic expressions are linear (see \Cref{sec:set-symb-matr}). The concretization of \DummyZonotope is:
$\concretization{\DummyZonotope} =\left\{\DummyZonotope(e) \mid e \in [-1, 1]^{\card{\evardom}}\right\}$.

\section{Exact Abstract Transformers for Learning Linear Models}\label{sec:GD-for-poly-zonotops}

Given an uncertain training dataset \(\makePoss{\dtrain}\), we aim to over-approximate the set of possible optimal linear models \(\ModelWeightFixedPointPoss = \{\ModelWeightOptimali{1}, \ModelWeightOptimali{2}, \ldots\}\), where \(\ModelWeightOptimali{i} \in \R^p\) represents the optimal parameters of a linear model trained on  \(\dtrain_i \in \makePoss{\dtrain}\). These optimal parameters are  the fixed point of the sequence \(\{\ModelWeight_i^k\}_{k=0}^\infty\) generated by:
$
\ModelWeightij{i}{k+1} = \ConcreteTransFunc(\ModelWeightij{i}{k})
$
where the operator \(\ConcreteTransFunc: \R^p \rightarrow \R^p\) captures one step of gradient descent, i.e., \(\ConcreteTransFunc(\ModelWeight) = \ModelWeight - \eta \nabla \TrainingLossFunc(\ModelWeight)\), for a learning rate \(\eta\) and a loss function \(\TrainingLossFunc(\ModelWeight)\)~\cite{polyak1964some}.

In the abstract domain, we use the zonotope representations \( \AbstractDtrain \) and \( \AbstractModelWeight \) to abstract the possible datasets \( \makePoss{\dtrain} \) and the set of possible model weights \( \ModelWeightFixedPointPoss \). While it is theoretically possible to compute symbolic expressions for the standard closed form solution for linear regression, this can result in large expressions that contain fractions with polynomial numerators and denominators and, thus, computing prediction intervals based on such expressions is computationally infeasible (see \Cref{sec:infesiable-symbolic-closed-form}).
Instead, we over-approximate the optimal parameters using an abstract operator $\exactAbstractTrans: \AbstractModelWeight \rightarrow \AbstractModelWeight$ that generates a sequence of abstract elements
\(\{\AbstractModelWeighti{k}\}_{k=0}^\infty\):
%%%%%%%%%%%%%%%%%%%%
\[
\AbstractModelWeighti{k+1} = \exactAbstractTrans(\AbstractModelWeighti{k}),
\]
%%%%%%%%%%%%%%%%%%%%
where the abstract operator \( \exactAbstractTrans \) given by  $\exactAbstractTrans(\AbstractModelWeight) = \AbstractModelWeight - \eta \nabla \TrainingLossFunc(\AbstractModelWeight)$ captures one gradient descent step in the abstract domain. Specifically, for any loss function \( \TrainingLossFunc \) whose gradient \( \nabla \TrainingLossFunc \) consists of linear or polynomial expressions, such as the mean squared error (MSE) loss, $\exactAbstractTrans$ is an exact abstract transformer. This follows from the existence of exact abstract transformers for addition and multiplication over polynomial zonotopes~\cite{kochdumper-23-cpz} and the fact that abstract transformers compose (see \Cref{sec:abstr-transf-zonot}, \Cref{prop:abstract-transformer} and \Cref{prop:exact-abstract-trans}).

% given input polynomial zonotope is transformed into another polynomial zonotope by the abstract operator \( \exactAbstractTrans \) as polynomial zonotopes are closed under linear or polynomial transformations~\cite{kochdumper-23-cpz}.

% \babak{Say we focus on linear models} Since polynomial zonotopes are closed under linear or polynomial transformations~\cite{kochdumper-23-cpz}, this property ensures that the result remains within the same domain. Furthermore, since the abstract operator is obtained through the composition of exact abstract transformers for primitive operations like addition and multiplication, it retains exactness (see \Cref{prop:abstract-transformer} and \Cref{prop:exact-abstract-trans}).
%%%%%%%%%%%%%%%%%%%%%%%%%%%%%%%%%%%%%%%%%%%%%%%%%%%%%%%%%%%%%%%%%%%%%%%%%%%%%%%%
\begin{prop} \label{prop:exactat}
The abstract gradient descent operator \( \exactAbstractTrans \) is an exact abstract transformer for the concrete gradient descent step operator \( \ConcreteTransFunc \). Formally, for any abstract $\AbstractModelWeight$,
\[
\concretization{\exactAbstractTrans(\AbstractModelWeight)} = \ConcreteTransFunc(\concretization{\AbstractModelWeight}),
\]
\end{prop}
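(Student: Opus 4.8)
The plan is to establish the identity $\concretization{\exactAbstractTrans(\AbstractModelWeight)} = \ConcreteTransFunc(\concretization{\AbstractModelWeight})$ by invoking compositionality of exact abstract transformers, together with the known exactness of the primitive operations (scalar/matrix addition and multiplication) on polynomial zonotopes. First I would fix the loss function under consideration — here the MSE (or ridge) loss — and write out $\ConcreteTransFunc(\ModelWeight) = \ModelWeight - \eta\nabla\TrainingLossFunc(\ModelWeight)$ explicitly in terms of the training data, observing that $\nabla\TrainingLossFunc(\ModelWeight)$ is a vector whose entries are \emph{polynomial} expressions in the entries of $\Xtrain$, $\ytrain$, and $\ModelWeight$ (for MSE, in fact affine in $\ModelWeight$ with coefficients polynomial in the data; no divisions appear because $\eta$ is a constant and we are doing gradient descent, not solving the normal equations). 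The point is that $\ConcreteTransFunc$ is built entirely from the operations $+$, $-$, scalar multiplication, and matrix multiplication applied to the symbolic inputs \AbstractDtrain and \AbstractModelWeight.

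Next I would recall the two facts supplied earlier: (i) addition and multiplication over polynomial zonotopes admit \emph{exact} abstract transformers~\cite{kochdumper-23-cpz}, i.e. for these primitives $\concretization{\makeAbstract{\func}(d)} = \func(\concretization{d})$ holds with equality, not just $\supseteq$; and (ii) exact abstract transformers compose (\Cref{prop:abstract-transformer}, \Cref{prop:exact-abstract-trans}). Since $\exactAbstractTrans$ is defined as exactly the same arithmetic circuit as $\ConcreteTransFunc$ but interpreted over \abstractScalarDom, it is a finite composition of exact transformers, hence itself exact. Applying the definition of exactness to $\exactAbstractTrans$ then yields $\concretization{\exactAbstractTrans(\AbstractModelWeight)} = \ConcreteTransFunc(\concretization{\AbstractModelWeight})$, where on the right-hand side $\ConcreteTransFunc$ is applied pointwise to the set $\concretization{\AbstractModelWeight}$ (and the fixed abstract dataset \AbstractDtrain concretizes to the corresponding set of possible datasets, which is carried along as a parameter). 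I would be slightly careful to state that \AbstractDtrain is treated as a constant input whose concretization is $\makePoss{\dtrain}$ (or its over-approximation), so that "$\ConcreteTransFunc(\concretization{\AbstractModelWeight})$" is shorthand for $\{\ModelWeight - \eta\nabla\TrainingLossFunc(\ModelWeight) \mid \ModelWeight \in \concretization{\AbstractModelWeight}\}$ with the loss instantiated on each possible dataset.

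The one subtlety — and the step I expect to require the most care — is the distinction between exact transformers for the \emph{atomic} operations and exactness of their composition when the \emph{same} symbolic variables are shared across subexpressions. Polynomial zonotopes are precisely the domain where this works: because the concretization quantifies over a single assignment $e \in [-1,1]^{\card{\evardom}}$ shared by all symbolic expressions, correlations between repeated occurrences of a variable (e.g. the same $\ModelWeight$ entry appearing in several terms of the gradient) are preserved exactly, which is exactly why no over-approximation is incurred — in contrast to interval arithmetic. I would therefore make explicit that the composition in \Cref{prop:exact-abstract-trans} is over the \emph{common} variable set, so that evaluating the composed symbolic expression at an assignment $e$ equals composing the evaluations, and this is what upgrades $\supseteq$ to $=$. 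With that noted, the proof is essentially an application of \Cref{prop:exact-abstract-trans} to the arithmetic circuit defining $\ConcreteTransFunc$, and the remaining details (bookkeeping that every operation used in $\nabla\TrainingLossFunc$ is one of the supported primitives) are routine.
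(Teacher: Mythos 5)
Your proposal is correct and follows essentially the same route as the paper: the paper justifies \Cref{prop:exactat} by observing that $\nabla\TrainingLossFunc$ consists of polynomial expressions, that exact abstract transformers exist for addition and multiplication over polynomial zonotopes (\Cref{prop:exact-abstract-trans}, citing~\cite{kochdumper-23-cpz}), and that (exact) abstract transformers compose (\Cref{prop:abstract-transformer}). Your added remark about the shared assignment $e \in [-1,1]^{\card{\evardom}}$ preserving correlations across repeated occurrences of variables — and about treating \AbstractDtrain as a jointly concretized parameter — is a correct articulation of why the composition stays exact, which the paper leaves implicit.
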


Unfortunately, the sequence \(\{\AbstractModelWeighti{k}\}_{k=0}^\infty\) does not have a fixed point as the highest-order symbolic terms in \(\AbstractModelWeighti{k}\) are multiplied with symbolic terms in the gradient. Thus,
\(\AbstractModelWeighti{k+1} = \exactAbstractTrans(\AbstractModelWeighti{k})\)
has terms of higher orders than any term in $\AbstractModelWeighti{k}$.
% In the abstract domain, the sequence \(\{\AbstractModelWeighti{k}\}_{k=0}^\infty\) generated by the abstract gradient descent operator \(\exactAbstractTrans\) does not have a fixed point. This is because \(\AbstractModelWeighti{n}\) and \(\exactAbstractTrans(\AbstractModelWeighti{n}, \AbstractDtrain)\) can never have the exact same symbolic expressions. The highest-order symbolic term in \(\AbstractModelWeighti{k}\) will be multiplied with other terms in the gradient, resulting in terms with higher orders. Consequently, \(\AbstractModelWeighti{k+1}\) will always contain higher-order symbolic terms that do not appear in \(\AbstractModelWeighti{k}\).
To identify when the abstract model weights represent all fixed points in the concrete domain,
we define abstract fixed points using concretization.

%%%%%%%%%%%%%%%%%%%%%%%%%%%%%%%%%%%%%%%%%%%%%%%%%%%%%%%%%%%%%%%%%%%%%%%%%%%%%%%%
\begin{definition}[Fixed Point of Abstract Gradient Descent]\label{def:zonotope-fp}
An abstract model weight \(\AbstractModelWeightFixedPoint\) is a fixed point for an abstract transformer $\makeAbstract{\func}$ for the gradient descent operator $\ConcreteTransFunc$ if,
  % A fixed point of the abstract gradient descent operator is an abstract model weight \(\AbstractModelWeightFixedPoint\) such that:
\[
\concretization{\AbstractModelWeightFixedPoint, \AbstractDtrain} = \concretization{\makeAbstract{\func}(\AbstractModelWeightFixedPoint), \AbstractDtrain}.
\]
Here, \(\concretization{\cdot,\cdot}\) denotes the joint concretization of \(\AbstractModelWeightFixedPoint\) and \(\AbstractDtrain\) (see \Cref{sec:set-symb-matr} for more details).
\end{definition}
%%%%%%%%%%%%%%%%%%%%%%%%%%%%%%%%%%%%%%%%%%%%%%%%%%%%%%%%%%%%%%%%%%%%%%%%%%%%%%%%

%Next, we shown that an abstract fixed point $\AbstractModelWeightFixedPoint$ over-approximates all optimal possible model weights $\ModelWeightOptimal$. As gradient descent for linear models is a convex optimization problem, an abstract fixed point $\AbstractModelWeightFixedPoint$ has this property, no matter whether it is the result of iterative application of an abstract transformer for $\ConcreteTransFunc$ or generated through some other means. We will exploit this fact later on to directly construct a fixed point.
% Next, we show that
Any abstract fixed point $\AbstractModelWeightFixedPoint$ over-approximates all  possible optimal model weights $\ModelWeightFixedPointPoss$.
% \BG{REMOVING IT FOR NOW, WE CAN SEE HOW THINGS FLOW AND BRING IT BACK IF WE FEEL THAT THE STEP FROM ITERATIVE SEQUENCE TO ``JUST'' SOLVING A FIXED POINT EQUATION IS HARD TO FOLLOW OTHERWISE:
% Due to the convexity of loss functions for linear models, any abstract fixed point $\AbstractModelWeightFixedPoint$ possesses this property, whether it is constructed through iterative application of an abstract transformer for $\ConcreteTransFunc$ or not. We will exploit this fact later to directly construct a fixed point.
% \textbf{babak:}{I reworded this. Not sure if we should keep "generated through other means" }}

% we show that the abstract gradient descent operator \(\exactAbstractTrans\) has a fixed point that accurately captures all possible optimal model weights.
% This ensures that the abstract gradient descent process converges to a solution that is representative of the optimal solutions in the concrete domain.

%%%%%%%%%%%%%%%%%%%%%%%%%%%%%%%%%%%%%%%%%%%%%%%%%%%%%%%%%%%%%%%%%%%%%%%%%%%%%%%%
\begin{prop}\label{prop:abstract-fixed-points-include-concrete-ones}
Consider an abstract transformer $\makeAbstract{\func}$ for the gradient descent operator $\ConcreteTransFunc$. Let \(\AbstractModelWeightFixedPoint\) be a fixed point according to \Cref{def:zonotope-fp}. Then it holds that $\concretization{\AbstractModelWeightFixedPoint} \supseteq \ModelWeightFixedPointPoss$.
\end{prop}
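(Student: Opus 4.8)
The plan is to exhibit $C \defas \concretization{\AbstractModelWeightFixedPoint, \AbstractDtrain}$ as a closed, $\ConcreteTransFunc$-invariant subset of the product of the weight space and the dataset space that ``covers'' every possible world, and then argue that every concrete optimum is trapped inside it. The preliminary step is to record two elementary facts about joint concretization. (i) Evaluating the pair $(\AbstractModelWeightFixedPoint,\AbstractDtrain)$ at all noise assignments $e \in [-1,1]^{\card{\evardom}}$ and projecting onto the two coordinates gives, respectively, $\concretization{\AbstractModelWeightFixedPoint}$ and $\concretization{\AbstractDtrain}$; in particular the data-projection of $C$ equals $\concretization{\AbstractDtrain}$, which over-approximates $\dtrainPoss$. (ii) Since $e \mapsto (\AbstractModelWeightFixedPoint(e),\AbstractDtrain(e))$ is polynomial, hence continuous, and $[-1,1]^{\card{\evardom}}$ is compact, $C$ is compact; consequently, for every fixed dataset $\dtrain$, the slice $C_\dtrain \defas \{\ModelWeight : (\ModelWeight,\dtrain)\in C\}$ is compact, and therefore closed.

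Next I would fix an arbitrary $\ModelWeightOptimali{i}\in\ModelWeightFixedPointPoss$ with its witnessing possible world $\dtrain_i\in\dtrainPoss$, so that $\ModelWeightOptimali{i}=\learner(\dtrain_i)$. By fact (i), $\dtrain_i\in\concretization{\AbstractDtrain}$ occurs as the data-component of some element of $C$, i.e. there is a weight $\ModelWeight^{0}$ with $(\ModelWeight^{0},\dtrain_i)\in C$. Because $\makeAbstract{\func}$ is an abstract transformer for the gradient-descent step operator $\ConcreteTransFunc$ — read in the joint domain, and noting that a gradient step leaves the data coordinate unchanged — soundness (\Cref{def:abstract-transformer}, together with composition, \Cref{prop:abstract-transformer}) gives $(\ConcreteTransFunc(\ModelWeight^{0};\dtrain_i),\dtrain_i)\in\concretization{\makeAbstract{\func}(\AbstractModelWeightFixedPoint),\AbstractDtrain}$, and the fixed-point condition of \Cref{def:zonotope-fp} identifies this set with $C$. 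Iterating, the whole concrete gradient-descent orbit $\ModelWeight^{0},\ModelWeight^{1},\ldots$ on $\dtrain_i$ defined by $\ModelWeight^{k+1}=\ConcreteTransFunc(\ModelWeight^{k};\dtrain_i)$ stays inside the slice $C_{\dtrain_i}$. Under the standing assumption that gradient descent on $\dtrain_i$ converges to $\learner(\dtrain_i)=\ModelWeightOptimali{i}$ — which for Ridge holds from every starting point, the objective being strongly convex with a unique minimizer and $\eta$ chosen suitably — closedness of $C_{\dtrain_i}$ forces $\ModelWeightOptimali{i}\in C_{\dtrain_i}$, hence $(\ModelWeightOptimali{i},\dtrain_i)\in C$, and the weight-projection of fact (i) yields $\ModelWeightOptimali{i}\in\concretization{\AbstractModelWeightFixedPoint}$. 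Since $\ModelWeightOptimali{i}$ was arbitrary, $\concretization{\AbstractModelWeightFixedPoint}\supseteq\ModelWeightFixedPointPoss$.

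I expect the delicate step to be the passage from ``the orbit remains in $C_{\dtrain_i}$'' to ``its limit lies in $C_{\dtrain_i}$'': this needs both topological closedness of the slice (obtained for free from compactness of $C$, itself a consequence of the abstract element being the image of a compact box under a continuous polynomial map) and the fact that the concrete iteration started from the particular witness $\ModelWeight^{0}\in C_{\dtrain_i}$ converges to $\learner(\dtrain_i)$ and not to some other fixed point or cycle — which is exactly where the restriction to (strongly convex) linear regression enters. A secondary bookkeeping point worth spelling out is the precise form of the joint soundness of $\makeAbstract{\func}$, namely $\concretization{\makeAbstract{\func}(\AbstractModelWeight),\AbstractDtrain}\supseteq\{(\ConcreteTransFunc(\ModelWeight;\dtrain),\dtrain) : (\ModelWeight,\dtrain)\in\concretization{\AbstractModelWeight,\AbstractDtrain}\}$, which follows from \Cref{def:abstract-transformer} instantiated on the product domain plus the observation that the gradient step preserves the data coordinate.
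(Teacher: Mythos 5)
Your proposal follows essentially the same route as the paper's proof: pick a witness pair $(\ModelWeight^{0},\dtrain_i)$ in the joint concretization $\concretization{\AbstractModelWeightFixedPoint,\AbstractDtrain}$, use soundness of $\makeAbstract{\func}$ on the product domain together with the fixed-point condition of \Cref{def:zonotope-fp} to trap the entire concrete gradient-descent orbit on $\dtrain_i$ inside that set, and conclude via convergence of concrete gradient descent to $\ModelWeightOptimali{i}$. The one place you genuinely diverge is the passage to the limit: the paper takes ``the smallest integer $n_i$ with $\ModelWeightij{i}{n_i}=\ModelWeightOptimali{i}$,'' i.e.\ it implicitly assumes gradient descent reaches the optimum exactly in finitely many steps, whereas you close the gap properly by observing that $\concretization{\AbstractModelWeightFixedPoint,\AbstractDtrain}$ is the continuous image of the compact box $[-1,1]^{\card{\evardom}}$, so each slice is closed and contains the limit of the orbit. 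That refinement is worth keeping, since for generic step sizes gradient descent on a strongly convex objective converges only asymptotically; your version is the argument the paper actually needs.
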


%%%%%%%%%%%%%%%%%%%%%%%%%%%%%%%%%%%%%%%%%%%%%%%%%%%%%%%%%%%%%%%%%%%%%%%%%%%%%%%%

As we demonstrate in \Cref{sec:pred-with-abstr-based-on-fixed-points}, \AbstractModelWeightFixedPoint can also be used to compute prediction ranges.
In general, an abstract fixed point is not guaranteed to exist for every abstract transformer for $\ConcreteTransFunc$. While $\exactAbstractTrans$ as defined above has a fixed point (see \Cref{prop:fixed-point-existence} in the appendix),
%\BG{Redundant: Moreover, \Cref{prop:abstract-fixed-points-include-concrete-ones} then implies that  \(\AbstractModelWeightFixedPoint\) over-approximates the fixed points of the concrete gradient descent operator \(\ConcreteTransFunc\), thereby over-approximating the set of all possible optimal parameters.}
% While \Cref{prop:fixed-point-existence} guarantees the existence of a fixed point for $\exactAbstractTrans$,
two significant barriers remain. First, the representation size of the  polynomial zonotope $\AbstractModelWeighti{i}$ for abstract model parameters grows exponentially in $i$, % with the number of symbols in the abstract training data (\(\abstractDom\)) and
the number of the steps  of abstract gradient descent. Additionally, testing for convergence is challenging, as checking for containment is already NP-hard for linear zonotopes~\cite{kulmburg-21-cnczcp}. % Finding the minimum and maximum of a polynomial zonotope is also computationally intensive. These limitations necessitate more tractable approaches.

\section{Efficient Sound Approximation for Learning Linear Models}\label{sec:gd-with-order-reduction}

In this section, we present an efficient abstract gradient descent method for ridge regression that guarantees a fixed point and addresses the exponential growth of generated abstract model parameters. The core idea is to use the linearization and  order reduction techniques introduced in the following to deal with intractable polynomial zonotopes. We develop a specific order reduction technique that admits a fixed point and enables us to find a closed-form solution for the fixed point.

%%%%%%%%%%%%%%%%%%%%%%%%%%%%%%%%%%%%%%%%%%%%%%%%%%%%%%%%%%%%%%%%%%%%%%%%%%%%%%%%
\subsection{Gradient Descent With Order Reduction}
\label{sec:grad-desc-with}

 A \emph{linearization} operator \(\linearize\) maps a polynomial zonotope \(\DummyZonotope\) to a linear zonotope \(\DummylinearZonotope\) that %%%%%%%%%%%%%%%%%%%%
\setlength{\intextsep}{0pt}
\setlength{\columnsep}{0pt}
\begin{wrapfigure}{r}{0.16\textwidth}
  \centering
% \vspace{-1mm}
  \includegraphics[width=0.15\textwidth]{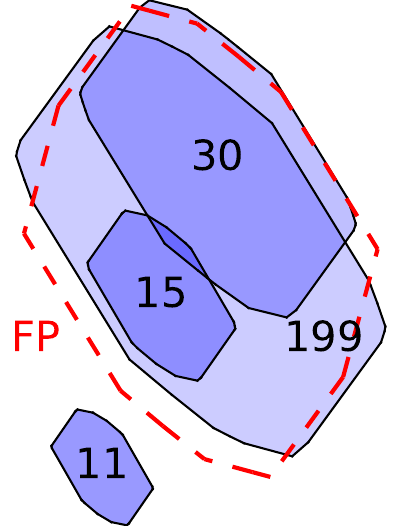}
  % \label{fig:soundapp}
\end{wrapfigure}
%%%%%%%%%%%%%%%%%%%%
over-approximates \(\DummyZonotope\): \(\concretization{\linearize(\DummyZonotope)} \supseteq \concretization{\DummyZonotope}\). An \emph{order reduction} operator \(\OrderReduction\) takes a linear zonotope \(\DummylinearZonotope\) as input and returns another linear zonotope \(\DummylinearZonotope\) of reduced order (representation size) such that: \(\concretization{\OrderReduction(\DummylinearZonotope)} \supseteq \concretization{\DummylinearZonotope}\). Specific linearization and order reduction operators are discussed in \Cref{sec:line-order-reduct}. The graph on the right shows the progression of the zonotope of parameters towards a fixed point from a random initialization by applying  200 iterations
of the following abstract gradient descent operator \(\LinearfuncAbstractTrans\).
After 200 iterations the resulting zonotope is close to a fixed point shown in red and computed using techniques we introduce in the following.
This operator first linearizes the gradient zonotope using \(\linearize\), subtracts it from the current abstract model parameters \(\AbstractModelWeight\), and then reduces the order of the resulting zonotope using \(\OrderReduction\):
%%%%%%%%%%%%%%%%%%%%
\begin{equation}
\LinearfuncAbstractTrans(\AbstractModelWeight) = \OrderReduction\Big( \AbstractModelWeight - \linearize \big( \eta \nabla \TrainingLossFunc(\AbstractModelWeight) \big) \Big),
\label{eq:gd-with-overapproximationnew}
\end{equation}
%%%%%%%%%%%%%%%%%%%%
%
%%%%%%%%%%%%%%%%%%%%%%%%%%%%%%%%%%%%%%%%

This operator is an abstract transformer for the gradient descent operator (\Cref{prop:gd-lin-is-abstract-transformer}). While each \begin{wrapfigure}[7]{l}[0pt]{3.2cm}
  \centering
  %\vspace{-1mm}
\includegraphics[width=0.18\textwidth]{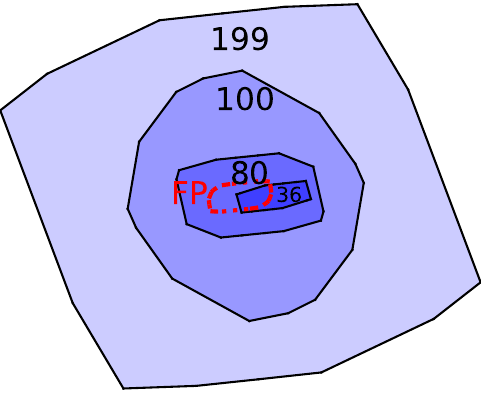}
\end{wrapfigure} step can be computed efficiently by bounding the order of the resulting zonotope, $\LinearfuncAbstractTrans$ may not always converge to a fixed point. The graph on the left illustrates a real example where, despite the existence of a zonotope containing all optimal possible parameters (shown again in red), abstract gradient descent with linearization and order reduction keeps diverging, generating larger zonotopes due to over-approximation error. In the following, we develop an order reduction operator that ensures abstract gradient descent with linearization converges to a fixed point for linear regression with \(\ell_2\) regularization. Additionally, we derive a closed-form solution for this fixed point. The red dotted zonotope in the graphs represents the fixed point generated by our method.

\paragraph{Decomposition of Gradients.}
A core idea in our approach is to decompose the abstract dataset into real and symbolic parts and use an order reduction operator that allows for the decomposition of the abstract gradient descent operator into components processing the real and symbolic parts in a specific evaluation order. Specifically, we observe that the components of the abstract training dataset $\AbstractDtrain = (\AbstractXtrain, \Abstractytrain)$ can be decomposed into a sum of a real part (without error symbols) and a symbolic part (containing only symbolic terms). Furthermore, the abstract model weight $\AbstractModelWeight$ produced by $\LinearfuncAbstractTrans$ can be decomposed as shown below. $\RealModelWeight$ is the real part of the zonotope and does not contain any error symbols, $\SymbolicModelWeightD$ contains only symbolic terms with error symbols that also occur in $\AbstractDtrain$, and $\SymbolicModelWeightND$ contains only symbolic terms introduced by linearization and order reduction, hence does not contain any error symbols from $\AbstractDtrain$ since these methods introduce fresh symbols.
%
%%%%%%%%%%%%%%%%%%%%%%%%%%%%%%%%%%%%%%%%
\begin{align*}
  \AbstractXtrain = \RealXtrain + \SymbolicXtrain, & &
                                                       \Abstractytrain = \Realytrain + \Symbolicytrain &&
                                                                                                            \AbstractModelWeight &= \RealModelWeight + \SymbolicModelWeightD + \SymbolicModelWeightND
\end{align*}
%%%%%%%%%%%%%%%%%%%%%%%%%%%%%%%%%%%%%%%%
% %%%%%%%%%%%%%%%%%%%%%%%%%%%%%%%%%%%%%%%%
% \begin{align*}
%   \AbstractModelWeight &= \RealModelWeight + \SymbolicModelWeightD + \SymbolicModelWeightND,
% \end{align*}
% %%%%%%%%%%%%%%%%%%%%%%%%%%%%%%%%%%%%%%%%
% where
The key observation that enables our approach is that one step of abstract gradient descent can now be decomposed into three components:
%%%%%%%%%%%%%%%%%%%%%%%%%%%%%%%%%%%%%%%%
\begin{align}
  \LinearfuncAbstractTrans (\AbstractModelWeight) &= \FullGradientR + \FullGradientD + \FullGradientND \label{eq:graddecomp}
\end{align}
%%%%%%%%%%%%%%%%%%%%%%%%%%%%%%%%%%%%%%%%
where

%%%%%%%%%%%%%%%%%%%%%%%%%%%%%%%%%%%%%%%%
\resizebox{1\linewidth}{!}{
  \begin{minipage}{1.2\linewidth}
    \begin{align*}
      \FullGradientR &= \textcolor{eqhighlightcolor}{\RealModelWeight} - \LearningRate\cdot  \left( \frac{2}{n}(\RealXtrain^T\RealXtrain\textcolor{eqhighlightcolor}{\RealModelWeight} - \RealXtrain^T\Realytrain) + 2\RegularizationCoef\textcolor{eqhighlightcolor}{\RealModelWeight}\right) \\
      %%%%%%%%%%%%%%%%%%%%
        \FullGradientD &=  \textcolor{eqhighlightcolor}{\SymbolicModelWeightD} -\LearningRate\cdot
                         \Big((2\RegularizationCoef\IdentityMat + \frac{2}{n}\RealXtrain^T\RealXtrain)\textcolor{eqhighlightcolor}{\SymbolicModelWeightD}  % \\ &\qquad
                                                                                                                                                                + \frac{2}{n}(\RealXtrain^T\SymbolicXtrain+\SymbolicXtrain^T\RealXtrain)\textcolor{eqhighlightcolor}{\RealModelWeight} - \frac{2}{n}\SymbolicXtrain\Realytrain - \frac{2}{n}\RealXtrain^T\Symbolicytrain\Big)\\
      %%%%%%%%%%%%%%%%%%%%
      \begin{split}
        \FullGradientND &= \OrderReduction\Bigg(\textcolor{eqhighlightcolor}{\SymbolicModelWeightND}-\LearningRate\cdot
                          \Bigg(                    (2\RegularizationCoef\IdentityMat+\frac{2}{n}\RealXtrain^T\RealXtrain)\textcolor{eqhighlightcolor}{\SymbolicModelWeightND}\\
                        &\qquad +   \linearize\Big(
                          \frac{2}{n}\Big((\RealXtrain^T\SymbolicXtrain+\SymbolicXtrain^T\RealXtrain)(\textcolor{eqhighlightcolor}{\SymbolicModelWeightD} + \textcolor{eqhighlightcolor}{\SymbolicModelWeightND})% \\
                        % &\qquad\qquad\qquad
                          +
                          \SymbolicXtrain^T\SymbolicXtrain( \textcolor{eqhighlightcolor}{\RealModelWeight} + \textcolor{eqhighlightcolor}{\SymbolicModelWeightD} + \textcolor{eqhighlightcolor}{\SymbolicModelWeightND}) - \SymbolicXtrain^T\Symbolicytrain\Big)
                          \Big)\Bigg)\Bigg)
      \end{split}
    \end{align*}
  \end{minipage}
}
% \BG{Color code the parameters and componenets in the formula and graph}
%%%%%%%%%%%%%%%%%%%%%%%%%%%%%%%%%%%%%%%%

%%%%%%%%%%%%%%%%%%%%%%%%%%%%%%%%%%%%%%%%
\setlength{\intextsep}{5pt}
\setlength{\columnsep}{5pt}
\begin{wrapfigure}[4]{r}{0.27\textwidth}
\vspace{-9pt}
  \centering
\begin{tikzpicture}[
  node distance=6.5mm,
  every node/.style={font=\scriptsize}, %draw, thick, fill=blue!15,
  every edge/.style={thick},
  auto
  ]
    % Nodes with circles
    \node (wr) {$\RealModelWeight$};
    \node[below of=wr] (wd) {$\SymbolicModelWeightD$};
    \node[below of=wd] (wn) {$\SymbolicModelWeightND$};

    \node[right of=wr,xshift=1.5cm] (gr) {$\FullGradientR$};  % reduced xshift
    \node[below of=gr] (gd) {$\FullGradientD$};
    \node[below of=gd] (gnh) {$\FullGradientND$};

    % Connections between nodes
    \draw[->,very thick] (wr) to (gr);
    \draw[->,very thick] (wr) to (gd);
    \draw[->,very thick] (wr) to[bend right=15] (gnh);

    \draw[->,very thick] (wd) to (gd);
    \draw[->,very thick] (wd) to[bend right=15] (gnh);

    \draw[->,very thick] (wn) to (gnh);
  \end{tikzpicture}

%%% Local Variables:
%%% mode: LaTeX
%%% TeX-master: "../main"
%%% End:
 % \scriptsize \caption{Dependency graph for gradient components}
\vspace{-0.2cm}
  \label{fig:depgraph}
\end{wrapfigure}
%%%%%%%%%%%%%%%%%%%%%%%%%%%%%%%%%%%%%%%%

$\FullGradientR$, the \emph{real part updater}, only relies on and updates $\RealModelWeight$, the real part of the abstract weight, and coincides with the abstract gradient operator $\ConcreteTransFunc$ on the real part of abstract weights. $\FullGradientD$, the \emph{symbolic data-dependent updater}, which is a function of $\SymbolicModelWeightD$ itself and $\RealModelWeight$, updates $\SymbolicModelWeightD$ based on some linear terms and consists of symbols that come from the abstract dataset $\AbstractDtrain$ and does not include any non-linear terms; hence applying this does not generate higher-order terms, so the output is a linear zonotope, and no linearization is needed. Additionally, no order reduction is needed since the number of error symbols remains constant. $\FullGradientND$, the \emph{symbolic data-independent updater}, is a function of $\SymbolicModelWeightND$ itself, $\SymbolicModelWeightD$, and $\RealModelWeight$. It consists of the non-linear terms in the gradient, hence performs linearization to over-approximate higher-order terms and performs order reduction to reduce the number of error symbols, which would otherwise grow exponentially with the number of iterations. Therefore, it generates the updated $\SymbolicModelWeightND$ as a linear zonotope that does not share any symbol with $\AbstractDtrain$, i.e., only consists of fresh symbols introduced by linearization and order reduction. See Appendix~\ref{sec:app-constructing-fp} for formal statements and proofs for this section.

\begin{prop}\label{prop:sufficient-condition-for-order-reduction-fixedpoint}
Any abstract model weights $\AbstractModelWeightFixedPoint = \RealModelWeightFixedPoint + \SymbolicModelWeightFixedPointD + \SymbolicModelWeightFixedPointND$ is a fixed point for the abstract gradient descent operator $\LinearfuncAbstractTrans$ if the following conditions are satisfied:
\begin{align}
\RealModelWeightFixedPoint = \ConcreteTransFunc_R(\RealModelWeightFixedPoint), \quad \quad
\SymbolicModelWeightFixedPointD  = \LinearfuncAbstractTrans_D(\RealModelWeightFixedPoint, \SymbolicModelWeightFixedPointD) \quad \quad
\SymbolicModelWeightFixedPointND \eqA \LinearfuncAbstractTrans_N(\RealModelWeightFixedPoint,  \SymbolicModelWeightFixedPointD \SymbolicModelWeightFixedPointND)
\label{eq:fixedpoint}
\end{align}

\end{prop}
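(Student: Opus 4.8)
The idea is to verify the fixed-point condition of \Cref{def:zonotope-fp} component-by-component using the decomposition of $\LinearfuncAbstractTrans$ from \Cref{eq:graddecomp}, and then to reconcile the mismatch between the first two hypotheses of \Cref{eq:fixedpoint}, which are exact equalities, and the third, which is only an equivalence $\eqA$ because the order reduction inside the data-independent updater introduces fresh error symbols.

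First I would expand $\LinearfuncAbstractTrans(\AbstractModelWeightFixedPoint)$ for $\AbstractModelWeightFixedPoint = \RealModelWeightFixedPoint + \SymbolicModelWeightFixedPointD + \SymbolicModelWeightFixedPointND$. By \Cref{eq:graddecomp} together with the structural facts recalled just before the statement — the real-part updater $\ConcreteTransFunc_R$ reads and writes only the error-symbol-free part, the data-dependent updater $\LinearfuncAbstractTrans_D$ depends only on $\RealModelWeightFixedPoint$ and $\SymbolicModelWeightFixedPointD$ and produces a (linear) zonotope over the data symbols, and the data-independent updater $\LinearfuncAbstractTrans_N$ produces a zonotope over fresh symbols only — we obtain
\[
\LinearfuncAbstractTrans(\AbstractModelWeightFixedPoint) \;=\; \ConcreteTransFunc_R(\RealModelWeightFixedPoint) \;+\; \LinearfuncAbstractTrans_D(\RealModelWeightFixedPoint, \SymbolicModelWeightFixedPointD) \;+\; \LinearfuncAbstractTrans_N(\RealModelWeightFixedPoint, \SymbolicModelWeightFixedPointD, \SymbolicModelWeightFixedPointND).
\]
Substituting the first two equalities of \Cref{eq:fixedpoint} collapses the first two summands to $\RealModelWeightFixedPoint$ and $\SymbolicModelWeightFixedPointD$ verbatim, so $\LinearfuncAbstractTrans(\AbstractModelWeightFixedPoint) = \RealModelWeightFixedPoint + \SymbolicModelWeightFixedPointD + Z$ with $Z \defas \LinearfuncAbstractTrans_N(\RealModelWeightFixedPoint, \SymbolicModelWeightFixedPointD, \SymbolicModelWeightFixedPointND)$, and the third hypothesis gives $Z \eqA \SymbolicModelWeightFixedPointND$.

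The remaining step is to upgrade $Z \eqA \SymbolicModelWeightFixedPointND$ to the joint-concretization identity $\concretization{\RealModelWeightFixedPoint + \SymbolicModelWeightFixedPointD + Z,\, \AbstractDtrain} = \concretization{\RealModelWeightFixedPoint + \SymbolicModelWeightFixedPointD + \SymbolicModelWeightFixedPointND,\, \AbstractDtrain}$ required by \Cref{def:zonotope-fp}. For this I would prove a small \emph{frame lemma} (stated and proved in \Cref{sec:app-constructing-fp}): if $E$ and $E'$ use only error symbols disjoint from those occurring in $\AbstractDtrain$ and in $U \defas \RealModelWeightFixedPoint + \SymbolicModelWeightFixedPointD$ — which holds for both $\SymbolicModelWeightFixedPointND$ and $Z$, since each consists solely of fresh symbols produced by linearization and order reduction — then
\[
\concretization{U + E,\, \AbstractDtrain} \;=\; \{\,(u + v,\, a) \mid (u, a) \in \concretization{U, \AbstractDtrain},\; v \in \concretization{E}\,\},
\]
because an assignment to the fresh symbols of $E$ ranges independently of the assignment to the symbols shared by $U$ and $\AbstractDtrain$. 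Applying this with $E = Z$ and $E' = \SymbolicModelWeightFixedPointND$ and using $\concretization{Z} = \concretization{\SymbolicModelWeightFixedPointND}$ (the meaning of $\eqA$) yields equal joint concretizations, which is exactly the condition of \Cref{def:zonotope-fp}; hence $\AbstractModelWeightFixedPoint$ is a fixed point of $\LinearfuncAbstractTrans$. (That $\concretization{\AbstractModelWeightFixedPoint} \supseteq \ModelWeightFixedPointPoss$ then follows from \Cref{prop:abstract-fixed-points-include-concrete-ones}.)

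\textbf{Main obstacle.} The delicate point is precisely this last step. Because order reduction introduces fresh error symbols, $Z$ and $\SymbolicModelWeightFixedPointND$ are in general not syntactically equal, so one cannot argue $\LinearfuncAbstractTrans(\AbstractModelWeightFixedPoint) = \AbstractModelWeightFixedPoint$ literally; the argument must descend to concretization and carefully separate the error symbols shared with $\AbstractDtrain$ from the fresh ones. Establishing that concretization factors multiplicatively over disjoint groups of symbols — and that this still holds for the \emph{joint} concretization with $\AbstractDtrain$ — is the technical heart, and it is exactly why \Cref{def:zonotope-fp} is phrased via joint concretization rather than syntactic equality.
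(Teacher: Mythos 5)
Your proposal is correct and follows essentially the same route as the paper's proof of this statement (given in the appendix as the proof of the sufficient-condition proposition): reduce the fixed-point condition to joint-concretization equivalence with $\AbstractDtrain$, collapse the first two summands via the exact equalities (which preserve the data-shared symbols), and handle the third summand using $\eqA$ together with the fact that $\SymbolicModelWeightFixedPointND$ and its image share no symbols with $\AbstractDtrain$. Your explicit "frame lemma" is simply a spelled-out version of the paper's one-line appeal to that symbol disjointness, so the two arguments coincide in substance.
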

%%%%%%%%%%%%%%%%%%%%%%%%%%%%%%%%%%%%%%%%%%%%%%%%%%%%%%%%%%%%%%%%%%%%%%%%%%%%%%%%

%%%%%%%%%%%%%%%%%%%%%%%%%%%%%%%%%%%%%%%%
\begin{wrapfigure}[10]{l}[5pt]{7cm}
  \centering
  \vspace{-2.5mm}
 % \hspace{5mm}
%%%%%%%%%%%%%%%%%%%%%%%%%%%%%%%%%%%%%%%%
%\hspace{6mm} %
\LinesNumberedHidden
\begin{minipage}{0.4\textwidth} %0.447\textwidth}
    \hspace{1mm}
  \begin{algorithm}[H] {\small

      \caption{\small Abstract Learning}\label{alg:final-algo}
      \KwIn{abstract dataset $\AbstractDtrain$, learning rate $\LearningRate$,\\ regularization coeff. \RegularizationCoef, transformation % matrix
        $\A$}
         \SetAlgoNlRelativeSize{0}
      % \KwOut{Abstract model parameters \AbstractModelWeightFixedPoint, a fixed point of \OurLinearfuncAbstractTrans that over-approximates \ModelWeightFixedPointPoss}
      %%%%%%%%%%%%%%%%%%%%
      $\RealModelWeightFixedPoint \gets \algcall{closedFormReal}(\AbstractDtrain,\RegularizationCoef)$
      %%\MyComment*[f]{\Cref{eq:fixed-point-for-wr}} \label{alg-line:simple-wr}
      \\
      $\SymbolicModelWeightFixedPointD \gets  \algcall{closedFormSymb}(\AbstractDtrain,\RegularizationCoef,\RealModelWeightFixedPoint)$     \\
      $\fpeqs \gets \algcall{genNonDataEq}(\AbstractDtrain,\RegularizationCoef,\LearningRate,\RealModelWeightFixedPoint,\SymbolicModelWeightFixedPointD)$ %\MyComment*[f]{\Cref{eq:decomposed-linear-system}}\label{alg-line:simple-constrct-fp-eqs}
      \\
      $\SymbolicModelWeightFixedPointND\gets \algcall{solveFixedPointEq}(\fpeqs, \RegularizationCoef, \LearningRate)$ \label{alg-line:simple-solve-no-split}\\
      $\AbstractModelWeightFixedPoint \gets \RealModelWeightFixedPoint + \SymbolicModelWeightFixedPointD + \SymbolicModelWeightFixedPointND$\\
      %%%%%%%%%%%%%%%%%%%%
      \Return{$\AbstractModelWeightFixedPoint$} \\}
  \end{algorithm}
  \end{minipage}
\end{wrapfigure}
%%%%%%%%%%%%%%%%%%%%%%%%%%%%%%%%%%%%%%%%
We summarize the fixed point construction process in \Cref{alg:final-algo}. The algorithm takes as input an abstract dataset \(\AbstractDtrain\), a learning rate \(\LearningRate\), a regularization coefficient \(\RegularizationCoef\), and a transformation matrix \(\A\). The construction order is: first \(\RealModelWeightFixedPoint\), then \(\SymbolicModelWeightFixedPointD\), and finally \(\SymbolicModelWeightFixedPointND\), each step building on the previous results. Closed-form solutions for \(\RealModelWeightFixedPoint\) and \(\SymbolicModelWeightFixedPointD\) exist, as detailed in \Cref{lem:closed-form-solutions-for-fd-r-and-d}. The fixed point of \(\ConcreteTransFunc_R\) matches the fixed point of gradient descent on the real part of the abstract data, independent of other parts. Given \(\RealModelWeightFixedPoint\), \(\SymbolicModelWeightFixedPointD\) is obtained by solving a system of linear equations from \Cref{eq:fixedpoint}, ensuring zonotope containment by equalizing the coefficients of the same error symbols, which is a sufficient condition for zonotope equivalence. This involves solving a system of \(m\) equations, where \(m\) is the number of error symbols in \(\AbstractDtrain\).
%We summarize the process of fixed point construction in \Cref{alg:final-algo}. It takes as input an abstract dataset \(\AbstractDtrain\), a learning rate \(\LearningRate\), a regularization coefficient \(\RegularizationCoef\), and a transformation matrix \(\A\) whose purpose will become clear in the following.
%The algorithm constructs the fixed point following the evaluation order: first constructing $\RealModelWeightFixedPoint$, then $\SymbolicModelWeightFixedPointD$, and finally $\SymbolicModelWeightFixedPointND$, with each step depending on the previous results. As shown in the appendix (\Cref{lem:closed-form-solutions-for-fd-r-and-d}), closed-form solutions exist for $\RealModelWeightFixedPoint$ and $\SymbolicModelWeightFixedPointD$.
%The fixed point of $\ConcreteTransFunc_R$ coincides with the fixed point of gradient descent on the real part of the abstract data, independent of all other parts. Given $\RealModelWeightFixedPoint$, $\SymbolicModelWeightFixedPointD$ is obtained by solving a system of linear equations constructed  from \Cref{eq:fixedpoint} and enforcing zonotope containment by ensuring that the symbolic terms become equal, meaning that the same symbols have the same coefficients, which is a sufficient condition for the equivalence of two zonotopes, which involves solving a system of $m$  equations where $m$ is the number of error symbols in $\AbstractDtrain$.

%%%%%%%%%%%%%%%%%%%%%%%%%%%%%%%%%%%%%%%%
\begin{wrapfigure}[7]{r}{0.4\textwidth}
\centering
\begin{minipage}{0.4\textwidth}
\vspace{-2mm}
\includegraphics[width=\textwidth]{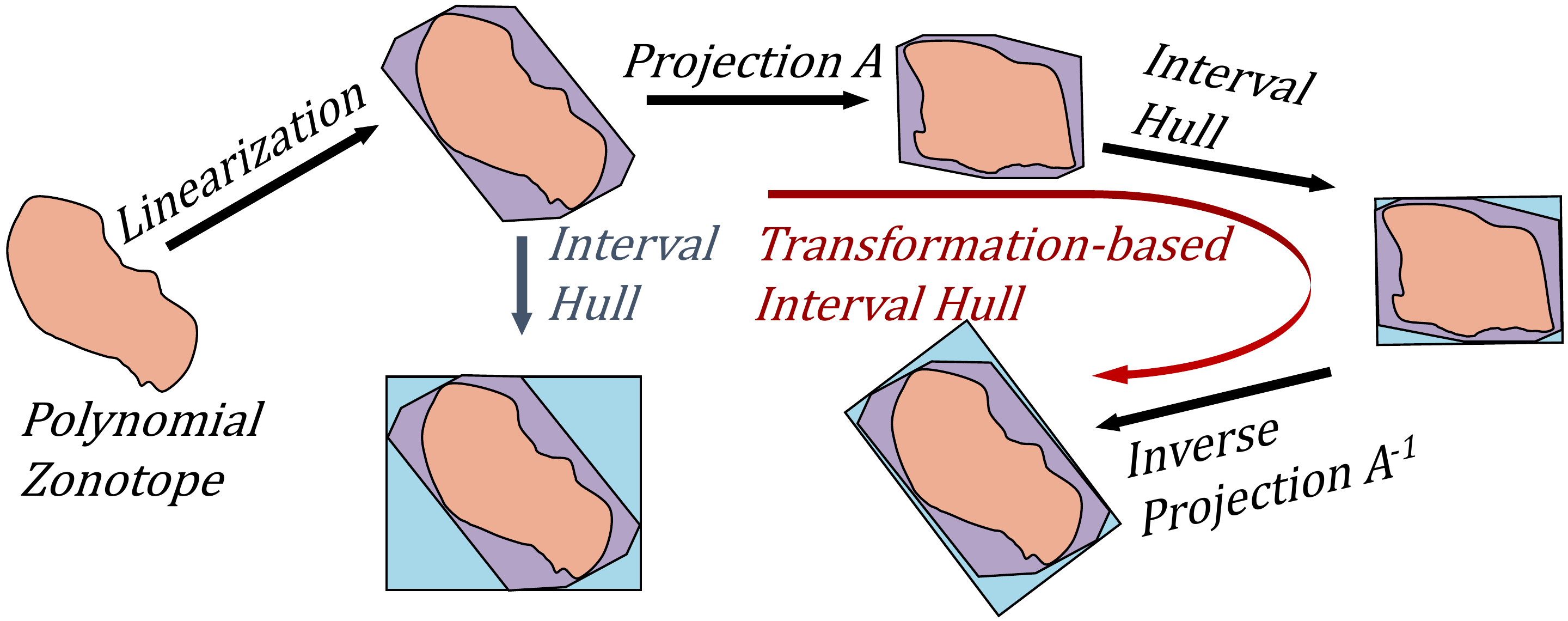}
% \vspace{-2mm}
\end{minipage}
 % \scriptsize \caption{Dependency graph for gradient components}
%\vspace{-0.5cm}
%\label{fig:depgraph}
\end{wrapfigure}
%%%%%%%%%%%%%%%%%%%%%%%%%%%%%%%%%%%%%%%%
To construct $\SymbolicModelWeightFixedPointND$ in \Cref{eq:fixedpoint}, which involves zonotope equivalence, we encounter a challenge because the RHS and LHS have different error symbols, making symbolic enforcement of equivalence inapplicable.
We exploit the properties of the order reduction operator to create a system of linear equations whose solution yields a fixed point for $\LinearfuncAbstractTrans_N$. Recall that this component uses linearization to handle non-linear terms and applies order reduction to efficiently manage the order of the resulting linear zonotope.
As shown on the right side, the order reduction operator in $\LinearfuncAbstractTrans_N$ utilizes a transformation matrix $\A$ to project the higher-order linear zonotope into a new space and then over-approximates it using an interval hull—a zonotope that is a box enclosing the input higher-order zonotope in the projected space—and finally projects this box back into the original space using $\A^{-1}$. The core idea of transformation-based order reduction~\cite{kopetzki2017methods} is that interval hulls provide a better over-approximation in the projected space, where the shape of input zonotope is more like a box than in the original space, as shown in the graph on the right.

%%%%%%%%%%%%%%%%%%%%%%%%%%%%%%%%%%%%%%%%
\begin{wrapfigure}[5]{l}{0.4\textwidth}
\centering
\begin{minipage}{0.4\textwidth}
\vspace{-2mm}
\includegraphics[width=\textwidth]{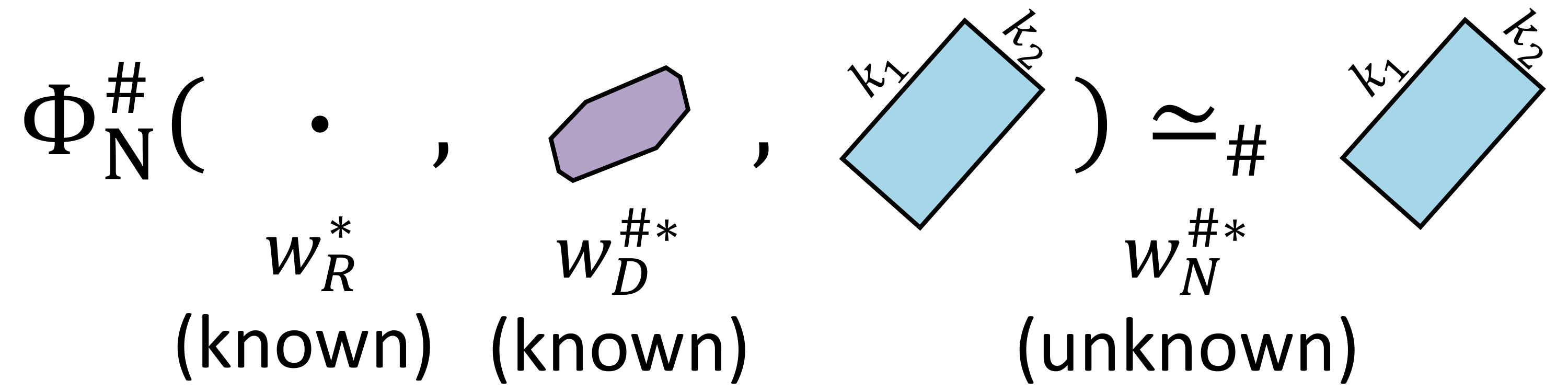}
% \vspace{-2mm}
\end{minipage}
 % \scriptsize \caption{Dependency graph for gradient components}
%\vspace{-0.5cm}
%\label{fig:depgraph}
\end{wrapfigure}
%%%%%%%%%%%%%%%%%%%%%%%%%%%%%%%%%%%%%%%%
At a fixed point, the input and output interval hulls of $\LinearfuncAbstractTrans_N$ should be equivalent in the projected space. This equivalence can be effectively enforced by checking for equal edge lengths along each dimension, which is sufficient for two interval hulls to be equivalent. This translates to a system of $k$ equations, where $k$ is the number of model parameters, hence can be done efficiently. We show that this system of equations is guaranteed to have a solution. We are now ready to state our main technical result.

\begin{theo}[Correctness of \Cref{alg:final-algo}]
  \label{theo:algorithm-abstract-fp-correctness}
  Given a set of possible training datasets $\dtrainPoss$ associated with uncertain data $\Data$, and an appropriate abstraction function $\abstractSymbol$ in the zonotope abstract domain, and given a regularization coefficient $\RegularizationCoef$, \Cref{alg:final-algo}, when provided with $\AbstractDtrain = \abstraction{\dtrainPoss}$ as input, computes abstract model parameters $\AbstractModelWeightFixedPoint$ such that:
  \vspace{-2mm}
  \[
    \concretization{\AbstractModelWeightFixedPoint} \supseteq \ModelWeightFixedPointPoss,
  \]
  where $\ModelWeightFixedPointPoss$ denotes the set of all optimal model parameters corresponding to $\dtrainPoss$ for linear regression with \(\ell_2\) regularization with regularization coefficient $\RegularizationCoef$.
\end{theo}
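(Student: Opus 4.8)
The plan is to show that the triple $\AbstractModelWeightFixedPoint = \RealModelWeightFixedPoint + \SymbolicModelWeightFixedPointD + \SymbolicModelWeightFixedPointND$ returned by \Cref{alg:final-algo} is an abstract fixed point of the efficient abstract gradient descent operator $\LinearfuncAbstractTrans$ in the sense of \Cref{def:zonotope-fp}, after which soundness is immediate: $\LinearfuncAbstractTrans$ is an abstract transformer for the concrete step $\ConcreteTransFunc$ (\Cref{prop:gd-lin-is-abstract-transformer}), so \Cref{prop:abstract-fixed-points-include-concrete-ones} gives $\concretization{\AbstractModelWeightFixedPoint} \supseteq \ModelWeightFixedPointPoss$ once $\AbstractModelWeightFixedPoint$ is a fixed point. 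The ``appropriate abstraction function'' hypothesis enters only through $\concretization{\abstraction{\dtrainPoss}} \supseteq \dtrainPoss$ (\Cref{def:abstract-domain}): since for $\RegularizationCoef > 0$ the ridge objective on each $\dtrain_i$ is strongly convex, each $\ModelWeightOptimali{i} = \learner(\dtrain_i)$ is the unique gradient-descent fixed point on $\dtrain_i$, and as $\dtrain_i \in \dtrainPoss \subseteq \concretization{\AbstractDtrain}$ the set $\ModelWeightFixedPointPoss$ for $\dtrainPoss$ is contained in that for $\concretization{\AbstractDtrain}$. Everything thus reduces to verifying the fixed-point property.

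By \Cref{prop:sufficient-condition-for-order-reduction-fixedpoint} it suffices to check the three equations of \Cref{eq:fixedpoint}. The real part $\RealModelWeightFixedPoint = \algcall{closedFormReal}(\AbstractDtrain,\RegularizationCoef) = (\RealXtrain^{T}\RealXtrain + n\RegularizationCoef\IdentityMat)^{-1}\RealXtrain^{T}\Realytrain$ is well defined ($\RegularizationCoef > 0$) and, by direct calculation, is the unique fixed point of $\ConcreteTransFunc_R$. For $\SymbolicModelWeightFixedPointD$, reading off $\FullGradientD$, the equation $\SymbolicModelWeightFixedPointD = \LinearfuncAbstractTrans_D(\RealModelWeightFixedPoint,\SymbolicModelWeightFixedPointD)$ holds exactly when the bracketed increment vanishes, i.e. when $(2\RegularizationCoef\IdentityMat + \tfrac{2}{n}\RealXtrain^{T}\RealXtrain)\SymbolicModelWeightFixedPointD$ equals a fixed affine expression in the error symbols of $\AbstractDtrain$ (determined by $\RealModelWeightFixedPoint$, $\SymbolicXtrain$, $\Symbolicytrain$); matching coefficients symbol-by-symbol turns this into a linear system with invertible coefficient matrix $2\RegularizationCoef\IdentityMat + \tfrac{2}{n}\RealXtrain^{T}\RealXtrain$, uniquely solved by $\algcall{closedFormSymb}$ (\Cref{lem:closed-form-solutions-for-fd-r-and-d}). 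Neither of these two components is acted on by linearization or order reduction, so both equalities are literal.

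The crux is the third equation $\SymbolicModelWeightFixedPointND \eqA \LinearfuncAbstractTrans_N(\RealModelWeightFixedPoint, \SymbolicModelWeightFixedPointD + \SymbolicModelWeightFixedPointND)$, which only requires zonotope equivalence because the right-hand side introduces fresh symbols. I would use that the output of the transformation-based order-reduction operator $\funcOrderReductionND$ is always the $\A^{-1}$-image of an axis-aligned box in the $\A$-projected space, and two such sets are $\eqA$-equivalent iff their half-edge-length vectors agree. Writing $\mathbf{r} \in \mathbb{R}_{\geq 0}^{k}$ for the half-edge-lengths of the projection of $\SymbolicModelWeightFixedPointND$, and using that the interval hull of a linear image of a box is obtained by applying the entry-wise absolute value of the transformation to its half-widths, the condition collapses to a nonnegative linear equation $\mathbf{r} = \bigl(\lvert\mathbf{M}\rvert + \LearningRate\,\mathbf{N}_0\bigr)\mathbf{r} + \mathbf{b}$, where $\mathbf{M} = \A\bigl(\IdentityMat - \LearningRate(2\RegularizationCoef\IdentityMat + \tfrac{2}{n}\RealXtrain^{T}\RealXtrain)\bigr)\A^{-1}$, the nonnegative matrix $\mathbf{N}_0$ collects the linearized quadratic terms that scale with $\mathbf{r}$ (through $\SymbolicXtrain^{T}\SymbolicXtrain$ and $\RealXtrain^{T}\SymbolicXtrain + \SymbolicXtrain^{T}\RealXtrain$), and the nonnegative $\mathbf{b}$ collects the $\mathbf{r}$-independent contribution (from $\RealModelWeightFixedPoint$, $\SymbolicModelWeightFixedPointD$, $\SymbolicXtrain$, $\Symbolicytrain$); this is what $\algcall{genNonDataEq}$ assembles from \Cref{eq:graddecomp}.

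The main obstacle is proving this system has a solution $\mathbf{r} \geq 0$, equivalently $\rho\bigl(\lvert\mathbf{M}\rvert + \LearningRate\,\mathbf{N}_0\bigr) < 1$ — and note that $\lvert\mathbf{M}\rvert$ may have strictly larger spectral radius than $\mathbf{M}$, so the usual gradient-descent contraction bound does not suffice. The plan is to choose $\A$ to orthogonally diagonalize the symmetric positive-definite matrix $2\RegularizationCoef\IdentityMat + \tfrac{2}{n}\RealXtrain^{T}\RealXtrain$; then $\mathbf{M}$, and hence $\lvert\mathbf{M}\rvert$, is diagonal with entries $\lvert 1 - \LearningRate(2\RegularizationCoef + \tfrac{2}{n}\sigma_i)\rvert$, which for $\LearningRate$ below the standard step-size bound equal $1 - \LearningRate(2\RegularizationCoef + \tfrac{2}{n}\sigma_i) \leq 1 - 2\LearningRate\RegularizationCoef$. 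Bounding the row sums of $\mathbf{N}_0$ by a constant $c$ that depends only on the magnitude of the data uncertainty then gives $\lVert \lvert\mathbf{M}\rvert + \LearningRate\,\mathbf{N}_0 \rVert_\infty \leq 1 - \LearningRate(2\RegularizationCoef - c) < 1$ whenever $2\RegularizationCoef > c$, whence $\IdentityMat - \lvert\mathbf{M}\rvert - \LearningRate\,\mathbf{N}_0$ is invertible with nonnegative inverse $\sum_{j\geq 0}(\lvert\mathbf{M}\rvert + \LearningRate\,\mathbf{N}_0)^{j}$ and $\mathbf{r} = (\IdentityMat - \lvert\mathbf{M}\rvert - \LearningRate\,\mathbf{N}_0)^{-1}\mathbf{b}$ is the solution returned by $\algcall{solveFixedPointEq}$. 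Quantifying exactly how much $\ell_2$ regularization is needed for the guaranteed contraction to dominate the over-approximation error of linearization and order reduction is the delicate part; assembling $\mathbf{M}$, $\mathbf{N}_0$, $\mathbf{b}$ and verifying the box-equivalence reduction is routine bookkeeping.
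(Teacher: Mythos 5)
Your proposal follows essentially the same route as the paper's proof: reduce to the three decomposed fixed-point equations of \Cref{prop:sufficient-condition-for-order-reduction-fixedpoint}, use the ridge closed forms for $\RealModelWeightFixedPoint$ and $\SymbolicModelWeightFixedPointD$, parameterize $\SymbolicModelWeightFixedPointND$ as the $\A^{-1}$-image of a centered box so that $\eqA$-equivalence becomes equality of edge lengths, and establish solvability of the resulting nonnegative linear system via a row-sum/diagonal-dominance condition with $\A$ chosen to diagonalize $\RealXtrain^T\RealXtrain$ (the paper phrases your Neumann-series argument as an M-matrix property, and likewise only guarantees a solution for $\RegularizationCoef$ above a data-dependent threshold $\RCmin$, falling back to zonotope splitting otherwise). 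The proposal is correct and matches the paper's argument in all essential respects.
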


%%%%%%%%%%%%%%%%%%%%%%%%%%%%%%%%%%%%%%%%%%%%%%%%%%%%%%%%%%%%%%%%%%%%%%%%%%%%%%%%
%%%%%%%%%%%%%%%%%%%%%%%%%%%%%%%%%%%%%%%%
%\begin{proof}[Proof Sketch]
%  Detailed proofs for the correctness of the algorithm are presented in \Cref{sec:app-constructing-fp} and \Cref{sec:app-reduc-requ-regul}.
%Based on the decomposition of the fixed point equations (\Cref{eq:main-decomposed-fixpoint-real,eq:main-decomposed-fixpoint-symbolic-no-data}) for \OurLinearfuncAbstractTrans that is an abstract transformer for
% gradient descent (\Cref{prop:gd-lin-is-abstract-transformer}), the algorithm
%\BGI{adapt}
% we then demonstrate that $\RCmin$ can be lowered through splitting.
%\end{proof}

% Intuitively, the equal concretization of boxes in Eq.~\Cref{eq-old:equal-concretization-box} is converted to the equivalence between their corresponding edge lengths in Eq.~\ref{eq-old:equal-length-box}. Therefore, constructing the last component $\SymbolicModelWeightFixedPointND$ translates to finding a vector $\boldsymbol{k}\in \mathbb{R}^d$ satisfying Eq.~\ref{eq-old:equal-length-box}.

% By combining Eq.~\ref{eq-old:closed-form-real-fixpoint}, \ref{eq-old:closed-form-data-fixpoint} and \ref{eq-old:equal-length-box}, the fixed point $\AbstractModelWeightFixedPoint$ can be obtained as follows:

% \bgi{In algo 1: Red thing Should be $\RealModelWeightFixedPoint$}

%%% Local Variables:
%%% mode: LaTeX
%%% TeX-master: "main"
%%% End:

\newcommand{\meyer}{\textsc{Meyer}\xspace}
\newcommand{\delfigspace}{\vspace{-6mm}}
\newcommand{\delmuchfigspace}{\vspace{-9mm}}

%%%%%%%%%%%%%%%%%%%%%%%%%%%%%%%%%%%%%%%%%%%%%%%%%%%%%%%%%%%%%%%%%%%%%%%%%%%%%%%%
\section{Experiments}\label{sec:exp}
% In our experiments,
We implement \sys using SymPy~\cite{meurer2017sympy}, a Python library for symbolic computations and evaluate the system on two key applications: (1) computing prediction ranges and robustness certification for linear models trained on uncertain data, and (2) robustness of model weights for causal inference using linear models as a case study. % (\Cref{sec:exp:applications})
 We also the performance of \sys under varying conditions, including  varying the degree of training data uncertainty. % , such as the number of uncertain data points,
% and (2) different model hyperparameters, such as the regularization coefficient (\Cref{sec:exp:micro-benchmark}).
%
All our experiments are performed on a single machine with an Apple M1 chip, 8 cores, and 16 GB RAM.
Experiment are repeated 5 times with different random seeds, and we report the mean % and standard deviation
(error bars denote $3\sigma$). % By default, all error bars denote three standard deviations.
% \jiongli{added description about error bars}
The anonymized code is shared at \href{https://anonymous.4open.science/r/Zorro-C8F3}{https://anonymous.4open.science/r/Zorro-C8F3}.

%%%%%%%%%%%%%%%%%%%%%%%%%%%%%%%%%%%%%%%%%%%%%%%%%%%%%%%%%%%%%%%%%%%%%%%%%%%%%%%%
%\paragraph{Datasets.}
\paragraph{Datasets, Baselines, and Metrics.}
For robustness verification we use regression tasks: for \emph{MPG}~\cite{misc_auto_mpg_9} (392 instances) we  predict fuel consumption based on car features (cylinders, horsepower, weight); for  \emph{Insurance}~\cite{arun_jangir_willian_oliveira_2023} (1338 instances) we predict medical insurance charges based on demographics (age, gender, BMI), habits (smoking), and geographical features.
% We conduct robustness verification on the MPG~\cite{misc_auto_mpg_9} and Insurance~\cite{arun_jangir_willian_oliveira_2023} datasets.
% The MPG dataset comprises 392 instances of car features (cylinders, horsepower, weight) used to predict city-cycle fuel consumption in miles per gallon. The Insurance dataset contains 1338 instances of demographics (age, gender, BMI), habits (smoking), and geography to predict medical insurance charges. For the causal inference experiment, we generated synthetic data with known ground truth.
We use a 80:20 train-test split and inject random errors to the training data varying (i) the \emph{Uncertain Data Percentage}, the percentage of instances that have uncertain features / labels, and (ii) the \emph{Uncertainty Radius}, the difference between the minimum and maximum possible value of an uncertain feature expressed as a fraction of the feature's domain.
\slurpit{and then construct the abstract dataset $\AbstractDtrain$ by determining zonotopes to represent varying levels of uncertainty.
We vary the uncertainty radius (with larger values indicating greater uncertainty) and the number of data points with uncertainty to over-approximate possible variations of the dataset.}
%\paragraph{Generating Abstract Datasets}
%To create the abstract dataset $\AbstractDtrain$, we take the clean dataset $\dtrain$ as input and introduce error symbols to it.
%In most of the experiments such as robustness verification, to better control and quantify the amount of uncertainty existing in the data, we directly add symbolic terms with the some controllable coefficients to them, where larger number of, or/and larger coefficients of the symbolic terms stand for higher uncertainty.
%However, in causal inference experiments, to mimic real-world scenarios, we introduce uncertainty into the training data in a different way - we randomly inject missing values and use different imputation methods (including the KNN imputer with different numbers of neighbors~\cite{troyanskaya2001missing} and the multivariate imputer~\cite{van2011mice}) to create a set of possible clean values for each missing value. We then use symbolic intervals to cover the range of each missing value independently (details in \Cref{sec:example-uncertainty-and-abstraction}).
% By default, for a better control of the amount of uncertainty introduced, However, imputation techniques are not applicable for dealing with missing training labels. Therefore, when introducing label uncertainty, we assign the same ranges of uncertainty to all uncertain labels (as in \cite{meyer2023dataset}) to mimic bounded label errors. This is achieved by adding symbolic terms with the same coefficients to some clean labels.
%
%%%%%%%%%%%%%%%%%%%%%%%%%%%%%%%%%%%%%%%%%%%%%%%%%%%%%%%%%%%%%%%%%%%%%%%%%%%%%%%%
% \paragraph{Baselines.}
There is no direct baseline capturing prediction ranges for learning linear regression models from uncertain data. Most existing works in robustness certification  focuses on test-time robustness (cf. \Cref{sec:related-work}). The exception is \cite{meyer2023dataset}, which only supports uncertainty in labels using interval arithmetic. We compare \sys against this approach for robustness certification, referred to as \meyer in the following, only for training label uncertainty.
%
% \jiongli{Revise, you normalize the uncertainty Radios}
%
%%%%%%%%%%%%%%%%%%%%%%%%%%%%%%%%%%%%%%%%%%%%%%%%%%%%%%%%%%%%%%%%%%%%%%%%%%%%%%%%
% \paragraph{Metrics.}
\slurpit{EXPLAINED ABOVE: To better understand the magnitude of uncertainty in the data or prediction, we normalize the uncertain ranges based on the {\em domain range}, which is the difference between the maximum and minimum value of the corresponding column.}
\slurpit{WE USE PERCENTAGES, SO I DONT THINK THIS IS NEEDED HERE. MOVE TO APPENDIX:  For the MPG data, the domain range for the label vehicle mpg is 37.6 mpg, and for Insurance data, the domain range for the label medical insurance charge is 62649 dollars.}
  % and effectiveness benchmarks.
For robustness verification, a prediction is \textit{robust} if the size of the prediction interval is smaller than a given threshold. % otherwise, it is \textit{not robust}.
We use the \emph{robustness ratio} which is the fraction of the test data receiving robust predictions as a metric in all robustness verification experiments.
The robustness threshold is set to 5\% of the label range for the MPG data, and 0.8\% of the label range for the Insurance data.
% \BG{MOVE TO APPENDIX: Same as above: (\textasciitilde 2 mpg) for the MPG data and 0.8\% (\textasciitilde 500 dollars) for the Insurance data.}
Additionally, we assess the \emph{worst-case test loss} using certain test data and uncertain model weights trained from uncertain training data.
% \BG{A low worst-case loss indicates model reliability and minimal impact from training data uncertainty.}

%For the MPG data, the domain range for the label vehicle mpg is 37.6 mpg, and for Insurance data, the domain range for the label medical insurance charge is 62649 dollars.

%\paragraph{Metrics}
%In the robustness verification and effectiveness benchmark experiments, we use the robustness ratio as the evaluation metric, which is defined based on a given robustness threshold. If the range of the prediction is smaller than the threshold, the prediction is considered {\em robust}; otherwise, it is {\em not robust}.
%In other words, a prediction that is not robust should not be trusted or used for decision-making.
%The robustness threshold is set to 2 (miles per gallon) for all robustness verification experiments on the MPG data, and 100 (dollar) for the Insurance data.
%Additionally, we examine the worst-case test loss evaluated with the certain test data and the uncertain model weights trained from uncertain training data.
%This is crucial for determining the overall reliability of the model. For instance, a low worst-case loss indicates that the model is reliable and not significantly affected by the uncertainty in the training data.

\begin{figure}[t]
    \centering
    \begin{subfigure}{\linewidth}
        \centering
        \includegraphics[width=\linewidth]{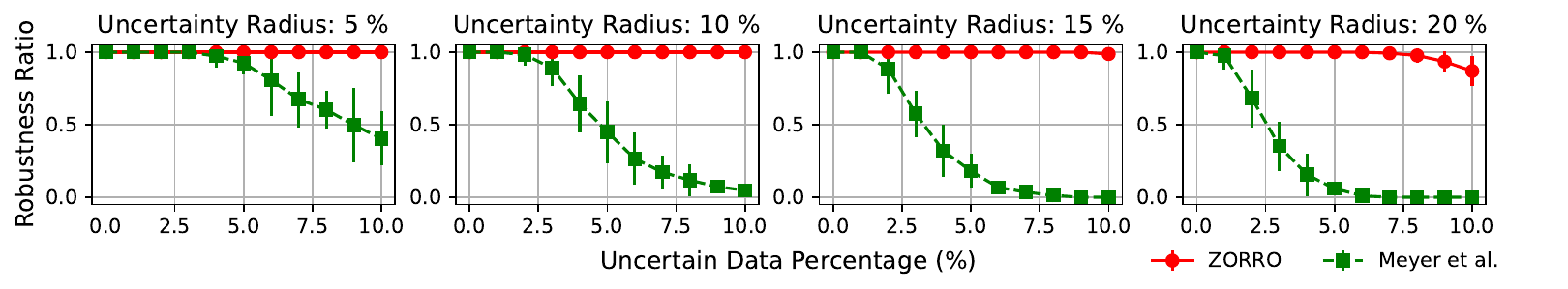}
        \vspace{-7mm}
        \caption{Robustness verification with uncertain labels (MPG data).}
        \label{fig:mpg-labels-lineplot}
    \end{subfigure}
    \begin{subfigure}{\linewidth}
        \centering
        \includegraphics[width=\linewidth]{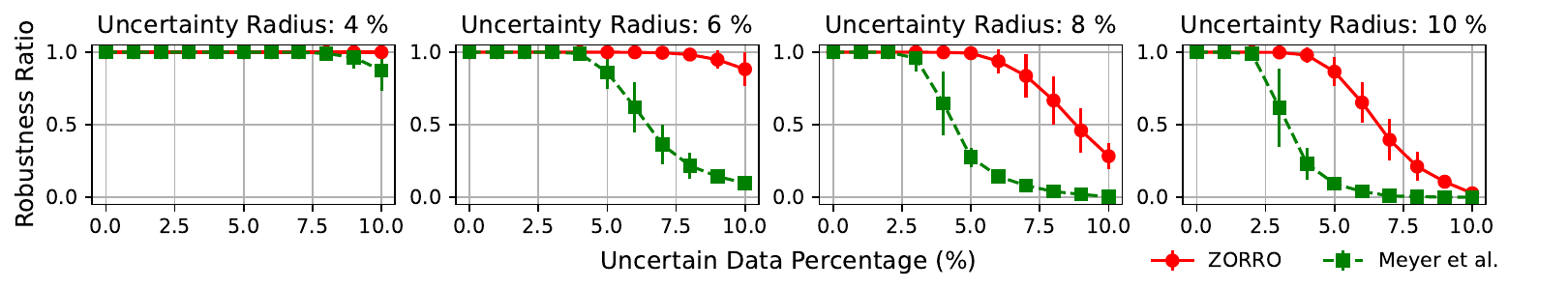}
        \vspace{-7mm}
        \caption{Robustness verification with uncertain labels (Insurance data).}
        \label{fig:ins-labels-lineplot}
    \end{subfigure}
    \begin{subfigure}{\linewidth}
        \centering
        \includegraphics[width=\linewidth]{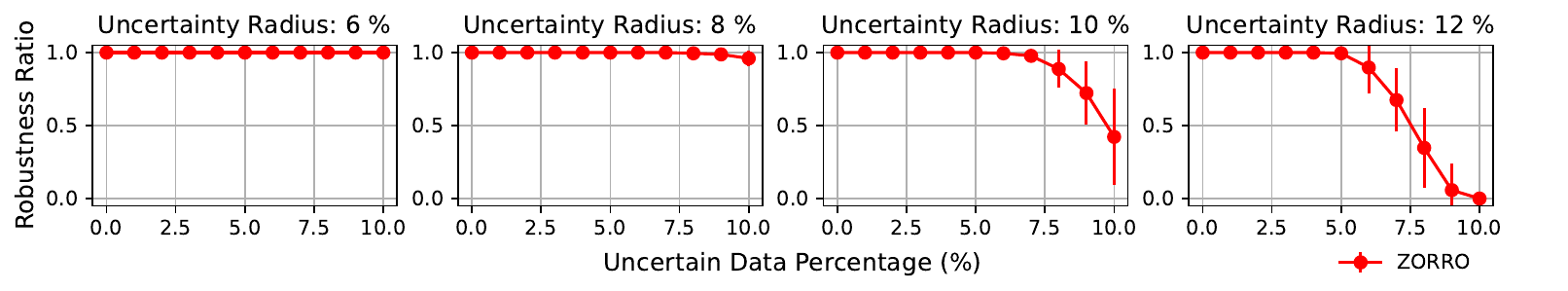}
        \vspace{-7mm}
        \caption{Robustness verification with uncertain features on MPG data.}
        \label{fig:mpg-features-lineplot}
    \end{subfigure}\slurpit{REDUNDANT EVEN THOUGH NICE: Uncertain radius denotes the size of ranges of uncertain data cells, and the uncertain data percentage indicates the percentage of data that are uncertain.}
    \delfigspace
    \caption{Robustness verification on % MPG and Insurance data
      using intervals (\meyer~\cite{meyer2023dataset}) and zonotopes (\sys). }
    \label{fig:mpg-lineplot}
\end{figure}

%%%%%%%%%%%%%%%%%%%%%%%%%%%%%%%%%%%%%%%%%%%%%%%%%%%%%%%%%%%%%%%%%%%%%%%%%%%%%%%%
\subsection{Robustness Verification}\label{sec:exp:applications}
\paragraph{Prediction Robustness (Uncertain Labels).}
% \label{sec:exp:robustness-verification}

\Cref{fig:mpg-lineplot}(a)(b) compares \sys with the baseline \meyer~\cite{meyer2023dataset}  % We evaluate both approaches under the
on a setting where only training labels suffer from uncertainty.
We vary the uncertainty radius and uncertain data percentage.
% Specifically, the uncertainty radius is normalized by the domain range, which is the difference between the maximum and minimum values in the column.
% \jiongli{added more specific information here}
As both systems provide sound over-approximations of prediction ranges, they may underestimate a model's robustness.
As shown, \sys consistently certifies significantly higher robustness ratios than \meyer. This is due to the fact that \meyer uses interval-arithmetic which ignores the correlation between model weights in different dimensions and between the training labels and the weights, leading to overly conservative prediction ranges. Specifically, for higher uncertainty radius values, \meyer fails to certify robustness for most of the data while \sys still can certify robustness for 100\% of the instances.
\slurpit{This indicates that the interval arithmetic-based approach
creates a false sense of "unreliability," where many predictions that are robust are identified as uncertain.}

\slurpit{in both datasets, especially when the percentage of uncertain data is high.
Specifically, when the uncertainty radius is larger than 5\% and less than 20\% of mpg range, meaning that the label uncertainty is significant, and the percentage of uncertain data is 10\% in the MPG dataset, \sys certifies 100\% robustness, whereas the baseline returns 0\% robustness.
For the Insurance dataset, when the uncertainty radius is 4\% of insurance charge range, and the percentage of uncertain data is 10\%, indicating high label uncertainty, \sys certifies above 95\% robustness, whereas the baseline returns 0\% robustness.}
\slurpit{This occurs because the interval domain ignores the correlation between model weights in different dimensions, leading to overly conservative prediction ranges.
In contrast, \sys captures these correlations using high-dimensional zonotopes, providing a more accurate robustness certification.}

%%%%%%%%%%%%%%%%%%%%%%%%%%%%%%%%%%%%%%%%%%%%%%%%%%%%%%%%%%%%%%%%%%%%%%%%%%%%%%%%
\paragraph{Prediction Robustness (Uncertain Features).}
% \label{sec:exp:robustness-verification}
% We also evaluate robustness certification under
We also evaluate the impact of training feature uncertainty (not supported by \meyer).
% We also test our approach in settings where we have uncertainty in the features of training data.
Specifically, we introduce uncertainty into the vehicle weight column for the MPG dataset. \slurpit{, which has domain range 3527 lbs.}
% This results obtained from settings of uncertain features are shown in \Cref{fig:mpg-lineplot}(c).
As shown in \Cref{fig:mpg-lineplot}(c), uncertainty in the features result in relatively less robust predictions compared to uncertain labels for a similar uncertainty radius (\Cref{fig:mpg-lineplot}(a)).
\slurpit{For instance, when the 10\% of data is uncertain on the MPG data, a 12\% uncertainty radius in the weight column results in 0\% robust predictions, while a 15\% of uncertainty radius in label leads to guaranteed robustness for more than 90\% of predictions. Even when the uncertainty radius in label increased to 20\%, the robustness ratio of test predictions is still 80\%.}
This is primarily because the uncertain features results in more high-order terms in the closed form solution than uncertain labels which in turn leads to larger over-approximation errors during linearization.
% , given the same level of uncertainty, thus suffering more from the over-approximation errors.
% \jiongli{hesitant to say this, it can also be the case that the ground truth models are more vulnerable to uncertain features..}\slurpit{can we verify?}

In all experiments the standard deviation of the robustness ratio, calculated by repeating experiments with different random seeds, is large when the average robustness ratio is close to 0.5. This is because when the uncertain data percentage is low, the model will be robust no matter which training instances are selected to be uncertain. Likewise, when the uncertain data percentage is high, then most predictions will be uncertain no matter which training data points are uncertain.

% no matter how to randomly introduce uncertainty, the range of possible worlds will be small, and all predictions will be certain (always 100\% robustness). Likewise, when the percentage of uncertain data is too low, no matter how to randomly introduce uncertainty, the range of possible worlds will be large, and all predictions will be uncertain (always 0\% robustness). The average robustness ratio corresponding to middle cases where some injection leads to smaller range of possible worlds that lead to certain predictions, but others lead to uncertain predictions, thus the standard deviations of robustness ratios  are larger.\jiongli{I am being verbose here}

%We also test our approach in settings where we have uncertainty in the features of training data. However, to the best of our knowledge, there does not exist other robustness verification works that work with uncertain training features.

%%%%%%%%%%%%%%%%%%%%%%%%%%%%%%%%%%%%%%%%%%%%%%%%%%%%%%%%%%%%%%%%%%%%%%%%%%%%%%%%
\begin{figure}
  \centering
  \vspace{-6mm}
    \includegraphics[width=\linewidth]{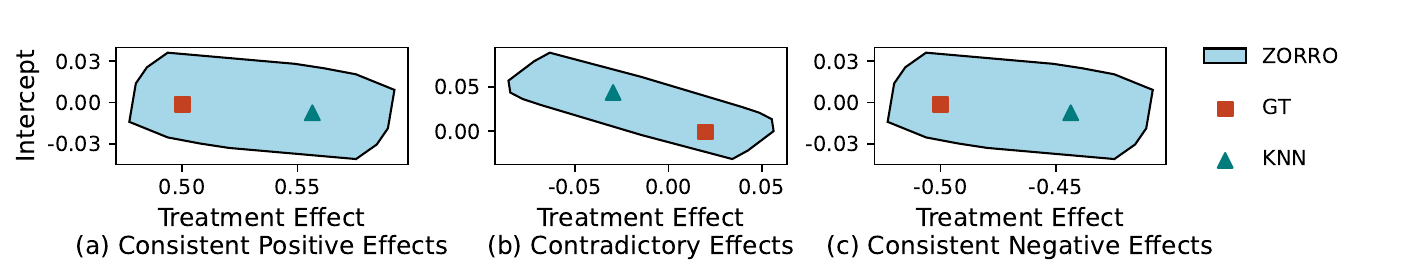}
\delfigspace
\caption{Applying \sys to causal inference. The intercept (y-axis) is the model's bias term, the treatment effect (x-axis) is the coefficient for the treatment variable.} % : ground truth (GT), model on KNN imputed data (KNN), \sys.}
  % The model weights obtained from the ground truth (GT) and KNN imputed data (KNN) are covered by the blue region determined by \sys.} % which signifies the zonotope learned by \sys.}
    \label{fig:causal-inference}
\end{figure}
%%%%%%%%%%%%%%%%%%%%%%%%%%%%%%%%%%%%%%%%%%%%%%%%%%%%%%%%%%%%%%%%%%%%%%%%%%%%%%%%

%%%%%%%%%%%%%%%%%%%%%%%%%%%%%%%%%%%%%%%%%%%%%%%%%%%%%%%%%%%%%%%%%%%%%%%%%%%%%%%%
%\vspace{-1cm}
\paragraph{Parameter Robustness.}
Next, we apply \sys for robustness certification of parameters in linear regression models, crucial for statistical estimation and causal analysis. \slurpit{where parameters represent population-level properties.}
We compare the ground truth coefficients for a treatment variable with the results obtained by \sys and through KNN imputation on a dataset with injected missing data. We % also compare against results obtained by KNN imputation.
\Cref{fig:causal-inference}
shows the treatment variable coefficient and the regression model's intercept. The intercept captures the baseline level of the outcome variable when all predictors are zero, highlighting how baseline values can shift under uncertainty. While the model trained after KNN imputation sometimes correctly identifies the directionality of the treatment effect, this is not always the case as shown in \Cref{fig:causal-inference}(b). This highlights the needs for techniques like \sys which guarantee that the true treatment effect is within certain bounds.

\slurpit{As shown in \Cref{fig:causal-inference}, we compare the ground truth coefficients for a treatment variable with the results obtained by \sys on a dataset with injected missing data. We also compare these with KNN imputation results. The figure demonstrates the coefficients of the treatment variable and the intercept of the regression models. Our method generates a zonotope that contains all possible coefficients for the treatment and intercept. The intercept captures the baseline level of the outcome variable when all predictors are zero, highlighting how baseline values can shift under uncertainty. The range of possible treatment effects demonstrates three scenarios. \Cref{fig:causal-inference}(a) and \Cref{fig:causal-inference}(c) correspond to cases where the estimated treatment coefficients are robust against data uncertainty, with all possible worlds resulting in effects of the same direction and non-zero magnitude. \Cref{fig:causal-inference}(b) showcases a scenario where the ground truth effect is small but non-zero and positive; however, the insights obtained using imputation lead to incorrect conclusions, as the coefficient is negative, indicating a negative treatment effect, whereas the true effect is positive. The zonotope returned by \sys indicates that under uncertain data, we cannot conclusively determine the treatment effect, as the zonotope spans both positive and negative values, making it ambiguous whether the effect is truly positive, negative, or null.}

%However, models trained with KNN-imputed data significantly deviate from the ground truth, leading to biased and misleading treatment effects. This issue is particularly problematic when the biased estimated effect has the opposite direction to the ground truth, as shown in the second plot. For example, a medication that is actually harmful might be predicted as beneficial due to data missingness and unreliable imputation. Without the sound over-approximation generated by \sys, we cannot determine whether the true causal effect lies in the opposite direction, thus making it impossible to assess the reliability of the estimated causal effect using imputed data.

%%%%%%%%%%%%%%%%%%%%%%%%%%%%%%%%%%%%%%%%%%%%%%%%%%%%%%%%%%%%%%%%%%%%%%%%%%%%%%%%
\subsection{Solution Quality with Varying Uncertainty and Hyperparameters}\label{sec:exp:micro-benchmark}
We evaluate \sys's effectiveness by testing the tightness of the over-approximation and the accuracy of possible models. Specifically, we examine how the over-approximation quality and worst-case loss are influenced by the level of data uncertainty and the regularization coefficient.

%%%%%%%%%%%%%%%%%%%%%%%%%%%%%%%%%%%%%%%%%%%%%%%%%%%%%%%%%%%%%%%%%%%%%%%%%%%%%%%%
\begin{figure}
    \centering
    \begin{subfigure}{0.4\linewidth}
    \includegraphics[width=\linewidth,trim={3cm 7cm 0 7cm}]{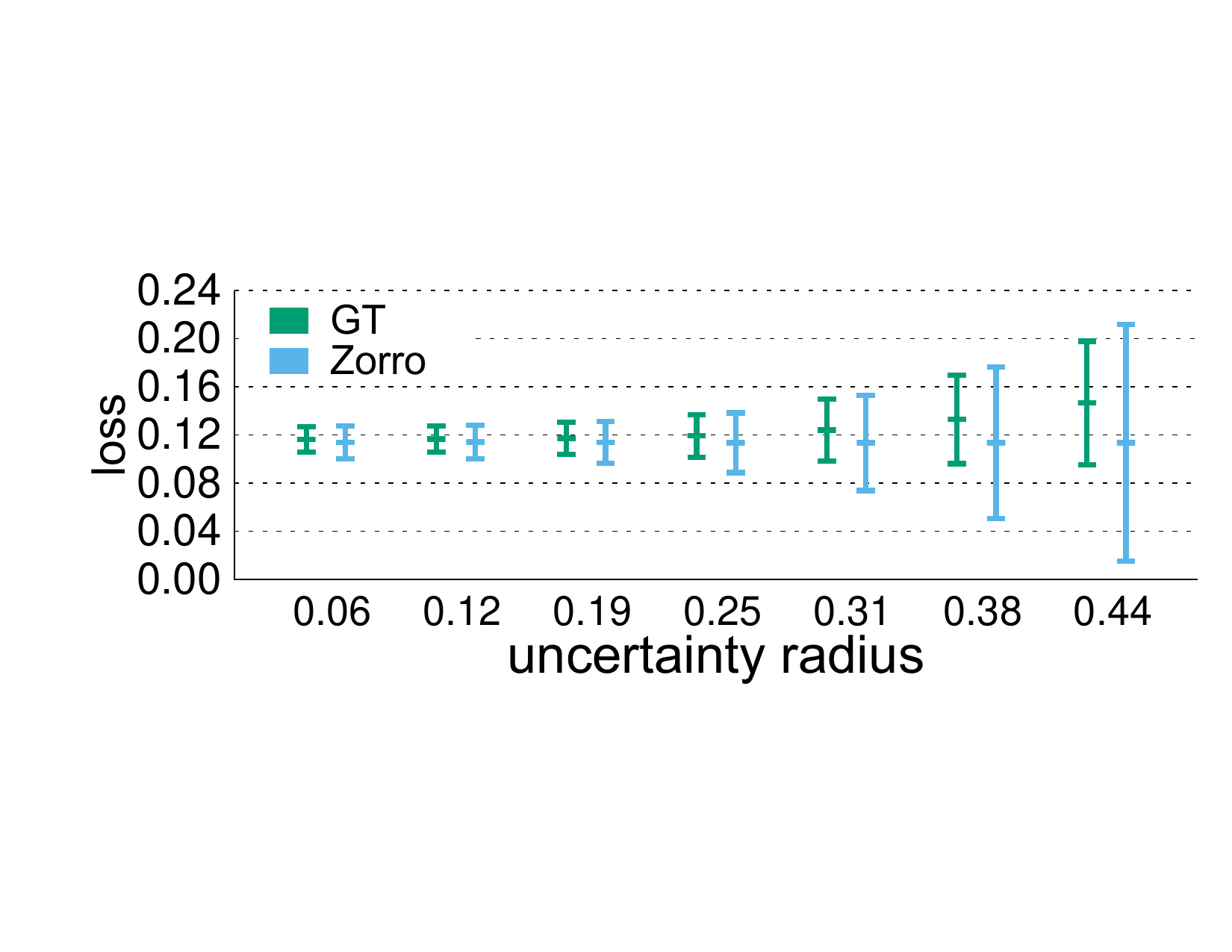}
    \caption{Varying uncertainty radius}
        \label{fig:micro_gt_2}
    \end{subfigure}
    \hspace{1.2cm}
    \begin{subfigure}{0.4\linewidth}
      \includegraphics[width=\linewidth,trim={3cm 7cm 0 7cm}]{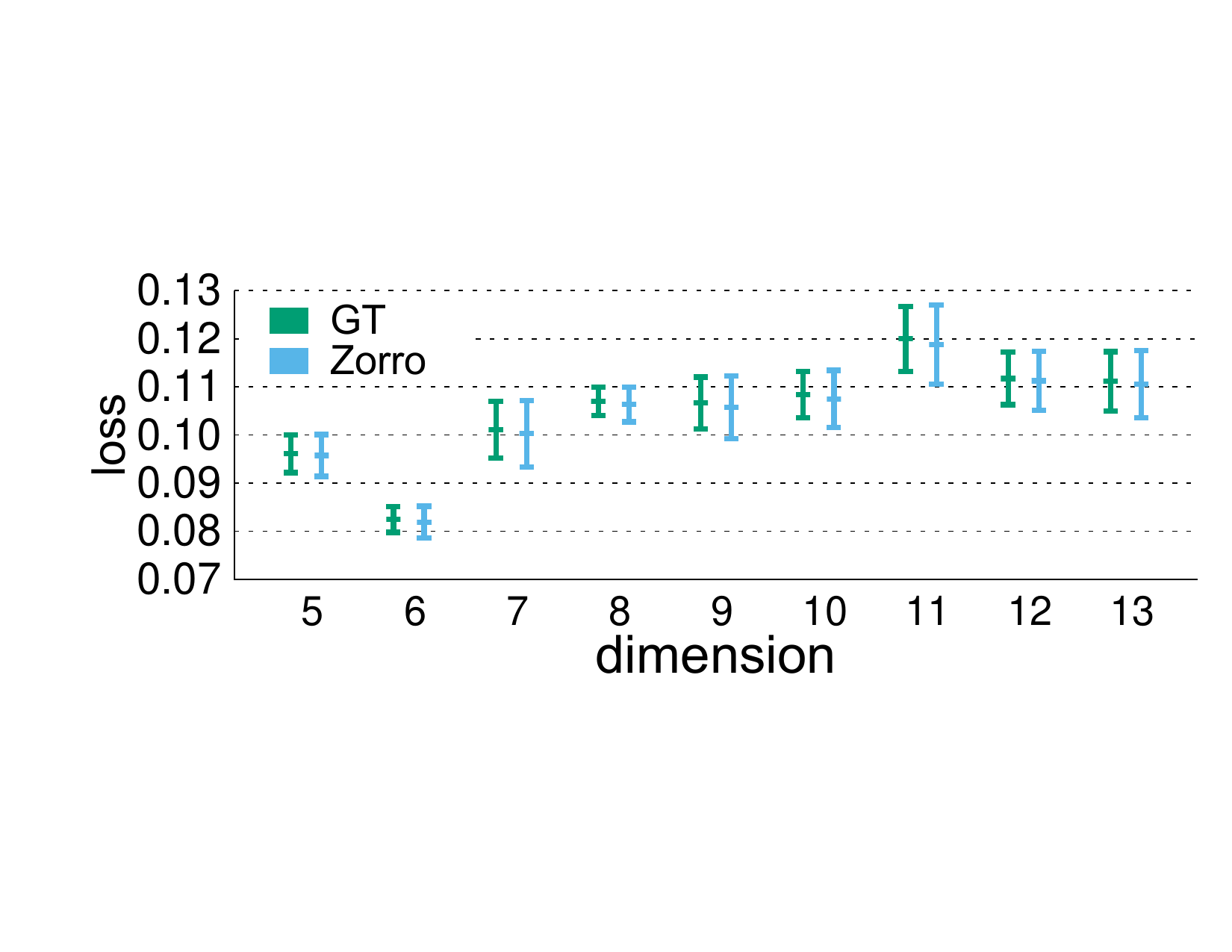}
        \caption{Varying data dimension}
        \label{fig:micro_gt_3}
    \end{subfigure}
      \vspace{-2mm}
    \caption{Range of the loss, through enumeration of all possible worlds (GT) % sampling 1000 possible worlds (1k),
      and \sys.}
\label{fig:micro-benchmark-with-gt-main}
  \end{figure}
%%%%%%%%%%%%%%%%%%%%%%%%%%%%%%%%%%%%%%%%%%%%%%%%%%%%%%%%%%%%%%%%%%%%%%%%%%%%%%%%

%%%%%%%%%%%%%%%%%%%%%%%%%%%%%%%%%%%%%%%%%%%%%%%%%%%%%%%%%%%%%%%%%%%%%%%%%%%%%%%%
\paragraph{Varying Data Uncertainty.}
To evaluate how specific characteristics of the data affect the effectiveness of \sys, we injected errors into real datasets, varying uncertain data percentage and uncertainty range. We compare \sys with the ground truth range of the loss computed by enumerating all possible worlds (GT).
\slurpit{ and an under-approximation of the range of loss computed by randomly sampling from the set of possible worlds 1000 times (1k) and training a model. Note that sampling returns a subset of all possible weights, which has no soundness guarantees and serves as an under-approximation of the ground truth. In contrast, \sys is an over-approximation of the ground truth, guaranteeing coverage of the ground truth.}
The results shown in \Cref{fig:micro-benchmark-with-gt-main}(a) demonstrate that \sys tightly over-approximates the ground truth loss range, especially for smaller uncertainty radius values. % when the magnitude of uncertainty is small.
As uncertainty increases, the over-approximation gap widens due to the increased coefficient of higher-order terms in the gradient, which are linearized, leading to higher linearization errors.
As shown in \Cref{fig:micro-benchmark-with-gt-main}(b),
the tightness of \sys's over-approximation is not affected by the dimension of the data.

\vspace{-0.2cm}
\paragraph{Effect of Regularization.}
We investigate the impact of the regularization coefficient on the robustness of predictions and the worst-case loss of possible models. Following a similar approach to the robustness verification experiment in \Cref{sec:exp:applications}, we introduce uncertainty in both features and labels in the MPG dataset and use zonotopes with varying levels of uncertainty to over-approximate the training data uncertainty. The results, shown in \Cref{fig:mpg-features-regularization-lineplot}, indicate that a higher regularization coefficient leads to more robust predictions, as regularization tends to "compress" all possible model weights towards the origin. Interestingly, the worst-case loss shows that $\RegularizationCoef = 0$ is not optimal across all scenarios, especially when the fraction uncertain instances is high. Instead, a small, positive $\RegularizationCoef$ (e.g., 0.02 or 0.025) generally yields the best worst-case losses. Combining these results, the optimal regularization coefficient should enhance robustness (i.e., a higher robustness ratio) while maintaining an acceptable worst-case loss. Therefore, the regularization coefficient should be tuned based on a validation dataset to achieve a small range of accurate possible models.

%%%%%%%%%%%%%%%%%%%%%%%%%%%%%%%%%%%%%%%%%%%%%%%%%%%%%%%%%%%%%%%%%%%%%%%%%%%%%%%%
\begin{figure}[t]
  \centering
  \vspace{-5mm}
    % \centering
    \includegraphics[width=\linewidth]{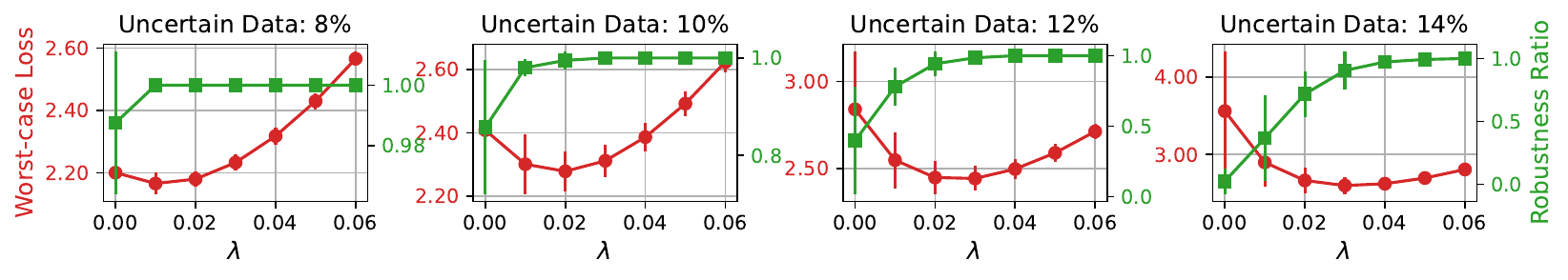}
    % \caption{Results with uncertain training features.}
    \vspace{-8mm}
    \caption{Varying regularization coefficient $\RegularizationCoef$: Robustness ratio (green) and worst-case test loss (red).}
    % \caption{Robustness ratio (red y-axis) and worst-case test loss (green y-axis) vs. regularization coefficient $\RegularizationCoef$ (x-axis), varying uncertain data percentage.}
    \label{fig:mpg-features-regularization-lineplot}
  \end{figure}
\section{Conclusions, limitations and broader impacts}
\label{sec:concl-future-work}
We introduce an approach for propagating uncertainty through model training and inference for linear models. Given an abstract uncertain training dataset that over-approximates the possible worlds of a training dataset, we develop abstract interpretation techniques to over-approximate the set of possible models and inference results for this set of models using zonotopes. This is challenging, as we need to compute fixed points of gradient descent in the abstract domain. Our main technical contribution is the development of closed-form solutions for such fixed points that can be solved efficiently. Our techniques efficiently over-approximate models and inference for several use cases, including robustness verification, uncertainty management in causal reasoning, and improving the interpretability and reliability of predictions and inferences. This framework can be particularly valuable in critical applications where data quality and robustness are paramount.
While we propose an effective method for abstract learning of linear models, extending our approach to other machine learning models is challenging due to the complexity of symbolic computation and the need for tailored zonotope-based techniques for different model architectures.

\begin{small}
\bibliography{main}
\bibliographystyle{plain}
\end{small}

% \makeatletter
%  \providecommand\@dotsep{5}
%  \def\listtodoname{List of Todos}
%  \def\listoftodos{\@starttoc{tdo}\listtodoname}
%  \makeatother
%\listoftodos
 % \makeatletter
 %   \providecommand\@dotsep{5}
 % \makeatother
 % \listoftodos\relax

\clearpage
\appendix

%%%%%%%%%%%%%%%%%%%%%%%%%%%%%%%%%%%%%%%%%%%%%%%%%%%%%%%%%%%%%%%%%%%%%%%%%%%%%%%%
\section{Linear Regression and Ridge Regression}\label{sec:linear-regression-learning-algos}
% \label{sec:learning-algorithms}
% \babak{Remove this}
% \BGI{Let's move this part to later where we need this level of detail}

In this section, we review standard loss functions for linear regression, specifically mean squared error (MSE) and the loss function used in ridge regression.
 can be trained via either the closed-form solution or gradient descent. Specifically, the closed-form solution requires computing the inversion of the covariance matrix, while the gradient descent only involves matrix addition and multiplication.
Since the linear regression model is convex, gradient descent is guaranteed to converge to the global optimum.

Suppose we have a training data $\dtrain=\left(\Xtrain,\ytrain\right)$ with $n$ i.i.d. samples
% $D=\{(\dpx_i\in\R^d, y_i\in\R)\}_{i=1}^n$
, where the feature matrix
$\Xtrain=\left[\begin{matrix}\dpx_1&\cdots&\dpx_n\end{matrix}\right]^T\in\R^{n\times d}$, and the labels $\ytrain=\left[\begin{matrix}y_1,\cdots,y_n\end{matrix}\right]^T\in\R^{n}$.
Given an input $\dpx$ and the model weight $\ModelWeight$, the prediction of the linear regression model $\ypred=\ModelWeight^T\dpx$ (the bias term can be integrated into $\ModelWeight$, corresponding to an added column in $\Xtrain$ with constant 1's).

%%%%%%%%%%%%%%%%%%%%%%%%%%%%%%%%%%%%%%%%%%%%%%%%%%%%%%%%%%%%%%%%%%%%%%%%%%%%%%%%
\paragraph{Loss Functions}
The mean squared error (MSE) loss on $\dtrain$ is defined as shown below.

%%%%%%%%%%%%%%%%%%%%%%%%%%%%%%%%%%%%%%%%%%%%%%%%%%%%%%%%%%%%%%%%%%%%%%%%%%%%%%%%
\begin{equation}
\TrainingLoss=\frac{1}{n}\sum_{i=1}^n{(y'_i-y_i)^2}=\frac{1}{n}\sum_{i=1}^n{(\ModelWeight^T\boldsymbol{x_i}-y_i)^2}=\frac{1}{n}(\Xtrain\ModelWeight-\ytrain)^T(\Xtrain\ModelWeight-\ytrain).
\end{equation}
%%%%%%%%%%%%%%%%%%%%%%%%%%%%%%%%%%%%%%%%%%%%%%%%%%%%%%%%%%%%%%%%%%%%%%%%%%%%%%%%

In practice, regularization terms, e.g., based on the $l_p$-norm of the model parameters, are often added to the original MSE loss to prevent overfitting by penalizing large weights.
% For example, when p is set to one, the weights of the optimal model tend to become sparse (most dimensions are close to zero). After adding the regularization term, the loss function becomes:
% $$\TrainingLoss=\frac{1}{n}(\Xtrain\ModelWeight-\ytrain)^T(\Xtrain\ModelWeight-\ytrain)+\lambsda\cdot\lVert \ModelWeight \rVert^p_p$$
% where $\lambsda$ is a balancing coefficient adjusting the "strength" of the weight penalty.
Using $l_2$-regularization with a regularization coefficient \RegularizationCoef which determines the strength of regularization is often called \emph{ridge regression}. The loss function for ridge regression is:

%%%%%%%%%%%%%%%%%%%%%%%%%%%%%%%%%%%%%%%%%%%%%%%%%%%%%%%%%%%%%%%%%%%%%%%%%%%%%%%%
\begin{equation}\label{eq:full-mat-mul}
  \TrainingLoss=\frac{1}{n}(\Xtrain\ModelWeight-\ytrain)^T(\Xtrain\ModelWeight-\ytrain)+ \RegularizationCoef \cdot\ModelWeight^T\ModelWeight.
\end{equation}
%%%%%%%%%%%%%%%%%%%%%%%%%%%%%%%%%%%%%%%%%%%%%%%%%%%%%%%%%%%%%%%%%%%%%%%%%%%%%%%%

% where $\RegularizationCoef$ is a balancing coefficient adjusting the "strength" of the weight penalty.

%%%%%%%%%%%%%%%%%%%%%%%%%%%%%%%%%%%%%%%%%%%%%%%%%%%%%%%%%%%%%%%%%%%%%%%%%%%%%%%%
\paragraph{Gradient Descent for Linear Regression}
Due to the convexity of linear models, the locally optimal point, which can be obtained by gradient descent with an appropriate learning rate \LearningRate, is globally optimal.
In gradient descent, the model weights $\ModelWeight$ are iteratively updated (with some learning rate $\eta$) towards the reverse direction of the gradient $\frac{\partial \TrainingLoss}{\partial \ModelWeight}$. Thus, for ridge regression:

%%%%%%%%%%%%%%%%%%%%%%%%%%%%%%%%%%%%%%%%
\begin{equation}
\frac{\partial \TrainingLoss}{\partial \ModelWeight}=\frac{2}{n}(\Xtrain^T\Xtrain\ModelWeight-\Xtrain^T\ytrain)+2\RegularizationCoef\ModelWeight.
\end{equation}
%%%%%%%%%%%%%%%%%%%%%%%%%%%%%%%%%%%%%%%%

Thus, one step of gradient descent $\ConcreteTransFunc$ is:

%%%%%%%%%%%%%%%%%%%%%%%%%%%%%%%%%%%%%%%%
\begin{equation}
\ModelWeighti{i+1}= \ModelWeighti{i} - \LearningRate \frac{2}{n}(\Xtrain^T\Xtrain\ModelWeight-\Xtrain^T\ytrain)+2\RegularizationCoef\ModelWeight
\end{equation}
%%%%%%%%%%%%%%%%%%%%%%%%%%%%%%%%%%%%%%%%

%%%%%%%%%%%%%%%%%%%%%%%%%%%%%%%%%%%%%%%%%%%%%%%%%%%%%%%%%%%%%%%%%%%%%%%%%%%%%%%%
\paragraph{Closed-form Solution for Linear Regression}
The convex nature of linear models ensures that the optimal weight $\ModelWeight=\ModelWeightOptimal$, which minimizes the loss $\TrainingLoss$, can be computed by establishing $\frac{\partial \TrainingLoss}{\partial \ModelWeight} = 0$:

%%%%%%%%%%%%%%%%%%%%%%%%%%%%%%%%%%%%%%%%
\begin{equation}\label{eq:grad-closed-form}
\begin{aligned}
&& \frac{\partial \TrainingLossFunc(\Xtrain,\ytrain,\ModelWeightOptimal)}{\partial \ModelWeightOptimal}&=\frac{2}{n}(\Xtrain^T\Xtrain\ModelWeightOptimal-\Xtrain^T\ytrain)+2\RegularizationCoef\ModelWeightOptimal=0\\
\Rightarrow\ && \Xtrain^T\ytrain & = (\Xtrain^T\Xtrain+\RegularizationCoef n\IdentityMat)\ModelWeightOptimal\\
\Rightarrow\ && \ModelWeightOptimal &= (\Xtrain^T\Xtrain+\RegularizationCoef n\IdentityMat)^{-1}\Xtrain^T\ytrain
\end{aligned}
\end{equation}

\section{Background on Abstract Interpretation}
\label{sec:backgr-abstr-interpr}

Abstract interpretation~\cite{cousot1977abstract} is a technique for over-approximating the results of computations over a set of inputs. This is achieved by associating sets of elements of a concrete domain $ \aDom $ with elements from an abstract domain $ \abstractDom $. In this context, various abstract domains are employed: interval domains represent variables as ranges of possible values~\cite{cousot1977abstract}, octagon domains allow for constraints between pairs of variables within a specific bound~\cite{Mine2001}, and polynomial zonotopes which  represent convex polytopes using polynomial constraints~\cite{CousotHalbwachs1978}. Abstract interpretation, while originally designed for static program analysis such as strictness analysis, has also found applications in wide range of other domains including reachability analysis~\cite{alanwar2021data, bak2022reachability, althoff2010reachability, combastel2022functional, kochdumper2020sparse,althoff2011zonotope}, robustness verification for neural networks~\cite{jordan-22-zdlnnv, schilling2022verification}, learning robust models by providing bounds on the loss for a set of inputs~\cite{muller2023abstract, ruoss2020, mirman2018differentiable, gehr2018ai2}, and many others.

We now state some important, but well-known, facts about abstract transformers that we utilize in our derivations.

%%%%%%%%%%%%%%%%%%%%%%%%%%%%%%%%%%%%%%%%%%%%%%%%%%%%%%%%%%%%%%%%%%%%%%%%%%%%%%%%
%%%%%%%%%%%%%%%%%%%%%%%%%%%%%%%%%%%%%%%%%%%%%%%%%%%%%%%%%%%%%%%%%%%%%%%%%%%%%%%%
\subsection{Abstract Transformers}
\label{sec:abstr-transf}

In \Cref{def:abstract-transformer}, we presented the standard definition of abstract transformers as functions over abstract domains that over-approximate the application of functions in the concrete domains to sets of elements. To clarify the connection between abstract transformers and possible world semantics observe that both take a concrete function $\func: \aDom \to \aDom'$ and lift it to sets of inputs $\aSet \in \powerset{\aDom}$ through point-wise applications:
\[
\func(\aSet) = \{\func(e) \mid e \in \aSet\}
\]
Thus, abstract interpretation is a natural fit for over-approximating PWS by over-approximating sets of possible worlds using abstract elements and then over-approximates computations with PWS for a function $\func$ using abstract transformers for such a function. 

%%%%%%%%%%%%%%%%%%%%%%%%%%%%%%%%%%%%%%%%%%%%%%%%%%%%%%%%%%%%%%%%%%%%%%%%%%%%%%%%
\subsection{Abstract Transformers Compose}\label{sec:app-abstract-transformers-compose}

Importantly, abstract transformers compose. This enables us to decompose a complex computation into simpler operations and build an abstract transformer for the computation by composing abstract transformations for these simpler operations. 
%%%%%%%%%%%%%%%%%%%%%%%%%%%%%%%%%%%%%%%%%%%%%%%%%%%%%%%%%%%%%%%%%%%%%%%%%%%%%%%%
 \begin{prop}[Abstract Transformers Compose]\label{prop:abstract-transformer}
Consider (exact) abstract transformers $\makeAbstract{f}$ and $\makeAbstract{g}$ for functions $f$ and $g$, then $\makeAbstract{g} \compose \makeAbstract{f}$ is an (exact) abstract transformer for $g \compose f$.
 \end{prop}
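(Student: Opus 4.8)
\textbf{Proof plan for Proposition~\ref{prop:abstract-transformer} (Abstract Transformers Compose).}

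The plan is to unfold the definitions from \Cref{def:abstract-transformer} and chase the two required inclusions/equalities through the composition. Let $f : \aDom_1 \to \aDom_2$ and $g : \aDom_2 \to \aDom_3$ be the concrete functions, and $\makeAbstract{f} : \makeAbstract{\aDom_1} \to \makeAbstract{\aDom_2}$, $\makeAbstract{g} : \makeAbstract{\aDom_2} \to \makeAbstract{\aDom_3}$ the given abstract transformers. First I would fix an arbitrary $\aSet \in \powerset{\aDom_1}$ and compute $\concretization{(\makeAbstract{g} \compose \makeAbstract{f})(\abstraction{\aSet})} = \concretization{\makeAbstract{g}(\makeAbstract{f}(\abstraction{\aSet}))}$. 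Applying the soundness of $\makeAbstract{g}$ (in the form valid for an arbitrary abstract input $d = \makeAbstract{f}(\abstraction{\aSet})$, namely $\concretization{\makeAbstract{g}(d)} \supseteq g(\concretization{d})$) gives $\concretization{\makeAbstract{g}(\makeAbstract{f}(\abstraction{\aSet}))} \supseteq g\big(\concretization{\makeAbstract{f}(\abstraction{\aSet})}\big)$. Then by soundness of $\makeAbstract{f}$ we have $\concretization{\makeAbstract{f}(\abstraction{\aSet})} \supseteq f(\aSet)$, and since $g$ is monotone with respect to set inclusion (it is applied pointwise, so $A \supseteq B$ implies $g(A) \supseteq g(B)$), we get $g\big(\concretization{\makeAbstract{f}(\abstraction{\aSet})}\big) \supseteq g(f(\aSet)) = (g \compose f)(\aSet)$. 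Chaining the two inclusions yields $\concretization{(\makeAbstract{g} \compose \makeAbstract{f})(\abstraction{\aSet})} \supseteq (g \compose f)(\aSet)$, which is exactly the abstract-transformer condition for $g \compose f$.

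For the exact case, I would run the same chain with equalities instead of inclusions: starting from an arbitrary $d \in \makeAbstract{\aDom_1}$, exactness of $\makeAbstract{f}$ gives $\concretization{\makeAbstract{f}(d)} = f(\concretization{d})$, and exactness of $\makeAbstract{g}$ applied to the abstract element $\makeAbstract{f}(d)$ gives $\concretization{\makeAbstract{g}(\makeAbstract{f}(d))} = g(\concretization{\makeAbstract{f}(d)})$. Substituting the first identity into the second yields $\concretization{(\makeAbstract{g}\compose\makeAbstract{f})(d)} = g(f(\concretization{d})) = (g\compose f)(\concretization{d})$, which is the exactness condition for $g \compose f$.

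The argument is essentially a two-line diagram chase, so there is no serious obstacle; the only point that needs a moment of care is the monotonicity step used in the non-exact direction — one must observe that the pointwise lifting $\func(\aSet) = \{\func(e) \mid e \in \aSet\}$ is monotone under $\subseteq$, so that an over-approximation fed into $g$ produces an over-approximation of $g$'s image. (In the exact case this is not even needed, since everything is an equality.) I would also note in passing that the intermediate abstract element $\makeAbstract{f}(\abstraction{\aSet})$ need not be of the form $\abstraction{\aSet'}$ for any set $\aSet'$, which is precisely why the soundness/exactness conditions for $\makeAbstract{g}$ must be invoked in their ``for all abstract inputs $d$'' form rather than only on images of $\abstractSymbol$ — a subtlety worth a single sentence in the write-up.
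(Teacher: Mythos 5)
Your proof is correct. The paper itself states \Cref{prop:abstract-transformer} without proof, treating it as a standard fact of abstract interpretation, so there is no in-paper argument to compare against; your two-step inclusion chase (soundness of $\makeAbstract{g}$ on the intermediate element, then soundness of $\makeAbstract{f}$ pushed through the monotone pointwise lifting of $g$) is exactly the argument one would supply, and the exact case as a pure equality chase is likewise right. Your closing remark is the one point of real substance: as literally written, \Cref{def:abstract-transformer} states soundness only for abstract elements of the form $\abstraction{\aSet}$, and since $\makeAbstract{f}(\abstraction{\aSet})$ need not lie in the image of $\abstractSymbol$, that weaker form is genuinely insufficient to run the composition argument --- one must use the condition in the form $\concretization{\makeAbstract{g}(d)} \supseteq g(\concretization{d})$ for arbitrary $d \in \makeAbstract{\aDom_2}$ (which is how the paper already phrases the \emph{exactness} condition, and how abstract transformers are standardly defined). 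Flagging that explicitly, as you do, is the correct way to make the proposition airtight.
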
 %%%%%%%%%%%%%%%%%%%%%%%%%%%%%%%%%%%%%%%%%%%%%%%%%%%%%%%%%%%%%%%%%%%%%%%%%%%%%%%% 

%%% Local Variables:
%%% mode: LaTeX
%%% TeX-master: "main"
%%% End:

%%%%%%%%%%%%%%%%%%%%%%%%%%%%%%%%%%%%%%%%%%%%%%%%%%%%%%%%%%%%%%%%%%%%%%%%%%%%%%%%
\section{Symbolic and Standard Representation of Zonotopes}
\label{sec:symb-geom-interpr}

We now provide a more detailed account of the correspondence between the symbolic and standard geometric representation of zonotopes and polynomial zonotopes and discuss why matrices and sets of symbolic matrices as used in our abstract domain can equivalently be thought of as (polynomial) zonotopes.

%%%%%%%%%%%%%%%%%%%%%%%%%%%%%%%%%%%%%%%%%%%%%%%%%%%%%%%%%%%%%%%%%%%%%%%%%%%%%%%%
\subsection{Symbolic vs. Geometric Representation}
\label{sec:symb-geom-repre}

We defined zonotopes $\DummyZonotope$ as vectors $\abstractScalarDom^d$ where $\abstractScalarDom$ denotes polynomials over variables (called error symbols) $\evardom$. The concretization of such a symbolic representation of a zonotope is the set of vectors in $\R^{d}$ that can be derived from $\DummyZonotope$ by assigning values from $[-1,1]$ to each variable $\evar \in \evardom$. We encode such variable assignments as vectors $[-1,1]^{\card{\evardom}}$.

%%%%%%%%%%%%%%%%%%%%%%%%%%%%%%%%%%%%%%%%%%%%%%%%%%%%%%%%%%%%%%
\begin{definition}[Polynomial zonotopes - Symbolic Representation]\label{def:zonotope-symbolic}
A $d$-dimensional polynomial zonotope is a vector $\DummyZonotope\in \abstractScalarDom^d$~\cite{combastel2022functional}. Let $m = \card{\evardom}$. The concretization of $\DummyZonotope$ is defined as:
\[
  \concretization{\DummyZonotope}=\left\{\DummyZonotope(e) \mid e \in [-1, 1]^{\card{\evardom}}\right\}
\]
% gives the set of points enclosed by it.
\end{definition}
%%%%%%%%%%%%%%%%%%%%%%%%%%%%%%%%%%%%%%%%%%%%%%%%%%%%%%%%%%%%%%%%%%%%%%%%%%%%%%%%

A common measure of the representation size of a zonotope is its order. The order of a $d$-dimensional (polynomial) $\DummyZonotope$ in symbolic representation is the total number of distinct monomials in $\DummyZonotope$ divided by $d$:
\[
  \order{\DummyZonotope} = \frac{\monomialnum(\DummyZonotope)}{d}
\]
where $\monomialnum(\abstractScalar)$ denotes the cardinality of the set of distinct monomials in polynomial \abstractScalar.

A more common way to represent $d$-dimensional zonotopes is by fixing a set $\ErrSet$ of monomials over $\evardom$ and writing the zonotope as a central point $\boldsymbol{c} \in \R^{d}$ and a sum of generator vectors $\boldsymbol{g_i} \in \R^{d}$ multiplied with the monomials from $\ErrSet$. The generator vector $\boldsymbol{g_i}$ assigns a coefficient to monomial $\ErrSet[i]$ for each of the $d$ dimensions.

%%%%%%%%%%%%%%%%%%%%%%%%%%%%%%%%%%%%%%%%%%%%%%%%%%%%%%%%%%%%%%
\begin{definition}[Polynomial zonotopes - Geometric  Representation]\label{def:poly-zonotope-symbolic}
  The geometric representation of a  $d$-dimensional polynomial zonotope is a sum of a center point $\zcenter \in \R^{d}$ and the monomials over error symbols in \evardom
from a set $\ErrSet$ multiplied by coefficients encoded in set of generator vectors $\boldsymbol{g_i} \in \R^{d}$:
  \[
\DummyZonotope = \boldsymbol{c}+\sum_{i=1}^{\lvert\ErrSet\rvert} {\boldsymbol{g}_i\ErrSet[i]}
  \]
% where $\boldsymbol{c}\in\R^d$ is the center of the polynomial zonotope, $\ErrSet$ is a set of monomials over $\evardom$ and $\boldsymbol{g}_i\in\R^d$.
\end{definition}
%%%%%%%%%%%%%%%%%%%%%%%%%%%%%%%%%%%%%%%%%%%%%%%%%%%%%%%%%%%%%%%%%%%%%%%%%%%%%%%%

For the geometric representation, the order of a $d$-dimensional (polynomial) $\DummyZonotope$ over monomials $\ErrSet$ is defined as:

\[
  \order{\DummyZonotope} = \frac{\lvert\ErrSet\rvert}{d}
\]

Note that the order of the symbolic and geometric representation of a zonotope are the same.
As an example consider the 4-dimensional polynomial zonotope $\DummyZonotope$ shown in symbolic representation (left) and standard representation (right).

\begin{align*}
  &\begin{bmatrix}
    1 + 1{\evar_2}^3 + 1{\evar_1}^2\evar_2\evar_3\\
    4 + 2{\evar_2}^3 + 2{\evar_1}^2\evar_2\evar_3\\
    0 + 2{\evar_2}^3 + 3{\evar_1}^2\evar_2\evar_3\\
    2 + 1{\evar_2}^3 + 4{\evar_1}^2\evar_2\evar_3\\
  \end{bmatrix}
  && \underbrace{\begin{bmatrix}
    1\\
    4\\
    0\\
    2\\
  \end{bmatrix}}_{\boldsymbol{c}} +
  \underbrace{\begin{bmatrix}
    1\\
    2\\
    2\\
    1\\
     \end{bmatrix}}_{\boldsymbol{g_1}}
\cdot
    \underbrace{\evar_2^3}_{\ErrSet[1]} +
     \underbrace{\begin{bmatrix}
    1\\
    2\\
    3\\
    4\\
  \end{bmatrix}}_{\boldsymbol{g_2}}
     \cdot
    \underbrace{{\evar_1}^2\evar_2\evar_3}_{\ErrSet[2]}
\end{align*}

%%%%%%%%%%%%%%%%%%%%%%%%%%%%%%%%%%%%%%%%%%%%%%%%%%%%%%%%%%%%%%%%%%%%%%%%%%%%%%%%
\subsection{Symbolic Matrices and Sets of Symbolic Matrices}
\label{sec:set-symb-matr}

We use matrices over symbolic expressions and sets of heterogeneous abstract matrices to represent the state of a computation and allow matrices from such a set to share variables to encode relationships between the elements of such matrices. The semantics we associate with such a set is that of a \emph{joint concretization}. For $\makeAbstract{\aSet} = \{ \makeAbstract{\aMatrix_i} \}$, we define:
\[
\concretization{\makeAbstract{\aSet}} = \{  \makeAbstract{\aMatrix_i}(e) \mid \makeAbstract{\aMatrix_i} \in \makeAbstract{\aSet} \land e \in [-1,1]^n \}
\]

zonotopes encode convex sets of points in $\R^{d}$.
We can think of a symbolic matrix $\makeAbstract{M} \in \abstractScalarDom^{n\times m}$ as a $n \cdot m$-dimensional polynomial zonotope. Similarly, a set of symbolic matrices $\makeAbstract{\aSet} = \{ \makeAbstract{\aMatrix_i} \in \R^{n_i \times m_i}\}$ can be thought of as a $l$-dimensional polynomial zonotope $\R^{l}$ where $l = \sum_i n_i \cdot m_i$. If every symbolic expression in a matrix or set of matrices is a linear combination of error symbols (an affine form), then such objects can equivalently be represented as  zonotopes.
% zonotopes (and polynomial zonotopes) have been used extensively in abstract interpretation, e.g., to train verifiably robust models~\cite{mirman2018differentiable} or verifying the robustness of a model~\cite{jordan-22-zdlnnv}, and in formal verification and reachability analysis~\cite{ghorbal2009zonotope, huang2023difficulty}.

For instance, below on the left we show a zonotope matrix $\makeAbstract{\boldsymbol{W}}$ (all  expressions are linear) and two possible worlds in its concretization (for assignments $[-1,-1]$ and $[0,0.5]$).
\begin{align*}
\makeAbstract{\boldsymbol{W}} & =
  \begin{bmatrix}
    \evar_1 + 3           & \evar_2    \\
    15                    & 2 2\evar_1 \\
  \end{bmatrix}
                          & \makeAbstract{\boldsymbol{W}}\left(
                            \begin{bmatrix}
                              -1\\
                              -1\\
                            \end{bmatrix}\right) & =
  \begin{bmatrix}
    2                     & -1        \\
    15                    & -22        \\
  \end{bmatrix}
  & \makeAbstract{\boldsymbol{W}}\left(
                            \begin{bmatrix}
                              0\\
                              0.5\\
                            \end{bmatrix}\right) &=
  \begin{bmatrix}
    3           & 0.5    \\
    15                    & 11 \\
  \end{bmatrix}
\end{align*}
\subsection{Abstract Training Data and Model Weights}
\label{sec:train-data-weights}

Using possible world semantics to compute all possible model weights \ModelWeightFixedPointPoss given a set of possible training datasets \dtrainPoss, there will be a natural correspondence between a model weight $\ModelWeight \in \ModelWeightFixedPointPoss$ and the dataset $\Data \in \dtrainPoss$ from which is was derived. When training in the abstract domain such correlations can be preserved by sharing error symbols between the abstract training data \AbstractDtrain and corresponding model weights \AbstractModelWeight.
% Due to the natural of training operations, data and model will share same error symbols.
When such sharing occurs, then it is critical to reason about the joint concretization when determining whether an abstract fixed point as been achieved. Testing equivalence (equal concretization) of \AbstractModelWeight alone can lead to false positives, as $\AbstractModelWeight_1 \eqA \AbstractModelWeight_2$ does not in general imply $(\AbstractDtrain,\AbstractModelWeight_1) \eqA (\AbstractDtrain,\AbstractModelWeight_2)$. This is due to the fact that a particular concrete model weight $\ModelWeight \in \concretization{\AbstractModelWeight_1} = \concretization{\AbstractModelWeight_2}$ may be associated with different datasets in $(\AbstractDtrain, \AbstractModelWeight_1)$ and $\AbstractDtrain, \AbstractModelWeight_2)$ because of shared error symbols between \AbstractDtrain and the abstract model weights.
Thus,
concretization equivalence between model weights only does not imply equivalent results after application of a gradient decent step.

To further illustrate this consider, two pairs of abstract model weights $\AbstractModelWeight_1$ and $\AbstractModelWeight_2$.

\begin{equation*}
\left(\AbstractModelWeight_1  =
                  \begin{bmatrix}
                    2 + 2\evar_1 \\
                    3 + \evar_2 \\
                  \end{bmatrix},
\abstractDtrain =
                  \begin{bmatrix}
                    \evar_1\\
                    5 \\
                  \end{bmatrix}
                  \right)
\end{equation*}
\begin{equation*}
\left(\AbstractModelWeight_2 =
                  \begin{bmatrix}
                    2 + 2\evar_3 \\
                    3 + \evar_2 \\
                  \end{bmatrix},
\abstractDtrain =
                  \begin{bmatrix}
                    \evar_1\\
                    5 \\
                  \end{bmatrix}
                  \right)
\end{equation*}

The only difference between $\AbstractModelWeight_1$ and $\AbstractModelWeight_2$ is that $\AbstractModelWeight_1$ shares a variable ($\evar_1$) with $\abstractDtrain$ while $\makeAbstract{\ModelWeight}_2$ does not.
Observe that $\concretization{\AbstractModelWeight_1}=\concretization{\AbstractModelWeight_2} $.
However, applying a step of gradient decent to $\AbstractModelWeight_1$ and $\AbstractModelWeight_2$ may lead to abstract models that are not equivalent.
For sake of this example, consider a hypothetical gradient operator $\funcAbstractTrans_{dummy}$ that subtracts the data from the current model weight.

\begin{equation*}
\left(\funcAbstractTrans_{dummy}(\makeAbstract{\ModelWeight}_1) =
                  \begin{bmatrix}
                    2 + \evar_1 \\
                    -2 + \evar_2 \\
                  \end{bmatrix},
\abstractDtrain =
                  \begin{bmatrix}
                    \evar_1\\
                    5 \\
                  \end{bmatrix}
                  \right)
\end{equation*}
\begin{equation*}
\left(\funcAbstractTrans_{dummy}(\makeAbstract{\ModelWeight}_2) =
                  \begin{bmatrix}
                    2 + 2\evar_3 - \evar_1\\
                    -2 + \evar_2 \\
                  \end{bmatrix},
\abstractDtrain =
                  \begin{bmatrix}
                    \evar_1\\
                    5 \\
                  \end{bmatrix}
                  \right)
\end{equation*}

Note that $\funcAbstractTrans_{dummy}(\makeAbstract{\ModelWeight}_1) \not\eqA \funcAbstractTrans_{dummy}(\makeAbstract{\ModelWeight}_2)$.
Thus, even if two abstract model weights have equal concretization  this does not guarantee that a fixed point has been reached when also taking the data into account.
This is important as otherwise it is not possible that the abstract model weights contain all possible optimal model weights \ModelWeightFixedPointPoss.
Instead, we need to consider the \textit{joint concretization} of model weight and data, s.t. if $\concretization{\makeAbstract{\aSet}_1}=\concretization{\makeAbstract{\aSet}_2}$ then concretization equivalence is guaranteed to hold for all subsequent iterations.

% \BG{@Su+Jiongli: please take a look at the previous subsection, we can continue here to explain the need for the joint concretization of $\abstractDtrain$ with $\AbstractModelWeight$}

%%% Local Variables:
%%% mode: LaTeX
%%% TeX-master: "main"
%%% End:

%%%%%%%%%%%%%%%%%%%%%%%%%%%%%%%%%%%%%%%%%%%%%%%%%%%%%%%%%%%%%%%%%%%%%%%%%%%%%%%%
\section{Examples Of Training Data Uncertainty and Abstraction}
\label{sec:example-uncertainty-and-abstraction}

In this section, we discuss several causes of training (and test) data uncertainty, how to encode them as possible worlds, and abstraction functions for approximating such uncertainty in our symbolic model.

%%%%%%%%%%%%%%%%%%%%%%%%%%%%%%%%%%%%%%%%%%%%%%%%%%%%%%%%%%%%%%%%%%%%%%%%%%%%%%%%
\subsection{Measurement Uncertainty}
\label{sec:meas-uncert}

Sensors typically have some measurement uncertainty. If the measurement error $\epsilon$ is know, e.g., provided by the instrument manufacturer or estimated through repeated measurements and calibration, then for a set of sensor readings used in training $\dtrain = \{ (\dpx_i, \y_i) \}$, then each possible world in $\dtrainPoss$ is derived from $\dtrain$ by replacing values $\dpx_i[j]$ with values in $\dpx_i[j] \pm \epsilon$.

%%%%%%%%%%%%%%%%%%%%%%%%%%%%%%%%%%%%%%%%%%%%%%%%%%%%%%%%%%%%%%%%%%%%%%%%%%%%%%%%
\subsection{Missing Values and Imputation}
\label{sec:missing-values}

Consider a training dataset $\dtrain$ where some of the features are missing for some of the datapoints in $\dtrain$. If each feature $\feature_i$'s domain is an interval $[l_i,u_i]$ and assuming that missing values can be represented as independent random variables, then the set of possible worlds of the training data $\dtrainPoss$ are all training datasets that can be derived from $\dtrain$ by replacing each missing value in a feature $\feature_i$ with a value from $[l_i,u_i]$.

We can use an abstraction function  $\abstractSymbol_{missing}$ that represents each missing value as an interval $[l_i,u_i]$. In the symbolic representation this is encoded as the central point of the interval of an a error symbol $\evar$ with a coefficient half of the interval's length. We associate a separate error symbol with each missing value. For $\dtrain \in \R^{n\times m}$, we define  $\abstractSymbol_{missing}$ for $i \in [1,n]$ and $j \in [1,m]$ as:

\[
\abstractSymbol_{missing}(\dtrain_{ij}) =
\begin{cases}
  l_j + \frac{u_j - l_j}{2} \evar_{i,j} &\mathbf{if}\,\dtrain_{ij} = \bot\\
  \dtrain_{ij} &\mathbf{otherwise}\\
\end{cases}
\]

 For instance, consider the training dataset $\dtrain \in \R^{3 \times 2}$ shown below where $\bot$ denotes a missing value.

 \[
   \dtrain =
  \begin{pmatrix}
    3    & \bot \\
    1    & 6    \\
    \bot & 9    \\
  \end{pmatrix}
\]

Assuming that both features have a domain $[0,10]$, we get:

\[
\abstractSymbol_{missing}(\dtrain) =
  \begin{pmatrix}
    3                 & 5 + 5 \evar_{1,2} \\
    1                 & 6                 \\
    5 + 5 \evar_{3,1} & 9                 \\
  \end{pmatrix}
\]

If the independence assumption on missing values holds, then $\concretization{\abstractSymbol_{missing}(\dtrain)} = \dtrainPoss$. If the independence assumption does not hold, then $\dtrainPoss$ would be a subset of the worlds described above and $\abstractSymbol_{missing}(\dtrain)$ is a still a valid abstraction function, albeit an over-approximating one:
\[
  \concretization{\abstractSymbol_{missing}(\dtrain)} \supseteq \dtrainPoss
\]

% that encloses guesses made using several imputation methods. For instance, consider an uncertain training data point $d = (1,\bot)$ where $\bot$ represents a missing value and let us assume that $\{a_1, \ldots, a_m\}$ are the guesses returned by the imputers. Then the possible worlds of $d$ are $\makePoss{d} = \{(1,a_1), \ldots, (1,a_m)\}$. We may symbolically over-approximate this set as:
% \[
% \abstraction{\makePoss{d}} = (1, min(\{a_i\}) + \frac{max(\{a_i\}) - min(\{a_i\})}{2} \evar)
% \]
%%%%%%%%%%%%%%%%%%%%%%%%%%%%%%%%%%%%%%%%%%%%%%%%%%%%%%%%%%%%%%%%%%%%%%%%%%%%%%%%
\paragraph{Imputation}

If we make the stronger assumption the unknown ground truth value corresponding to a missing value is in a set of estimations $\{a_1, \ldots, a_m\}$ returned by a set of imputation methods, then we can use $[min(\{a_i\}), max(\{a_i\})]$ instead of $[l_i,u_i]$.

% one possible way to over-approximate the uncertainty
%%%%%%%%%%%%%%%%%%%%%%%%%%%%%%%%%%%%%%%%%%%%%%%%%%%%%%%%%%%%%%%%%%%%%%%%%%%%%%%%
\paragraph{Training Label Uncertainty}

The abstract transformer $\abstractSymbol_{missing}$ can also be used if training data labels are missing.

% %%%%%%%%%%%%%%%%%%%%%%%%%%%%%%%%%%%%%%%%%%%%%%%%%%%%%%%%%%%%%%%%%%%%%%%%%%%%%%%%
% \subsection{Sample Bias}
% \label{sec:sample-bias}

% \BGI{model as setting all values of the data point to 0 which is equivalent to weighting it by 0}

%%% Local Variables:
%%% mode: LaTeX
%%% TeX-master: "main"
%%% End:

%\input{app-uncertain-inference.tex}
%%%%%%%%%%%%%%%%%%%%%%%%%%%%%%%%%%%%%%%%%%%%%%%%%%%%%%%%%%%%%%%%%%%%%%%%%%%%%%%%
\section{Abstract Transformers for Zonotopes}
\label{sec:abstr-transf-zonot}

In this section we introduce (exact) abstract transformers for (polynomial) zonotopes that have been introduced in related work and discuss their computation complexity and the space requirements for the output zonotope $\DummyZonotope$ in terms of its order $\order{\DummyZonotope}$ (see \Cref{sec:abstract-domain}). Kochdumper et al.~\cite[Table 1]{kochdumper-23-cpz} shows an overview of which operations are exact for linear and polynomial zonotopes (and other set representations). Relevant to for our purpose is that exact transformers exist for polynomial zonotopes for all operations used in the learning algorithms for linear models we consider.

%%%%%%%%%%%%%%%%%%%%%%%%%%%%%%%%%%%%%%%%%%%%%%%%%%%%%%%%%%%%%%%%%%%%%%%%%%%%%%%%
\begin{prop}[Exact Transformers for Polynomial Zonotopes]\label{prop:exact-abstract-trans}
There exist exact abstract transformers for scalar addition and multiplication as well as for matrix addition and multiplication for polynomial zonotopes \cite{althoff2013reachability}. There exist exact transformers for scalar addition and matrix addition for zonotopes. Abstract transformers for multiplication and matrix multiplication for zonotopes exist, but are not exact.
\end{prop}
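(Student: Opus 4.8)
The plan is to reduce the whole statement to one elementary observation, which I would isolate first: for every fixed assignment $e \in [-1,1]^{\card{\evardom}}$, the evaluation map $\mathrm{ev}_e : \abstractScalar \mapsto \abstractScalar(e)$ sending a polynomial over the error symbols $\evardom$ to a real number is a ring homomorphism, and it extends entrywise to symbolic matrices in a way that commutes with matrix addition and matrix multiplication (the latter because $\mathrm{ev}_e$ turns the index-sum defining a matrix product into the corresponding numerical sum). Since the joint concretization of a tuple of symbolic objects (which may share error symbols) is, by \Cref{sec:set-symb-matr}, exactly the image of the family $\{\mathrm{ev}_e\}_{e \in [-1,1]^{\card{\evardom}}}$ applied to that tuple, every exactness claim below follows by ``pushing \concretizeSymbol through'' the operation. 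Throughout I would model a binary concrete operation $\star$ as a unary function on the appropriate product concrete domain, so that \Cref{def:abstract-transformer} applies verbatim and its input is a pair $(\makeAbstract{A},\makeAbstract{B})$ with $\concretization{(\makeAbstract{A},\makeAbstract{B})} = \{(\makeAbstract{A}(e),\makeAbstract{B}(e)) : e \in [-1,1]^{\card{\evardom}}\}$ --- note this is \emph{not} $\concretization{\makeAbstract{A}} \times \concretization{\makeAbstract{B}}$ when $\makeAbstract{A}$ and $\makeAbstract{B}$ share symbols.

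For \emph{polynomial zonotopes} I would take the abstract transformer for $\star \in \{+,\cdot,\times\}$ to be the corresponding \emph{symbolic} operation performed in the polynomial (matrix) ring: $\makeAbstract{\star}(\makeAbstract{A},\makeAbstract{B}) = \makeAbstract{A} \star \makeAbstract{B}$. Since sums and products of polynomials are polynomials, and sums and products of matrices of polynomials are again matrices of polynomials, the output is a polynomial zonotope, so $\makeAbstract{\star}$ is a well-defined map on $\abstractDom$; this gives existence. Exactness, $\concretization{\makeAbstract{A} \star \makeAbstract{B}} = \star(\concretization{(\makeAbstract{A},\makeAbstract{B})})$, then reduces to the pointwise identity $(\makeAbstract{A} \star \makeAbstract{B})(e) = \makeAbstract{A}(e) \star \makeAbstract{B}(e)$ for all $e$, which is the homomorphism property stated above. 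This recovers the relevant entries of \cite[Table~1]{kochdumper-23-cpz} and of \cite{althoff2013reachability}.

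For \emph{zonotopes}, a zonotope is precisely a polynomial zonotope all of whose entries are affine forms (total degree $\le 1$). Affine forms are closed under addition, and so are matrices of affine forms, so restricting the previous paragraph's argument to this subclass shows the symbolic-sum transformer stays inside the zonotope domain and is exact --- giving exactness for scalar addition and for matrix addition. Multiplication is different: the product of two affine forms is generically a degree-$2$ polynomial, so the symbolic product escapes the zonotope domain. One instead uses the standard construction (cf.\ \cite{althoff2013reachability}): keep the affine part of the product exactly and replace the bilinear remainder by a fresh error symbol whose coefficient is an interval enclosure of that remainder; the fresh-symbol term over-approximates every value the remainder can attain, so this is a sound abstract transformer producing a zonotope --- giving existence. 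To see that no \emph{exact} zonotope transformer for (matrix) multiplication can exist, I would exhibit inputs whose image is not the concretization of any zonotope: with the $1\times 1$ matrix $\makeAbstract{A} = 1 + \evar_1$ and the $1 \times 2$ matrix $\makeAbstract{B} = (\,1 + \evar_1 \;\; 1 - \evar_1\,)$, the product set $\{(\,(1+t)^2,\; 1-t^2\,) : t \in [-1,1]\}$ is a parabolic arc --- in particular not convex --- whereas the concretization of every zonotope is convex. Reading the same example as a scalar-times-row-vector product, or checking directly that the standard transformer applied to $(1+\evar_1)\cdot(1+\evar_1)$ returns a zonotope whose concretization strictly contains the true range $[0,4]$, settles the remaining multiplication cases.

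The only genuinely delicate point --- and the one I would state most carefully --- is the bookkeeping around shared error symbols: the exactness arguments work only because they are phrased in terms of the joint concretization $\concretization{(\makeAbstract{A},\makeAbstract{B})}$ from \Cref{sec:set-symb-matr} and the product-domain view of binary operations, so that the single family of assignments $e$ is used consistently on both arguments (using $\concretization{\makeAbstract{A}} \times \concretization{\makeAbstract{B}}$ instead would make even the addition claim false). Once that is fixed, each remaining step is either the one-line homomorphism identity or the convexity counterexample, with no estimates required, so I do not anticipate any further obstacle.
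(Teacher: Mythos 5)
Your proposal is correct, and it is considerably more complete than what the paper actually provides: the paper never proves this proposition, but instead defers to the cited literature (Kochdumper et al., Table~1; Althoff) and to the operation-by-operation description in the appendix section on arithmetic operations, where the transformers are simply \emph{defined} as the entrywise symbolic sum, the symbolic scalar product, and matrix multiplication built from these, with the observation that polynomials are closed under these operations (hence exactness for polynomial zonotopes) while the product of two affine forms is degree two and must be linearized (hence loss of exactness for zonotopes). Your evaluation-homomorphism argument $(\makeAbstract{A} \star \makeAbstract{B})(e) = \makeAbstract{A}(e) \star \makeAbstract{B}(e)$ is exactly the reason the paper's symbolic definitions are exact, and your insistence on the joint concretization of the argument pair rather than $\concretization{\makeAbstract{A}} \times \concretization{\makeAbstract{B}}$ is the same point the paper makes elsewhere (in the discussion of shared error symbols between data and weights) but never connects to this proposition. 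The genuinely new content you supply is the impossibility half: the paper only shows that the \emph{standard} linearized product over-approximates, which does not by itself rule out the existence of some other exact transformer, whereas your non-convex parabolic-arc image $\{((1+t)^2,\,1-t^2) : t \in [-1,1]\}$ shows no zonotope-valued transformer can be exact, since zonotope concretizations are convex. That is a strictly stronger and more honest reading of the claim "exist, but are not exact."

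One minor caveat: for genuinely one-dimensional scalar multiplication the image of the box under a continuous map is an interval, hence always the concretization of some $1$-d zonotope, so your convexity obstruction does not apply there; your fallback remark that the standard transformer's output strictly contains $[0,4]$ only shows that \emph{that} transformer is inexact, not that none is. To close this last case you need the joint-concretization semantics you flag at the end (an output written over a fresh symbol cannot reproduce the correlation with $\evar_1$, so $\concretization{\makeAbstract{\star}(d)} = \star(\concretization{d})$ fails once $d$ is taken to be the pair with shared symbols), or you can simply observe that the scalar case is subsumed by your $1\times 2$ example read as a scalar-times-matrix product, which is how the proposition's "multiplication" is actually used in the paper. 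This does not affect the soundness of your overall argument.
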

%%%%%%%%%%%%%%%%%%%%%%%%%%%%%%%%%%%%%%%%%%%%%%%%%%%%%%%%%%%%%%%%%%%%%%%%%%%%%%%%

We present the details of these operations in the following.

%%%%%%%%%%%%%%%%%%%%%%%%%%%%%%%%%%%%%%%%%%%%%%%%%%%%%%%%%%%%%%%%%%%%%%%%%%%%%%%%
\subsection{Arithmetic Operations}
\label{sec:arith-oper}

%%%%%%%%%%%%%%%%%%%%%%%%%%%%%%%%%%%%%%%%%%%%%%%%%%%%%%%%%%%%%%%%%%%%%%%%%%%%%%%%
\paragraph{Addition.} Scalar and matrix addition are exact in both zonotopes and polynomial zonotopes. Given two matrix (polynomial) zonotopes $\makeAbstract{\boldsymbol{V}}$ and $\makeAbstract{\boldsymbol{W}}$ in $\abstractScalarDom^{n \times m}$, their addition $\makeAbstract{\boldsymbol{Z}} = \makeAbstract{\boldsymbol{V}} + \makeAbstract{\boldsymbol{W}}$ is defined by adding entries. For each $i \in [1,n]$ and $j \in [1,m]$:
\[
\makeAbstract{\boldsymbol{Z}}_{ij} = \makeAbstract{\boldsymbol{V}}_{ij} + \makeAbstract{\boldsymbol{W}}_{ij}
\]
The order of $\makeAbstract{\boldsymbol{Z}}$ is the sum of the orders of $\makeAbstract{\boldsymbol{V}}$ and $\makeAbstract{\boldsymbol{W}}$:
\[
\order{\makeAbstract{\boldsymbol{Z}}} = \order{\makeAbstract{\boldsymbol{V}}} + \order{\makeAbstract{\boldsymbol{W}}}
\]

%%%%%%%%%%%%%%%%%%%%%%%%%%%%%%%%%%%%%%%%%%%%%%%%%%%%%%%%%%%%%%%%%%%%%%%%%%%%%%%%
\paragraph{Scalar multiplication.} Multiplying a (polynomial) zonotope matrix $\makeAbstract{\boldsymbol{W}}$ with a scalar $c$ is exact using the abstract transformer $\cdot_{poly}$ defined below. We simply multiply each entry in the matrix by $c$.For each $i \in [1,n]$ and $j \in [1,m]$:
\[
(c \cdot_{poly} \makeAbstract{\boldsymbol{W}})_{ij} = c \cdot \makeAbstract{\boldsymbol{W}}_{ij}
\]
Multiplication with a scalar abstract value $d \in \abstractScalarDom$ is exact for polynomial zonotope matrices, but increases the order of the input zonotope:
\[
\order{d \cdot \boldsymbol{W}} = \order{d} \cdot \order{\boldsymbol{W}}
\]
Here we define the order of a scalar abstract value $d \in \abstractScalarDom$ to be the number of monomials in the representation of $d$. For a linear zonotope $\boldsymbol{W}$, $d \cdot \makeAbstract{\boldsymbol{W}}_{ij}$ is in general not linear as it contains higher-order terms. Thus, matrix multiplication for linear zonotopes requires application of linearization (see \Cref{sec:line-order-reduct}):
\[
d \cdot_{Lin} \boldsymbol{W} = \funcLinearization{d \cdot_{poly} \boldsymbol{W}}
\]

%%%%%%%%%%%%%%%%%%%%%%%%%%%%%%%%%%%%%%%%%%%%%%%%%%%%%%%%%%%%%%%%%%%%%%%%%%%%%%%%
\paragraph{Matrix multiplication.} Matrix multiplication is defined using scalar multiplication and addition. As discussed above, addition is exact for both linear and polynomial zonotopes. However, scalar multiplication of symbolic expressions is only exact for polynomial zonotopes, but requires linearization and, thus, over-approximation, for linear zonotopes. That is, matrix multiplication is exact for polynomial zonotopes only. Consider two matrices $\makeAbstract{\boldsymbol{V}} \in \abstractScalarDom^{n \times m}$ and $\makeAbstract{\boldsymbol{V}} \in \abstractScalarDom^{m \times k}$, then matrix multiplication is defined as usual, but using abstract transformers for scalar addition and multiplication. Each symbolic entry in the matrix $\makeAbstract{\boldsymbol{V}} \cdot \makeAbstract{\boldsymbol{W}}$ is a sum of $m$ elements, each the multiplication of one entry of $\makeAbstract{\boldsymbol{V}}$ with one entry of $\makeAbstract{\boldsymbol{W}}$. Thus,

\[
\order{\makeAbstract{\boldsymbol{V}} \cdot \makeAbstract{\boldsymbol{W}}} = m \cdot \order{\makeAbstract{\boldsymbol{V}}} \cdot \order{\makeAbstract{\boldsymbol{W}}}
\]
For linear zonotopes, we have to again apply linearization to make sure that the output is a linear zonotope.
% \BGI{@Su+Jiongli: later add all other operations we need and add citations to this section for papers that discuss (or better have firsted introduced) these transformers}

%%% Local Variables:
%%% mode: LaTeX
%%% TeX-master: "main"
%%% End:

%%%%%%%%%%%%%%%%%%%%%%%%%%%%%%%%%%%%%%%%%%%%%%%%%%%%%%%%%%%%%%%%%%%%%%%%%%%%%%%%
\section{Abstract Transformers for Gradient Descent}
\label{sec:app-abstr-transf-grad}

%%%%%%%%%%%%%%%%%%%%%%%%%%%%%%%%%%%%%%%%%%%%%%%%%%%%%%%%%%%%%%%%%%%%%%%%%%%%%%%%
\subsection{Abstract Fixed Points Over-Approximate Possible Fixed Points}
\label{sec:abstr-fixed-points}

\begin{proof}[Proof of \Cref{prop:abstract-fixed-points-include-concrete-ones}]
Initially, we will assume that $\AbstractModelWeighti{j}$ is computed through repeated application of $\makeAbstract{\func}$.
  Let $n$ be the smallest number such that $\AbstractModelWeightFixedPoint = \AbstractModelWeighti{n} = \makeAbstract{\func}(\AbstractModelWeighti{n-1}, \AbstractDtrain)$ for iteration with abstract transformer $\makeAbstract{\func}$. As $\makeAbstract{\func}$ is an abstract transformer for $\ConcreteTransFunc$ and abstract transformers compose (\Cref{prop:abstract-transformer}), we know that for every $\dtrain_i \in \dtrainPoss$ and $n \in \mathbb{N}$, we have $(\ModelWeightij{i}{n}, \dtrain_i) \in \concretization{\AbstractModelWeighti{n}, \AbstractDtrain}$. To prove the claim it is sufficient to show that for every such $\dtrain_i$ we have $(\ModelWeightOptimali{i}, \dtrain_i) \in \concretization{\AbstractModelWeighti{n}, \AbstractDtrain}$. WLOG consider some $\dtrain_i \in \dtrainPoss$ and $(\ModelWeightij{i}{n}, \dtrain_i) \in \concretization{\AbstractModelWeighti{n}, \AbstractDtrain}$. Let $m \in \mathbb{N}$ be the smallest number such that $\ModelWeightOptimali{i} = \ModelWeightij{j}{m}$. If $m \leq n$, then based on the fact that $\makeAbstract{\func}$ is an abstract transformer (over-approximates $\ConcreteTransFunc$), the result holds as $(\ModelWeightij{i}{n}, \dtrain_i) \in \concretization{\AbstractModelWeighti{n}, \AbstractDtrain}$. Now consider the case where $m > n$. We will show through induction that for all $j \in [n+1,m]$, $(\ModelWeightij{i}{j}, \dtrain_i) \in \concretization{\AbstractModelWeighti{n}, \AbstractDtrain}$ and, thus  $(\ModelWeightOptimali{i}, \dtrain_i) \in \concretization{\AbstractModelWeighti{n}, \AbstractDtrain}$.

%%%%%%%%%%%%%%%%%%%%%%%%%%%%%%%%%%%%%%%%
\proofpar{Induction start}
For $j=n$ the result trivially holds based on the definition of abstract fixed points.

%%%%%%%%%%%%%%%%%%%%%%%%%%%%%%%%%%%%%%%%
\proofpar{Induction step}
Assume that $(\ModelWeightij{i}{j}, \dtrain_i) \in \concretization{\AbstractModelWeighti{n}, \AbstractDtrain}$ for $j \in [n,m-1]$, we have to show that this implies that $(\ModelWeightij{i}{j+1}, \dtrain_i) \in \concretization{\AbstractModelWeighti{n}, \AbstractDtrain}$. By definition, we have
\[
\ModelWeightij{i}{j+1} = \ConcreteTransFunc(\ModelWeightij{i}{j})
\]
As $\makeAbstract{\func}$ is an abstract transformer for $\ConcreteTransFunc$, we have, $(\ModelWeightij{i}{j+1}, \dtrain_i) \in \concretization{\AbstractModelWeighti{n+1}, \AbstractDtrain}$. Now based on the fact that $\AbstractModelWeighti{n}$ is an abstract fixed point according to \Cref{def:zonotope-fp} and, thus, $\concretization{\AbstractModelWeighti{n}, \AbstractDtrain} \supseteq \concretization{\AbstractModelWeighti{n+1}, \AbstractDtrain}$, it follows that $(\ModelWeightij{i}{j+1}, \dtrain_i) \in \concretization{\AbstractModelWeighti{n}, \AbstractDtrain}$.

So far we have demonstrated that a fixed point $\AbstractModelWeightFixedPoint$ that appears in the iteration sequence $\{\AbstractModelWeighti{j}\}_{j=0}^\infty$ contains all optimal model weights $\ModelWeightFixedPointPoss$. We now prove the stronger result that as long as $\AbstractModelWeightFixedPoint$ fulfills the condition of \Cref{def:zonotope-fp}, no matter it is the result of an iteration sequence using $\makeAbstract{\func}$ or not, its concretization encloses $\ModelWeightFixedPointPoss$. Consider one $\dtrain_i \in \concretization{\AbstractDtrain}$ and as above let $\ModelWeightOptimali{i}$ denote its optimal model weight.
We will demonstrate that $(\ModelWeightOptimali{i}, \dtrain_i) \in \concretization{\AbstractModelWeightFixedPoint, \AbstractDtrain}$. First note that given that gradient descent for linear models is convex, for any initial model weight $\ModelWeightij{i}{0}$, the sequence $\{\ModelWeightij{i}{j}\}_{j=0}^\infty$ converges to $\ModelWeightOptimali{i}$. Specifically, let $\ModelWeight$ denote the model weight such that $(\ModelWeight, \dtrain_i) \in \concretization{\AbstractModelWeightFixedPoint, \AbstractDtrain}$ and let $n_i$ denote the smallest integer such that $\ModelWeightij{i}{n_i} = \ModelWeightOptimali{i}$ for the sequence generated starting from $\ModelWeightij{i}{0} = \ModelWeight$. However, now we can apply the same proof by induction shown above to demonstrate that $(\ModelWeightij{i}{n_i}, \dtrain_i) \in \concretization{\AbstractModelWeightFixedPoint, \AbstractDtrain}$. This concludes the proof.
\end{proof}
%%%%%%%%%%%%%%%%%%%%%%%%%%%%%%%%%%%%%%%%%%%%%%%%%%%%%%%%%%%%%%%%%%%%%%%%%%%%%%%%

%%%%%%%%%%%%%%%%%%%%%%%%%%%%%%%%%%%%%%%%%%%%%%%%%%%%%%%%%%%%%%%%%%%%%%%%%%%%%%%%
\subsection{Exact Abstract Transformer for Gradient Descent}
\label{sec:exact-abstr-transf}

In this section we show that the abstract transformer for gradient descent introduced in \cref{sec:GD-for-poly-zonotops} has a fixed point.

%%%%%%%%%%%%%%%%%%%%%%%%%%%%%%%%%%%%%%%%%%%%%%%%%%%%%%%%%%%%%%%%%%%%%%%%%%%%%%%%
\begin{prop}\label{prop:fixed-point-existence}
The abstract gradient descent operator \(\funcAbstractTrans\) has a fixed point \(\AbstractModelWeightFixedPoint\).
\end{prop}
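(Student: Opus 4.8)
The plan is to exhibit an explicit fixed point rather than to try to reach one by iterating $\exactAbstractTrans$ (whose iterates grow in representation size). The natural candidate is the symbolic closed-form ridge minimiser evaluated on the abstract data. Concretely, set
\[
\AbstractModelWeightFixedPoint \;\defas\; \bigl(\AbstractXtrain^{T}\AbstractXtrain + \RegularizationCoef\, n\, \IdentityMat\bigr)^{-1}\AbstractXtrain^{T}\Abstractytrain ,
\]
the expression obtained by substituting $\AbstractDtrain = (\AbstractXtrain,\Abstractytrain)$ into the closed form of \Cref{eq:grad-closed-form}. Since $\RegularizationCoef > 0$, the symbolic matrix $\AbstractXtrain^{T}\AbstractXtrain + \RegularizationCoef n \IdentityMat$ evaluates to a positive-definite (hence invertible) matrix under every assignment $e \in [-1,1]^{\card{\evardom}}$, so $\AbstractModelWeightFixedPoint$ is a well-defined abstract element and its joint concretisation with $\AbstractDtrain$ is $\{(\ModelWeightOptimali{i},\dtrain_i) \mid \dtrain_i \in \concretization{\AbstractDtrain}\}$: every possible world is paired with its own optimal ridge solution.

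Next I would verify the condition of \Cref{def:zonotope-fp}. Using the ridge gradient from \Cref{eq:grad-closed-form} and the definition of $\AbstractModelWeightFixedPoint$,
\[
\nabla \TrainingLossFunc(\AbstractModelWeightFixedPoint)
= \tfrac{2}{n}\bigl(\AbstractXtrain^{T}\AbstractXtrain\,\AbstractModelWeightFixedPoint - \AbstractXtrain^{T}\Abstractytrain\bigr) + 2\RegularizationCoef\,\AbstractModelWeightFixedPoint
= \tfrac{2}{n}\Bigl(\bigl(\AbstractXtrain^{T}\AbstractXtrain + \RegularizationCoef n \IdentityMat\bigr)\AbstractModelWeightFixedPoint - \AbstractXtrain^{T}\Abstractytrain\Bigr),
\]
and the right-hand side evaluates to the zero vector under every assignment $e \in [-1,1]^{\card{\evardom}}$. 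Hence $\exactAbstractTrans(\AbstractModelWeightFixedPoint)(e) = \AbstractModelWeightFixedPoint(e) - \LearningRate \cdot \mathbf{0} = \AbstractModelWeightFixedPoint(e)$ for every $e$, so $\concretization{\AbstractModelWeightFixedPoint,\AbstractDtrain} = \concretization{\exactAbstractTrans(\AbstractModelWeightFixedPoint),\AbstractDtrain}$, which is exactly \Cref{def:zonotope-fp}. That $\exactAbstractTrans$ acts on each world as the concrete gradient step and that the involved matrix additions and multiplications are handled exactly is inherited from \Cref{prop:exactat}, \Cref{prop:exact-abstract-trans} and \Cref{prop:abstract-transformer}.

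I do not expect any step here to be a genuine obstacle; the only point requiring care is that $\AbstractModelWeightFixedPoint$ is a matrix of fractions of polynomials in the error symbols, so it lies outside the tractable class of linear zonotopes and prediction ranges cannot be read off it efficiently --- which is precisely why this fixed point is merely existential and motivates the order-reduction machinery of \Cref{sec:gd-with-order-reduction}. If one prefers to keep the whole argument inside the polynomial domain, the same element can be justified directly from \Cref{def:zonotope-fp} without ever writing the inverse symbolically: for $\RegularizationCoef > 0$ ridge regression is strongly convex, so on each world $\ConcreteTransFunc$ is an affine contraction whose only bounded invariant set is the singleton $\{\ModelWeightOptimali{i}\}$; therefore $\{(\ModelWeightOptimali{i},\dtrain_i)\}$ is the unique set satisfying the fixed-point equation, and it is realised by $\AbstractModelWeightFixedPoint$. (When $\RegularizationCoef = 0$ one additionally assumes each possible-world feature matrix has full column rank so that the minimiser stays unique.)
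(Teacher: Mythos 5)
Your candidate $\bigl(\AbstractXtrain^{T}\AbstractXtrain + \RegularizationCoef n \IdentityMat\bigr)^{-1}\AbstractXtrain^{T}\Abstractytrain$ is semantically the right object, and your check that the abstract gradient vanishes under every assignment $e$ is sound as a statement about concretizations. The genuine gap is that this element is not a member of the abstract domain on which $\exactAbstractTrans$ and \Cref{def:zonotope-fp} are defined. The domain $\abstractScalarDom$ consists of matrices whose entries are \emph{polynomials} in the error symbols; symbolic matrix inversion produces fractions of polynomials, so your $\AbstractModelWeightFixedPoint$ is not a polynomial zonotope and ``$\exactAbstractTrans(\AbstractModelWeightFixedPoint)$'' is not literally an application of the operator whose fixed point the proposition asserts. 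You flag this only as a tractability caveat (prediction ranges cannot be read off efficiently), but it is a domain-membership problem: the map from a possible world to its ridge minimiser is rational, not polynomial, in the error symbols, so no element of the paper's domain has exactly the joint concretization $\{(\ModelWeightOptimali{i},\dtrain_i)\}$ you construct. This is precisely why the paper treats the symbolic closed form as infeasible (\Cref{sec:infesiable-symbolic-closed-form}) and does not use it here.

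The paper's own proof stays inside the domain and is a different route: it starts from an arbitrary $\AbstractModelWeighti{0}$, uses exactness of $\exactAbstractTrans$ to get $\concretization{\AbstractModelWeighti{j},\AbstractDtrain} = \{(\ModelWeightij{i}{j},\dtrain_i)\}$ for every $j$ by induction, and then argues that every concrete trajectory has converged by a uniform finite step $n=\max_i n_i$, so that $\concretization{\AbstractModelWeighti{n},\AbstractDtrain} = \concretization{\AbstractModelWeighti{n+1},\AbstractDtrain}$ and the genuine polynomial zonotope $\AbstractModelWeighti{n}$ witnesses the fixed point. Your underlying observation --- that the set of (optimal weight, dataset) pairs is invariant under the concrete gradient step --- is the same fact the paper leans on, but it has to be packaged through the iteration to yield a witness inside the domain. (Your argument does avoid the paper's reliance on exact finite-step convergence of gradient descent, which is a conceptual advantage; to repair it you would need to either enlarge the abstract domain to rational symbolic expressions or fall back on the iteration argument.)
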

%%%%%%%%%%%%%%%%%%%%%%%%%%%%%%%%%%%%%%%%%%%%%%%%%%%%%%%%%%%%%%%%%%%%%%%%%%%%%%%%

%%%%%%%%%%%%%%%%%%%%%%%%%%%%%%%%%%%%%%%%%%%%%%%%%%%%%%%%%%%%%%%%%%%%%%%%%%%%%%%%
\begin{proof}[Proof of \Cref{prop:fixed-point-existence}]
  The existence of a fixed point is implied by the fact that \(\funcAbstractTrans\) is an exact abstract transformer of the concrete gradient descent operator \(\ConcreteTransFunc\). Consider $ \concretization{\AbstractModelWeighti{0}, \AbstractDtrain} = \{(\ModelWeightij{i}{0}, \dtrain_i) \}$. Let $n_i$ be the smallest integer such that $\ModelWeightij{i}{n_i} = \ModelWeightOptimali{i}$ and let $n = \max_i n_i$, i.e., at iteration $n$, the concrete model weights have converged in every possible world in the concretization of $(\AbstractModelWeighti{0}, \AbstractDtrain)$. As \(\funcAbstractTrans\) is an exact abstract transformer, we can show by induction that $\concretization{\AbstractModelWeighti{j}, \AbstractDtrain} = \{(\ModelWeightij{i}{j}, \dtrain_i) \}$ for any $j$. As all computations in the concretization have converged at $n$, we know that $\ModelWeightij{i}{n+1} = \ConcreteTransFunc(\ModelWeightij{i}{n}) = \ModelWeightij{i}{n}$ and, thus, $\{(\ModelWeightij{i}{n}, \dtrain_i) \} = \{(\ModelWeightij{i}{n+1}, \dtrain_i) \}$. As $\funcAbstractTrans$ is exact, we get the desired result: $\concretization{\AbstractModelWeighti{n}, \AbstractDtrain} = \{(\ModelWeightij{i}{n}, \dtrain_i) \} = \{(\ModelWeightij{i}{n+1}, \dtrain_i) \} = \concretization{\AbstractModelWeighti{n+1}, \AbstractDtrain}$.
%   To demonstrate this, we proceed by contradiction.
% Assume that for all abstract model weights \(\AbstractModelWeight\), which over-approximate the fixed points of the concrete gradient descent operator, the joint concretization \(\concretization{\AbstractModelWeight, \AbstractDtrain}\) is not equal to \(\concretization{\funcAbstractTrans(\AbstractModelWeight, \AbstractDtrain)}\). In other words, there is no fixed point for the abstract gradient descent operator.
% If this is true, then there must exist at least one pair \((\ModelWeight_i, \dtrain_i)\) in \(\concretization{\AbstractModelWeight, \AbstractDtrain}\) that is not present in \(\concretization{\funcAbstractTrans(\AbstractModelWeight, \AbstractDtrain)}\). This contradicts the exactness of the abstract gradient descent operator \(\funcAbstractTrans\) and the fact that the concrete gradient descent operator has a fixed point. Intuitively, the fixed point is reached when the abstract gradient descent converges for all possible training data.
\end{proof}
%%%%%%%%%%%%%%%%%%%%%%%%%%%%%%%%%%%%%%%%%%%%%%%%%%%%%%%%%%%%%%%%%%%%%%%%%%%%%%%%

%%%%%%%%%%%%%%%%%%%%%%%%%%%%%%%%%%%%%%%%%%%%%%%%%%%%%%%%%%%%%%%%%%%%%%%%%%%%%%%%
\subsection{Prediction with Abstract Model Weight Fixed Points}
\label{sec:pred-with-abstr-based-on-fixed-points}

We now discuss how to use the abstract model weights \AbstractModelWeightFixedPoint returned by our abstract transformer for learning linear models during inference to over-approximate the possible set of predictions for a test data point. We start by discussing test data that is not uncertain and then extend the discussion to the case where the test data is also uncertain.

%%%%%%%%%%%%%%%%%%%%%%%%%%%%%%%%%%%%%%%%%%%%%%%%%%%%%%%%%%%%%%%%%%%%%%%%%%%%%%%%
\paragraph{Deterministic Test Data}
For now let us assume that the test data is not uncertain.
The following corollary then enables us to use abstract gradient descent for inference and for over-approximating the prediction ranges.

%%%%%%%%%%%%%%%%%%%%%%%%%%%%%%%%%%%%%%%%%%%%%%%%%%%%%%%%%%%%%%%%%%%%%%%%%%%%%%%%
\begin{col}\label{cor:pred-range}
Let \( f_{\ModelWeight}(\dpx) \) denote the linear model for parameters \( \ModelWeight \).
  Given an incomplete training dataset \( \dtrain \) associated with a set of possible worlds \(\makePoss{\dtrain}\), the prediction range \( \PredRange(\dpx) \) for a test data point \( \dpx \) can be over-approximated by:
\[
\makeAbstract{\PredRange}(\dpx) = \left[ \min_{\ModelWeight \in \concretization{\AbstractModelWeightFixedPoint}} f_{\ModelWeight}(\dpx), \max_{\ModelWeight \in \concretization{\AbstractModelWeightFixedPoint}} f_{\ModelWeight}(\dpx) \right],
\]
where \(\AbstractModelWeightFixedPoint\) is the fixed point of the abstract gradient descent operator \(\funcAbstractTrans\) applied to \(\AbstractDtrain = \abstraction{\dtrainPoss}\), the abstract representation of \(\makePoss{\dtrain}\) in the zonotope domain.
\end{col}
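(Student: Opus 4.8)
The plan is to reduce the corollary to the containment already established in \Cref{prop:abstract-fixed-points-include-concrete-ones}, namely $\concretization{\AbstractModelWeightFixedPoint} \supseteq \ModelWeightFixedPointPoss$, and then to observe that optimizing a fixed objective over a larger feasible set can only widen the resulting interval. So the whole argument is containment plus a one-line monotonicity of $\min$/$\max$ under set inclusion; there is essentially no heavy computation.

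First I would note that the $\AbstractModelWeightFixedPoint$ used here is, by hypothesis, a fixed point of $\funcAbstractTrans$ in the sense of \Cref{def:zonotope-fp}, so \Cref{prop:abstract-fixed-points-include-concrete-ones} applies and gives $\concretization{\AbstractModelWeightFixedPoint} \supseteq \ModelWeightFixedPointPoss = \{ \ModelWeightOptimali{i} \mid \ModelWeightOptimali{i} = \learner(\dtrain_i),\, \dtrain_i \in \dtrainPoss \}$. Next I would check that the extrema in the definition of $\makeAbstract{\PredRange}(\dpx)$ are actually attained: for a fixed test point $\dpx$ the prediction functional $\ModelWeight \mapsto f_{\ModelWeight}(\dpx) = \ModelWeight^{T}\dpx$ is linear, hence continuous, and $\concretization{\AbstractModelWeightFixedPoint}$ is the image of the compact cube $[-1,1]^{\card{\evardom}}$ under the continuous (affine) map $e \mapsto \AbstractModelWeightFixedPoint(e)$, hence a compact subset of $\R^{p}$. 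Therefore the minimum and maximum exist and $\makeAbstract{\PredRange}(\dpx)$ is a well-defined closed interval.

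The core step is then immediate: for every $\ModelWeightOptimal \in \ModelWeightFixedPointPoss$ we have $\ModelWeightOptimal \in \concretization{\AbstractModelWeightFixedPoint}$ by the containment above, so
\[
\min_{\ModelWeight \in \concretization{\AbstractModelWeightFixedPoint}} f_{\ModelWeight}(\dpx) \;\le\; \model(\dpx,\ModelWeightOptimal) \;\le\; \max_{\ModelWeight \in \concretization{\AbstractModelWeightFixedPoint}} f_{\ModelWeight}(\dpx).
\]
Taking the infimum and the supremum over $\ModelWeightOptimal \in \ModelWeightFixedPointPoss$ of the middle term yields $\inf_{\ModelWeightOptimal \in \ModelWeightFixedPointPoss}\model(\dpx,\ModelWeightOptimal) \ge \min_{\ModelWeight \in \concretization{\AbstractModelWeightFixedPoint}} f_{\ModelWeight}(\dpx)$ and $\sup_{\ModelWeightOptimal \in \ModelWeightFixedPointPoss}\model(\dpx,\ModelWeightOptimal) \le \max_{\ModelWeight \in \concretization{\AbstractModelWeightFixedPoint}} f_{\ModelWeight}(\dpx)$, i.e. $\PredRange(\dpx) \subseteq \makeAbstract{\PredRange}(\dpx)$, which is exactly the claimed sound over-approximation. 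The only points needing a little care — and the closest thing to an "obstacle" — are (i) invoking \Cref{prop:abstract-fixed-points-include-concrete-ones} with the correct notion of fixed point, i.e. verifying that the $\AbstractModelWeightFixedPoint$ fed into inference genuinely satisfies \Cref{def:zonotope-fp} so that the containment is licensed (for the output of \Cref{alg:final-algo} this is precisely \Cref{theo:algorithm-abstract-fp-correctness}), and (ii) the compactness/continuity remark ensuring the $\min$/$\max$ are attained so the inequalities above are literal rather than passing to infima/suprema on both sides. Everything else is the trivial monotonicity of extremization under $\supseteq$.
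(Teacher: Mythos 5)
Your proposal is correct and matches the paper's (largely implicit) argument: the corollary is exactly the containment $\concretization{\AbstractModelWeightFixedPoint} \supseteq \ModelWeightFixedPointPoss$ from \Cref{prop:abstract-fixed-points-include-concrete-ones} combined with monotonicity of $\min$/$\max$ under set inclusion, and your added remarks on attainment via compactness and on verifying the fixed-point hypothesis are harmless refinements rather than departures. The only content the paper adds beyond this is the observation that $\ModelWeight^{T}\dpx$ is a one-dimensional linear zonotope whose extrema are $c_0 \pm \sum_i \abs{c_i}$, which concerns efficient computation of the interval rather than the soundness claim you were asked to prove.
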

%%%%%%%%%%%%%%%%%%%%%%%%%%%%%%%%%%%%%%%%%%%%%%%%%%%%%%%%%%%%%%%%%%%%%%%%%%%%%%%%

The prediction returned by a linear model with parameters $\ModelWeight$ for a data point $\dpx$ is $\ModelWeight^{T} \dpx$. As $\dpx$ does not contain any symbolic term this is the sum of linear terms multiplied by constants, i.e., the result is a 1-dimensional linear zonotope which is a linear expression of the form:
\[
  c_0 + \sum_{i=1}^{m} c_i \evar_i
\]
where $c_i \in R$ and $\evar_i \in \evardom$. The minimum and maximum value of a 1-d linear zonotope can be determined efficiently as shown below:
\[
  \left[c_0 - \sum_{i=1}^{m} \abs{c_i}, c_0 + \sum_{i=1}^{m} \abs{c_i}\right]
\]

%%%%%%%%%%%%%%%%%%%%%%%%%%%%%%%%%%%%%%%%%%%%%%%%%%%%%%%%%%%%%%%%%%%%%%%%%%%%%%%%
\paragraph{Uncertain Test Data}
In \Cref{sec:uncertainty-propagation} we modeled uncertainty in the training data as sets of possible worlds \dtrainPoss. As mentioned in that section, our techniques also supports uncertain test data, i.e., both the training and test data may be uncertain.

In the most general case, the training and test data may be
correlated.\footnote{Note that this does not necessarily imply a violation of
the i.i.d. assumption. For instance, consider a dataset with a textual feature
race that is first translated into a categorical feature that is then one-hot encoded into multiple binary attributes. If we are
uncertain about the meaning of a particular value of the original attribute, then this leads to a correlation between the uncertainty of both datasets as the interpretation of this value affects all data points with that particular value in the race feature before preprocessing.} We model this as a set of possible worlds
$(\dtrainPoss,\XtestPoss)$ where each world is a pair of a training dataset
$\dtrain_i$ and a test dataset $\Xtest_i$:

\[
\makePoss{(\dtrain, \Xtest)} = \{ (\dtrain_1, \Xtest_1), \ldots, (\dtrain_m, \Xtest_m) \}
\]

If training and test data are independent, then we can specify their worlds separately ( $\dtrainPoss$ and $\XtestPoss$) and assume the worlds of $\makePoss{(\dtrain, \Xtest)}$ to be their cross product.
Uncertainty propagation for inference then requires us to compute the set of possible predictions:
\[
\ytrainPoss = \{ \model(\Xtest_i) \mid \exists (\dtrain_i, \Xtest_i) \in \makePoss{(\dtrain, \Xtest)}: \model_i = \learner(\dtrain_i) \}
\]
For inference in the abstract domain we first have to select an appropriate abstraction function for the test data, e.g., using the same abstraction function $\abstractSymbol$ we use for the training data.
The only difference to the case discussed above is that a test data point is now also an abstract element $\makeAbstract{\dpx}$ and the prediction $\AbstractModelWeight \makeAbstract{\dpx}$ is a polynomial zonotope as it contains higher order terms that are the result of multiplying linear terms. To efficiently determine the minimum and maximum we can, e.g., employ linearization to map the polynomial zonotope into a linear one and apply the solution described above for finding the minimum and maximum of a linear zonotope.

% as shown below:\BGI{This is nice, but can go to the appendix if need be}
% \[
% \makeAbstract{\PredRange}(\dpx) = \left[ \min_{\ModelWeight \in \concretization{\AbstractModelWeightFixedPoint}} f_{\ModelWeight}(\dpx), \max_{\ModelWeight \in \concretization{\AbstractModelWeightFixedPoint}} f_{\ModelWeight}(\dpx) \right].
% \]

%%%%%%%%%%%%%%%%%%%%%%%%%%%%%%%%%%%%%%%%%%%%%%%%%%%%%%%%%%%%%%%%%%%%%%%%%%%%%%%%
% \subsection{Approximate Abstract Transformers With Order Reduction and Linearization}
% \label{sec:appr-abstr-transf}

%%% Local Variables:
%%% mode: LaTeX
%%% TeX-master: "main"
%%% End:

%%%%%%%%%%%%%%%%%%%%%%%%%%%%%%%%%%%%%%%%%%%%%%%%%%%%%%%%%%%%%%%%%%%%%%%%%%%%%%%%
\section{Linearization and Order Reduction Techniques}
\label{sec:line-order-reduct}

%%%%%%%%%%%%%%%%%%%%%%%%%%%%%%%%%%%%%%%%%%%%%%%%%%%%%%%%%%%%%%%%%%%%%%%%%%%%%%%%
\subsection{Linearization}
\label{sec:linearization}

The purpose of linearization is to over-approximate an input polynomial zonotope with a linear zonotope. 

%%%%%%%%%%%%%%%%%%%%%%%%%%%%%%%%%%%%%%%%%%%%%%%%%%%%%%%%%%%%%%%%%%%%%%%%%%%%%%%%
\begin{definition}[Linearization Operator \(\linearize\)]\label{def:linearization}
A linearization operator \(\linearize\) maps a polynomial zonotope \(\DummyZonotope\) to a linear zonotope \(\DummylinearZonotope\). It replaces high-order polynomial terms with new error symbols, ensuring that all expressions are linear while maintaining an over-approximation:
\[
\concretization{\linearize(\DummyZonotope)} \supseteq \concretization{\DummyZonotope}.
\]
\end{definition}
%%%%%%%%%%%%%%%%%%%%%%%%%%%%%%%%%%%%%%%%%%%%%%%%%%%%%%%%%%%%%%%%%%%%%%%%%%%%%%%%

% This maintains the linear structure and ensures efficient training and inference. Computing a step of abstract gradient descent in the linear zonotope space remains tractable, as it involves linear operations.
During inference, computing the prediction intervals using the linear zonotope representation can be done efficiently with linear programming. However, more importantly, in our construction of abstract fixed points we use a specific order reduction technique that requires prior linearization in each gradient descent step to enforce the existence of a fixed point.
For a $d$-dimensional polynomial zonotope $\DummyZonotope$ with a set of monomials $\ErrSet$:
\[
\DummyZonotope = \boldsymbol{c}+\sum_{i=1}^{\lvert\ErrSet\rvert} {\boldsymbol{g}_i\ErrSet[i]},
\]
we have its linearization:
\[
\linearize(\DummyZonotope) = \zcenter+
\underbrace{\sum_{i \in \sigma_l}{\boldsymbol{g}_i\ErrSet[i]}}_{\text{\textbf{Linear monomials}}}
+
\underbrace{\sum_{i \not\in \sigma_l}{\boldsymbol{g}_i\evar'_i}}_{\text{\textbf{Replaced with linear monomials}}}
\]
where $\sigma_l$ denotes the set of indices of all linear terms in $\DummyZonotope$.
Here, the third part over-approximates each high-order term in $\DummyZonotope$ by replacing it with a new error symbol.
% \BGI{Show the equations for the linearization operator}
% \BG{@Jiongli: does this describe it accurately}\jiongli{yes makes sense to me. should we elaborate a bit on "prior linearization"? e.g. saying linearization in every step}
% This allows us to approximate model uncertainty without sacrificing computational efficiency.\BGI{This justifies using a linear zonotope at the end which we can get by bounding a polynomial zonotope, but we can make the argument stronger if we justify their use for intermediate steps}

%%%%%%%%%%%%%%%%%%%%%%%%%%%%%%%%%%%%%%%%%%%%%%%%%%%%%%%%%%%%%%%%%%%%%%%%%%%%%%%%
\subsection{Order Reduction Operators}
\label{sec:order-reduct-oper}

\emph{Order reduction operators} are used to reduce the representation size of a zonotope. That is, an order reduction operator takes as input a linear zonotope \DummylinearZonotope and return a linear zonotope of smaller order (smaller representation size) that over-approximates \DummylinearZonotope.
For linear zonotopes this means that order reduction operators reduce the number of distinct error symbols that occur in a zonotope by merging error symbols.

%%%%%%%%%%%%%%%%%%%%%%%%%%%%%%%%%%%%%%%%%%%%%%%%%%%%%%%%%%%%%%%%%%%%%%%%%%%%%%%%
\begin{definition}[Order Reduction Operator \(\OrderReduction\)]\label{def:order-reduction}
  An order reduction operator \(\OrderReduction\) takes a linear zonotope \(\DummylinearZonotope\) as input and returns another linear zonotope of reduced order:
%%%%%%%%%%%%%%%%%%%%%%%%%%%%%%%%%%%%%%%%
  \begin{align*}
    \concretization{\OrderReduction(\DummylinearZonotope)} &\supseteq \concretization{\DummylinearZonotope}
&\order{\OrderReduction(\DummylinearZonotope)} &< \order{\DummylinearZonotope}                                                           
  \end{align*}
  %%%%%%%%%%%%%%%%%%%%%%%%%%%%%%%%%%%%%%%% 
\end{definition}
%%%%%%%%%%%%%%%%%%%%%%%%%%%%%%%%%%%%%%%%%%%%%%%%%%%%%%%%%%%%%%%%%%%%%%%%%%%%%%%%

We now present details about two commonly adopted order reduction techniques: \emph{Interval Hull} (\emph{IH}) and \emph{transformation-based Interval Hull} (\emph{TIH}) in \Cref{sec:line-order-reduct}. In this section, we use the geometric representation of zonotopes (see \Cref{def:zonotope-symbolic}).
Note that for a linear zonotope,  each $\ErrSet[i]$ consists of a single error term $\evar_i$. Thus, we can write such a zonotope $\DummylinearZonotope$ as:

\[
\DummylinearZonotope = \boldsymbol{c}+\sum_{i=1}^{\lvert\ErrSet\rvert} {\boldsymbol{g}_i\evar_i}
\]

IH merges a selected subset $\IndexSelected$ of the error symbols of a zonotope and their corresponding generator vectors. 
 For a $d$-dimensional zonotope $\DummylinearZonotope$:

%%%%%%%%%%%%%%%%%%%%%%%%%%%%%%%%%%%%%%%%%%%%%%%%%%%%%%%%%%%%%%%%%%%%%%%%%%%%%%%%
\begin{equation*}
\funcOrderReductionIH(\IndexSelected,\DummylinearZonotope)
=\funcOrderReductionIH\left(\IndexSelected,\zcenter+\sum_{i=1}^{\lvert\ErrSet\rvert} {\boldsymbol{g}_i\evar_i}\right)
=\zcenter+\hspace{-5mm}
\underbrace{\sum_{i \not\in \IndexSelected}{\boldsymbol{g}_i\evar_i}}_{\text{\textbf{Retained error symbols}}}
+
% diag(\sum_{i\in \IndexSelected}{\lvert\boldsymbol{g}_i[1]\rvert},\cdots,\sum_{i\in \IndexSelected}{\lvert\boldsymbol{g}_i[d]\rvert})
\underbrace{\left[\begin{matrix}\left(\sum_{i\in \IndexSelected}{\lvert\boldsymbol{g}_i[1]\rvert}\right)\evar'_1\\\vdots\\\left(\sum_{i\in \IndexSelected}{\lvert\boldsymbol{g}_i[d]\rvert}\right)\evar'_d\end{matrix}\right]
}_{\text{\textbf{Over-approximated with $d$-dimensional box}}}
\end{equation*}
%%%%%%%%%%%%%%%%%%%%%%%%%%%%%%%%%%%%%%%%%%%%%%%%%%%%%%%%%%%%%%%%%%%%%%%%%%%%%%%%

% where $\IndexKept$ and $\IndexSelected$ are the indices of error symbols being kept and merged, respectively.
The selected terms $\IndexSelected$ are merged into a $d$-dimensional box described by $d$ new error symbols $\{\evar'_1,\cdots,\evar'_d\}$. We will drop $\IndexSelected$ if all error symbols of the input zonotope are selected.

The error symbols getting merged ($\IndexSelected$) are often determined based on some heuristic, e.g., symbols with lowest coefficients. TIH $\funcOrderReductionProjected$ first projects the zonotope to another space using an invertible linear transformation matrix $\A\in \R^{d\times d}$, then applies IH, and finally projects the resulting zonotope back with $\A^{-1}$. IH is a special case of TIH where $\A=\IdentityMat$.
%%%%%%%%%%%%%%%%%%%%%%%%%%%%%%%%%%%%%%%%%%%%%%%%%%%%%%%%%%%%%%%%%%%%%%%%%%%%%%%%
\begin{equation}
\funcOrderReductionProjected(\IndexSelected, \A, \DummyZonotope)=\A^{-1}\funcOrderReductionIH(\A\DummyZonotope).\nonumber
\end{equation}
%%%%%%%%%%%%%%%%%%%%%%%%%%%%%%%%%%%%%%%%%%%%%%%%%%%%%%%%%%%%%%%%%%%%%%%%%%%%%%%%
Intuitively, the purpose of the linear transformation $\A$ is to project the zonotope to a space where its shape is closer to a box, to reduce the loss of precision brought by order reduction~\cite{althoff2010reachability,kopetzki2017methods}. One of the most widely used TIH is PCA-based order reduction, whose transformation is obtained from the PCA of the set of all generator vectors of the input zonotope~\cite{kopetzki2017methods,muller2023abstract}.

%%%%%%%%%%%%%%%%%%%%%%%%%%%%%%%%%%%%%%%%%%%%%%%%%%%%%%%%%%%%%%%%%%%%%%%%%%%%%%%%
\paragraph{Order Reduction for PTIME Zonotope Training}
With linearization, the number of error symbols in the model weights zonotope $\AbstractModelWeight$ still grows exponentially with rate $\mathcal{O}(p^2)$, leading to exponential time complexity for gradient descent.
We can overcome this challenge through {\em order reduction}, which enforces the maximum number of terms in the symbolic expressions of model weights zonotopes~\cite{kopetzki2017methods}. Note that for a linear zonotope,  each $\ErrSet[i]$ consists of a single error term $\evar_i$. Thus, we can write such a zonotope $\DummylinearZonotope$ as:

\[
\DummylinearZonotope = \boldsymbol{c}+\sum_{i=1}^{\lvert\ErrSet\rvert} {\boldsymbol{g}_i\evar_i}
\]

Order reduction $\OrderReduction$ reduces the order (representation size) of a linear zonotope $\DummylinearZonotope$.
This is achieved by {\em merging} error symbols in $\ErrSet$, while ensuring that the result over-approximation the input zonotope, i.e.,

\[
  \concretization{\OrderReduction(\DummylinearZonotope)} \supseteq   \concretization{\DummylinearZonotope}
\]

Two commonly adopted order reduction approaches~\cite{kopetzki2017methods} are \emph{Interval Hull} (\emph{IH}), denoted as $\funcOrderReductionIH$, and \emph{Transformation-based Interval Hull} (\emph{TIH}), denoted as $\funcOrderReductionProjected$~\cite{althoff2010reachability,kopetzki2017methods,schilling2022verification}.
Specifically, IH merges a set of error symbols $\IndexSelected \subseteq \{\evar_i\}$ and their corresponding generator vectors (here $\IndexKept = \{\evar_i\} - \IndexSelected$):

%%%%%%%%%%%%%%%%%%%%%%%%%%%%%%%%%%%%%%%%%%%%%%%%%%%%%%%%%%%%%%%%%%%%%%%%%%%%%%%%
\begin{equation}
\funcOrderReductionIH(\DummyZonotope)
=\funcOrderReductionIH\left(\boldsymbol{c}+\sum_{i=1}^{\lvert\ErrSet\rvert} {\boldsymbol{g}_i\ErrSet[i]}\right)
=\boldsymbol{c}+\sum_{i\in \IndexKept}{\boldsymbol{g}_i\ErrSet[i]}+diag(\sum_{i\in \IndexSelected}{\lvert\boldsymbol{g}_i[1]\rvert},\cdots,\sum_{i\in \IndexSelected}{\lvert\boldsymbol{g}_i[d]\rvert})\left[\begin{matrix}\epsilon'_1\\\vdots\\\epsilon'_d\end{matrix}\right]\nonumber
\end{equation}
%%%%%%%%%%%%%%%%%%%%%%%%%%%%%%%%%%%%%%%%%%%%%%%%%%%%%%%%%%%%%%%%%%%%%%%%%%%%%%%%

% where $\IndexKept$ and $\IndexSelected$ are the indices of error symbols being kept and merged, respectively.
The selected terms $\IndexSelected$ are merged into a $d$-dimensional box described by $d$ new error symbols $\{\epsilon'_1,\cdots,\epsilon'_d\}$.
The error symbols getting merged ($\IndexSelected$) are often determined based on some heuristic~\cite{kopetzki2017methods}, e.g., the symbols with lowest coefficients. Similar to IH, TIH also merges error terms using IH, but in some projected space. TIH first projects the zonotope to another space using an invertible linear transformation matrix $A\in \mathbb{R}^{d\times d}$, then conducts IH, and finally projects the resulting zonotope back into the original space with $A^{-1}$:

%%%%%%%%%%%%%%%%%%%%%%%%%%%%%%%%%%%%%%%%%%%%%%%%%%%%%%%%%%%%%%%%%%%%%%%%%%%%%%%%
\begin{equation}
\funcOrderReductionProjected(\DummyZonotope)=A^{-1}\funcOrderReductionIH(A\DummyZonotope).\nonumber
\end{equation}
%%%%%%%%%%%%%%%%%%%%%%%%%%%%%%%%%%%%%%%%%%%%%%%%%%%%%%%%%%%%%%%%%%%%%%%%%%%%%%%%

Intuitively, the linear transformation $A$ aims to project the zonotope to a space where its shape is closer to a box, to reduce the loss of precision brought by order reduction~\cite{althoff2010reachability,kopetzki2017methods}. One of the most widely used TIH is PCA-based order reduction, whose transformation is obtained from the PCA of all generator vectors~\cite{kopetzki2017methods,muller2023abstract}.

\section{Efficient Abstract Gradient Descent with Order Reduction}\label{sec:app-gd-with-order-reduction}
As mentioned in \cref{sec:grad-desc-with}, to address the tractability issues with abstract gradient descent, we employ two key techniques: linearization and order reduction. We employ linearization at each step of gradient descent to ensure that the resulting abstract representation of model parameters remains a linear zonotope.

%\BGI{THIS IS NOW DISCONNECTED, WE HAVE TO PUT IT IN CONTEXT IN THIS SECTION AS IT IS IMPORTANT: Suppose the number of error symbols is restricted to no more than $q \geq d$ by order reduction ($\lvert\IndexKept\rvert+d\leq q$), the training time complexity reduces from $\mathcal{O}\left(p^{2T+2}\right)$ to $\mathcal{O}\left(p^2qT\right)$.
  %
%}

%%%%%%%%%%%%%%%%%%%%%%%%%%%%%%%%%%%%%%%%%%%%%%%%%%%%%%%%%%%%%%%%%%%%%%%%%%%%%%%%

Given a linearization operator $\linearize$ and order reduction operator $\OrderReduction$, we construct an abstract gradient descent operator, \(\LinearfuncAbstractTrans\):
%%%%%%%%%%%%%%%%%%%%
\begin{equation}
\LinearfuncAbstractTrans(\AbstractModelWeight) = \OrderReduction\Big( \AbstractModelWeight - \linearize \big( \eta \nabla \TrainingLossFunc(\AbstractModelWeight) \big) \Big),
\label{eq:app-gd-with-overapproximationnew}
\end{equation}
%%%%%%%%%%%%%%%%%%%%
which ensures that the abstract representation size remains bounded while providing efficient over-approximation.
This operator is an  abstract transformer for the concrete gradient descent operator.

%%%%%%%%%%%%%%%%%%%%%%%%%%%%%%%%%%%%%%%%%%%%%%%%%%%%%%%%%%%%%%%%%%%%%%%%%%%%%%%%
\begin{prop}\label{prop:gd-lin-is-abstract-transformer}
For any linearization $\linearize$ and order reduction $\OrderReduction$, the abstract gradient descent \( \LinearfuncAbstractTrans \) is an abstract transformer for the concrete gradient descent operator \( \ConcreteTransFunc \). Formally, for any abstract \(\AbstractModelWeight\),
\[
\concretization{\LinearfuncAbstractTrans(\AbstractModelWeight)} \supseteq \ConcreteTransFunc(\concretization{\AbstractModelWeight}),
\].
%showing that the approximation holds.
\end{prop}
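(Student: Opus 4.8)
The proposition only asserts the one-sided containment \(\concretization{\LinearfuncAbstractTrans(\AbstractModelWeight)} \supseteq \ConcreteTransFunc(\concretization{\AbstractModelWeight})\), so I would prove it by taking an arbitrary point \(\ModelWeight' \in \ConcreteTransFunc(\concretization{\AbstractModelWeight})\) and tracing it through the three stages of \(\LinearfuncAbstractTrans\) in \Cref{eq:app-gd-with-overapproximationnew} — the symbolic gradient, the linearization \(\linearize\), and the order reduction \(\OrderReduction\) — exhibiting an assignment of error symbols under which \(\LinearfuncAbstractTrans(\AbstractModelWeight)\) evaluates to \(\ModelWeight'\). An equivalent, more structural route is to note that the concrete step \(\ModelWeight \mapsto \ModelWeight - \LearningRate\nabla\TrainingLossFunc(\ModelWeight)\) decomposes into (i) duplicating \(\ModelWeight\) into a pair, (ii) applying the polynomial map \(\LearningRate\nabla\TrainingLossFunc\) to the second component, and (iii) matrix subtraction, and then to invoke that (exact) abstract transformers compose (\Cref{prop:abstract-transformer}), using that \(\linearize\) and \(\OrderReduction\) are abstract transformers for the identity (immediate from \(\concretization{\linearize(\DummyZonotope)}\supseteq\concretization{\DummyZonotope}\) and \(\concretization{\OrderReduction(\DummylinearZonotope)}\supseteq\concretization{\DummylinearZonotope}\)), that duplication is exact by symbol sharing, and that matrix addition/subtraction and the polynomial gradient admit exact transformers on (polynomial) zonotopes (\Cref{prop:exact-abstract-trans}, as already exploited for \(\exactAbstractTrans\) in \Cref{prop:exactat}). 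I would present the point-chasing version, since it makes the handling of shared error symbols explicit.

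Concretely, I would fix \(\ModelWeight = \AbstractModelWeight(e)\) for some \(e \in [-1,1]^{\card{\evardom}}\) with \(\ModelWeight' = \ModelWeight - \LearningRate\nabla\TrainingLossFunc(\ModelWeight)\). For the loss functions under consideration (MSE and ridge, cf.\ \Cref{sec:linear-regression-learning-algos}), \(\nabla\TrainingLossFunc\) is a polynomial in the entries of \(\ModelWeight\) and of \(\AbstractDtrain\), so the symbolic gradient \(\LearningRate\nabla\TrainingLossFunc(\AbstractModelWeight)\) is built by composing the exact transformers for scalar/matrix addition and multiplication over polynomial zonotopes; in particular it introduces no fresh symbols and re-uses those of \(\AbstractModelWeight\) and \(\AbstractDtrain\). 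Exactness then yields \(\big(\LearningRate\nabla\TrainingLossFunc(\AbstractModelWeight)\big)(e) = \LearningRate\nabla\TrainingLossFunc(\ModelWeight)\), i.e.\ evaluating \(\AbstractModelWeight\) and its symbolic gradient at the \emph{same} \(e\) produces the matching pair \(\big(\ModelWeight,\ \LearningRate\nabla\TrainingLossFunc(\ModelWeight)\big)\). Next, \(\linearize\) replaces each higher-order monomial of the symbolic gradient by a fresh symbol \(\evar'\in[-1,1]\); choosing each \(\evar'\) to equal the value that monomial takes under \(e\) (which lies in \([-1,1]\)) shows \(\linearize(\LearningRate\nabla\TrainingLossFunc(\AbstractModelWeight))\) still evaluates to \(\LearningRate\nabla\TrainingLossFunc(\ModelWeight)\) under the extended assignment, while \(\AbstractModelWeight(e)=\ModelWeight\) is unaffected. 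Applying the exact transformer for matrix subtraction to this pair gives \(\ModelWeight' = \ModelWeight - \LearningRate\nabla\TrainingLossFunc(\ModelWeight) \in \concretization{\AbstractModelWeight - \linearize(\LearningRate\nabla\TrainingLossFunc(\AbstractModelWeight))}\), and finally the over-approximation property of \(\OrderReduction\) gives \(\ModelWeight' \in \concretization{\OrderReduction\big(\AbstractModelWeight - \linearize(\LearningRate\nabla\TrainingLossFunc(\AbstractModelWeight))\big)} = \concretization{\LinearfuncAbstractTrans(\AbstractModelWeight)}\). Since \(\ModelWeight'\) was arbitrary, this establishes the claim.

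The main obstacle — and essentially the only delicate point — is the bookkeeping of shared error symbols: the argument relies on (a) the exact gradient transformer re-using \(\AbstractModelWeight\)'s symbols, so that \(\AbstractModelWeight\) and \(\LearningRate\nabla\TrainingLossFunc(\AbstractModelWeight)\) evaluated at one assignment give a \emph{consistent} weight/gradient pair (this is precisely the joint-concretization subtlety discussed in \Cref{sec:set-symb-matr}), and (b) the observation that \(\linearize\) is value-preserving for a suitable extension of the assignment — it over-approximates only because its fresh symbols may additionally range freely. Everything else is a routine appeal to composition of abstract transformers and to the defining over-approximation inequalities of \(\linearize\) and \(\OrderReduction\); because these two operators are not exact, only the \(\supseteq\) direction is available, which is exactly what the proposition states (and why no claim of exactness is made for \(\LinearfuncAbstractTrans\)).
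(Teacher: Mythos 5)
Your proposal is correct, but the argument you actually present differs from the paper's. The paper proves this proposition purely compositionally: it rewrites the exact operator $\exactAbstractTrans$ as $ident\big(\AbstractModelWeight - ident(\LearningRate\nabla\TrainingLossFunc(\AbstractModelWeight))\big)$, observes that $\linearize$ and $\OrderReduction$ are abstract transformers for the identity (directly from their defining inequalities $\concretization{\linearize(\cdot)}\supseteq\concretization{\cdot}$ and $\concretization{\OrderReduction(\cdot)}\supseteq\concretization{\cdot}$), and concludes by composition of abstract transformers (\Cref{prop:abstract-transformer}) together with the already-established fact that $\exactAbstractTrans$ is an abstract transformer for $\ConcreteTransFunc$. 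That is exactly the "structural route" you sketch in passing and then set aside. What you present instead is an element-chasing proof: fix $\ModelWeight=\AbstractModelWeight(e)$, evaluate the symbolic gradient at the same $e$ to get a consistent weight/gradient pair, extend $e$ to the fresh symbols introduced by $\linearize$ by assigning each one the value of the monomial it replaces (which lies in $[-1,1]$), and then appeal to the over-approximation property of $\OrderReduction$. This is sound, and it has the merit of making explicit two things the paper's proof leaves implicit: why sharing error symbols between $\AbstractModelWeight$, $\AbstractDtrain$, and the symbolic gradient guarantees a \emph{consistent} joint evaluation (the joint-concretization issue of \Cref{sec:set-symb-matr}), and why linearization is value-preserving under a suitable extension of the assignment rather than merely set-inclusive by fiat. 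The trade-off is that the paper's three-line argument generalizes immediately to any pipeline of over-approximating wrappers around an exact transformer, whereas your version redoes, in concrete terms, work that \Cref{prop:abstract-transformer} and \Cref{prop:exact-abstract-trans} already package; both establish the proposition.
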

%%%%%%%%%%%%%%%%%%%%%%%%%%%%%%%%%%%%%%%%%%%%%%%%%%%%%%%%%%%%%%%%%%%%%%%%%%%%%%%%
\begin{proof}
  Given that abstract transformers compose (\Cref{prop:abstract-transformer}), we can decompose $\ConcreteTransFunc$ into separate steps and construct an abstract transformer for $\ConcreteTransFunc$ by composing abstract transformers for the individual steps. We can inject identity functions anywhere into the computation without changing its result. Let $ident$ represent an identify function of an appropriate type and $\makeAbstract{\func}$ be the abstract operator resulting from injecting identity functions as shown below, then:
  \[
    \makeAbstract{\func}(\AbstractModelWeight) = ident\Big( \AbstractModelWeight - ident \big( \eta \nabla \TrainingLossFunc(\AbstractModelWeight) \big) \Big)
=  \AbstractModelWeight - \big( \eta \nabla \TrainingLossFunc(\AbstractModelWeight) \big)
=    \exactAbstractTrans(\AbstractModelWeight)
  \]
  Since $\exactAbstractTrans$ is an abstract transformer for $\ConcreteTransFunc$, so is $\makeAbstract{\func}$. Now observe that both $\linearize$ and $\OrderReduction$ are abstract transformers for $ident$ as
  \[
    \concretization{\linearize(\AbstractModelWeight)} \supseteq \concretization{\AbstractModelWeight} = ident(\concretization{\AbstractModelWeight})
  \]
Thus, $\LinearfuncAbstractTrans$ is an abstract transformer for $\ConcreteTransFunc$.
\end{proof}
%%%%%%%%%%%%%%%%%%%%%%%%%%%%%%%%%%%%%%%%%%%%%%%%%%%%%%%%%%%%%%%%%%%%%%%%%%%%%%%%

Note that \Cref{prop:abstract-fixed-points-include-concrete-ones} then implies that a fixed point \AbstractModelWeightFixedPoint for  \LinearfuncAbstractTrans is an over-approximation of all possible model weights \ModelWeightFixedPointPoss:
\[
\concretization{\AbstractModelWeightFixedPoint} \supseteq \ModelWeightFixedPointPoss
\]

%%%%%%%%%%%%%%%%%%%%%%%%%%%%%%%%%%%%%%%%%%%%%%%%%%%%%%%%%%%%%%%%%%%%%%%%%%%%%%%%
\subsection{Abstract Gradient Descent With Order Reduction And Fixed Points}
\label{sec:abstr-grad-des}

While $\LinearfuncAbstractTrans$ ensures that every step of gradient descent can be computed efficiently as we bound the order of the resulting zonotope in each step, this operator typically does not have a fixed point and even if it does, we still would have to solve an \nphard problem~\cite{kulmburg-21-cnczcp} to detected that we have converged. The reason for the lack of a fixed point is that both linearization and order reduction results in over-approximation and the over-approximation error may grow in each iteration. Recall that we did show a real example of where this operator diverges in \Cref{sec:grad-desc-with}.

%%% Local Variables:
%%% mode: LaTeX
%%% TeX-master: "main"
%%% End:

%%%%%%%%%%%%%%%%%%%%%%%%%%%%%%%%%%%%%%%%%%%%%%%%%%%%%%%%%%%%%%%%%%%%%%%%%%%%%%%%
\section{An Efficient Approximate Abstract Transformer for Ridge Regression}\label{sec:app-constructing-fp}
%\babak{Efficient and does have a fixed point}
% \jiongli{need to restructure a bit, as we move the equal-z here. the new flow should be: (1) the rewriting and decomposition of abstract GD update of model weight zonotope, (2) the order reduction we employed, (3) the equal-z we adopted such that its solution is {\bf guaranteed to be a fixed point} (only true for this order reduction so it should be introduce after introducing the order reduction) and that solution is feasible to obtain and it always exists. I'll do it tomorrow morning.}

In this section and the next section we present the proof of our main technical result:  \Cref{theo:algorithm-abstract-fp-correctness}. Recall that \Cref{theo:algorithm-abstract-fp-correctness} states that \Cref{alg:final-algo} computes abstract model parameters \AbstractModelWeightFixedPoint that are a fixed point for the abstract transformer \LinearfuncAbstractTrans using a closed form solution. Because \LinearfuncAbstractTrans was shown to be an abstract transformer for gradient decent this then implies that \AbstractModelWeightFixedPoint over-approximates all possible model weights \ModelWeightFixedPointPoss.
We start by presenting additional details of the decomposition we employ to force a fixed points, formally prove that the condition on \AbstractModelWeightFixedPoint from \Cref{prop:sufficient-condition-for-order-reduction-fixedpoint} is a sufficient for \AbstractModelWeightFixedPoint being a fixed point for \LinearfuncAbstractTrans, and then develop a closed form solution that requires solving a system of linear equations. For \SymbolicModelWeightND, the parts of the abstract model weights that exclusively contains symbols that do not appear in the abstract training dataset \AbstractDtrain, the equation system only has a solution if the regularization coefficient \RegularizationCoef is larger then or equal to a constant \RCmin that depends on \AbstractDtrain. Based on our experience and extensive experimental evaluation, we typically have $\RCmin=0$, i.e., the closed form solution exists for any regularization coefficient \RegularizationCoef. Nonetheless, we present a technique for achieving any desired $\RCmin \geq 0$ by splitting zonotopes into smaller parts with lower \RCmin, computing fixed points for each split individually, and merging the final result. These techniques will be presented in \cref{sec:app-reduc-requ-regul}.

%%%%%%%%%%%%%%%%%%%%%%%%%%%%%%%%%%%%%%%%%%%%%%%%%%%%%%%%%%%%%%%%%%%%%%%%%%%%%%%%
\subsection{Decomposing Fixed Point Equations For Abstract Gradient Descent}
\label{sec:decomp-fixed-point}

% To overcome the challenges of verifying the existence and convergence to a fixed point in abstract gradient descent, particularly when utilizing linearization and order reduction techniques,
We now present additional details about our decomposition of the gradient from \Cref{sec:grad-desc-with}  for linear regression with \(\ell_2\) regularization (ridge regression). % This methodology is specifically designed to directly construct a fixed point that is optimally aligned with the given abstract training dataset \(\AbstractDtrain\).
The loss function \(\TrainingLossFunc\) for ridge regression an \(\ell_2\) penalty, is given by:
%%%%%%%%%%%%%%%%%%%%
\[
\TrainingLoss = \frac{1}{n}(\Xtrain\ModelWeight - \ytrain)^T(\Xtrain\ModelWeight - \ytrain) + \RegularizationCoef \cdot \ModelWeight^T\ModelWeight,
\]
%%%%%%%%%%%%%%%%%%%%
where \(\RegularizationCoef\) is the regularization coefficient (see \Cref{sec:linear-regression-learning-algos} for details).

Recall that we observed that an abstract model weight zonotope $\AbstractModelWeight$ can be decomposed into parts that can be dealt with separately in the sense that we will show that a sufficient condition for achieving a fixed point $\AbstractModelWeightFixedPoint$ is that each component has a fixed point.
% With the exception of one, all components have closed form solutions for their fixed points and for the one component that does not, a fixed point can be constructed by carefully selecting an appropriate order reduction operator.
Given an abstract training dataset \(\AbstractDtrain\), we use \(\AbstractXtrain\) and \(\Abstractytrain\) to denote its feature matrix and labels. We decompose them into real (concrete) and symbolic components:

%%%%%%%%%%%%%%%%%%%%%%%%%%%%%%%%%%%%%%%%
\begin{align*}
  \AbstractXtrain &= \RealXtrain + \SymbolicXtrain\\
  \Abstractytrain &= \Realytrain + \Symbolicytrain
\end{align*}
%%%%%%%%%%%%%%%%%%%%%%%%%%%%%%%%%%%%%%%%

% \[
% \AbstractXtrain = \RealXtrain + \SymbolicXtrain,\quad \Abstractytrain = \Realytrain + \Symbolicytrain,
% \]
where \(\RealXtrain \in \R^{n \times d}\) and \(\Realytrain \in \R^{n}\) represent the real centers of zonotopes $\AbstractXtrain$ and $\Abstractytrain$, while \(\SymbolicXtrain \in \LinearSymbolicDomain^{n \times d}\) and \(\Symbolicytrain \in \LinearSymbolicDomain^{n}\) contain the symbolic terms.

Similarly, we decompose the abstract model weights at iteration \(i\) into real and symbolic components:
%%%%%%%%%%%%%%%%%%%%
\[
\AbstractModelWeighti{i} = \RealModelWeighti{i} + \SymbolicModelWeighti{i},
\]
%%%%%%%%%%%%%%%%%%%%
where \(\RealModelWeighti{i} \in \R^{d}\) represents the real center, and \(\SymbolicModelWeighti{i} \in \LinearSymbolicDomain^{d}\) contains the symbolic terms. The symbolic terms are further decomposed into those containing data symbols (\(\SymbolicModelWeightDi{i}\)), i.e., symbols that are shared with $\AbstractDtrain$, and those introduced via linearization and order reduction in linear abstract gradient descent (\(\SymbolicModelWeightNDi{i}\)), i.e., that do not appear in $\AbstractDtrain$.
\[
\SymbolicModelWeighti{i} = \SymbolicModelWeightDi{i} + \SymbolicModelWeightNDi{i}.
\]

Accordingly, we decompose the abstract gradient presented in \Cref{eq:app-gd-with-overapproximationnew} into several distinct components: real numbers ($\gradComponentReal$), linear symbolic expressions that share symbols with $\AbstractDtrain$ ($\gradComponentLinearData$), linear symbolic expressions that do not share symbols with $\AbstractDtrain$ ($\gradComponentLinearNoData$), and high-order symbolic expressions ($\gradComponentHigh$). Using the loss function for ridge regression (see \Cref{eq:full-mat-mul} in \Cref{sec:linear-regression-learning-algos}):
% \BGI{@jiongli: some parts were missing (red) in the formulas below, please double check}
%
%%%%%%%%%%%%%%%%%%%%%%%%%%%%%%%%%%%%%%%%%%%%%%%%%%%%%%%%%%%%%%%%%%%%%%%%%%%%%%%%
{\small\begin{flalign}
\LinearfuncAbstractTrans(\AbstractModelWeight) & = \OrderReduction\Big( \AbstractModelWeight - \linearize \big( \eta \nabla \TrainingLossFunc(\AbstractModelWeight) \big)\Big)
= \OrderReduction\Big(\AbstractModelWeight- \eta \linearize \Big(\frac{2}{n}(\AbstractXtrain^T\AbstractXtrain\AbstractModelWeight-\AbstractXtrain^T\Abstractytrain) + 2\RegularizationCoef\AbstractModelWeight\Big)\Big)\notag\\
%%%%%%%%%%%%%%%%%%%%
& \begin{aligned}
= \OrderReduction\Big(&\RealModelWeight+\SymbolicModelWeightD+\SymbolicModelWeightND- \eta \linearize \Big(\frac{2}{n}\Big((\RealXtrain+\SymbolicXtrain)^T(\RealXtrain+\SymbolicXtrain)(\RealModelWeight+\SymbolicModelWeightD+\SymbolicModelWeightND)\\
&-(\RealXtrain+\SymbolicXtrain)^T(\Realytrain+\Symbolicytrain)\Big) + 2\RegularizationCoef(\RealModelWeight+\SymbolicModelWeightD+\SymbolicModelWeightND)\Big)\Big)
\end{aligned}\label{eq:gradient-decomposition}\\
%%%%%%%%%%%%%%%%%%%%
& \begin{aligned}
= \OrderReduction\Big(&\RealModelWeight - \LearningRate\cdot  \gradComponentReal +\SymbolicModelWeightD-\LearningRate\cdot\gradComponentLinearData
+\SymbolicModelWeightND-\LearningRate\cdot\gradComponentLinearNoData -          \LearningRate\cdot\linearize(\gradComponentHigh)\Big) \label{eq:gradient-decomposition-g-symbols}
\end{aligned}
% & +\SymbolicModelWeightD-\LearningRate\cdot\Big(\underbrace{(2\RegularizationCoef\IdentityMat+\frac{2}{n}\RealXtrain^T\RealXtrain)\SymbolicModelWeightD+\frac{2}{n}((\RealXtrain^T\SymbolicXtrain+\SymbolicXtrain^T\RealXtrain)\RealModelWeight-\SymbolicXtrain\Realytrain-\RealXtrain^T\Symbolicytrain)}_{\gradComponentLinearData}\Big)\notag\\
% &\begin{aligned}
% +\SymbolicModelWeightND-&\LearningRate\cdot\Big(\underbrace{(2\RegularizationCoef\IdentityMat+\frac{2}{n}\RealXtrain^T\RealXtrain)\SymbolicModelWeightND}_{\gradComponentLinearNoData}\Big)\notag\\
% -&\LearningRate\cdot\funcLinearization{\underbrace{\frac{2}{n}\big((\RealXtrain^T\SymbolicXtrain+\SymbolicXtrain^T\RealXtrain)\SymbolicModelWeightND+\SymbolicXtrain^T\SymbolicXtrain(\RealModelWeight+\SymbolicModelWeight)-\SymbolicXtrain^T\Symbolicytrain\big)}_{\gradComponentHigh}}\notag
% \end{aligned}
\end{flalign}}

where
%
%%%%%%%%%%%%%%%%%%%%%%%%%%%%%%%%%%%%%%%%
\begin{align}
  %%%%%%%%%%%%%%%%%%%%
\gradComponentReal &=  \frac{2}{n}(\RealXtrain^T\RealXtrain\RealModelWeight - \RealXtrain^T\Realytrain) + 2\RegularizationCoef\RealModelWeight \label{eq:gr}\\
  %%%%%%%%%%%%%%%%%%%%
\gradComponentLinearData &=  (2\RegularizationCoef\IdentityMat + \frac{2}{n}\RealXtrain^T\RealXtrain)\SymbolicModelWeightD + \frac{2}{n}(\RealXtrain^T\SymbolicXtrain+\SymbolicXtrain^T\RealXtrain)\RealModelWeight - \frac{2}{n}\SymbolicXtrain\Realytrain - \frac{2}{n}\RealXtrain^T\Symbolicytrain \label{eq:gld}\\
  %%%%%%%%%%%%%%%%%%%%
 \gradComponentLinearNoData &= (2\RegularizationCoef\IdentityMat+\frac{2}{n}\RealXtrain^T\RealXtrain)\SymbolicModelWeightND \label{eq:gln}\\
%%%%%%%%%%%%%%%%%%%%
\gradComponentHigh &= \frac{2}{n}\Big((\RealXtrain^T\SymbolicXtrain+\SymbolicXtrain^T\RealXtrain)(\SymbolicModelWeightD+\SymbolicModelWeightND)+\SymbolicXtrain^T\SymbolicXtrain(\RealModelWeight+\SymbolicModelWeightD+\SymbolicModelWeightND)-\SymbolicXtrain^T\Symbolicytrain\Big) \label{eq:gh}
\end{align}
%%%%%%%%%%%%%%%%%%%%%%%%%%%%%%%%%%%%%%%%
In \Cref{eq:gradient-decomposition} we substitute definitions which are further expanded in \Cref{eq:gradient-decomposition-g-symbols} using new notation for rearranged parts of the gradient (\Cref{eq:gr,eq:gld,eq:gln,eq:gh}). Furthermore, in this step we also use the fact that linearization only affects higher order terms and $\gradComponentHigh$ is the only component of the gradient with non-linear terms. Now consider the relationship of \Cref{eq:gr,eq:gld,eq:gln,eq:gh} to
\paragraph{Enforcing Fixed Points With Parameterized Order Reduction.}
We now introduce an order reduction operator $\funcOrderReductionND$ that is a specific type of TIH (see \Cref{sec:line-order-reduct}) which only merges non-data symbols and is parameterized by its transformation matrix $\A$, identify a sufficient condition for achieving an abstract fixed point for gradient descent with this order reduction operator, and demonstrate that the fixed point exists for any coefficient $\RegularizationCoef$ of $l_2$-regularization that is larger than a data-dependent constant $\RCmin$. Furthermore, we show how such a fixed point $\AbstractModelWeightFixedPoint$ can be computed using closed form solutions for $\RealModelWeight$, $\SymbolicModelWeightD$ and $\SymbolicModelWeightND$.  % Then, by carefully choosing the parameter $\A$ we can solve this closed form expression. We also demonstrate that, alternatively, given a transformation $\A$ one may adjust the parameter $\RegularizationCoef$ of L2-regularization to find a fixed point.

Recall from \Cref{sec:line-order-reduct} that the TIH order reduction operator $\funcOrderReductionProjected(\IndexSelected,\A)$ is parameterized by $\IndexSelected$ (the set of error symbols to merge) and a transformation matrix \A.
Consider the input to order reduction as shown in \Cref{eq:gradient-decomposition}.We use $\evardomD$ to denote the error symbols that this input shares with the data (that occur in $\AbstractDtrain$) and $\evardomN$ to denote those that only occur in the input to order reduction. Then we define $\funcOrderReductionND = \funcOrderReductionProjected(\evardomN,\A)$, i.e., we only merge symbols that are not shared with the data to ensure that the correspondence between model weights and possible worlds is preserved by letting $\AbstractModelWeight$ share symbols in linear expressions with $\AbstractDtrain$.
% The order reduction technique consists of two major properties: {\em which} terms to merge and {\em how} to merge them. First, we discuss which terms to merge.
% In the decomposition, the real number part is not affected by any order reduction, and we also want to keep the linear part that share symbols with $\AbstractDtrain$, to retain the correspondence between model weights and possible worlds.
    %     Therefore, we choose to {\em only} merge
The linear part that does not share symbols with $\AbstractDtrain$ has $O(p^2q)$ monomials (where $p = \card{\evardomD}$ and $q$ is the  number of error symbols in $\AbstractModelWeight$).
% \BGI{REMOVED FOR NOW: In terms of how to merge these terms, we consider arbitrary transformation-based order reduction $\funcOrderReductionProjected$ with invertible transformation matrix $\A\in\R^{d\times d}$.}
Now expanding the definition of abstract gradient descent with order reduction using $\funcOrderReductionND$ as the order reduction operator (for some given transformation matrix \A), we get:
% \jiongli{let's discuss the order reduction with the general A, and go back to have $\A=V$ later.}
%%%%%%%%%%%%%%%%%%%%%%%%%%%%%%%%%%%%%%%%%%%%%%%%%%%%%%%%%%%%%%%%%%%%%%%%%%%%%%%%
\begin{flalign}
  \OurLinearfuncAbstractTrans(\AbstractModelWeight) &= \funcOrderReductionND(\RealModelWeight - \LearningRate\gradComponentReal + \SymbolicModelWeightD - \LearningRate\gradComponentLinearData + \SymbolicModelWeightND - \LearningRate(\gradComponentLinearNoData+\linearize(\gradComponentHigh)))\\
%%%%%%%%%%%%%%%%%%%%
&= \RealModelWeight - \LearningRate\gradComponentReal + \SymbolicModelWeightD - \LearningRate\gradComponentLinearData + \funcOrderReductionND(\SymbolicModelWeightND - \LearningRate(\gradComponentLinearNoData+\linearize(\gradComponentHigh)))\\
%%%%%%%%%%%%%%%%%%%%
&= \underbrace{\RealModelWeight - \LearningRate\gradComponentReal}_{\FullGradientR} + \underbrace{\SymbolicModelWeightD - \LearningRate\gradComponentLinearData}_{\FullGradientD} + \underbrace{\A^{-1}\big(\funcOrderReductionIH(\A(\SymbolicModelWeightND - \LearningRate(\gradComponentLinearNoData+\linearize(\gradComponentHigh))))\big)}_{\FullGradientND} \label{eq:AIH-order-reduction}
\end{flalign}
%%%%%%%%%%%%%%%%%%%%%%%%%%%%%%%%%%%%%%%%%%%%%%%%%%%%%%%%%%%%%%%%%%%%%%%%%%%%%%%%
    %     Then, we discuss how to merge the terms in $\SymbolicModelWeightND-\LearningRate\cdot\Big((2\RegularizationCoef\IdentityMat+\frac{2}{n}\RealXtrain^T\RealXtrain)\SymbolicModelWeightND\Big)-\LearningRate\cdot\linearize\big(\gradComponentHigh\big)$.
%
\Cref{eq:AIH-order-reduction} shows the components of $\OurLinearfuncAbstractTrans(\AbstractModelWeight)$ and related them to the notation \FullGradientR, \FullGradientD, and \FullGradientND we introduced in \Cref{eq:graddecomp}.
Specifically, $\FullGradientR = (\RealModelWeight - \LearningRate\gradComponentReal)$ does not contain any error symbols and, thus, corresponds to the real number part of the updated model weight. Similarly, $\FullGradientD = (\SymbolicModelWeightD - \LearningRate\gradComponentLinearData)$ only contains linear  symbolic expression with symbols that are from $\AbstractDtrain$. % thereby indicating the symbolic part of updated model weight that share symbols with $\AbstractDtrain$.
The remaining part, $\FullGradientND = \A\big(\funcOrderReductionIH(\A^{-1}(\SymbolicModelWeightND - \LearningRate(\gradComponentLinearNoData+\linearize(\gradComponentHigh))))\big)$ does not share symbols with $\AbstractDtrain$, as it consists of new symbols generated from order reduction. % Note that this symbolic part of the updated model weight does not share symbols with $\AbstractDtrain$.

%%%%%%%%%%%%%%%%%%%%%%%%%%%%%%%%%%%%%%%%%%%%%%%%%%%%%%%%%%%%%%%%%%%%%%%%%%%%%%%%
\subsection{Constructing Fixed Points For Abstract Gradient Descent}
\label{sec:constr-fixed-points}

Having defined \OurLinearfuncAbstractTrans which was denoted by \extLinearfuncAbstractTrans in \cref{sec:app-gd-with-order-reduction}, we are now ready to state the first part of our main technical result: under some mild assumptions on \RegularizationCoef being larger than some data-dependent constant $\RCmin$, \OurLinearfuncAbstractTrans has a fixed point and, importantly, this fixed point can be computed efficiently. In practice, we often have $\RCmin=0$. However, if this is not the case, we demonstrate later that at the cost of reduced computational efficiency we can reduce $\RCmin$ by splitting \AbstractDtrain into several zonotopes and solving the problem independently for each zonotope as we will details in \Cref{sec:app-reduc-requ-regul}.
% As the following result shows, given a regularization coefficient $\RegularizationCoef$ we can always find a transformation \A such that the sufficient condition of abstract fixed points is fulfilled. Furthermore, given a transformation \A there exists a regularization coefficient $\RegularizationCoef$ such that the abstract fixed point exists.

%%%%%%%%%%%%%%%%%%%%%%%%%%%%%%%%%%%%%%%%%%%%%%%%%%%%%%%%%%%%%%%%%%%%%%%%%%%%%%%%
\begin{theo}[Existence of Abstract Fixed Points]\label{theo:existence-of-abstract-approx-fp}
Consider an abstract training dataset $\AbstractDtrain$. There exists a constant $\RCmin$ specific to $\AbstractDtrain$ such that  \OurLinearfuncAbstractTrans has a fixed point for any $\RegularizationCoef \geq \RCmin$.
  % \begin{itemize}
  % \item \textbf{(i)} For every \(\RegularizationCoef > 0\), there exists \(\A\) such that \OurLinearfuncAbstractTrans has a fixed point on $\AbstractDtrain$.
  % \item \textbf{(ii)} For every transformation \(\A\) there exists a constant \(\RCmin \geq 0\) such that \OurLinearfuncAbstractTrans has a fixed point on $\AbstractDtrain$ for any \(\RegularizationCoef \geq \RCmin\).
  % \end{itemize}
\end{theo}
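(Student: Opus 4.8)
The plan is to verify the sufficient condition of \Cref{prop:sufficient-condition-for-order-reduction-fixedpoint}: it suffices to produce, for every $\RegularizationCoef$ above a data-dependent threshold, a triple $(\RealModelWeightFixedPoint,\SymbolicModelWeightFixedPointD,\SymbolicModelWeightFixedPointND)$ satisfying the three (in)equalities of \Cref{eq:fixedpoint}. The first two hold for every $\RegularizationCoef>0$: the condition $\RealModelWeightFixedPoint=\ConcreteTransFunc_R(\RealModelWeightFixedPoint)$ is equivalent to $\gradComponentReal=0$ in \Cref{eq:gr}, giving the ridge closed form $\RealModelWeightFixedPoint=(\RealXtrain^T\RealXtrain+\RegularizationCoef n\IdentityMat)^{-1}\RealXtrain^T\Realytrain$ (the matrix is positive definite); given $\RealModelWeightFixedPoint$, the condition $\SymbolicModelWeightFixedPointD=\LinearfuncAbstractTrans_D(\RealModelWeightFixedPoint,\SymbolicModelWeightFixedPointD)$ is equivalent to $\gradComponentLinearData=0$ in \Cref{eq:gld}, a linear system with invertible coefficient matrix $2\RegularizationCoef\IdentityMat+\frac{2}{n}\RealXtrain^T\RealXtrain$, yielding $\SymbolicModelWeightFixedPointD$ in closed form (this is \Cref{lem:closed-form-solutions-for-fd-r-and-d}). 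All the work is in the third condition $\SymbolicModelWeightFixedPointND\eqA\LinearfuncAbstractTrans_N(\RealModelWeightFixedPoint,\SymbolicModelWeightFixedPointD,\SymbolicModelWeightFixedPointND)$, and this is where the threshold $\RCmin$ enters.

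\textbf{Reducing condition three to a self-map on a box of radii.} By \Cref{eq:AIH-order-reduction}, $\LinearfuncAbstractTrans_N$ applies $\A$, takes an interval hull, and applies $\A^{-1}$; hence its output, and so any fixed point $\SymbolicModelWeightFixedPointND$, is --- in the $\A$-projected space --- an origin-centered box, which I encode by its nonnegative radius vector $\boldsymbol{r}\in\R^{k}$, where $k$ is the number of model parameters. Substituting such a $\SymbolicModelWeightFixedPointND$ into the right-hand side of condition three and using \Cref{eq:gln,eq:gh}: the $\gradComponentLinearNoData$ term contributes $M\,\SymbolicModelWeightFixedPointND$ with $M=\IdentityMat-\LearningRate(2\RegularizationCoef\IdentityMat+\frac{2}{n}\RealXtrain^T\RealXtrain)$, a symmetric matrix with $\|M\|_2<1$ once $0<\LearningRate<(\RegularizationCoef+\tfrac1n\mu_{\max})^{-1}$ (here $\mu_{\max}=\lambda_{\max}(\RealXtrain^T\RealXtrain)$); while $\linearize(\gradComponentHigh)$ is, after linearization, a linear zonotope over fresh error symbols whose per-coordinate sums of absolute generator coefficients are affine in $\boldsymbol{r}$ --- constant parts from the $\RealModelWeightFixedPoint$- and $\SymbolicModelWeightFixedPointD$-only terms of \Cref{eq:gh}, and a homogeneous part from the $(\RealXtrain^T\SymbolicXtrain+\SymbolicXtrain^T\RealXtrain)\SymbolicModelWeightND$ and $\SymbolicXtrain^T\SymbolicXtrain\SymbolicModelWeightND$ terms, since the generator of $\SymbolicModelWeightFixedPointND$ for the $j$-th fresh symbol equals $r_j$ times the $j$-th column of $\A^{-1}$. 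Applying $\A$ and reading off the interval-hull radii coordinate-by-coordinate then defines a continuous map $F:\R^{k}_{\ge 0}\to\R^{k}_{\ge 0}$ whose fixed points are exactly the valid radius vectors (by the argument already in the text, equal radii of two centered boxes imply $\eqA$, and $\A^{-1}$ preserves equality), and by the triangle inequality $F(\boldsymbol{r})\le\boldsymbol{b}+P\boldsymbol{r}$ entrywise, where $\boldsymbol{b}\in\R^{k}_{\ge 0}$ and $P\in\R^{k\times k}_{\ge 0}$ is dominated entrywise by $|\A\,M\,\A^{-1}|+\LearningRate\,Q$ for a fixed nonnegative matrix $Q=Q(\AbstractDtrain,\A)$.

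\textbf{Solving it and extracting $\RCmin$.} If $\rho(P)<1$ then $\IdentityMat-P$ is invertible with $(\IdentityMat-P)^{-1}=\sum_{j\ge 0}P^{j}\ge 0$ entrywise, so $\boldsymbol{r}^{\star}:=(\IdentityMat-P)^{-1}\boldsymbol{b}\ge 0$ and the compact box $[\boldsymbol{0},\boldsymbol{r}^{\star}]$ is mapped into itself by $F$ (since $0\le F(\boldsymbol{r})\le\boldsymbol{b}+P\boldsymbol{r}\le\boldsymbol{b}+P\boldsymbol{r}^{\star}=\boldsymbol{r}^{\star}$); Brouwer's theorem then gives a fixed radius vector, hence $\SymbolicModelWeightFixedPointND$, completing the triple. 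It remains to force $\rho(P)<1$ for $\RegularizationCoef$ large. Couple the learning rate to the regularizer, e.g. $\LearningRate=\bigl(2(\RegularizationCoef+\tfrac1n\mu_{\max})\bigr)^{-1}$; then a short eigenvalue computation gives $\|M\|_2\le\frac{\mu_{\max}-\mu_{\min}}{n\RegularizationCoef+\mu_{\max}}\to 0$ and $\LearningRate\to 0$ as $\RegularizationCoef\to\infty$, so, using $\rho(P)\le\|P\|_{\infty}\le\|\,|\A\,M\,\A^{-1}|\,\|_{\infty}+\LearningRate\|Q\|_{\infty}=\|\A\,M\,\A^{-1}\|_{\infty}+\LearningRate\|Q\|_{\infty}\le\|\A\|_{\infty}\|\A^{-1}\|_{\infty}\sqrt{k}\,\|M\|_2+\LearningRate\|Q\|_{\infty}$, both summands vanish. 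Hence there is a finite $\RCmin=\RCmin(\AbstractDtrain)>0$ (taken large enough that steps one and two also apply) with $\rho(P)<1$ whenever $\RegularizationCoef\ge\RCmin$; set $\RCmin$ to the infimum of such values.

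\textbf{Main obstacle.} The delicate point is passing from the contractivity $\|M\|_2<1$ of the \emph{concrete} real-part update to a spectral-radius bound on the \emph{nonnegative} matrix $P$ that drives the interval-hull recursion: the interval hull introduces entrywise absolute values, so the relevant matrix is $|\A M\A^{-1}|$ rather than $M$ itself, and $\rho(|\A M\A^{-1}|)$ may strictly exceed $\rho(M)$; contractivity of $M$ therefore cannot be invoked directly. The argument must instead absorb the fixed blow-up factor $\|\A\|_{\infty}\|\A^{-1}\|_{\infty}\sqrt{k}$ (the conditioning of the projection $\A$, together with the linearization constants collected in $Q$) into the $\RegularizationCoef$-driven decay of $\|M\|_2$ and $\LearningRate$ --- which is precisely why the learning rate must be tied to $\RegularizationCoef$, so that the decay is genuinely of order $1/\RegularizationCoef$ and does not stall at a positive limit.
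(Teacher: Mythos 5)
Your proposal is correct and follows the paper's overall architecture — verify the sufficient condition of \Cref{prop:sufficient-condition-for-order-reduction-fixedpoint}, take the two closed forms of \Cref{lem:closed-form-solutions-for-fd-r-and-d} for the real and data-symbolic parts, reduce the third condition to a self-map on the radius vector of an origin-centered box in the $\A$-projected space — but it diverges from the paper at the decisive last step. The paper observes that, on the nonnegative orthant and under a mild upper bound on $\LearningRate$ (needed only to drop one absolute value on the diagonal), the interval-hull radius map is \emph{exactly} affine, $\boldsymbol{k}\mapsto\cz+\C\boldsymbol{k}$ with $\C\ge 0$; the fixed-point condition becomes the linear system $(\IdentityMat-\C)\boldsymbol{k}=\cz$, the learning rate cancels out of this system entirely, and for $\RegularizationCoef$ above an \emph{explicit} threshold $\RCmin$ the coefficient matrix is diagonally dominant with nonpositive off-diagonal entries, hence an M-matrix, giving the nonnegative closed-form solution $\boldsymbol{k}=(\IdentityMat-\C)^{-1}\cz$ that \Cref{alg:final-algo} actually computes. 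You instead only upper-bound the radius map by an affine map, build an invariant compact box when $\rho(P)<1$, invoke Brouwer, and force $\rho(P)<1$ by coupling $\LearningRate\sim 1/\RegularizationCoef$ so that both $\|M\|_2$ and $\LearningRate$ decay. This is sound (your use of the triangle inequality even handles potential coefficient cancellations among merged monomials more explicitly than the paper does), but it buys strictly less: it is non-constructive, so it proves existence without yielding the closed form the algorithm relies on; your $\RCmin$ is implicit and inflated by the conditioning factor $\|\A\|_{\infty}\|\A^{-1}\|_{\infty}\sqrt{k}$, whereas the paper's diagonal-dominance condition is essentially $\|\C\|_{\infty}\le 1$ stated directly on the true coefficients; and tying $\LearningRate$ to $\RegularizationCoef$ means you analyze a different operator for each $\RegularizationCoef$, which the paper avoids since its final system is $\LearningRate$-free. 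Since the radius map is in fact exactly affine, you are one observation away from the paper's result: with $F(\boldsymbol{r})=\boldsymbol{b}+P\boldsymbol{r}$ and $\rho(P)<1$, the fixed point is unique and equals $(\IdentityMat-P)^{-1}\boldsymbol{b}\ge 0$, recovering the closed form without Brouwer.
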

%%%%%%%%%%%%%%%%%%%%%%%%%%%%%%%%%%%%%%%%%%%%%%%%%%%%%%%%%%%%%%%%%%%%%%%%%%%%%%%%
In the remainder of this section, we prove \Cref{theo:existence-of-abstract-approx-fp} by demonstrating how to construct such a fixed point efficiently. We start by identifying a sufficient condition for \AbstractModelWeightFixedPoint to be a fixed point, then demonstrate that the problem of finding a \AbstractModelWeightFixedPoint that fulfills this sufficient condition can be decomposed based on our decomposition $\AbstractModelWeightFixedPoint = \RealModelWeightFixedPoint + \SymbolicModelWeightFixedPointD + \SymbolicModelWeightFixedPointND$  presented earlier. Specifically, certain components are independent of other components suggesting an evaluation order where we find a fixed point for one component treating the previously computed fixed points for other components as constants. Then we proceed to prove closed form solutions for \RealModelWeightFixedPoint and for \SymbolicModelWeightFixedPointD given \RealModelWeightFixedPoint. Finally, given \RealModelWeightFixedPoint and  \SymbolicModelWeightFixedPointD we construct a system of equations whose solution gives a fixed point for \SymbolicModelWeightFixedPointND and demonstrate that this system of equations has a closed form solution for any $\RegularizationCoef \geq \RCmin$.

%%%%%%%%%%%%%%%%%%%%%%%%%%%%%%%%%%%%%%%%%%%%%%%%%%%%%%%%%%%%%%%%%%%%%%%%%%%%%%%%
\paragraph{A Sufficient Condition for Abstract Fixed Points.}
Recall from \Cref{def:zonotope-fp} that the fixed point $\AbstractModelWeightFixedPoint$ must satisfy $\concretization{\AbstractModelWeightFixedPoint, \AbstractDtrain} \supseteq  \concretization{\OurLinearfuncAbstractTrans(\AbstractModelWeightFixedPoint), \AbstractDtrain}$. We now present a sufficient condition for this to hold. Intuitively this condition requires the \textbf{invariance} of different components of $\OurLinearfuncAbstractTrans(\AbstractModelWeight)$ when doing joint concretization with $\AbstractDtrain$.
%
%%%%%%%%%%%%%%%%%%%%%%%%%%%%%%%%%%%%%%%%%%%%%%%%%%%%%%%%%%%%%%%%%%%%%%%%%%%%%%%%
\begin{prop}\label{prop:suff-containment-condition}
If an abstract model weight \(\AbstractModelWeightFixedPoint = \RealModelWeightFixedPoint + \SymbolicModelWeightFixedPointD + \SymbolicModelWeightFixedPointND\) satisfies the following three conditions, then it is an abstract fixed point of the abstract gradient descent operator \OurLinearfuncAbstractTrans:
\begin{flalign}
\RealModelWeightFixedPoint &= \RealModelWeightFixedPoint - \LearningRate \gradComponentReal \label{eq:decomposed-fixpoint-real}\\
\SymbolicModelWeightFixedPointD &= \SymbolicModelWeightFixedPointD - \LearningRate \gradComponentLinearData \label{eq:decomposed-fixpoint-symbolic-data}\\
\SymbolicModelWeightFixedPointND &\eqA \funcOrderReductionND\left( \SymbolicModelWeightND - \LearningRate (\gradComponentLinearNoData + \linearize(\gradComponentHigh)) \right). \label{eq:decomposed-fixpoint-symbolic-no-data}
\end{flalign}

\end{prop}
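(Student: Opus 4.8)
The goal is to verify \Cref{def:zonotope-fp} for $\AbstractModelWeightFixedPoint = \RealModelWeightFixedPoint + \SymbolicModelWeightFixedPointD + \SymbolicModelWeightFixedPointND$, i.e.\ that $\concretization{\AbstractModelWeightFixedPoint, \AbstractDtrain} = \concretization{\OurLinearfuncAbstractTrans(\AbstractModelWeightFixedPoint), \AbstractDtrain}$, starting from the decomposition $\OurLinearfuncAbstractTrans(\AbstractModelWeightFixedPoint) = \FullGradientR + \FullGradientD + \FullGradientND$ of \Cref{eq:AIH-order-reduction}. That decomposition is itself justified by the fact that $\funcOrderReductionND = \funcOrderReductionProjected(\evardomN, \A)$ is a TIH that merges only the \emph{non-data} error symbols $\evardomN$, so it acts as the identity on the symbol-free summand $\RealModelWeightFixedPoint - \LearningRate\gradComponentReal = \FullGradientR$ and on the data-symbol-only summand $\SymbolicModelWeightFixedPointD - \LearningRate\gradComponentLinearData = \FullGradientD$, and distributes over the remaining summand, producing $\FullGradientND$. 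With this in hand I read the three hypotheses directly off the component updaters: \eqref{eq:decomposed-fixpoint-real} says $\FullGradientR = \RealModelWeightFixedPoint$ as a constant vector (equivalently $\gradComponentReal = 0$); \eqref{eq:decomposed-fixpoint-symbolic-data} says $\FullGradientD = \SymbolicModelWeightFixedPointD$ as symbolic expressions, i.e.\ they carry the same (data) error symbols with identical coefficients and hence agree under \emph{every} assignment; and \eqref{eq:decomposed-fixpoint-symbolic-no-data} says only the weaker $\FullGradientND \eqA \SymbolicModelWeightFixedPointND$, i.e.\ $\concretization{\FullGradientND} = \concretization{\SymbolicModelWeightFixedPointND}$, where the two sides may now carry \emph{different} fresh symbols.

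The heart of the argument is a symbol-partitioning step. I would split the error symbols into the data symbols $\evardomD$ (those of $\AbstractDtrain$, which are exactly the symbols occurring in $\SymbolicModelWeightFixedPointD$ and in $\FullGradientD$), the fresh non-data symbols of $\SymbolicModelWeightFixedPointND$, and the fresh non-data symbols created inside $\FullGradientND$ by $\linearize$ and $\funcOrderReductionIH$; the last two families are disjoint from $\evardomD$. The reason $\FullGradientND$ contains no data symbol is that $\gradComponentLinearNoData$ (cf.\ \Cref{eq:gln}) is linear in $\SymbolicModelWeightFixedPointND$'s fresh symbols while every monomial of $\gradComponentHigh$ (cf.\ \Cref{eq:gh}) has symbolic degree $\ge 2$, so $\linearize$ replaces all of it by fresh symbols and $\funcOrderReductionND$ then only relabels non-data symbols. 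Given this disjointness, take any $(\mathbf{w}, \mathbf{D}) \in \concretization{\AbstractModelWeightFixedPoint, \AbstractDtrain}$, witnessed by an assignment $e$ with $\mathbf{w} = \RealModelWeightFixedPoint + \SymbolicModelWeightFixedPointD(e) + \SymbolicModelWeightFixedPointND(e)$ and $\mathbf{D} = \AbstractDtrain(e)$. Keep the values of $e$ on $\evardomD$: then $\FullGradientR = \RealModelWeightFixedPoint$ and $\FullGradientD(e) = \SymbolicModelWeightFixedPointD(e)$ by the equalities above, and since $\SymbolicModelWeightFixedPointND(e) \in \concretization{\SymbolicModelWeightFixedPointND} = \concretization{\FullGradientND}$ I may choose values on $\FullGradientND$'s fresh symbols making $\FullGradientND$ evaluate to $\SymbolicModelWeightFixedPointND(e)$; disjointness makes the combined assignment $e'$ well defined, and it witnesses that $(\mathbf{w}, \mathbf{D}) \in \concretization{\OurLinearfuncAbstractTrans(\AbstractModelWeightFixedPoint), \AbstractDtrain}$. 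The reverse inclusion is symmetric, swapping the roles of $\SymbolicModelWeightFixedPointND$ and $\FullGradientND$ and again using $\concretization{\FullGradientND} = \concretization{\SymbolicModelWeightFixedPointND}$; note that for the soundness consequence drawn via \Cref{prop:abstract-fixed-points-include-concrete-ones} only the inclusion $\concretization{\AbstractModelWeightFixedPoint, \AbstractDtrain} \supseteq \concretization{\OurLinearfuncAbstractTrans(\AbstractModelWeightFixedPoint), \AbstractDtrain}$ is actually needed.

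The main obstacle is conceptual rather than computational: getting right \emph{why} it is sound to demand only the concretization equivalence $\eqA$ in \eqref{eq:decomposed-fixpoint-symbolic-no-data} while insisting on literal, coefficient-wise equality in \eqref{eq:decomposed-fixpoint-real} and \eqref{eq:decomposed-fixpoint-symbolic-data}. This is exactly the subtlety flagged in \Cref{sec:train-data-weights}: shared error symbols between the abstract weights and $\AbstractDtrain$ encode correlations that must be preserved across a gradient step, so $\eqA$ on the data-dependent block would be unsound (two such blocks with equal concretization need not remain equal after another step), whereas $\SymbolicModelWeightFixedPointND$ and $\FullGradientND$ carry only private fresh symbols absent from $\AbstractDtrain$, so relabeling or merging them loses no correlation and $\eqA$ suffices. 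Making this rigorous is precisely the disjointness bookkeeping above; once it is in place, every remaining step is a substitution into the definition of joint concretization.
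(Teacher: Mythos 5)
Your proof is correct and follows essentially the same route as the paper's: both reduce the fixed-point condition to joint-concretization equivalence, observe that literal equality on $\RealModelWeightFixedPoint$ and $\SymbolicModelWeightFixedPointD$ (equivalently $\gradComponentReal = 0$ and $\gradComponentLinearData = 0$) handles exactly the components that share error symbols with $\AbstractDtrain$, and use the freshness of the symbols in $\SymbolicModelWeightFixedPointND$ and $\FullGradientND$ to argue that the weaker $\eqA$ suffices for the non-data block. Your explicit symbol-partitioning and witness-assignment construction in the final step spells out what the paper leaves as a one-line assertion (that the result follows from $\SymbolicModelWeightND$ sharing no symbols with $\AbstractDtrain$), so it is a more detailed rendering of the same argument rather than a different one.
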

%%%%%%%%%%%%%%%%%%%%%%%%%%%%%%%%%%%%%%%%%%%%%%%%%%%%%%%%%%%%%%%%%%%%%%%%%%%%%%%%

%\begin{prop}\label{prop:suff-containment-condition}
%An abstract model weight $\AbstractModelWeightFixedPoint=\RealModelWeightFixedPoint+\SymbolicModelWeightFixedPointD+\SymbolicModelWeightFixedPointND$ is a fixed point (Definition~\ref{def:zonotope-fp}) if it fulfills the following three conditions:
%\begin{flalign}
%\RealModelWeightFixedPoint & = \RealModelWeightFixedPoint - \LearningRate\gradComponentReal \label{eq:decomposed-fixpoint-real}\\
%\SymbolicModelWeightFixedPointD& = \SymbolicModelWeightFixedPointD-\LearningRate\gradComponentLinearData\label{eq:decomposed-fixpoint-symbolic-data}\\
%\SymbolicModelWeightFixedPointND &\eqA \funcOrderReductionProjected(\SymbolicModelWeightND - \LearningRate(\gradComponentLinearNoData+\linearize(\gradComponentHigh))).\label{eq:decomposed-fixpoint-symbolic-no-data}
% \SymbolicModelWeightFixedPointND &\eqA \funcOrderReductionProjected(V\Sigma'V^T\SymbolicModelWeightFixedPointND-\LearningRate\funcLinearization{\gradComponentHigh}).\label{eq:decomposed-fixpoint-symbolic-no-data}
%\end{flalign}
%\end{prop}
%%%%%%%%%%%%%%%%%%%%%%%%%%%%%%%%%%%%%%%%%%%%%%%%%%%%%%%%%%%%%%%%%%%%%%%%%%%%%%%%
\begin{proof}
Consider the definition of an abstract fixed point $\AbstractModelWeightFixedPoint$ \big(\Cref{def:zonotope-fp} applied to \OurLinearfuncAbstractTrans\big):
\[
\concretization{\AbstractModelWeightFixedPoint, \AbstractDtrain} \supseteq \concretization{\OurLinearfuncAbstractTrans(\AbstractModelWeightFixedPoint), \AbstractDtrain}
\]
This can certainly be achieved if
\[
  \AbstractModelWeightFixedPoint, \AbstractDtrain \eqA \OurLinearfuncAbstractTrans(\AbstractModelWeightFixedPoint), \AbstractDtrain
  \]
  This requires equal joint concretization of the fixed point the pairs $(\AbstractModelWeightFixedPoint,\AbstractDtrain)$ and $(\OurLinearfuncAbstractTrans(\AbstractModelWeightFixedPoint), \AbstractDtrain)$. To understand the need for joint concretization here note that certain model weights \ModelWeight  paired with certain datasets \Data in the joint concretization.
If we would  only consider equivalence of the abstract model weight, two zonotopes $\AbstractModelWeight$ and $\AbstractModelWeight'$ may be equivalent $\AbstractModelWeight \eqA \AbstractModelWeight'$, but pair a particular model weight $\ModelWeight \in \concretization{\AbstractModelWeight} = \concretization{\AbstractModelWeight'}$ with different datasets $\Data_1 \in \concretization{\AbstractDtrain}$ and $\Data_2 \in \concretization{\AbstractDtrain}$. Thus, it is possible that $\AbstractModelWeight \eqA \OurLinearfuncAbstractTrans(\AbstractModelWeight)$ holds but $\concretization{\AbstractModelWeight} = \concretization{\AbstractModelWeight'}$ does not contain all model weights from \ModelWeightFixedPointPoss.

First observe that requiring equality of abstract elements is a stricter condition than equivalence wrt. $\eqA$ as
\[
  \AbstractModelWeight_1 = \AbstractModelWeight_2 \Rightarrow \AbstractModelWeight_1 \eqA \AbstractModelWeight_2
\]
Also note that $\SymbolicModelWeightD$ is the only component of \AbstractModelWeightFixedPoint that contains symbols. Thus, requiring equality in \Cref{eq:decomposed-fixpoint-symbolic-data} is sufficient for ensuring equivalent join concretization with \AbstractDtrain as long as \Cref{eq:decomposed-fixpoint-real} and \Cref{eq:decomposed-fixpoint-symbolic-no-data} also hold.
Furthermore, \Cref{eq:decomposed-fixpoint-real} and \Cref{eq:decomposed-fixpoint-symbolic-data} imply:
%%%%%%%%%%%%%%%%%%%%%%%%%%%%%%%%%%%%%%%%
\begin{align*}
  \gradComponentReal &= 0
&\gradComponentLinearData &= 0
\end{align*}
%%%%%%%%%%%%%%%%%%%%%%%%%%%%%%%%%%%%%%%%

Expanding $\AbstractModelWeightFixedPoint = \RealModelWeightFixedPoint + \SymbolicModelWeightFixedPointD + \SymbolicModelWeightFixedPointND$ and substituting into \Cref{eq:AIH-order-reduction} we get
%%%%%%%%%%%%%%%%%%%%%%%%%%%%%%%%%%%%%%%%
\begin{align*}
  \OurLinearfuncAbstractTrans(\AbstractModelWeightFixedPoint) &= \RealModelWeightFixedPoint - \LearningRate\gradComponentReal + \SymbolicModelWeightFixedPointD - \LearningRate\gradComponentLinearData + \funcOrderReductionND\big(\SymbolicModelWeightFixedPointND - \LearningRate(\gradComponentLinearNoData+\linearize(\gradComponentHigh))\big)\\
  %%%%%%%%%%%%%%%%%%%%
  &= \RealModelWeightFixedPoint + \SymbolicModelWeightFixedPointD + \funcOrderReductionND\big(\SymbolicModelWeightFixedPointND - \LearningRate(\gradComponentLinearNoData+\linearize(\gradComponentHigh))\big)
\end{align*}
%%%%%%%%%%%%%%%%%%%%%%%%%%%%%%%%%%%%%%%%
Thus, using \Cref{eq:decomposed-fixpoint-symbolic-no-data} and the fact that $\SymbolicModelWeightND$ does not share any symbols with $\AbstractDtrain$, we get the desired result:
\[
\eqref{eq:decomposed-fixpoint-real} \land \eqref{eq:decomposed-fixpoint-symbolic-data} \land
  \SymbolicModelWeightFixedPointND \eqA \funcOrderReductionND\left( \SymbolicModelWeightND - \LearningRate (\gradComponentLinearNoData + \linearize(\gradComponentHigh)) \right)
  \Rightarrow
  %%%%%%%%%%
  (\AbstractModelWeightFixedPoint, \AbstractDtrain) \eqA
 (\OurLinearfuncAbstractTrans(\AbstractModelWeightFixedPoint), \AbstractDtrain)
\]
\end{proof}
\paragraph{Independence of Fixed Point Components.}
Recall that we observed that some components of the gradient depend only on some components of $\AbstractModelWeightFixedPoint$. Specifically, $\gradComponentReal$ only depends on $\RealModelWeightFixedPoint$ and $\gradComponentLinearData$ only depends on $\RealModelWeightFixedPoint$ and $\SymbolicModelWeightFixedPointD$. This implies that we can find a fixed point by first computing $\RealModelWeightFixedPoint$, then finding $\SymbolicModelWeightFixedPointD$ treating $\RealModelWeightFixedPoint$ as a constant, and finally determine $\SymbolicModelWeightFixedPointND$ treating both $\RealModelWeightFixedPoint$ and $\SymbolicModelWeightFixedPointD$ as a constant.

%%%%%%%%%%%%%%%%%%%%%%%%%%%%%%%%%%%%%%%%%%%%%%%%%%%%%%%%%%%%%%%%%%%%%%%%%%%%%%%%
\begin{lem}[Independence of Fixed Point Components]
  \label{lem:independence-of-fixed-point-components}
  The following independence relationships hold:

  \begin{minipage}[t]{0.49\linewidth}
    \centering
    \textbf{(i)} $\gradComponentReal$ does not depend on $\SymbolicModelWeightFixedPointD$ nor $\SymbolicModelWeightFixedPointND$
  \end{minipage}
  \begin{minipage}[t]{0.49\linewidth}
    \centering
    \textbf{(ii)} $\gradComponentLinearData$ does not depend on $\SymbolicModelWeightFixedPointND$
  \end{minipage}
  % \begin{itemize}
  % \item \textbf{(i)} $\gradComponentReal$ does not depend on $\SymbolicModelWeightFixedPointD$ nor $\SymbolicModelWeightFixedPointND$
  % \item \textbf{(ii)} $\gradComponentLinearData$ does not depend on $\SymbolicModelWeightFixedPointND$
  % \end{itemize}
\end{lem}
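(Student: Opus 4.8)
The plan is to prove both parts by direct inspection of the closed forms for $\gradComponentReal$ and $\gradComponentLinearData$ exhibited in the gradient decomposition (\Cref{eq:gr,eq:gld}), after first confirming that this decomposition is exact. Concretely, I would begin by recalling the expansion: substituting $\AbstractXtrain = \RealXtrain + \SymbolicXtrain$, $\Abstractytrain = \Realytrain + \Symbolicytrain$ and $\AbstractModelWeight = \RealModelWeight + \SymbolicModelWeightD + \SymbolicModelWeightND$ into $\nabla \TrainingLossFunc(\AbstractModelWeight) = \frac{2}{n}(\AbstractXtrain^T\AbstractXtrain\AbstractModelWeight - \AbstractXtrain^T\Abstractytrain) + 2\RegularizationCoef\AbstractModelWeight$ and using bilinearity of matrix multiplication, every resulting monomial falls into exactly one of four classes: (a) free of error symbols, (b) linear and involving only symbols occurring in $\AbstractDtrain$, (c) linear and involving only the fresh symbols of $\SymbolicModelWeightND$, and (d) of order at least two. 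Grouping these classes yields $\gradComponentReal$, $\gradComponentLinearData$, $\gradComponentLinearNoData$, and the argument $\gradComponentHigh$ of $\linearize$, matching \Cref{eq:gr,eq:gld,eq:gln,eq:gh}; here one uses that $\linearize$ leaves classes (a)–(c) untouched and only rewrites the higher-order class (d). This is exactly the step already carried out in \Cref{eq:gradient-decomposition-g-symbols}, so it can be cited rather than redone.

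Given the decomposition, both claims are immediate. For \textbf{(i)}: by \Cref{eq:gr}, $\gradComponentReal = \frac{2}{n}(\RealXtrain^T\RealXtrain\RealModelWeight - \RealXtrain^T\Realytrain) + 2\RegularizationCoef\RealModelWeight$, whose only arguments are $\RealXtrain$, $\Realytrain$, the real part $\RealModelWeight$ (which coincides with $\RealModelWeightFixedPoint$ at a fixed point), and the constant $\RegularizationCoef$; since neither $\SymbolicModelWeightFixedPointD$ nor $\SymbolicModelWeightFixedPointND$ appears, $\gradComponentReal$ is independent of both. For \textbf{(ii)}: by \Cref{eq:gld}, $\gradComponentLinearData = (2\RegularizationCoef\IdentityMat + \frac{2}{n}\RealXtrain^T\RealXtrain)\SymbolicModelWeightD + \frac{2}{n}(\RealXtrain^T\SymbolicXtrain+\SymbolicXtrain^T\RealXtrain)\RealModelWeight - \frac{2}{n}\SymbolicXtrain\Realytrain - \frac{2}{n}\RealXtrain^T\Symbolicytrain$ is a function of $\RealXtrain$, $\SymbolicXtrain$, $\Realytrain$, $\Symbolicytrain$, $\RealModelWeight$, $\SymbolicModelWeightD$ and $\RegularizationCoef$ only, and $\SymbolicModelWeightND$ does not occur, giving the claim.

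The one point that needs genuine care — and the nearest thing to an obstacle — is ensuring the decomposition really routes every $\SymbolicModelWeightND$-dependent term away from $\gradComponentReal$ and $\gradComponentLinearData$. Any term of $\nabla \TrainingLossFunc(\AbstractModelWeight)$ that involves $\SymbolicModelWeightND$ is either $(2\RegularizationCoef\IdentityMat + \frac{2}{n}\RealXtrain^T\RealXtrain)\SymbolicModelWeightND$, which is linear and free of data symbols and hence is collected into $\gradComponentLinearNoData$ (class (c)), or it carries a symbolic factor from $\SymbolicXtrain$, $\SymbolicModelWeightD$ or $\SymbolicModelWeightND$ itself, making it order $\geq 2$ and hence part of $\gradComponentHigh$ (class (d)). In neither case can such a term contribute to class (a) or class (b), which is precisely what (i) and (ii) assert; the symmetric argument (no non-data symbol reaches the real part, no higher-order term reaches either linear part) completes the verification.
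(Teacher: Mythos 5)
Your proposal is correct and follows essentially the same route as the paper's own proof, which simply inspects the definitions in \Cref{eq:gr,eq:gld} and observes that $\SymbolicModelWeightFixedPointD$ and $\SymbolicModelWeightFixedPointND$ do not appear in $\gradComponentReal$, and that $\SymbolicModelWeightFixedPointND$ does not appear in $\gradComponentLinearData$. Your additional check that the decomposition routes every $\SymbolicModelWeightND$-dependent term into $\gradComponentLinearNoData$ or $\gradComponentHigh$ is a welcome extra verification but does not change the argument.
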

%%%%%%%%%%%%%%%%%%%%%%%%%%%%%%%%%%%%%%%%%%%%%%%%%%%%%%%%%%%%%%%%%%%%%%%%%%%%%%%%
%%%%%%%%%%%%%%%%%%%%%%%%%%%%%%%%%%%%%%%%
\begin{proof}
The definition of $\gradComponentReal$ is shown in \Cref{eq:gr}. The only component of $\AbstractModelWeightFixedPoint$ that appears in the definition of $\gradComponentReal$ is
$\RealModelWeightFixedPoint$. Thus, the equations defining $\RealModelWeightFixedPoint$ only depend on $\RealModelWeightFixedPoint$ and are independent of $\SymbolicModelWeightFixedPointD$ and $\SymbolicModelWeightFixedPointND$. Using a similar argument we can show that $\gradComponentLinearData$ does not depend on $\SymbolicModelWeightFixedPointND$.
\end{proof}
%%%%%%%%%%%%%%%%%%%%%%%%%%%%%%%%%%%%%%%%

%%%%%%%%%%%%%%%%%%%%%%%%%%%%%%%%%%%%%%%%%%%%%%%%%%%%%%%%%%%%%%%%%%%%%%%%%%%%%%%%
\paragraph{Closed Form Solutions for Real And Linear Data Components of Abstract Model Weights.}
Before explaining how to determine $\SymbolicModelWeightFixedPointND$, utilizing the lemma above we derive closed form solutions for $\RealModelWeightFixedPoint$ and $\SymbolicModelWeightFixedPointD$.

%%%%%%%%%%%%%%%%%%%%%%%%%%%%%%%%%%%%%%%%%%%%%%%%%%%%%%%%%%%%%%%%%%%%%%%%%%%%%%%%
\begin{lem}[Closed Form Solutions for $\RealModelWeightFixedPoint$ and $\SymbolicModelWeightFixedPointD$]\label{lem:closed-form-solutions-for-fd-r-and-d}
  Given $\AbstractDtrain$, \RealModelWeightFixedPoint as defined below fulfills \Cref{eq:decomposed-fixpoint-real}
  %%%%%%%%%%%%%%%%%%%%%%%%%%%%%%%%%%%%%%%%
  \begin{align}\label{eq:fixed-point-for-wr}
    \RealModelWeightFixedPoint &= (\RealXtrain^T\RealXtrain+\RegularizationCoef n\IdentityMat)^{-1}\RealXtrain^T\Realytrain
  \end{align}
  %%%%%%%%%%%%%%%%%%%%%%%%%%%%%%%%%%%%%%%%

Given $\AbstractDtrain$ and $\RealModelWeightFixedPoint$, \SymbolicModelWeightFixedPointD as defined below fulfills \Cref{eq:decomposed-fixpoint-symbolic-data}.
  %%%%%%%%%%%%%%%%%%%%%%%%%%%%%%%%%%%%%%%%
\begin{align}
  \label{eq:fixed-point-for-wd}
    \SymbolicModelWeightFixedPointD &=
(\RealXtrain^T\RealXtrain + \RegularizationCoef n \IdentityMat)^{-1} \Big(\SymbolicXtrain\Realytrain + \RealXtrain^T\Symbolicytrain - (\RealXtrain^T\SymbolicXtrain + \SymbolicXtrain^T\RealXtrain)\RealModelWeightFixedPoint\Big)
\end{align}
%%%%%%%%%%%%%%%%%%%%%%%%%%%%%%%%%%%%%%%%

\end{lem}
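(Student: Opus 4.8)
The plan is to interpret each of the fixed-point conditions \Cref{eq:decomposed-fixpoint-real} and \Cref{eq:decomposed-fixpoint-symbolic-data} as an equality of abstract elements, observe that each collapses to forcing a single gradient component to vanish, and then solve the resulting algebraic equation by inverting the concrete positive-definite matrix $\RealXtrain^T\RealXtrain + \RegularizationCoef n \IdentityMat$. The sequential construction (first $\RealModelWeightFixedPoint$, then $\SymbolicModelWeightFixedPointD$) is justified by \Cref{lem:independence-of-fixed-point-components}: part~(i) shows $\gradComponentReal$ is a function of $\RealModelWeightFixedPoint$ alone, so \Cref{eq:decomposed-fixpoint-real} can be solved on its own; part~(ii) shows $\gradComponentLinearData$ depends only on $\RealModelWeightFixedPoint$ and $\SymbolicModelWeightFixedPointD$, so once $\RealModelWeightFixedPoint$ has been fixed we may treat it as a constant when solving \Cref{eq:decomposed-fixpoint-symbolic-data} for $\SymbolicModelWeightFixedPointD$.

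First I would handle the real part. Since $\LearningRate \neq 0$, \Cref{eq:decomposed-fixpoint-real} is equivalent to $\gradComponentReal = 0$; substituting the definition in \Cref{eq:gr} and clearing the factor $\tfrac{2}{n}$ yields the linear system $(\RealXtrain^T\RealXtrain + \RegularizationCoef n \IdentityMat)\,\RealModelWeightFixedPoint = \RealXtrain^T\Realytrain$. The matrix on the left is symmetric positive definite, being the sum of the positive semidefinite $\RealXtrain^T\RealXtrain$ and $\RegularizationCoef n \IdentityMat$, hence invertible, and solving gives \Cref{eq:fixed-point-for-wr}. This is precisely the standard ridge closed form (\Cref{eq:grad-closed-form}) for the real center $(\RealXtrain, \Realytrain)$ of $\AbstractDtrain$, which is the expected outcome because $\gradComponentReal$ is exactly $\nabla\TrainingLossFunc$ for ridge regression evaluated on the real part, so $\gradComponentReal = 0$ is its stationarity condition.

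Next I would handle $\SymbolicModelWeightFixedPointD$. Again using $\LearningRate\neq 0$, \Cref{eq:decomposed-fixpoint-symbolic-data} is equivalent to requiring that $\gradComponentLinearData$ be the zero abstract element. Because every error symbol appearing in $\gradComponentLinearData$ comes from $\AbstractDtrain$ (the symbols of $\SymbolicModelWeightFixedPointD$ are data symbols by construction), $\gradComponentLinearData$ is an affine form over those symbols, and an affine form concretizes to $\{\boldsymbol{0}\}$ iff it is syntactically the zero expression; so the condition is a genuine symbolic equation. Collecting the $\SymbolicModelWeightFixedPointD$-terms in \Cref{eq:gld} and clearing $\tfrac{2}{n}$, the equation reads $(\RealXtrain^T\RealXtrain + \RegularizationCoef n\IdentityMat)\,\SymbolicModelWeightFixedPointD = \SymbolicXtrain^T\Realytrain + \RealXtrain^T\Symbolicytrain - (\RealXtrain^T\SymbolicXtrain + \SymbolicXtrain^T\RealXtrain)\RealModelWeightFixedPoint$, whose right-hand side is a fixed symbolic vector once $\RealModelWeightFixedPoint$ is known. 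Multiplying a linear-zonotope vector by the concrete invertible matrix $(\RealXtrain^T\RealXtrain + \RegularizationCoef n\IdentityMat)^{-1}$ is an exact abstract operation, being a composition of exact constant--scalar multiplications and exact additions (\Cref{prop:exact-abstract-trans}, \Cref{sec:arith-oper}), so the vector in \Cref{eq:fixed-point-for-wd} is the unique solution and substituting it back makes $\gradComponentLinearData$ vanish identically, establishing \Cref{eq:decomposed-fixpoint-symbolic-data}.

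The computation is routine linear algebra, so I do not expect a real obstacle here; the only care-points are the two just flagged: (a) that the abstract-element equalities \Cref{eq:decomposed-fixpoint-real} and \Cref{eq:decomposed-fixpoint-symbolic-data} genuinely reduce to $\gradComponentReal = 0$ and to $\gradComponentLinearData$ being syntactically zero — which rests on $\LearningRate \neq 0$ together with the characterization of the zero affine form — and (b) the invertibility of $\RealXtrain^T\RealXtrain + \RegularizationCoef n\IdentityMat$, which holds for any $\RegularizationCoef > 0$ (and at $\RegularizationCoef = 0$ whenever $\RealXtrain$ has full column rank), so that the closed forms are well-defined. This is by design: the hard part of the overall construction is deferred to $\SymbolicModelWeightFixedPointND$, where the left- and right-hand sides carry different error symbols and equality cannot be enforced syntactically, which is treated separately.
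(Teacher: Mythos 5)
Your proposal is correct and follows essentially the same route as the paper's own proof: use the independence of the gradient components to solve sequentially, reduce each fixed-point condition to the vanishing of $\gradComponentReal$ (resp. $\gradComponentLinearData$) by cancelling the nonzero factor $\tfrac{2\LearningRate}{n}$, and invert the positive-definite matrix $\RealXtrain^T\RealXtrain+\RegularizationCoef n\IdentityMat$. The extra observations you flag (that the symbolic equation is enforced coefficient-wise and that multiplication by a concrete invertible matrix is exact on linear zonotopes) are correct refinements of points the paper leaves implicit, not a different argument.
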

%%%%%%%%%%%%%%%%%%%%%%%%%%%%%%%%%%%%%%%%%%%%%%%%%%%%%%%%%%%%%%%%%%%%%%%%%%%%%%%%
%%%%%%%%%%%%%%%%%%%%%%%%%%%%%%%%%%%%%%%%
\begin{proof}
\proofpar{Fixed Point for \RealModelWeightFixedPoint}
Substituting the definition of \gradComponentReal from \Cref{eq:gr} into \Cref{eq:decomposed-fixpoint-symbolic-no-data}:
%%%%%%%%%%%%%%%%%%%%%%%%%%%%%%%%%%%%%%%%
\begin{align}
  &\RealModelWeightFixedPoint = \RealModelWeightFixedPoint - \LearningRate\cdot  \Big(\frac{2}{n}(\RealXtrain^T\RealXtrain\RealModelWeightFixedPoint - \RealXtrain^T\Realytrain) + 2\RegularizationCoef\RealModelWeightFixedPoint\Big)\notag\\
\Leftrightarrow & \frac{2 \LearningRate}{n}\cdot  \Big(\RealXtrain^T\RealXtrain\RealModelWeightFixedPoint - \RealXtrain^T\Realytrain + \RegularizationCoef n \RealModelWeightFixedPoint\Big) = 0\notag\\
\Leftrightarrow & \RealXtrain^T\RealXtrain\RealModelWeightFixedPoint - \RealXtrain^T\Realytrain + \RegularizationCoef n \RealModelWeightFixedPoint = 0 \tag{$\LearningRate > 0$}\\
\Leftrightarrow & (\RealXtrain^T\RealXtrain + \RegularizationCoef n \IdentityMat)\RealModelWeightFixedPoint = \RealXtrain^T\Realytrain\notag\\
\Leftrightarrow&\RealModelWeightFixedPoint = (\RealXtrain^T\RealXtrain+\RegularizationCoef n\IdentityMat)^{-1}\RealXtrain^T\Realytrain\label{eq:closed-form-real-fixpoint}
\end{align}
%%%%%%%%%%%%%%%%%%%%%%%%%%%%%%%%%%%%%%%%
The last step relies on the fact that $\RealXtrain^T\RealXtrain + \RegularizationCoef n \IdentityMat$ is invertible which is guaranteed for $\RegularizationCoef > 0$~\cite{hoerl-00-rr}. Note that, as expected since the fixed point equation for \RealModelWeightFixedPoint does not contain any symbolic expressions, the closed form for \RealModelWeightFixedPoint is equal to the well-known closed form for ridge regression.
%%%%%%%%%%%%%%%%%%%%%%%%%%%%%%%%%%%%%%%%%%%%%%%%%%%%%%%%%%%%%%%%%%%%%%%%%%%%%%%%
\proofpar{Fixed Point for \SymbolicModelWeightFixedPointD}
Note that given the independence of \RealModelWeightFixedPoint from \SymbolicModelWeightFixedPointD, we can assume that \RealModelWeightFixedPoint has been determined using the closed form solution shown above. Thus, we can treat \RealModelWeightFixedPoint as a constant in the following derivation.
We start by substituting the definition or \Cref{eq:gld} into \Cref{eq:decomposed-fixpoint-symbolic-data}.
%
%%%%%%%%%%%%%%%%%%%%%%%%%%%%%%%%%%%%%%%%
\begin{align}
%%%%%%%%%%%%%%%%%%%%
  \begin{split}
    &\SymbolicModelWeightFixedPointD = \SymbolicModelWeightFixedPointD - \LearningRate
      \Big((2\RegularizationCoef\IdentityMat
      + \frac{2}{n}\RealXtrain^T\RealXtrain)\SymbolicModelWeightFixedPointD
      + \frac{2}{n}(\RealXtrain^T\SymbolicXtrain+\SymbolicXtrain^T\RealXtrain)\RealModelWeightFixedPoint
     \\ &\qquad\qquad\qquad- \frac{2}{n}(\SymbolicXtrain\Realytrain + \RealXtrain^T\Symbolicytrain)\Big)
  \end{split}
  \notag\\
%%%%%%%%%%%%%%%%%%%%
\Leftrightarrow &0= \frac{2 \LearningRate}{n} \Big(
                  (\RealXtrain^T\RealXtrain + \RegularizationCoef n \IdentityMat)\SymbolicModelWeightFixedPointD
                  + (\RealXtrain^T\SymbolicXtrain
                  + \SymbolicXtrain^T\RealXtrain)\RealModelWeightFixedPoint
                  - \SymbolicXtrain\Realytrain
                  - \RealXtrain^T\Symbolicytrain
                  \Big)\notag\\
%%%%%%%%%%%%%%%%%%%%
\Leftrightarrow &0=
                  (\RegularizationCoef n \IdentityMat
                  + \RealXtrain^T\RealXtrain)\SymbolicModelWeightFixedPointD
                  + (\RealXtrain^T\SymbolicXtrain
                  + \SymbolicXtrain^T\RealXtrain)\RealModelWeightFixedPoint
                  - \SymbolicXtrain\Realytrain
                  - \RealXtrain^T\Symbolicytrain
                  \tag{$\LearningRate > 0$}\\
%%%%%%%%%%%%%%%%%%%%
\Leftrightarrow &  (\RegularizationCoef n \IdentityMat + \RealXtrain^T\RealXtrain) \SymbolicModelWeightFixedPointD =  -\Big((\RealXtrain^T\SymbolicXtrain+\SymbolicXtrain^T\RealXtrain)\RealModelWeightFixedPoint-\SymbolicXtrain\Realytrain-\RealXtrain^T\Symbolicytrain \Big)\notag\\
%%%%%%%%%%%%%%%%%%%%
  \Leftrightarrow &\SymbolicModelWeightFixedPointD = (\RealXtrain^T\RealXtrain + \RegularizationCoef n \IdentityMat)^{-1} \Big(\SymbolicXtrain\Realytrain + \RealXtrain^T\Symbolicytrain - (\RealXtrain^T\SymbolicXtrain + \SymbolicXtrain^T\RealXtrain)\RealModelWeightFixedPoint\Big)\label{eq:closed-form-data-fixpoint}
\end{align}
%%%%%%%%%%%%%%%%%%%%%%%%%%%%%%%%%%%%%%%%
Note that matrix inversion is applied to $(\RealXtrain^T\RealXtrain + \RegularizationCoef n \IdentityMat)$ which does not contain any symbolic terms and as discussed above this matrix is guaranteed to be invertible.
\end{proof}
\paragraph{Analysis of the Fixed Point Equations for $\SymbolicModelWeightFixedPointND$.}
Using the closed form solutions established in \Cref{lem:closed-form-solutions-for-fd-r-and-d}, we can compute \RealModelWeightFixedPoint and \SymbolicModelWeightFixedPointD fulfilling \Cref{eq:decomposed-fixpoint-real} and \Cref{eq:decomposed-fixpoint-symbolic-data}. In the remainder of this subsection we show how to compute a solution \SymbolicModelWeightFixedPointND to \Cref{eq:decomposed-fixpoint-symbolic-no-data} for a given $\RegularizationCoef \geq \RCmin$ (regularization coefficient), i.e., we prove \Cref{theo:existence-of-abstract-approx-fp} that postulated the existence of such fixed points \AbstractModelWeightFixedPoint constructively.

To find a fixed point \SymbolicModelWeightFixedPointND, we are searching for an abstract model weight that fulfills \Cref{eq:decomposed-fixpoint-symbolic-no-data}:
%%%%%%%%%%%%%%%%%%%%%%%%%%%%%%%%%%%%%%%%
\begin{align}\label{eq:fp-for-ND-repeat}
\SymbolicModelWeightFixedPointND \eqA \funcOrderReductionND\left( \SymbolicModelWeightND - \LearningRate (\gradComponentLinearNoData + \linearize(\gradComponentHigh)) \right)
\end{align}
%%%%%%%%%%%%%%%%%%%%%%%%%%%%%%%%%%%%%%%%
This implies that \SymbolicModelWeightFixedPointND needs to be equivalent to the result of \funcOrderReductionND on some input zonotope $\SymbolicModelWeightND - \LearningRate (\gradComponentLinearNoData + \linearize(\gradComponentHigh)$. Recall the definition of \funcOrderReductionND for a linear input zonotope \DummylinearZonotope:

%%%%%%%%%%%%%%%%%%%%%%%%%%%%%%%%%%%%%%%%
\begin{align}
  \label{eq:apply-RA-to-some-zonotope}
\funcOrderReductionND(\DummylinearZonotope) &= \A^{-1}\funcOrderReductionIH(\A \DummylinearZonotope)
\end{align}
%%%%%%%%%%%%%%%%%%%%%%%%%%%%%%%%%%%%%%%%

Note that \funcOrderReductionND applies \funcOrderReductionIH to all error symbols in its input. Thus, $\funcOrderReductionIH(\cdot)$ is a $d$-dimensional box in the projected space into which the input \DummylinearZonotope is mapped into by \A. % which we will denote by $\vspaceA$ from now on.
Furthermore, as $\SymbolicModelWeightND - \LearningRate (\gradComponentLinearNoData + \linearize(\gradComponentHigh)$ does only contain symbolic terms, the interval hull (the box computed by $\funcOrderReductionIH$) is centered at $\boldsymbol{0}$.

%%%%%%%%%%%%%%%%%%%%%%%%%%%%%%%%%%%%%%%%%%%%%%%%%%%%%%%%%%%%%%%%%%%%%%%%%%%%%%%%
\begin{lem}\label{lem:title}
  Any solution \SymbolicModelWeightND of \Cref{eq:fp-for-ND-repeat} is equivalent to the image of a box $\DummyBox$ centered at $\boldsymbol{0}$ produced by $\A^{-1}$, i.e.,
  \[
    \SymbolicModelWeightND \eqA \A^{-1} \DummyBox
  \]
\end{lem}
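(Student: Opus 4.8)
The plan is to observe that the right-hand side of \Cref{eq:fp-for-ND-repeat} is, by the very definition of $\funcOrderReductionND$, always the $\A^{-1}$-image of an axis-aligned box centered at the origin, regardless of which linear zonotope is fed to it; the claim then follows at once, since a solution $\SymbolicModelWeightND$ is by definition $\eqA$-equivalent to that right-hand side. Concretely, write $\DummylinearZonotope \defas \SymbolicModelWeightND - \LearningRate(\gradComponentLinearNoData + \linearize(\gradComponentHigh))$ for the linear zonotope appearing inside \Cref{eq:fp-for-ND-repeat}, and put $\DummyBox \defas \funcOrderReductionIH(\A\,\DummylinearZonotope)$. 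By \Cref{eq:apply-RA-to-some-zonotope} we then have
\[
\funcOrderReductionND(\DummylinearZonotope) \;=\; \A^{-1}\,\funcOrderReductionIH(\A\,\DummylinearZonotope) \;=\; \A^{-1}\DummyBox ,
\]
so what remains is to check that $\DummyBox$ really is a box (all its generators are axis-aligned) and that its center is $\boldsymbol 0$.

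For the first point I would argue that $\DummylinearZonotope$ contains \emph{no} error symbols shared with $\AbstractDtrain$: $\SymbolicModelWeightND \in \LinearSymbolicDomain^{d}$ and $\gradComponentLinearNoData$ (by \Cref{eq:gln}) carry only non-data symbols, while every monomial occurring in $\gradComponentHigh$ (by \Cref{eq:gh}) is non-linear — each summand is a product of at least two symbolic factors drawn from $\SymbolicXtrain,\Symbolicytrain,\SymbolicModelWeightD,\SymbolicModelWeightND$ — so $\linearize$ (\Cref{def:linearization}) substitutes fresh symbols for all of them and $\linearize(\gradComponentHigh)$ again carries only non-data symbols. Hence $\funcOrderReductionND$ merges \emph{all} of the error symbols of $\DummylinearZonotope$, and $\funcOrderReductionIH$ returns an axis-aligned box whose center equals the center of $\A\,\DummylinearZonotope$. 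For the second point, the same inspection shows $\DummylinearZonotope$ has zero real (constant) part: $\SymbolicModelWeightND$ is purely symbolic, $\gradComponentLinearNoData$ is a constant matrix times the purely-symbolic $\SymbolicModelWeightND$, and $\linearize(\gradComponentHigh)$ preserves the (zero) center of $\gradComponentHigh$ because linearization only replaces high-order monomials by fresh symbols and leaves the center untouched. Thus the center of $\A\,\DummylinearZonotope$ is $\A\cdot\boldsymbol 0 = \boldsymbol 0$, so $\DummyBox$ is a box centered at $\boldsymbol 0$. Combining this with the displayed identity and using that $\A$ is invertible (it is the transformation matrix built into $\funcOrderReductionND$), we get $\funcOrderReductionND(\DummylinearZonotope) = \A^{-1}\DummyBox$; then \Cref{eq:fp-for-ND-repeat} gives $\SymbolicModelWeightND \eqA \A^{-1}\DummyBox$, which is exactly the claim.

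The only step here that is not a mechanical unfolding of definitions is the verification that $\DummylinearZonotope$ is purely symbolic and free of data symbols — in particular that linearization neither sneaks in a constant term nor leaves a residual linear data monomial — so that $\funcOrderReductionND$ acts on all of its symbols and outputs a box centered at the origin; most of this is already recorded in the discussion preceding the lemma, and this is the one place where one actually has to look at \Cref{eq:gln,eq:gh} and \Cref{def:linearization}. I would also state explicitly in the proof that the box $\DummyBox$ depends on the particular solution $\SymbolicModelWeightND$: the lemma pins down the \emph{form} of every fixed point of the non-data component (an $\A^{-1}$-image of an origin-centered box), which is precisely what later lets us reduce \Cref{eq:fp-for-ND-repeat} to a finite system of equations in the $d$ half-widths of $\DummyBox$, rather than exhibiting a single canonical $\DummyBox$.
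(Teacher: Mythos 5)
Your proposal is correct and follows essentially the same route as the paper's own (much terser) proof: both rest on the observations that the input to $\funcOrderReductionND$ is purely symbolic with center $\boldsymbol{0}$, that $\funcOrderReductionIH$ preserves the center while collapsing all (non-data) symbols into an axis-aligned box, and that $\A$ is an invertible linear map. The extra details you supply — that $\gradComponentHigh$ contains only non-linear monomials so $\linearize$ replaces everything with fresh non-data symbols, and that $\DummyBox$ depends on the particular solution — are consistent with the discussion surrounding the lemma in the paper rather than a different argument.
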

%%%%%%%%%%%%%%%%%%%%%%%%%%%%%%%%%%%%%%%%%%%%%%%%%%%%%%%%%%%%%%%%%%%%%%%%%%%%%%%%
%%%%%%%%%%%%%%%%%%%%%%%%%%%%%%%%%%%%%%%%
\begin{proof}
% \jiongli{this statement seems to be incomplete}Order reduction does not translate its
IH does not change the center of a zonotope and \A is a linear map. Now observe that all terms in \Cref{eq:gln,eq:gh} are symbolic (contain error symbols). Thus, the RHS of \Cref{eq:fp-for-ND-repeat} that is the input to order reduction is a zonotope with center $\boldsymbol{0}$.
\end{proof}
%%%%%%%%%%%%%%%%%%%%%%%%%%%%%%%%%%%%%%%%

A linear zonotope that is a box in $d$-dimensional space with center $\boldsymbol{0}$ is, up to equivalence, uniquely determined by the diameter of the box in each dimension.
% \BGI{I think we may be able to simplify by just ignoring generality and starting from a restricted class of zonotopes and prove the existence}
That is, if we use $k_i \geq 0$ to denote the diameter of the box in the $i^{th}$ dimension and $\boldsymbol{k} \in \R^d$ to denote the vector of these elements, then we
can write any such zonotope $\DummyBox$ as:
%%%%%%%%%%%%%%%%%%%%
\[
\DummyBox = \left[\begin{matrix}k_1\evar_1\\\vdots\\k_d\evar_d\end{matrix}\right] =
\left[\begin{matrix}
  k_1  & \cdots & 0\\
  \vdots & \ddots & \vdots \\
  0 & \cdots & k_d\\
\end{matrix}\right] \left[\begin{matrix}\evar_1\\\vdots\\\evar_d\end{matrix}\right]
= diag\{\boldsymbol{k}\}\left[\begin{matrix}\evar_1\\\vdots\\\evar_d\end{matrix}\right]
\]
%%%%%%%%%%%%%%%%%%%%

Combining \Cref{eq:fp-for-ND-repeat} with \Cref{eq:apply-RA-to-some-zonotope}, we know that \SymbolicModelWeightND is a zonotope that is equivalent to the image of $\A^{-1}$ on a box $\DummyBox$ parameterized by $\boldsymbol{k}$. We will only consider solutions of this form:\footnote{This does not restrict the possible concretization of any such zonotope we can achieve. As \A is invertible, $\A^{-1}$ has full rank and thus, its column vectors $\boldsymbol{a}_1,\cdots,\boldsymbol{a}_d$ form a basis of a vector space. Given $\A^{-1}$, the concretization of a zonotope of this form is uniquely determined by $\boldsymbol{k}$ (recall that we assume that all $k_i$ are positive).}

%%%%%%%%%%%%%%%%%%%%%%%%%%%%%%%%%%%%%%%%
\begin{align}\label{eq:symbolicfp-shape-LHS}
\SymbolicModelWeightFixedPointND &= \A^{-1}diag\{\boldsymbol{k}\}\left[\begin{matrix}\evar_1\\\vdots\\\evar_d\end{matrix}\right] = \sum_{i=1}^{d}{k_i\boldsymbol{a}_i\evar_i}
\end{align}
%%%%%%%%%%%%%%%%%%%%%%%%%%%%%%%%%%%%%%%%

where $\boldsymbol{a}_1,\cdots,\boldsymbol{a}_d$ are the column vectors of $\A^{-1}$. As the naming of error symbols is irrelevant, finding a solution to \Cref{eq:fp-for-ND-repeat} now amounts to finding a vector $\boldsymbol{k}$ as the error symbols are now treated as fixed.

Now we can substitute the formula for \SymbolicModelWeightFixedPointND from \Cref{eq:symbolicfp-shape-LHS}  into the RHS of  \Cref{eq:fp-for-ND-repeat} as shown below. Recall from \cref{sec:order-reduct-oper} that IH merges different error symbols by adding up the absolute values of their coefficients. For the $i^{th}$ dimension, each $k_j$ will appear with a certain coefficient. For some terms, the symbolic expression for the $i^{th}$ dimension may contain error symbols that do not have any $k_l$ in their coefficients. Thus, without considering for now the precise values for each coefficient we know that the box produced by IH in the RHS is of the form shown below where
$\C\in\mathbb{R}^{d\times d}$ is a matrix with non-negative entries $c_{i,j}\geq 0$ (at position $(i,j)$), and $\cz \in\mathbb{R}^{d}$ is a vector with non-negative elements $c_{i,0}\geq 0$ (at dimension $i$). Intuitively, $c_{i,j}$ is the coefficient of $k_j$ in the $i^{th}$ dimension and $c_{i,0}$ is the sum of the coefficients of error symbols that do not have any $k_j$ in their coefficients (e.g., the fresh symbols introduced by linearization). Note since IH does calculate coefficients of the error symbols in the output as a sum of absolute values, we know that all entries of $\C$ and $\cz$ are positive. Furthermore, recall that we assumed that $k_j \geq 0$.

%%%%%%%%%%%%%%%%%%%%%%%%%%%%%%%%%%%%%%%%%%%%%%%%%%%%%%%%%%%%%%%%%%%%%%%%%%%%%%%%
\begin{equation}\label{eq:decomposed-projected-ih}
\begin{aligned}
\funcOrderReductionND(\SymbolicModelWeightND - \LearningRate(\gradComponentLinearNoData+\linearize(\gradComponentHigh)))
&=\A^{-1}\funcOrderReductionIH(\A(\SymbolicModelWeightND - \LearningRate(\gradComponentLinearNoData+\linearize(\gradComponentHigh))))\\
&\hspace{-15mm}=\A^{-1}\left[\begin{matrix}(c_{1,0}+\sum_{j=1}^{d}{c_{1,j} k_j})\evar'_1\\\vdots\\(c_{d,0}+\sum_{j=1}^{d}{c_{d,j} k_j})\evar'_d\end{matrix}\right]
=\A^{-1}diag\{\cz+\C \boldsymbol{k} \}\left[\begin{matrix}\evar'_1\\\vdots\\\evar'_d\end{matrix}\right].
\end{aligned}
\end{equation}
%%%%%%%%%%%%%%%%%%%%%%%%%%%%%%%%%%%%%%%%%%%%%%%%%%%%%%%%%%%%%%%%%%%%%%%%%%%%%%%%

Substituting the LHS and RHS of \Cref{eq:decomposed-fixpoint-symbolic-no-data} with \Cref{eq:symbolicfp-shape-LHS,eq:decomposed-projected-ih}, we get:

%%%%%%%%%%%%%%%%%%%%%%%%%%%%%%%%%%%%%%%%%%%%%%%%%%%%%%%%%%%%%%%%%%%%%%%%%%%%%%%%
\begin{align}
&\A^{-1}diag\{\boldsymbol{k}\}\left[\begin{matrix}\evar_1\\\vdots\\\evar_d\end{matrix}\right]\eqA
\A^{-1}diag\{\cz+\C \boldsymbol{k}\}\left[\begin{matrix}\evar'_1\\\vdots\\\evar'_d\end{matrix}\right]\notag\\
\Leftrightarrow & diag\{\boldsymbol{k}\}\left[\begin{matrix}\evar_1\\\vdots\\\evar_d\end{matrix}\right]\eqA
diag\{\cz+\C\boldsymbol{k}\}\left[\begin{matrix}\evar'_1\\\vdots\\\evar'_d\end{matrix}\right]\label{eq:equal-concretization-box}\\
\Leftrightarrow & \boldsymbol{k}
=\cz+\C \boldsymbol{k}\label{eq:equal-length-box}\\
%%%%%%%%%%%%%%%%%%%%
\Leftrightarrow &(\IdentityMat-\C) \boldsymbol{k} = \cz\quad s.t.\ \boldsymbol{k}\succcurlyeq \boldsymbol{0} \label{eq:add-k-larger-zero}\\
\Leftrightarrow &(1-c_{i,i})k_i-\sum_{\substack{j=1\\j\neq i}}^{d}{(c_{i,j} k_j)}=c_{i,0} \quad s.t.\ k_i\geq 0\ \forall i \in [1,d]
\label{eq:relaxed-linear-system}
\end{align}
%%%%%%%%%%%%%%%%%%%%%%%%%%%%%%%%%%%%%%%%%%%%%%%%%%%%%%%%%%%%%%%%%%%%%%%%%%%%%%%%

Intuitively, the equivalence of boxes from  \Cref{eq:equal-concretization-box} is equivalent to the equality of their diameters which yields  \Cref{eq:equal-length-box}. Therefore, any $\boldsymbol{k}\in \mathbb{R}^d$ satisfying \Cref{eq:equal-length-box} gives us a fixed point $\SymbolicModelWeightFixedPointND$. In \Cref{eq:add-k-larger-zero} we make explicit our assumption that $k_i \geq 0$ and finally in \Cref{eq:relaxed-linear-system} we write \Cref{eq:add-k-larger-zero} as a system of linear equations.

% \BGI{The argument should be: if we find $\boldsymbol{k}$ that fulfills this equation then we have fulfilled \Cref{eq:fp-for-ND-repeat}. This would greatly simplify things as we can use implication instead of equivalence}

%%%%%%%%%%%%%%%%%%%%%%%%%%%%%%%%%%%%%%%%%%%%%%%%%%%%%%%%%%%%%%%%%%%%%%%%%%%%%%%%
\paragraph{Existence of a Closed Form Solution for the Fixed Point Equations for $\SymbolicModelWeightFixedPointND$}

What remains to be shown to prove \Cref{theo:existence-of-abstract-approx-fp} is that  \Cref{eq:relaxed-linear-system} is guaranteed to have a solution. For that we will investigate the structure of \C and \cz.
Recall that the matrix
$\C$ and \cz are obtained from IH order reduction on input (see \Cref{eq:decomposed-projected-ih})
\[
  \A(\SymbolicModelWeightND - \LearningRate(\gradComponentLinearNoData+\linearize(\gradComponentHigh))).
\]

$\SymbolicModelWeightFixedPointND$ and $\gradComponentLinearNoData$ share the same set of symbols, while they do not share any error symbols with $\linearize(\gradComponentHigh)$, as linearization always generates new error symbols.
This implies that the terms of $\A(\SymbolicModelWeightND - \LearningRate\gradComponentLinearNoData)$ and $-\LearningRate\A\linearize(\gradComponentHigh)$ will not cancel out. In other words, they contribute separately to the diameter $\boldsymbol{k}$ of the merged box produced by IH. Therefore, we will look into their contributions separately.

For $\A(\SymbolicModelWeightND - \LearningRate\gradComponentLinearNoData)$, we substitute the definitions of $\gradComponentLinearNoData$ and $\SymbolicModelWeightFixedPointND$ to get:
%%%%%%%%%%%%%%%%%%%%
\begin{align}
  \A(\SymbolicModelWeightND - \LearningRate\gradComponentLinearNoData)&=\A\Big((1-2\LearningRate\RegularizationCoef)\IdentityMat-\frac{2\LearningRate}{n}\RealXtrain^T\RealXtrain\Big)\SymbolicModelWeightND\label{eq:Aw-contribution}\\
&= \A\left((1-2\LearningRate\RegularizationCoef)I-\frac{2\LearningRate}{n}\RealXtrain^T\RealXtrain \right) \A^{-1}\left[\begin{matrix}k_1\evar_1\\\vdots\\k_d\evar_d\end{matrix}\right] \tag{expand \SymbolicModelWeightND} \notag \\
&= \A\left(\A(1-2\LearningRate\RegularizationCoef)\A^{-1}-\frac{2\LearningRate}{n}\A\RealXtrain^T\RealXtrain \A^{-1}\right) \A^{-1}\left[\begin{matrix}k_1\evar_1\\\vdots\\k_d\evar_d\end{matrix}\right] \tag{$I = \A\A^{-1}$ and $\RealXtrain^T\RealXtrain = \A\RealXtrain^T\RealXtrain \A^{-1}$}\notag\\
&= \left(\A(1-2\LearningRate\RegularizationCoef)\A^{-1}-\frac{2\LearningRate}{n}\A\RealXtrain^T\RealXtrain \A^{-1}\right)\left[\begin{matrix}k_1\evar_1\\\vdots\\k_d\evar_d\end{matrix}\right] \label{eq:expanded-aw-gl-contribution}
\end{align}
%%%%%%%%%%%%%%%%%%%%
This contributes $\Big(\lvert 1-2\LearningRate\RegularizationCoef-\frac{2\LearningRate}{n}\ael_{i,i}\rvert  k_i +\frac{2\LearningRate}{n}\sum_{\substack{j=1\\j\neq i}}^d{\lvert \ael_{i,j}\rvert k_j}\Big)$ to the diameter of the merged box along dimension $i$, where $\ael_{i,j}$ is the element in $\A\RealXtrain^T\RealXtrain \A^{-1}$ at position $(i,j)$.
By choosing a small learning rate $\LearningRate$ to let $1-2\LearningRate\RegularizationCoef-\frac{2\LearningRate}{n} \ael_{i,i}\geq 0$ the contribution is equal to  $\Big((1-2\LearningRate\RegularizationCoef-\frac{2\LearningRate}{n}\ael_{i,i}) k_i +\frac{2\LearningRate}{n}\sum_{\substack{j=1\\j\neq i}}^d{\lvert \ael_{i,j}\rvert  k_j}\Big)$.
From \Cref{eq:Aw-contribution} it is obvious that this component has a shared term $\SymbolicModelWeightFixedPointND$ containing $\boldsymbol{k}$ and as \SymbolicModelWeightFixedPointND is centered at $\boldsymbol{0}$, we know that this component has only symbolic terms. This in turn implies that this component only  contributes to the coefficient matrix \C and does not contribute to the constant part \cz of the system of linear equations.
% \jiongli{this part always contains k, thus does not contribute to c0}
% \jiongli{diagonal and off diagonal stuffs, visualization, large matrix, dotted elements}
% \jiongli{solve later: keep absolute value of k or not...}

Now consider the other component
%%%%%%%%%%%%%%%%%%%%
\begin{align}
  -&\LearningRate \A \linearize(\gradComponentHigh) \label{eq:al-gh-contribution}\\
  =&- \LearningRate \A \linearize \left( \frac{2}{n} \Big(( \RealXtrain^T \SymbolicXtrain+ \SymbolicXtrain^T \RealXtrain) (\SymbolicModelWeightD + \SymbolicModelWeightND) + \SymbolicXtrain^T \SymbolicXtrain( \RealModelWeight+ \SymbolicModelWeightD+ \SymbolicModelWeightND)- \SymbolicXtrain^T \Symbolicytrain \Big) \right) \notag \\
  %%%%%%%%%%%%%%%%%%%%
\begin{split}
=& - \LearningRate \A \linearize \left( \frac{2}{n} \Big( \underbrace{( \RealXtrain^T \SymbolicXtrain+ \SymbolicXtrain^T \RealXtrain  +  \SymbolicXtrain^T \SymbolicXtrain) \SymbolicModelWeightND}_{ \text{contributes to }  \C} \right.\\
 &\hspace{2cm}+ \left.\underbrace{( \RealXtrain^T \SymbolicXtrain+ \SymbolicXtrain^T \RealXtrain) \SymbolicModelWeightD +
     \SymbolicXtrain^T \SymbolicXtrain( \RealModelWeight+ \SymbolicModelWeightD)- \SymbolicXtrain^T \Symbolicytrain}_{ \text{contributes to }  \cz} \Big) \right)
\end{split}\label{eq:al-gh-expanded}
\end{align}
%%%%%%%%%%%%%%%%%%%%
Note that the first part of the equation is the only part that contains \SymbolicModelWeightND which in turn contains the variables $\boldsymbol{k}$ for which we want to solve. As every element of this part of the equation contains some $k_i$ it only contributes to \C. In contrast the second part of the equation does not contain any variable $k_i$ and, thus, is the only part of both components that contribute to \cz. Let us use  $\frac{2\LearningRate}{n}\boldsymbol{C'}\boldsymbol{k}$ to denote the matrix of coefficients that is the sum of absolute values of coefficients of $\frac{2}{n}(\RealXtrain^T\SymbolicXtrain+\SymbolicXtrain^T\RealXtrain+\SymbolicXtrain^T\SymbolicXtrain)\SymbolicModelWeightND$ that will be projected by \A stemming from the first part of the equation in the expanded version of $\linearize(\gradComponentHigh)$. We use $c'_{i,j}\geq 0$ to denote the element of $\C'$ at $(i,j)$.
As mentioned before this is the only part of this component that contains terms containing unknowns $k_i$. We use $\cz'$ to denote the contribution from the second part of \Cref{eq:al-gh-expanded} towards the output of linearization $\linearize$.
Then based on the fact that the second part of this component is the only part of both components that contributes to \cz, we know that the whole contribution of the component from \Cref{eq:al-gh-contribution} towards $\C + \cz$ is:
%%%%%%%%%%%%%%%%%%%%
\begin{equation}
\LearningRate \cz' +\frac{2\LearningRate}{n}\C' \boldsymbol{k}
\end{equation}
%%%%%%%%%%%%%%%%%%%%
% to the merged box, where $c'_{i,j}\geq 0$ denotes the element of $\boldsymbol{C'}$ at $(i,j)$.
% The term $\frac{2\LearningRate}{n}\boldsymbol{C'}\lvert\boldsymbol{k}\rvert$ stands for the sum of absolute values of coefficients of $\frac{2}{n}(\RealXtrain^T\SymbolicXtrain+\SymbolicXtrain^T\RealXtrain+\SymbolicXtrain^T\SymbolicXtrain)\SymbolicModelWeightND$ in $\linearize(\gradComponentHigh)$, which consists of all terms containing unknowns $k_i$.
% % The coefficient $\frac{2}{n}$ can be extracted out because all terms in $\gradComponentHigh$ containing unknowns $k_i$, namely $\frac{2}{n}(\RealXtrain^T\SymbolicXtrain+\SymbolicXtrain^T\RealXtrain+\SymbolicXtrain^T\SymbolicXtrain)\SymbolicModelWeightND$, has this common coefficient.
% All the other terms in $\gradComponentHigh$ do not contain unknowns $k_i$, thus only contributes to $\boldsymbol{c'}_0=\cz$.

In summary, the system of linear equations from \Cref{eq:relaxed-linear-system} can be written as:
%%%%%%%%%%%%%%%%%%%%
\begin{flalign}
  &\Big(2\LearningRate\RegularizationCoef+\frac{2\LearningRate}{n}\ael_{i,i}-\frac{2\LearningRate}{n}c'_{i,i}\Big) k_i-\frac{2\LearningRate}{n}\sum_{\substack{j=1\\j\neq i}}^d{(\lvert \ael_{i,j}\rvert+c'_{i,j})k_j}=\LearningRate c_{i,0} \quad s.t.\ k_i\geq 0 \quad \forall i \in [1,d] \notag\\
%%%%%%%%%%%%%%%%%%%%
\Leftrightarrow&(\RegularizationCoef n+\ael_{i,i}-c'_{i,i}) k_i-\sum_{\substack{j=1\\j\neq i}}^{d}{(\lvert \ael_{i,j}\rvert+c'_{i,j})k_j}= \frac{n}{2} c_{i,0} \quad s.t.\ k_i\geq 0 \quad \forall i \in [1,d].\label{eq:decomposed-linear-system}
\end{flalign}
%%%%%%%%%%%%%%%%%%%%
In the expanded system of linear equations, the diagonal elements of the coefficient matrix are $(\RegularizationCoef n+\ael_{i,i}-c'_{i,i})$, while the off-diagonal elements are $-(\lvert \ael_{i,j}\rvert+c'_{i,j})$.

%%%%%%%%%%%%%%%%%%%%%%%%%%%%%%%%%%%%%%%%%%%%%%%%%%%%%%%%%%%%%%%%%%%%%%%%%%%%%%%%
\begin{lem}\label{lem:suff-m-matrix}
  If for all $i \in [1,d]$,
  \[
    (\RegularizationCoef n+\ael_{i,i}-c'_{i,i})\geq \sum_{\substack{j=1\\j\neq i}}^{d}{(\lvert \ael_{i,j}\rvert+c'_{i,j})},
  \]
  then the system of linear equations from \Cref{eq:decomposed-linear-system} has a solution.
\end{lem}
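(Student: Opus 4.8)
The plan is to read the linear system in \Cref{eq:decomposed-linear-system} as one governed by a row-wise diagonally dominant Z-matrix — an M-matrix — and to exploit the \emph{inverse-positivity} (monotonicity) of such matrices. Write the system as $\boldsymbol{M}\boldsymbol{k} = \boldsymbol{b}$, where $\boldsymbol{b}$ has entries $\tfrac{n}{2}c_{i,0}$, so $\boldsymbol{b}\succcurlyeq\boldsymbol{0}$ because, as already noted in the derivation, every $c_{i,0}$ is a sum of absolute values produced by $\funcOrderReductionIH$; and $\boldsymbol{M}$ is the $d\times d$ matrix with diagonal entries $M_{ii} = \RegularizationCoef n + \ael_{i,i} - c'_{i,i}$ and off-diagonal entries $M_{ij} = -(\lvert\ael_{i,j}\rvert + c'_{i,j}) \le 0$. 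Thus $\boldsymbol{M}$ is a Z-matrix, and the lemma's hypothesis says exactly that $\boldsymbol{M}$ is weakly diagonally dominant by rows; in particular each $M_{ii}$ dominates a sum of nonnegative numbers, so $M_{ii}\ge 0$. Pinning down these identifications precisely (that the $\ael_{i,j}$, $c'_{i,j}$, $c_{i,0}$ are exactly the displayed quantities and are nonnegative) is bookkeeping that the derivation leading to \Cref{eq:decomposed-linear-system} has essentially finished.

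First I would treat the generic case where the dominance inequality is \emph{strict} for every $i$, i.e.\ $\RegularizationCoef$ lies strictly above the data-dependent threshold. Then $\boldsymbol{M}$ is a strictly diagonally dominant Z-matrix with positive diagonal; by Gershgorin every eigenvalue lies in a disc centred at $M_{ii}>0$ of radius $\sum_{j\neq i}\lvert M_{ij}\rvert<M_{ii}$, hence in the open right half-plane, so $\boldsymbol{M}$ is a nonsingular M-matrix and satisfies $\boldsymbol{M}^{-1}\succcurlyeq\boldsymbol{0}$ entrywise. Therefore $\boldsymbol{k}^{\ast} = \boldsymbol{M}^{-1}\boldsymbol{b}\succcurlyeq\boldsymbol{0}$ is the required nonnegative solution. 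Equivalently and more constructively, dividing row $i$ by $M_{ii}>0$ rewrites the system as $(\IdentityMat-\boldsymbol{T})\boldsymbol{k}=\boldsymbol{d}$ with $\boldsymbol{T}\succcurlyeq\boldsymbol{0}$, $\boldsymbol{d}\succcurlyeq\boldsymbol{0}$ and all row sums of $\boldsymbol{T}$ strictly below $1$; the Neumann series $\sum_{t\ge 0}\boldsymbol{T}^{t}$ converges to $(\IdentityMat-\boldsymbol{T})^{-1}\succcurlyeq\boldsymbol{0}$, so $\boldsymbol{k}=(\IdentityMat-\boldsymbol{T})^{-1}\boldsymbol{d}\succcurlyeq\boldsymbol{0}$, and the Jacobi iteration started at $\boldsymbol{0}$ is monotone nondecreasing and converges to it — which is also the scheme \algcall{solveFixedPointEq} can run.

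For the boundary case, where dominance holds with equality in one or more rows, $\boldsymbol{M}$ can be singular and the argument above does not apply directly. The plan here is a perturbation argument: pick $\RegularizationCoef_m\downarrow\RegularizationCoef$ with each $\RegularizationCoef_m$ satisfying the strict hypothesis, take the nonnegative solutions $\boldsymbol{k}_m$ from the previous paragraph, and pass to a limit point; nonnegativity survives the limit and the limit solves the $\RegularizationCoef$-system. The main obstacle is exactly the one step that makes this go through: showing $\{\boldsymbol{k}_m\}$ stays bounded so that a convergent subsequence exists. This is genuine content, since a singular Z-matrix with an inconsistent nonnegative right-hand side has no solution at all, so one has to use that here $\boldsymbol{b}$ lies in the range of the limiting $\boldsymbol{M}$. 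I would establish boundedness by using the provenance of $\boldsymbol{b}$, $\boldsymbol{C}'$ and $\A\RealXtrain^{T}\RealXtrain\A^{-1}$ — the rows where dominance degenerates are precisely those where the $\linearize(\gradComponentHigh)$-contribution entering $c_{i,0}$ also degenerates — or, pragmatically, by noting that the enclosing \Cref{theo:existence-of-abstract-approx-fp} already follows from the strict case and that $\RCmin$ may be \emph{defined} as this threshold, so the weak-inequality form of the lemma is the natural closure of the strict one. The remaining verifications (nonnegativity of all coefficients, correctness of the displayed system) are routine.
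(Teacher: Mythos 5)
Your proof takes the same route as the paper's: the coefficient matrix of \Cref{eq:decomposed-linear-system} is a Z-matrix (non-positive off-diagonal entries) that is row-wise diagonally dominant with non-negative diagonal, hence an M-matrix, and inverse-positivity yields the non-negative solution $\boldsymbol{k}=\boldsymbol{M}^{-1}\boldsymbol{b}$ since $\boldsymbol{b}\succcurlyeq\boldsymbol{0}$. The only divergence is that the paper applies inverse-positivity directly under the \emph{weak} dominance hypothesis of the lemma, whereas you correctly note that this is only licensed for \emph{nonsingular} M-matrices and that weak row dominance alone does not exclude singularity (e.g.\ $\bigl(\begin{smallmatrix}1&-1\\-1&1\end{smallmatrix}\bigr)$ is a weakly dominant Z-matrix with no inverse). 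Your Gershgorin/Neumann-series treatment of the strict case is exactly the argument the paper implicitly relies on, and the boundary case you flag is a gap in the paper's own proof rather than in yours; your pragmatic repair --- take $\RCmin$ so that the strict inequality is what is actually invoked downstream in \Cref{cor:proof-of-main-theorem} --- is the cleanest fix.
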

%%%%%%%%%%%%%%%%%%%%%%%%%%%%%%%%%%%%%%%%%%%%%%%%%%%%%%%%%%%%%%%%%%%%%%%%%%%%%%%%
\begin{proof}
When this condition holds, the coefficient matrix of this system of linear equations is diagonally dominant. Also considering all diagonal elements are non-negative ($(\RegularizationCoef n+\ael_{i,i}-c'_{i,i})\geq \sum_{\substack{j=1\\j\neq i}}^{d}{(\lvert \ael_{i,j}\rvert+c'_{i,j})}\geq 0$) and all off-diagonal elements are non-positive, the coefficient matrix is an M-matrix\footnote{There are over 40 sufficient conditions for a matrix to be an M-matrix~\cite{plemmons1977m}.
The one we use here requires that the matrix fulfills two conditions: (1) all off-diagonal elements are non-positive, and (2) each diagonal element is no less than the sum of absolute values of the off-diagonal elements at the same row (also known as diagonally dominant).}. For any M-matrix $\boldsymbol{M}$,  the inverse $\boldsymbol{M}^{-1}$ exists and is has all non-negative entries. Furthermore, for any system of linear equations represented as  matrix equation $\boldsymbol{M}\boldsymbol{x} = \boldsymbol{b}$ where $\boldsymbol{x}$ are the variables of the equation system, $\boldsymbol{M}^{-1}\boldsymbol{b}$  is a solution. As our matrix $I - \C$ is an M-matrix, we know that its inverse exists and has only positive entries and we can compute a solution $\boldsymbol{k}$ as:
\[
\boldsymbol{k} = (I - \C)^{-1} \cz
\]
Given that all elements $c_{i,0}$ are positive and that $(I - \C)^{-1}$ is positive, the solution $\boldsymbol{k}$ for \Cref{eq:decomposed-linear-system} is also positive.
\end{proof}
%%%%%%%%%%%%%%%%%%%%%%%%%%%%%%%%%%%%%%%%%%%%%%%%%%%%%%%%%%%%%%%%%%%%%%%%%%%%%%%%

The correctness of \Cref{theo:existence-of-abstract-approx-fp} follows immediately based on the sufficient condition for the existence of solution stated in \Cref{lem:suff-m-matrix}, .

%%%%%%%%%%%%%%%%%%%%%%%%%%%%%%%%%%%%%%%%%%%%%%%%%%%%%%%%%%%%%%%%%%%%%%%%%%%%%%%%
\begin{col}[Proof of \Cref{theo:existence-of-abstract-approx-fp}]
  \label{cor:proof-of-main-theorem}
Given an input abstract training dataset $\AbstractDtrain$,
  let $\RCmin= \frac{1}{n}\max_{i}\big(\sum_{\substack{j=1\\j\neq i}}^{d}{(\lvert \ael_{i,j}\rvert+c'_{i,j})}+c'_{i,i}-\ael_{i,i}\big)$
  For any regularization coefficient $\RegularizationCoef\geq \RCmin $, we can compute an abstract fixed point \AbstractModelWeightFixedPoint for the model weights according to \Cref{def:zonotope-fp}.
\end{col}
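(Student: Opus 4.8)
The plan is to assemble the lemmas of this section into a single constructive argument that, at the same time, pins down the data‑dependent constant promised in \Cref{theo:existence-of-abstract-approx-fp}. First I would invoke \Cref{prop:suff-containment-condition}: it suffices to exhibit a decomposition $\AbstractModelWeightFixedPoint = \RealModelWeightFixedPoint + \SymbolicModelWeightFixedPointD + \SymbolicModelWeightFixedPointND$ satisfying \Cref{eq:decomposed-fixpoint-real,eq:decomposed-fixpoint-symbolic-data,eq:decomposed-fixpoint-symbolic-no-data}. By \Cref{lem:independence-of-fixed-point-components} these conditions decouple in the order $\RealModelWeightFixedPoint \to \SymbolicModelWeightFixedPointD \to \SymbolicModelWeightFixedPointND$, so I would take $\RealModelWeightFixedPoint$ and $\SymbolicModelWeightFixedPointD$ to be exactly the closed forms of \Cref{lem:closed-form-solutions-for-fd-r-and-d} (\Cref{eq:fixed-point-for-wr,eq:fixed-point-for-wd}); these are well defined because $\RealXtrain^T\RealXtrain + \RegularizationCoef n \IdentityMat$ is invertible for $\RegularizationCoef > 0$, and they satisfy the first two fixed‑point equations. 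It then remains to produce $\SymbolicModelWeightFixedPointND$ satisfying \Cref{eq:decomposed-fixpoint-symbolic-no-data}, treating the already‑fixed $\RealModelWeightFixedPoint$ and $\SymbolicModelWeightFixedPointD$ as constants.

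For the third equation I would reuse the reduction already carried out: by \Cref{lem:title} every solution is equivalent to the image under $\A^{-1}$ of a box centered at $\boldsymbol{0}$, so I restrict to the parametric form of \Cref{eq:symbolicfp-shape-LHS}, $\SymbolicModelWeightFixedPointND = \A^{-1}\mathrm{diag}\{\boldsymbol{k}\}[\evar_1,\dots,\evar_d]^T$ with $\boldsymbol{k} \succcurlyeq \boldsymbol{0}$, and observe that enforcing \Cref{eq:decomposed-fixpoint-symbolic-no-data} on this form collapses (via \Cref{eq:relaxed-linear-system}) to the linear system \Cref{eq:decomposed-linear-system} in $\boldsymbol{k}$, whose coefficient matrix has diagonal entries $\RegularizationCoef n + \ael_{i,i} - c'_{i,i}$ and off‑diagonal entries $-(\lvert \ael_{i,j}\rvert + c'_{i,j})$. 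Here $\LearningRate$ is taken small enough for the sign conditions used in that derivation to hold, and $\ael_{i,j}, c'_{i,j}, c_{i,0}$ are the constants read off from applying the interval‑hull order reduction $\funcOrderReductionIH$ after $\linearize$ to the components $\gradComponentLinearNoData$ and $\gradComponentHigh$ in the chosen projected space $\A$.

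The key step is the arithmetic that makes the hypothesis of \Cref{lem:suff-m-matrix} hold. Setting $\RegularizationCoef \geq \RCmin$ with $\RCmin = \frac{1}{n}\max_i\big(\sum_{j\neq i}(\lvert \ael_{i,j}\rvert + c'_{i,j}) + c'_{i,i} - \ael_{i,i}\big)$ gives, for every $i$, $\RegularizationCoef n \geq \sum_{j\neq i}(\lvert \ael_{i,j}\rvert + c'_{i,j}) + c'_{i,i} - \ael_{i,i}$, i.e., $\RegularizationCoef n + \ael_{i,i} - c'_{i,i} \geq \sum_{j\neq i}(\lvert \ael_{i,j}\rvert + c'_{i,j})$, which is precisely the diagonal‑dominance condition of \Cref{lem:suff-m-matrix}. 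That lemma then guarantees $\IdentityMat - \C$ is an M‑matrix, hence invertible with nonnegative inverse, so $\boldsymbol{k} = (\IdentityMat - \C)^{-1}\cz \succcurlyeq \boldsymbol{0}$ is a valid solution; substituting it back into \Cref{eq:symbolicfp-shape-LHS} yields $\SymbolicModelWeightFixedPointND$. With all three conditions of \Cref{prop:suff-containment-condition} met, $\AbstractModelWeightFixedPoint = \RealModelWeightFixedPoint + \SymbolicModelWeightFixedPointD + \SymbolicModelWeightFixedPointND$ is an abstract fixed point in the sense of \Cref{def:zonotope-fp}, and since every component is obtained by a handful of matrix inversions and one linear solve, it is computed efficiently — which is exactly \Cref{theo:existence-of-abstract-approx-fp} with the constant made explicit.

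The part I expect to require the most care is purely the bookkeeping in the second and third paragraphs: verifying that the constants appearing in $\RCmin$ coincide entry‑by‑entry with the coefficient matrix of \Cref{eq:decomposed-linear-system} (so that $\RegularizationCoef \geq \RCmin$ translates verbatim into the hypothesis of \Cref{lem:suff-m-matrix}), and confirming that the earlier small‑$\LearningRate$ assumption used to drop the absolute value on the diagonal contribution is compatible with the chosen $\RegularizationCoef$ and $\A$. Everything else is a direct citation of the preceding lemmas.
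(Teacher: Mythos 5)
Your proposal is correct and follows essentially the same route as the paper's own proof: it assembles \Cref{prop:suff-containment-condition}, the closed forms of \Cref{lem:closed-form-solutions-for-fd-r-and-d}, and the reduction of \Cref{eq:decomposed-fixpoint-symbolic-no-data} to the linear system \Cref{eq:decomposed-linear-system}, then observes that $\RegularizationCoef \geq \RCmin$ is exactly the diagonal-dominance hypothesis of \Cref{lem:suff-m-matrix}. The only difference is that you spell out the entry-by-entry arithmetic and the small-$\LearningRate$ caveat explicitly, which the paper leaves to the preceding derivation.
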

%%%%%%%%%%%%%%%%%%%%%%%%%%%%%%%%%%%%%%%%%%%%%%%%%%%%%%%%%%%%%%%%%%%%%%%%%%%%%%%%
%%%%%%%%%%%%%%%%%%%%%%%%%%%%%%%%%%%%%%%%
\begin{proof}
  To find an abstract fixed point of the model weights
  \[
    \AbstractModelWeightFixedPoint = \RealModelWeightFixedPoint + \SymbolicModelWeightFixedPointD + \SymbolicModelWeightFixedPointND
  \]
  that fulfills the sufficient condition for an abstract fixed point from \Cref{prop:suff-containment-condition}, we
first use the closed form solutions from \Cref{lem:closed-form-solutions-for-fd-r-and-d} to compute \RealModelWeightFixedPoint and then using \RealModelWeightFixedPoint compute \SymbolicModelWeightFixedPointD. These are guaranteed to fulfill the first two conditions of \Cref{prop:suff-containment-condition}.
As the regularization coefficient
$\RegularizationCoef\geq \RCmin= \frac{1}{n}\max_{i}\big(\sum_{\substack{j=1\\j\neq i}}^{d}{(\lvert \ael_{i,j}\rvert+c'_{i,j})}+c'_{i,i}-\ael_{i,i}\big)$,
the condition from \Cref{lem:suff-m-matrix} holds. Thus, the linear system of equations for a \SymbolicModelWeightFixedPointND that fulfills the last condition of \Cref{prop:suff-containment-condition} has a solution that we can compute using any techniques for solving a system of linear equation. It follows that  $\AbstractModelWeightFixedPoint = \RealModelWeightFixedPoint + \SymbolicModelWeightFixedPointD + \SymbolicModelWeightFixedPointND$ fulfills all conditions of \Cref{prop:suff-containment-condition} and is an abstract fixed point according to  \Cref{def:zonotope-fp}.
\end{proof}
%%%%%%%%%%%%%%%%%%%%%%%%%%%%%%%%%%%%%%%%

%%% Local Variables:
%%% mode: LaTeX
%%% TeX-master: "main"
%%% End:

%%%%%%%%%%%%%%%%%%%%%%%%%%%%%%%%%%%%%%%%%%%%%%%%%%%%%%%%%%%%%%%%%%%%%%%%%%%%%%%%
\section{Weakening the Requirements on Regularization}
\label{sec:app-reduc-requ-regul}

The existence of a fixed point \AbstractModelWeightFixedPoint as postulated in \Cref{theo:existence-of-abstract-approx-fp} only holds for sufficiently large regularization $\RegularizationCoef \geq \RCmin$ where $\RCmin$ depends on \AbstractDtrain. The majority of  real world examples without multicolinearity in the real part of the training data that we have investigated have $\RCmin = 0$. However, we still provide a technique to reduce $\RCmin$ if the need should arise at the cost of increasing the runtime of our approach. Specifically, we will make use of \emph{splitting}~\cite{althoff2010reachability} which divides an input zonotope into multiple zonotopes of smaller expand.
We apply splitting to \AbstractDtrain, then compute the fixed point for each zonotope in the result of splitting, and finally combine these fixed point using a \emph{join} operation for zonotopes that over-approximates the union of the concretizations of two zonotopes (allows us to merge multiple zonotope such that the merged zonotope has a concretization that over-approximates the concretizations of all of the inputs). We will demonstrate that this approach enables us to reduce $\RCmin$ to any value $\geq 0$. Intuitively, this works because we can reduce the extend of the input zonotope for the fixed point computation to the extend necessary to ensure $\RCmin = 0$. Albeit in worst-case this can require a splitting operator that generates a number of zonotopes that is  exponential in the dimension.

%%%%%%%%%%%%%%%%%%%%%%%%%%%%%%%%%%%%%%%%%%%%%%%%%%%%%%%%%%%%%%%%%%%%%%%%%%%%%%%%
\subsection{Splitting and Join for zonotopes}
\label{sec:splitt-join-zonot}

Intuitively, a splitting operator divides an input zonotope into two or more zonotopes such that the union of their concretization is the same as the concretization of the input zonotope.

%%%%%%%%%%%%%%%%%%%%%%%%%%%%%%%%%%%%%%%%%%%%%%%%%%%%%%%%%%%%%%%%%%%%%%%%%%%%%%%%
\begin{definition}[Split Operator]\label{def:split-operator}
An operator $\splitting$ that maps a zonotope $\DummylinearZonotope$ to a set of zonotopes $\{\DummylinearZonotopeNum{1},\cdots,\DummylinearZonotopeNum{m}\}$ is for any input zonotope $\DummylinearZonotope$ we have:
  %%%%%%%%%%%%%%%%%%%%%%%%%%%%%%%%%%%%%%%%%%%%%%%%%%%%%%%%%%%%%%%%%%%%%%%%%%%%%%%%
\begin{equation*}
\splitting(\DummylinearZonotope)=\{\DummylinearZonotopeNum{1},\cdots,\DummylinearZonotopeNum{m}\}\quad s.t. \bigcup_i\concretization{\DummylinearZonotopeNum{i}}=\concretization{\DummylinearZonotope}.
\end{equation*}
%%%%%%%%%%%%%%%%%%%%%%%%%%%%%%%%%%%%%%%%%%%%%%%%%%%%%%%%%%%%%%%%%%%%%%%%%%%%%%%%
\end{definition}
%%%%%%%%%%%%%%%%%%%%%%%%%%%%%%%%%%%%%%%%%%%%%%%%%%%%%%%%%%%%%%%%%%%%%%%%%%%%%%%%

As an example of a split operator consider the binary split (2-split) at dimension $i$ denoted by $\splitting_{2,i}$ that splits the input zonotope into two parts by scaling the $i^{th}$ generator $\boldsymbol{g}_i$ by $\frac{1}{2}$ and shifting the center of the zonotope by $\frac{1}{2}\boldsymbol{g}_i$ ($-\frac{1}{2}\boldsymbol{g}_i$):
%%%%%%%%%%%%%%%%%%%%
\begin{align}\label{eq:half-split}
\splitting_{2,i}(\DummylinearZonotope)
=\Big\{\DummylinearZonotopeNum{1}, \DummylinearZonotopeNum{2}\Big\}
&=\Big\{\boldc+\underbrace{\sum_{j}{\boldsymbol{g}_j\evar_j}}_{\substack{\epsilon_i\in [-1,0]\\\text{other }\epsilon_j\in [-1,1]}},\boldc+\underbrace{\sum_{j}{\boldsymbol{g}_j\evar_j}}_{\substack{\epsilon_i\in [0,1]\\\text{other }\epsilon_j\in [-1,1]}}\Big\}\notag\\
&=\Big\{\boldc-\frac{1}{2}\boldsymbol{g}_i+\frac{1}{2}\boldsymbol{g}_i\evar_i+\sum_{j\neq i}{\boldsymbol{g}_j\evar_j},\quad
  \boldc+\frac{1}{2}\boldsymbol{g}_i+\frac{1}{2}\boldsymbol{g}_i\evar_i+\sum_{j\neq i}{\boldsymbol{g}_j\evar_j}\Big\}
\end{align}
%%%%%%%%%%%%%%%%%%%%
As  $\concretization{\DummylinearZonotopeNum{1}}\cup\concretization{\DummylinearZonotopeNum{2}}=\concretization{\DummylinearZonotope}$, $\splitting_{2,i}(\cdot)$ is a split operator according to \Cref{def:split-operator}. We can generalize 2-split to an $(m,i)$-split $\splitting_{m,i}$ that divides the zonotope evenly into $m$ parts along the $i^{th}$ dimension. We will call the composition of $(m,i)$-splits across all dimensions as a $\mu$-split. Given $\mu = \frac{1}{m}$ for $m \in \mathbb{N}$ as input, the effect of the $\mu$-split $\musplitting$ is the generators of each zonotope in the result of splitting are the generators of the input zonotope scaled by $\mu$. Thus, $\mu$-splitting allows us to downscale the generators of a zonotope. For a $d$-dimensional zonotope and $\mu = \frac{1}{m}$, $\musplitting$ returns $d^m$ zonotopes.

%%%%%%%%%%%%%%%%%%%%%%%%%%%%%%%%%%%%%%%%%%%%%%%%%%%%%%%%%%%%%%%%%%%%%%%%%%%%%%%%
\begin{definition}[$\mu$-split]\label{def:mu-split}
  For $d$-dimensional zonotopes and $\mu = \frac{1}{m}$ for $m \in \mathbb{N}$, the $\mu$-split $\musplitting$ is defined as:
  \[
    \splitting_{m,d} \compose \ldots \compose \splitting_{m,1}
  \]
  where the $(m,i)$-split $\splitting_{m,i}$ is defined as shown below and the application of a split operator to a set of zonotopes is defined as applying the split operator to every element in the set.
  \[
    \splitting_{m,i}(\DummylinearZonotope) = \{ \DummylinearZonotope_i \}_{j=1}^{m}
  \]
  for
  \[
\DummylinearZonotope_j = \zcenter +  \frac{-m + 2j - 1}{m}\boldsymbol{g}_i+\frac{1}{m}\boldsymbol{g}_i\evar_i+\sum_{j\neq i}{\boldsymbol{g}_j\evar_j}
\]
\end{definition}
%%%%%%%%%%%%%%%%%%%%%%%%%%%%%%%%%%%%%%%%%%%%%%%%%%%%%%%%%%%%%%%%%%%%%%%%%%%%%%%%

For example, $\splitting_{2}(1+\evar_1)=\{0.5+0.5\evar_1,1.5+0.5\evar_1\}$.
It is easy to see that $\musplitting$ is indeed a split according to \Cref{def:split-operator}.

%%%%%%%%%%%%%%%%%%%%%%%%%%%%%%%%%%%%%%%%%%%%%%%%%%%%%%%%%%%%%%%%%%%%%%%%%%%%%%%%
\begin{prop}[$\musplitting$ is a Split Operator]\label{prop:splitting_mu-i}
$\musplitting$ is a split operator for any $\mu = \frac{1}{m}$ were $m \in \mathbb{N}$.
\end{prop}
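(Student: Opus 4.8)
The plan is to prove the claim in three steps: first show that each single-dimension split $\splitting_{m,i}$ is itself a split operator in the sense of \Cref{def:split-operator}, then show that composing split operators again yields a split operator, and finally combine these facts to conclude. For the base case, fix a linear zonotope $\DummylinearZonotope = \zcenter + \sum_{j}\boldsymbol{g}_j\evar_j$ and an index $i$, and write the $r$-th piece of $\splitting_{m,i}(\DummylinearZonotope)$ as $\DummylinearZonotopeNum{r} = \zcenter + \tfrac{-m+2r-1}{m}\boldsymbol{g}_i + \tfrac{1}{m}\boldsymbol{g}_i\evar_i + \sum_{j\neq i}\boldsymbol{g}_j\evar_j$ for $r \in [1,m]$. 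Since the split neither introduces nor removes error symbols, a point lies in $\concretization{\DummylinearZonotopeNum{r}}$ exactly when it has the form $\zcenter + t\,\boldsymbol{g}_i + \sum_{j\neq i}s_j\boldsymbol{g}_j$ with $s_j\in[-1,1]$ and $t = \tfrac{-m+2r-1+a}{m}$ for some $a\in[-1,1]$; as $a$ ranges over $[-1,1]$, the value $t$ ranges exactly over the interval $I_r := \bigl[\tfrac{2(r-1)}{m}-1,\ \tfrac{2r}{m}-1\bigr]$. The right endpoint of $I_r$ equals the left endpoint of $I_{r+1}$, while $I_1$ begins at $-1$ and $I_m$ ends at $1$, so $\bigcup_{r=1}^m I_r = [-1,1]$. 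Consequently $\bigcup_{r=1}^m \concretization{\DummylinearZonotopeNum{r}} = \{\zcenter + t\boldsymbol{g}_i + \sum_{j\neq i}s_j\boldsymbol{g}_j : t\in[-1,1],\ s_j\in[-1,1]\} = \concretization{\DummylinearZonotope}$, which is exactly the requirement of \Cref{def:split-operator}.

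Next, given split operators $\splitting$ and $\splitting'$ and extending $\splitting'$ to sets by $\splitting'(\{Z_1,\dots,Z_k\}) = \bigcup_{r}\splitting'(Z_r)$, I would compute
\[
\bigcup_{Z\in(\splitting'\compose\splitting)(\DummylinearZonotope)}\concretization{Z} = \bigcup_{Y\in\splitting(\DummylinearZonotope)}\ \bigcup_{Z\in\splitting'(Y)}\concretization{Z} = \bigcup_{Y\in\splitting(\DummylinearZonotope)}\concretization{Y} = \concretization{\DummylinearZonotope},
\]
where the middle equality uses that $\splitting'$ is a split operator and the last uses that $\splitting$ is one; hence $\splitting'\compose\splitting$ is again a split operator. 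Combining this with the base case: every $\splitting_{m,i}$ is a split operator, so by induction on the length of the composition chain $\splitting_{m,d}\compose\cdots\compose\splitting_{m,1}$, the operator $\musplitting$ is a split operator for every $\mu = \tfrac{1}{m}$ with $m\in\mathbb{N}$.

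The only step that needs any care is the base case, and within it the routine but slightly delicate bookkeeping of the affine reparametrization $a\mapsto\tfrac{-m+2r-1+a}{m}$ together with the verification that the resulting subintervals $I_r$ cover $[-1,1]$ with no gaps (overlapping only at endpoints). The composition argument and the final induction are pure set algebra, so I expect that interval computation to be the main — though still elementary — obstacle.
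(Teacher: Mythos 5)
Your proof is correct and is essentially the argument the paper leaves implicit: the paper only verifies the union-of-concretizations property for the binary split $\splitting_{2,i}$ and then asserts the general case is "easy to see," while you supply the missing pieces (the interval covering $\bigcup_r I_r = [-1,1]$ for general $m$ and the closure of split operators under composition). Both the interval bookkeeping and the composition/induction step check out against \Cref{def:split-operator} and \Cref{def:mu-split}.
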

%%%%%%%%%%%%%%%%%%%%%%%%%%%%%%%%%%%%%%%%%%%%%%%%%%%%%%%%%%%%%%%%%%%%%%%%%%%%%%%%

Next we introduce join operators for zonotopes and then demonstrate that for any abstract transformer $\makeAbstract{\func}$ for a function $\func$, we can construct another abstract transformer for $\func$ by
 (i) splitting the input zonotope using some split operator \splitting, (ii) applying $\makeAbstract{\func}$ separately on each zonotope returned by the split, and (iii) merge the results using a join that we introduce next.\footnote{Technically, join is normally defined as an over-approximation of the least upper bound of two abstract elements wrt. a partial order of the abstract domain. We order abstract elements based on set inclusion of their concretizations here and only define join regarding to this partial order.}

%%%%%%%%%%%%%%%%%%%%%%%%%%%%%%%%%%%%%%%%%%%%%%%%%%%%%%%%%%%%%%%%%%%%%%%%%%%%%%%%
\begin{definition}[Join]\label{def:join}
An operator \join is a join for zonotopes if it over-approximates union of concretization. That is, for any set $\makeAbstract{S}$ of zonotopes:
  \[
    \concretization{\bigjoin \makeAbstract{S}} \supseteq \bigcup_{\DummyZonotope \in \makeAbstract{S}} \concretization{\DummyZonotope}.
  \]
\end{definition}
%%%%%%%%%%%%%%%%%%%%%%%%%%%%%%%%%%%%%%%%%%%%%%%%%%%%%%%%%%%%%%%%%%%%%%%%%%%%%%%%

Several join operators have been proposed for zonotopes, e.g., \cite{goubault-12-acjzpafinout}.
Next we establish that new abstract transformers can be constructed by wrapping existing transformers with split and join. This ensures that our approach of combining our abstract fixed point calculation with splitting yields valid fixed points.

%%%%%%%%%%%%%%%%%%%%%%%%%%%%%%%%%%%%%%%%%%%%%%%%%%%%%%%%%%%%%%%%%%%%%%%%%%%%%%%%
\begin{lem}[Abstract Transformers Are Sound on Splits]\label{lem:abstract-transformers-are-soun}
  Consider a zonotope \DummyZonotope and let $\makeAbstract{\func}$ be an abstract transformer for a function \func. Furthermore, let \join be a join operator, \splitting be a split operator, and let $\splitting(\DummylinearZonotope) = \{\DummylinearZonotope_1, \ldots, \DummylinearZonotope_m\}$.
Then, $\makeAbstract{\func_{\splitting,\join}} = \join \compose \makeAbstract{\func} \compose \splitting$ is an abstract transformer for $\func$.
\end{lem}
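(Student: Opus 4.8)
\emph{Proof plan.} The plan is to unwind the three relevant definitions---split operator (\Cref{def:split-operator}), abstract transformer (\Cref{def:abstract-transformer}), and join (\Cref{def:join})---and chain their defining inclusions, using the single extra fact that the lifting of a concrete function $\func$ to sets of inputs distributes over unions. Concretely, write $\splitting(\DummyZonotope) = \{\DummyZonotope_1, \ldots, \DummyZonotope_m\}$. By the definition of a split operator, the concretizations of the pieces cover the original, $\bigcup_i \concretization{\DummyZonotope_i} = \concretization{\DummyZonotope}$. Since $\func$ applied to a set is defined pointwise, $\func(\aSet) = \{\func(e) \mid e \in \aSet\}$, it commutes with unions, so $\func(\concretization{\DummyZonotope}) = \bigcup_i \func(\concretization{\DummyZonotope_i})$.

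Next I would invoke soundness of $\makeAbstract{\func}$ on each piece: because $\makeAbstract{\func}$ over-approximates $\func$, we have $\concretization{\makeAbstract{\func}(\DummyZonotope_i)} \supseteq \func(\concretization{\DummyZonotope_i})$ for every $i$, and hence $\bigcup_i \concretization{\makeAbstract{\func}(\DummyZonotope_i)} \supseteq \bigcup_i \func(\concretization{\DummyZonotope_i}) = \func(\concretization{\DummyZonotope})$. Finally, the defining property of a join gives $\concretization{\bigjoin_i \makeAbstract{\func}(\DummyZonotope_i)} \supseteq \bigcup_i \concretization{\makeAbstract{\func}(\DummyZonotope_i)}$. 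Composing the two chains yields $\concretization{\makeAbstract{\func_{\splitting,\join}}(\DummyZonotope)} = \concretization{\bigjoin_i \makeAbstract{\func}(\DummyZonotope_i)} \supseteq \func(\concretization{\DummyZonotope})$, which is precisely the condition for $\makeAbstract{\func_{\splitting,\join}} = \join \compose \makeAbstract{\func} \compose \splitting$ to be an abstract transformer for $\func$; since the argument used nothing particular about $\DummyZonotope$, it holds for every input.

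There is essentially no hard step here---the content is bookkeeping with the definitions. The one place that deserves a sentence of care is the claim that the set-lifting of $\func$ distributes over unions; this is immediate from the pointwise definition, but it is the only ingredient that is not purely definitional. A secondary point worth flagging is that the lemma claims only soundness, not exactness: even when $\makeAbstract{\func}$ is exact, the wrapped transformer generally is not, because the join (\Cref{def:join} only guarantees $\supseteq$) can introduce over-approximation, so I would not attempt to strengthen the conclusion to an equality of concretizations.
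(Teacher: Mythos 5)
Your proposal is correct and follows essentially the same route as the paper's proof: split covers the input, the point-wise lifting of $\func$ distributes over unions, soundness of $\makeAbstract{\func}$ is applied piecewise, and the join's over-approximation property closes the chain. Your closing remark that exactness is generally lost through the join is a sensible observation, though the paper does not address it.
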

%%%%%%%%%%%%%%%%%%%%%%%%%%%%%%%%%%%%%%%%%%%%%%%%%%%%%%%%%%%%%%%%%%%%%%%%%%%%%%%%
%%%%%%%%%%%%%%%%%%%%%%%%%%%%%%%%%%%%%%%%
\begin{proof}
  We have to show that
  \[
    \concretization{\makeAbstract{\func_{\splitting,\join}}(\DummylinearZonotope)} \supseteq \func(\concretization{\DummylinearZonotope}).
  \]
  We have
  %%%%%%%%%%%%%%%%%%%%%%%%%%%%%%%%%%%%%%%%
  \begin{align*}
    &\func(\concretization{\DummylinearZonotope})\\
    = &\func(\bigcup_{\DummylinearZonotope' \in \splitting(\DummylinearZonotope)} \concretization{\DummylinearZonotope'}) \tag{\Cref{def:split-operator} for \splitting}\\
    = &\func(\bigcup_{i=1}^{m} \concretization{\DummylinearZonotope_i})  \\
    = &\bigcup_{i=1}^{m} \func(\concretization{\DummylinearZonotope_i}) \tag{function application on sets of concrete elements is  point-wise application}\\
    \subseteq &\bigcup_{i=1}^{m} \concretization{\makeAbstract{\func}(\DummylinearZonotope_i)} \tag{$\makeAbstract{\func}$ is an abstract transformer}\\
    \subseteq &\concretization{\bigjoin_{i=1}^{m} \makeAbstract{\func}(\DummylinearZonotope_i)} \tag{$\concretization{\DummyZonotopeNum{1} \join \DummyZonotopeNum{2}} \supseteq \concretization{\DummyZonotopeNum{1}} \cup \concretization{\DummyZonotopeNum{2}}$}\\
    = &\concretization{\makeAbstract{\func_{\splitting,\join}}(\DummylinearZonotope)}
  \end{align*}
  %%%%%%%%%%%%%%%%%%%%%%%%%%%%%%%%%%%%%%%%

\end{proof}
%%%%%%%%%%%%%%%%%%%%%%%%%%%%%%%%%%%%%%%%
%%%%%%%%%%%%%%%%%%%%%%%%%%%%%%%%%%%%%%%%%%%%%%%%%%%%%%%%%%%%%%%%%%%%%%%%%%%%%%%%

%%%%%%%%%%%%%%%%%%%%%%%%%%%%%%%%%%%%%%%%%%%%%%%%%%%%%%%%%%%%%%%%%%%%%%%%%%%%%%%%
\subsection{Weakening Regularization Requirements}
\label{sec:weak-regul-requl}

% Next, we show that the union of the model weight fixed points obtained from all splits contain the concrete fixed point $\ModelWeightOptimal$.
Equipped with the $\mu$-split and a join operator, we are ready to analyze how the introduction of a $\mu$-split can be used to achieve an arbitrarily small $\RCmin$.
Given the input data $\AbstractDtrain$, the learning rate $\LearningRate \geq \RCmin$, the regularization coefficient $\RegularizationCoef$, the transformation matrix $\A$, we use $\funcFixpoint(\AbstractDtrain,\LearningRate,\RegularizationCoef,\A)$ to denote the function that computes the fixed point $\SymbolicModelWeightFixedPoint$ for these inputs using the process outlined in \Cref{cor:proof-of-main-theorem}.  % corresponding fixed point construction function  which returns the fixed point of the abstract gradient descent with given data and hyper-parameters.
% \jiongli{update the function in the caption of algo as well.}
From \Cref{lem:abstract-transformers-are-soun} follows that $\funcFixpointSplit$ is an abstract transformer for gradient descent and, thus, according to \Cref{prop:abstract-fixed-points-include-concrete-ones} over-approximates \ModelWeightFixedPointPoss

Next, we show that given an abstract training dataset \AbstractDtrain and a desired regularization coefficient $\RegularizationCoef_{target}$, we can find a value of $\mu$ such that the abstract fixed point construction with $\mu$-splitting computes a fixed point. Recall that \funcFixpoint can construct a fixed point if $\RegularizationCoef \geq \RCmin$ where \RCmin is a constant that depends on \AbstractDtrain. We will show that by choosing $\mu$ carefully, we can achieve $\RCmin \leq \RegularizationCoef_{target}$ for each zonotope in the result of the split and, thus, \funcFixpoint will return a valid fixed point for each zonotope in the split result.

% %%%%%%%%%%%%%%%%%%%%%%%%%%%%%%%%%%%%%%%%%%%%%%%%%%%%%%%%%%%%%%%%%%%%%%%%%%%%%%%%
% \begin{definition}[$\mu\mhyphen$split]
% Given a scalar $\mu\in (0,1]$, the $\mu\mhyphen$split of abstract features $\AbstractXtrain=\RealXtrain+\SymbolicXtrain$ gives a set of splitted abstract features $\RealXtrain_i+\SymbolicXtrain_i$ such that (1) all $\SymbolicXtrain_i=\mu\SymbolicXtrain$, and (2) $\bigcup_i\concretization{\RealXtrain_i+\SymbolicXtrain_i}=\concretization{\AbstractXtrain}$.
% \end{definition}
% %%%%%%%%%%%%%%%%%%%%%%%%%%%%%%%%%%%%%%%%%%%%%%%%%%%%%%%%%%%%%%%%%%%%%%%%%%%%%%%%

%  Next, we prove that, with $\mu\mhyphen$split, we are able to find a $\mu$ such that the abstract gradient descent on all splits guarantee to have a fixed point.\jiongli{maybe also mention we prove for specific transformation..}

%%%%%%%%%%%%%%%%%%%%%%%%%%%%%%%%%%%%%%%%%%%%%%%%%%%%%%%%%%%%%%%%%%%%%%%%%%%%%%%%
\begin{lem}
For any abstract training dataset  \AbstractDtrain and desired regularization coefficient \RegularizationCoef, there exists $m \in \mathbb{N}$ such that for $\mu = \frac{1}{m}$, \funcFixpointSplit returns a fixed point \AbstractModelWeightFixedPoint.
\end{lem}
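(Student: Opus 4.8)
The plan is to reduce to the single-zonotope case already handled by \Cref{theo:existence-of-abstract-approx-fp}, exploiting that $\mu$-splitting scales down the generators of the abstract data and hence drives the data-dependent constant \RCmin below \RegularizationCoef on every piece. Write $\musplitting(\AbstractDtrain)=\{\AbstractDtrain_1,\dots,\AbstractDtrain_N\}$. By \Cref{def:mu-split} each $\AbstractDtrain_k$ is a zonotope whose center --- its real part $\RealXtrain^{(k)}$ together with the label center $\Realytrain^{(k)}$ --- lies inside the compact sets $\concretization{\AbstractXtrain}$ and $\concretization{\Abstractytrain}$ (a split never leaves the concretization of its input), and whose generator coefficients are exactly $\mu$ times those of $\AbstractDtrain$. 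By \Cref{lem:abstract-transformers-are-soun}, $\funcFixpointSplit=\bigjoin\compose\funcFixpoint\compose\musplitting$ is an abstract transformer for the gradient-descent operator, so by \Cref{prop:abstract-fixed-points-include-concrete-ones} its output over-approximates \ModelWeightFixedPointPoss --- \emph{provided} $\funcFixpoint$ actually succeeds on each $\AbstractDtrain_k$, which by the constructive proof of \Cref{theo:existence-of-abstract-approx-fp} happens as soon as the per-piece constant $\RCmin_k$ satisfies $\RCmin_k\le\RegularizationCoef$. So it suffices to choose $\mu$ small enough that $\RCmin_k\le\RegularizationCoef$ for every $k$.

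To bound $\RCmin_k$, recall from the proof of \Cref{theo:existence-of-abstract-approx-fp} that $\RCmin_k=\frac{1}{n}\max_i\big(\sum_{j\ne i}(|\ael_{ij}|+c'_{ij})+c'_{ii}-\ael_{ii}\big)$ evaluated on $\AbstractDtrain_k$, where the $\ael_{ij}$ are the entries of $\A\,\RealXtrain^{(k)\,T}\RealXtrain^{(k)}\,\A^{-1}$ and the $c'_{ij}$ are read off from the linearized high-order gradient term $\gradComponentHigh$ (see \Cref{eq:gh,eq:al-gh-expanded}). I would choose, for each piece, the transformation matrix $\A$ to be an orthogonal matrix diagonalizing the symmetric positive-semidefinite Gram matrix $\RealXtrain^{(k)\,T}\RealXtrain^{(k)}$ (the natural PCA choice): then $\ael_{ij}=0$ for $i\ne j$ and each $\ael_{ii}\ge 0$ (it is an eigenvalue of a Gram matrix), so the real part contributes only the non-positive terms $-\ael_{ii}$ to $\RCmin_k$. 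For the $c'_{ij}$, note that every monomial of $\gradComponentHigh$ that feeds into the matrix $\C$ and vector $\cz$ of \Cref{eq:decomposed-projected-ih} carries at least one factor from the symbolic components of $\AbstractDtrain_k$ or from $\SymbolicModelWeightFixedPointD$ (which is itself proportional to those symbolic components via its closed form \Cref{eq:fixed-point-for-wd}), while the remaining factors --- $\RealXtrain^{(k)}$, $\RealModelWeightFixedPoint$, and the orthogonal $\A,\A^{-1}$ --- are bounded uniformly in $k$ because $\RealXtrain^{(k)}\in\concretization{\AbstractXtrain}$. Hence there is a constant $C$ depending only on $\AbstractDtrain$ (and on the fixed, sufficiently small, learning rate $\LearningRate$) with $c'_{ij}\le C\mu$ for all $k,i,j$ whenever $\mu\le 1$, giving $\RCmin_k\le\frac{dC\mu}{n}$.

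It then remains only to pick $m$. For $\RegularizationCoef>0$, take any $m\in\mathbb N$ with $\frac{dC}{nm}\le\RegularizationCoef$ and set $\mu=\frac{1}{m}$; then $\RCmin_k\le\RegularizationCoef$ on every piece, so the construction of \Cref{theo:existence-of-abstract-approx-fp} yields a fixed point $\AbstractModelWeightFixedPoint_k$ on each $\AbstractDtrain_k$, and $\AbstractModelWeightFixedPoint=\bigjoin_k\AbstractModelWeightFixedPoint_k=\funcFixpointSplit(\AbstractDtrain,\LearningRate,\RegularizationCoef,\A)$ is the required fixed point, over-approximating \ModelWeightFixedPointPoss by the first paragraph. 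If instead $\RegularizationCoef=0$ is desired, the same argument works once each split piece has full-rank real part --- equivalently, the real part of $\AbstractDtrain$ has no multicollinearity --- since then $\RCmin_k\le\frac{dC\mu}{n}-\frac{1}{n}\min_i\ael_{ii}\le 0$ for $\mu$ small enough.

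The step I expect to be the main obstacle is the uniform-in-$k$ bound $c'_{ij}\le C\mu$: one has to trace how $\linearize$ and the interval hull inside $\funcOrderReductionND$ assemble $\C$ and $\cz$ from the decomposed gradient, confirm that each monomial of $\gradComponentHigh$ contributes to them a quantity homogeneous of degree $\ge 1$ in the generators of $\AbstractDtrain$, and check that every remaining factor (in particular $\SymbolicModelWeightFixedPointD$ through its closed form) stays bounded independently of which of the $N$ split pieces one is on --- the boundedness of the centers $\RealXtrain^{(k)}$ inside $\concretization{\AbstractXtrain}$ and the orthogonality of the chosen $\A$ being exactly what delivers that uniformity.
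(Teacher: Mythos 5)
Your proposal follows essentially the same route as the paper's proof: reduce to the per-piece condition $\RCmin\le\RegularizationCoef$ via \Cref{lem:abstract-transformers-are-soun}, take $\A$ from the SVD/PCA diagonalization of the real-part Gram matrix so the off-diagonal $\ael_{i,j}$ vanish and the diagonal entries are (positive, absent multicollinearity) eigenvalues, and observe that $\mu$-splitting scales the symbolic generators so that $c'_{i,j}=\mu\cpentry{1}{i}{j}+\mu^2\cpentry{2}{i}{j}\le\mu c'_{max}$, whence $\RCmin\le\frac{1}{n}(d\mu c'_{max}-\min_i\ael_{i,i})$ can be driven to the target by choosing $\mu$. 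Your extra care about uniformity of the bound over the split pieces (the centers $\RealXtrain^{(k)}$ staying in $\concretization{\AbstractXtrain}$) is a point the paper glosses over, but it does not change the argument.
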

%%%%%%%%%%%%%%%%%%%%%%%%%%%%%%%%%%%%%%%%%%%%%%%%%%%%%%%%%%%%%%%%%%%%%%%%%%%%%%%%
\begin{proof}
  Consider an abstract training dataset $\AbstractDtrain = (\AbstractXtrain, \Abstractytrain)$ with $\AbstractXtrain = \RealXtrain+\SymbolicXtrain$. WLOG consider a single abstract training dataset $\AbstractDtrain_{i}$  in the result of $\mu$-split on $\AbstractDtrain$:
  \[
    \AbstractDtrain_{i} \in \musplitting(\AbstractDtrain)
  \]
We will show that using the regularization coefficient \RegularizationCoef, we can find $\mu$ such that the precondition $\RegularizationCoef \geq \RCmin$ for $\funcFixpoint$ to compute a fixed point holds. Then applying \Cref{lem:abstract-transformers-are-soun} we get the desired result.

First observe that since $\AbstractDtrain_{i}$ is in the result of $\mu$-splitting, we know that it has the following shape where the symbolic component $\SymbolicXtrain$ is scaled by $\mu$ and the real component $\RealXtrain_i$ of $\AbstractDtrain_i$ typically differs from \AbstractXtrain as $\mu$-splitting changes the center of the zonotope.
\[
  \AbstractXtrain_i = \RealXtrain_i+\mu\SymbolicXtrain
\]

  % For arbitrary $\SymbolicXtrain$ indicating the uncertain range of features, there exists a scalar $\mu\in (0,1]$ such that, with arbitrary real part of the data $\RealXtrain$, the following holds: for the abstract data $(\RealXtrain+\mu\SymbolicXtrain, \Abstractytrain)$, the lowest regularization coefficient that guarantees the existence of fixed point $\RCmin\leq 0$.

Recall that $\frac{2\LearningRate}{n}\boldsymbol{C'}\boldsymbol{k}$ denotes the matrix of coefficients that is the sum of absolute values of coefficients of $\frac{2}{n}\A(\RealXtrain^T\SymbolicXtrain+\SymbolicXtrain^T\RealXtrain+\SymbolicXtrain^T\SymbolicXtrain)\SymbolicModelWeightND$. Plugging in the definition of $\AbstractXtrain_i$ from above, we get a matrix $\frac{2\LearningRate}{n}\boldsymbol{C'}\boldsymbol{k}$ that is the sum  of absolute values of coefficients of
%%%%%%%%%%%%%%%%%%%%%%%%%%%%%%%%%%%%%%%%%%%%%%%%%%%%%%%%%%%%%%%%%%%%%%%%%%%%%%%%
\begin{align}
&\frac{2}{n}\A({\RealXtrain_i}^T\mu\SymbolicXtrain+\mu\SymbolicXtrain^T{\RealXtrain_i}+\mu\SymbolicXtrain^T\mu\SymbolicXtrain)\SymbolicModelWeightND\\
=&\frac{2}{n}\A(\mu{\RealXtrain_i}^T\SymbolicXtrain+\mu\SymbolicXtrain^T{\RealXtrain_i}+\mu^2\SymbolicXtrain^T\SymbolicXtrain)\A^{-1}\left[\begin{matrix}k_1\evar_1\\\vdots\\k_d\evar_d\end{matrix}\right]\\
=&\Big(\underbrace{\frac{2\mu}{n}\A({\RealXtrain_i}^T\SymbolicXtrain+\SymbolicXtrain^T{\RealXtrain_i})\A^{-1}}_{\text{contributes to }\cpentry{1}{i}{j}}
+\underbrace{\frac{2\mu^2}{n}\A(\SymbolicXtrain^T\SymbolicXtrain)\A^{-1}}_{\text{contributes to }\cpentry{2}{i}{j}}\Big)\left[\begin{matrix}k_1\evar_1\\\vdots\\k_d\evar_d\end{matrix}\right].
\end{align}
%%%%%%%%%%%%%%%%%%%%%%%%%%%%%%%%%%%%%%%%%%%%%%%%%%%%%%%%%%%%%%%%%%%%%%%%%%%%%%%%

Recall that $c'_{i,j}$ denotes entry of $\C'$ at row $i$ and column $j$.
Here, $\SymbolicXtrain^T\SymbolicXtrain$ has higher order than $(\RealXtrain^T\SymbolicXtrain+\SymbolicXtrain^T\RealXtrain)$, thus their terms will not cancel out. Therefore, we can decompose the contribution to $c'_{i,j}$ into the above two parts, denoted as

\[
  c'_{i,j}=\cpentry{1}{i}{j}+\cpentry{2}{i}{j}.
\]

Given scalar $\mu$, let
\[
  c'_{\mu} = \max\limits_{u,v}c'_{u,v}
\]
i.e.,
$c'_{\mu}$ denotes the largest entry $c'_{i,j}$ in $\AbstractDtrain_i$.

% $\forall i,j, c'_{i,j}\leq \max\limits_{i',j'}c'_{i',j'}=c'_{\mu}$, where $c'_{\mu}$ denotes the upper bound of all $c'_{i,j}$ with symbolic part of features $\mu\SymbolicXtrain$\jiongli{these subscript notations maybe need to be replaced to avoid confusion}
Let $i$ and $j$ such that $c'_{\mu} = c'_{i,j}$, we have:
\[
  c'_{\mu}=\mu \cpentry{1}{i}{j}+\mu^2\cpentry{2}{i}{j}\leq \mu \cpentry{1}{i}{j}+\mu \cpentry{2}{i}{j}=\mu c'_{max},
\]
where $c'_{max}$ is a constant that is equal to the maximum $c'_{i,j}$ before scaling $\SymbolicXtrain$ through $\mu$-splitting.

Next, we prove the lemma for a specific transformation $\A = V^T$ during order reduction derived through singular value decomposition (SVD). Using SVD, we can decompose the  covariance matrix
$\RealXtrain_i^T\RealXtrain_i$ of the real number part of features $\RealXtrain_i$  as shown below
\[
  \RealXtrain_i^T\RealXtrain_i=V\Sigma V^T.
  \]
  For $\A=V^T$, and using the fact that $V$ is a rotation in SVD which implies $V^{-1} = V^{T}$, we get:
  %%%%%%%%%%%%%%%%%%%%%%%%%%%%%%%%%%%%%%%%
  \begin{align*}
    &\A\RealXtrain_i^T\RealXtrain_i \A^{-1}\\
    = &\A (V\Sigma V^T) \A^{-1}\\
    = &{V^{T}} (V\Sigma V^T) {V^{T}}^{-1}\\
    = &(V^{-1} V)\Sigma  (V^{-1} {V^{-1}}^{-1})\\
    = &\Sigma
  \end{align*}
  %%%%%%%%%%%%%%%%%%%%%%%%%%%%%%%%%%%%%%%%
Thus, the matrix $\A\RealXtrain_i^T\RealXtrain_i \A^{-1}=\Sigma$ is a diagonal matrix, meaning that all off-diagonal elements of it ($\lvert \ael_{i,j}\rvert$ for all $(i\neq j)$ in \Cref{cor:proof-of-main-theorem}) are zero.
In addition, the diagonal element $\Sigma[i,i]$ ($\ael_{i,i}$ in \Cref{cor:proof-of-main-theorem}) is the $i^{th}$ eigenvalue of the covariance matrix $\RealXtrain_i^T\RealXtrain_i$, which must be positive when assuming no multicolinearity.

To sum up, $\RCmin$ from \Cref{cor:proof-of-main-theorem} can be re-written as:
%%%%%%%%%%%%%%%%%%%%%%%%%%%%%%%%%%%%%%%%
\begin{align}
\RCmin&= \frac{1}{n}\max_{i}\big(c'_{i,i}+\sum_{\substack{j=1\\j\neq i}}^{d}{(\lvert \ael_{i,j}\rvert+c'_{i,j})}-\ael_{i,i}\big)\\
&\leq \frac{1}{n}\max_{i}\big(c'_{\mu}+\sum_{\substack{j=1\\j\neq i}}^{d}{c'_{\mu}}-\ael_{i,i}\big)\\
&\leq \frac{1}{n}\max_{i}\big(\mu c'_{max}+\sum_{\substack{j=1\\j\neq i}}^{d}{\mu c'_{max}}-\ael_{i,i}\big)\\
&= \frac{1}{n}\max_{i}\big(d \mu c'_{max}-\ael_{i,i}\big) = \frac{1}{n} d \mu c'_{max}- \frac{1}{n}\min_{i}(\ael_{i,i})
\end{align}
%%%%%%%%%%%%%%%%%%%%%%%%%%%%%%%%%%%%%%%%
where $\frac{1}{n}\min_{i}(\ael_{i,i})$ is a positive constant. Therefore, when setting $\mu = \frac{\min_{i}(\ael_{i,i})}{d c'_{max}}> 0$, we have $\RCmin\leq \frac{1}{n} d \mu c'_{max}- \frac{1}{n}\min_{i}(\ael_{i,i}) = 0$.
\end{proof}
%%%%%%%%%%%%%%%%%%%%%%%%%%%%%%%%%%%%%%%%%%%%%%%%%%%%%%%%%%%%%%%%%%%%%%%%%%%%%%%%
% \begin{lem}

% \end{lem}

% \begin{theo}
% For arbitrary regularization coefficient, the abstract gradient descent always has a fixed point.
% \end{theo}
% \begin{proof}
% \jiongli{write the splitting tomorrow morning}
% \end{proof}

%%%%%%%%%%%%%%%%%%%%%%%%%%%%%%%%%%%%%%%%%%%%%%%%%%%%%%%%%%%%%%%%%%%%%%%%%%%%%%%%
\section{An Approximate Abstract Transformer for Ridge Regression}
\label{sec:appr-abstr-transf-1}

\Cref{alg:final-algo-details} is the detailed version of \Cref{alg:final-algo} we presented in the main paper.
Like \Cref{alg:final-algo}, \Cref{alg:final-algo-details} takes as input an abstract dataset $\AbstractDtrain$ that over-approximates $\dtrainPoss$, a learning rate \LearningRate, a regularization coefficient \RegularizationCoef, and a transformation matrix \A used for order reduction (e.g., the SVD-based transformation discussed in \Cref{sec:weak-regul-requl}). In Line \ref{alg-line:wr-wd}, we first use function \texttt{constructFPEquations} to compute fixed points for \RealModelWeightFixedPoint and \SymbolicModelWeightD using the closed form solution from \Cref{lem:closed-form-solutions-for-fd-r-and-d} and construct the equation system $\fpeqs$ for \SymbolicModelWeightFixedPointND (\Cref{eq:decomposed-linear-system}). Furthermore, this function also computes the threshold \RCmin on the regularization coefficient for this system of equations to have a solution. If $\RCmin$ is smaller or equal to the  desired regularization coefficient \RegularizationCoef (Line \ref{alg-line:no-split}), then we solve the linear equations $\fpeqs$ (Line \ref{alg-line:solve-no-split}) and then returns the fixpoint \AbstractModelWeightFixedPoint. Otherwise, we determine a sufficiently small splitting factor $\mu$ that ensures that $\RCmin \leq \RegularizationCoef$ for each $\AbstractModelWeight_i$ in the result of the split (Line \ref{alg-line:determine-mu}), compute the fixed point for each such $\AbstractModelWeight_i$ (Lines \ref{alg-line:solve-fp-i-begin} and \ref{alg-line:solve-fp-i-end}), and merge these fixed points using join to compute the final result (Line \ref{alg-line:merge-fps-with-join}).

%%%%%%%%%%%%%%%%%%%%%%%%%%%%%%%%%%%%%%%%%%%%%%%%%%%%%%%%%%%%%%%%%%%%%%%%%%%%%%%%
\SetCommentSty{mycommfont}
\SetKwProg{Def}{def}{:}{}
\SetKwComment{MyComment}{// }{}
%%%%%%%%%%%%%%%%%%%%
\begin{algorithm}[t]
  % \SetKwProg{Fn}{def}{}{}
  \caption{Constructing a Fixed Point for Abstract Gradient Descent % $\funcFixpoint(\AbstractDtrain,\LearningRate,\RegularizationCoef,\A)$
  }\label{alg:final-algo-details}
  \KwIn{abstract dataset $\AbstractDtrain = (\AbstractXtrain,\Abstractytrain)$, learning rate $\LearningRate$, Regularization coefficient \RegularizationCoef, transformation matrix $\A$}
  \KwOut{abstract fixed point \AbstractModelWeightFixedPoint}
%%%%%%%%%%%%%%%%%%%%
$\RealModelWeightFixedPoint, \SymbolicModelWeightFixedPointD, \fpeqs, \RCmin \gets \algcall{constructFPequations}(\AbstractDtrain, \RegularizationCoef, \LearningRate)$ \label{alg-line:wr-wd}\\
%%%%%%%%%%%%%%%%%%%% split
\eIf(\MyComment*[f]{Is splitting necessary?}){$\RCmin > \RegularizationCoef$}
{
  $\mu \gets \algcall{determineNumSplits}(\fpeqs, \RegularizationCoef)$ \label{alg-line:determine-mu}\\
  $\{ \AbstractDtrain_i \}_{i=1}^{m} \gets \musplitting(\AbstractDtrain)$ \MyComment*[f]{Apply $\mu$-splitting}\\
  \For(\MyComment*[f]{Construct FP for each $\AbstractDtrain_i$ in the split}){$\AbstractDtrain_i \in \{ \AbstractDtrain_i \}_{i=1}^{m}$}{
    $\RealModelWeightFixedPoint, \SymbolicModelWeightFixedPointD, \fpeqs, \RCmin \gets \algcall{constructFPequations}(\AbstractDtrain_i, \RegularizationCoef, \LearningRate)$ \label{alg-line:solve-fp-i-begin}\\
      $\SymbolicModelWeightFixedPointND\gets \algcall{solveFixedPointEquations}(\fpeqs, \RegularizationCoef, \LearningRate)$ \\
  ${\AbstractModelWeight}_i \gets  \RealModelWeightFixedPoint + \SymbolicModelWeightFixedPointD + \SymbolicModelWeightFixedPointND$ \label{alg-line:solve-fp-i-end}\\
  }
  \Return{$\bigjoin_{i=1}^{m} {\AbstractModelWeight}_i$} \MyComment*[f]{Merge FPs using Join} \label{alg-line:merge-fps-with-join}\\
}
%%%%%%%%%%%%%%%%%%%% direct
(\MyComment*[f]{No splitting required}){\label{alg-line:no-split}
  $\SymbolicModelWeightFixedPointND\gets \algcall{solveFixedPointEquations}(\fpeqs, \RegularizationCoef, \LearningRate)$ \label{alg-line:solve-no-split}\\
  $\AbstractModelWeightFixedPoint \gets \RealModelWeightFixedPoint + \SymbolicModelWeightFixedPointD + \SymbolicModelWeightFixedPointND$\\
}
%%%%%%%%%%%%%%%%%%%%
  \Return{$\AbstractModelWeightFixedPoint$} \\[3mm]
%%%%%%%%%%%%%%%%%%%%%%%%%%%%%%%%%%%%%%%%
\Def{\algfuncdeffont{constructFPEquations}($\AbstractDtrain,\RegularizationCoef,\LearningRate$)}{
  $\RealModelWeightFixedPoint \gets \algcall{evalClosedFormReal}(\AbstractDtrain,\RegularizationCoef)$
  \MyComment*[f]{\Cref{eq:fixed-point-for-wr}}\\
  $\SymbolicModelWeightFixedPointD \gets  \algcall{evalClosedFormSymbolic}(\AbstractDtrain,\RegularizationCoef,\RealModelWeightFixedPoint)$
\MyComment*[f]{\Cref{eq:fixed-point-for-wd}}\\
$\fpeqs \gets \algcall{constructNonDataFixedPointEquations}(\AbstractDtrain,\RegularizationCoef,\LearningRate,\RealModelWeightFixedPoint,\SymbolicModelWeightFixedPointD)$ \MyComment*[f]{\Cref{eq:decomposed-linear-system}}\\
$\RCmin \gets \algcall{determineMinRegularization}(\fpeqs)$\\
  \Return{\RealModelWeightFixedPoint, \SymbolicModelWeightFixedPointD, $\fpeqs$, \RCmin}\\
}
\end{algorithm}
%%%%%%%%%%%%%%%%%%%%%%%%%%%%%%%%%%%%%%%%%%%%%%%%%%%%%%%%%%%%%%%%%%%%%%%%%%%%%%%%

%%% Local Variables:
%%% mode: LaTeX
%%% TeX-master: "main"
%%% End:

%%%%%%%%%%%%%%%%%%%%%%%%%%%%%%%%%%%%%%%%%%%%%%%%%%%%%%%%%%%%%%%%%%%%%%%%%%%%%%%%
\section{Infeasibility of Symbolically Evaluating the Closed-form Solution for Ridge Regression}
\label{sec:infesiable-symbolic-closed-form}
% \BGI{@jiongli + babak: can one of you proofread the updated version of this section?}

In this section, we demonstrate why evaluating the closed-form solution for linear regression with MSE is not feasible. It produces symbolic expressions with many variables and large symbolic terms (their representation size)  that include the fractions with denominators and enumerators that are polynomial expressions. Apart from the size of these expressions, computing viable prediction intervals from such a symbolic representation of model weights is computationally hard. We use MSE loss for simplicity, but the same arguments also apply to ridge regression.

% incompatibility in uncertainty inference.
We illustrate the issues using a randomly-generated toy
dataset with 10 samples and 3 uncertain data cells in the third column
(corresponding to 3 error symbols $\evar_0,\evar_1,\evar_2$)\footnote{\text{The
    first column of $\AbstractXtrain$ consisting of 1's corresponds to the bias
    term or the intercept term in the weights.}}:
\begin{align*}
\AbstractXtrain=\left[\begin{matrix}1.0 & 1.6 & 1.3 \evar_{0} + 1.4\\1.0 & 0.8 & 1.5\\1.0 & -1.7 & -1.9\\1.0 & -0.57 & 1.1\\1.0 & -0.39 & 0.36\\1.0 & 0.035 & 1.2\\1.0 & -0.34 & -0.73\\1.0 & 0.038 & 0.3 \evar_{1} + 0.44\\1.0 & 1.5 & 1.2 \evar_{2} + 1.3\\1.0 & -0.98 & -0.66\end{matrix}\right]\quad\quad \Abstractytrain=\left[\begin{matrix}4.6\\-1.5\\0.39\\-5.7\\-3.0\\-3.6\\-0.44\\0.46\\2.2\\-3.0\end{matrix}\right]
\end{align*}

Then, symbolically evaluating the closed-form solution for linear regression with MSE loss, we obtain the symbolic expression representing all possible model weights:
\begin{align*}
\AbstractModelWeightFixedPoint&=(\AbstractXtrain^T\AbstractXtrain)^{-1}\AbstractXtrain^T\Abstractytrain\\
&={\scriptscriptstyle
\left[\begin{matrix}\frac{1.5 \cdot 10^{6} \evar_{0}^{2} + 8.7 \cdot 10^{4} \evar_{0} \evar_{1} - 4.5 \cdot 10^{5} \evar_{0} \evar_{2} - 1.2 \cdot 10^{6} \evar_{0} + 9.2 \cdot 10^{4} \evar_{1}^{2} + 4.0 \cdot 10^{3} \evar_{1} \evar_{2} - 2.4 \cdot 10^{5} \evar_{1} + 1.0 \cdot 10^{6} \evar_{2}^{2} - 2.1 \cdot 10^{6} \evar_{2} - 1.8 \cdot 10^{6}}{- 1.1 \cdot 10^{6} \evar_{0}^{2} + 8.4 \cdot 10^{4} \evar_{0} \evar_{1} + 1.1 \cdot 10^{6} \evar_{0} \evar_{2} + 1.1 \cdot 10^{6} \evar_{0} - 8.3 \cdot 10^{4} \evar_{1}^{2} + 7.7 \cdot 10^{4} \evar_{1} \evar_{2} + 6.1 \cdot 10^{3} \evar_{1} - 9.9 \cdot 10^{5} \evar_{2}^{2} + 9.5 \cdot 10^{5} \evar_{2} - 4.0 \cdot 10^{6}}\\\frac{- 9.5 \cdot 10^{4} \evar_{0}^{2} + 2.3 \cdot 10^{4} \evar_{0} \evar_{1} + 2.6 \cdot 10^{5} \evar_{0} \evar_{2} + 2.3 \cdot 10^{5} \evar_{0} - 1.4 \cdot 10^{4} \evar_{1}^{2} + 2.0 \cdot 10^{4} \evar_{1} \evar_{2} + 3.6 \cdot 10^{4} \evar_{1} - 1.4 \cdot 10^{5} \evar_{2}^{2} - 6.0 \cdot 10^{2} \evar_{2} - 1.8 \cdot 10^{6}}{- 1.1 \cdot 10^{5} \evar_{0}^{2} + 8.4 \cdot 10^{3} \evar_{0} \evar_{1} + 1.1 \cdot 10^{5} \evar_{0} \evar_{2} + 1.1 \cdot 10^{5} \evar_{0} - 8.3 \cdot 10^{3} \evar_{1}^{2} + 7.7 \cdot 10^{3} \evar_{1} \evar_{2} + 6.1 \cdot 10^{2} \evar_{1} - 9.9 \cdot 10^{4} \evar_{2}^{2} + 9.5 \cdot 10^{4} \evar_{2} - 4.0 \cdot 10^{5}}\\\frac{- 3.7 \cdot 10^{3} \evar_{0} - 4.1 \cdot 10^{2} \evar_{1} - 9.2 \cdot 10^{2} \evar_{2} + 1.3 \cdot 10^{4}}{- 1.1 \cdot 10^{3} \evar_{0}^{2} + 84.0 \evar_{0} \evar_{1} + 1.1 \cdot 10^{3} \evar_{0} \evar_{2} + 1.1 \cdot 10^{3} \evar_{0} - 83.0 \evar_{1}^{2} + 77.0 \evar_{1} \evar_{2} + 6.1 \evar_{1} - 9.9 \cdot 10^{2} \evar_{2}^{2} + 9.5 \cdot 10^{2} \evar_{2} - 4.0 \cdot 10^{3}}\end{matrix}\right]
}.
\end{align*}

Next, we consider one test sample $\boldsymbol{x_t}=[1, -1, 1]$ and apply this closed-form model weight expression to infer its prediction range $\makeAbstract{\PredRange}(\boldsymbol{x_t})$ (cf. \Cref{cor:pred-range}).
With {\em merely} 3 uncertain data points in the training data, the prediction of this test data point
\begin{align}
\hat{\boldsymbol{y_t}}=&\AbstractModelWeightFixedPoint\boldsymbol{x_t}\notag\\
=&{\scriptscriptstyle - \frac{3.7 \cdot 10^{3} \evar_{0} + 4.1 \cdot 10^{2} \evar_{1} + 9.2 \cdot 10^{2} \evar_{2} - 1.3 \cdot 10^{4}}{- 1.1 \cdot 10^{3} \evar_{0}^{2} + 84.0 \evar_{0} \evar_{1} + 1.1 \cdot 10^{3} \evar_{0} \evar_{2} + 1.1 \cdot 10^{3} \evar_{0} - 83.0 \evar_{1}^{2} + 77.0 \evar_{1} \evar_{2} + 6.1 \evar_{1} - 9.9 \cdot 10^{2} \evar_{2}^{2} + 9.5 \cdot 10^{2} \evar_{2} - 4.0 \cdot 10^{3}}} \notag\\
&{\scriptscriptstyle + \frac{9.5 \cdot 10^{4} \evar_{0}^{2} - 2.3 \cdot 10^{4} \evar_{0} \evar_{1} - 2.6 \cdot 10^{5} \evar_{0} \evar_{2} - 2.3 \cdot 10^{5} \evar_{0} + 1.4 \cdot 10^{4} \evar_{1}^{2} - 2.0 \cdot 10^{4} \evar_{1} \evar_{2} - 3.6 \cdot 10^{4} \evar_{1} + 1.4 \cdot 10^{5} \evar_{2}^{2} + 6.0 \cdot 10^{2} \evar_{2} + 1.8 \cdot 10^{6}}{- 1.1 \cdot 10^{5} \evar_{0}^{2} + 8.4 \cdot 10^{3} \evar_{0} \evar_{1} + 1.1 \cdot 10^{5} \evar_{0} \evar_{2} + 1.1 \cdot 10^{5} \evar_{0} - 8.3 \cdot 10^{3} \evar_{1}^{2} + 7.7 \cdot 10^{3} \evar_{1} \evar_{2} + 6.1 \cdot 10^{2} \evar_{1} - 9.9 \cdot 10^{4} \evar_{2}^{2} + 9.5 \cdot 10^{4} \evar_{2} - 4.0 \cdot 10^{5}}}\notag\\
&{\scriptscriptstyle + \frac{1.5 \cdot 10^{6} \evar_{0}^{2} + 8.7 \cdot 10^{4} \evar_{0} \evar_{1} - 4.5 \cdot 10^{5} \evar_{0} \evar_{2} - 1.2 \cdot 10^{6} \evar_{0} + 9.2 \cdot 10^{4} \evar_{1}^{2} + 4.0 \cdot 10^{3} \evar_{1} \evar_{2} - 2.4 \cdot 10^{5} \evar_{1} + 1.0 \cdot 10^{6} \evar_{2}^{2} - 2.1 \cdot 10^{6} \evar_{2} - 1.8 \cdot 10^{6}}{- 1.1 \cdot 10^{6} \evar_{0}^{2} + 8.4 \cdot 10^{4} \evar_{0} \evar_{1} + 1.1 \cdot 10^{6} \evar_{0} \evar_{2} + 1.1 \cdot 10^{6} \evar_{0} - 8.3 \cdot 10^{4} \evar_{1}^{2} + 7.7 \cdot 10^{4} \evar_{1} \evar_{2} + 6.1 \cdot 10^{3} \evar_{1} - 9.9 \cdot 10^{5} \evar_{2}^{2} + 9.5 \cdot 10^{5} \evar_{2} - 4.0 \cdot 10^{6}}}\label{eq:closed-form-test-inference}
\end{align}
already consists of fractions of complex polynomial expressions.
Finding the viable prediction range for this data point, i.e. the minimum and maximum possible prediction for the point across all models in the concretization of the symbolic expression, using this expression is infeasible, as it is harder than finding an extrema for multivariate polynomials under linear constraints\footnote{The expression is a sum of fractions in which both numerator and denominators are multivariate polynomial expressions. The range $[-1,1]$ for each error symbol can be formulated as linear constraints over the input space.}, which is known to be NP-hard.
In practice, expressions will be significantly larger as they will involve more error symbols.
Assume each column has $p$ uncertain cells, and $q$ is the number of uncertain labels, the number of distinct monomials is $O(p^{d}d^{2d}q)$, where $d$ is the number of dimensions, and the $d$ in exponents mainly comes from the matrix inversion when computing the determinant.
% \BG{@jiongli: can we state some conservative lower bound on the number of monomials based on the number of error symbols in \AbstractDtrain?}

% \BGI{REMOVED AS IT SEEMS REDUNDANT, THE POLYNOMNIAL EXTREMAS ARE ALREADY HARD: Moreover, over-approaximating the prediction range for the symbolic expression in \Cref{eq:closed-form-test-inference} is challenging, primarily because there is no straight approach for over-approximating the sum of fractions consisting of multivariate polynomial expressions. And if we over-approaximate each multivariate polynomial expressions separately, we will lose much correlations between them, as all of them share the same set of error symbols.}

In summary, the symbolic expressions obtained from symbolically evaluating the closed-form solution for linear regression are not suitable for representing the space of possible model weights.

%%% Local Variables:
%%% mode: LaTeX
%%% TeX-master: "main"
%%% End:

\section{Additional Experiments}\label{sec:app-experiments}

% \BGI{Add missing descriptions and make sure everything is consistent after we moved everything we need from the main experiments section}
% \paragraph{Addition details about the settings}

\subsection{Robustness Verification Additional Results}
For a better comparison with \meyer~\cite{meyer2023dataset}, we use heatmaps to visualize the robustness ratios averaged over 5 repeated experiments with different random seeds. 
In \Cref{fig:heatmap}, \sys exhibits much less yellow regions (representing higher uncertainty) compared to \meyer. 
Especially when the uncertainty is high (bottom-right part), there are many cases that \sys returns high robustness ratio but \meyer does not. 
Recall that both approach gives sound over-approaximation of prediction robustness, thus the robustness ratios returned by both are lower bounds of the ground truth prediction robustness ratio. 
Therefore, the aforementioned cases correspond to highly robust ground truth (no less than the result of \sys), where \sys gives tight over-approximations but \meyer does not.

\begin{figure}[t]
    \centering
    \begin{subfigure}{1\linewidth}
        \includegraphics[width=\linewidth]{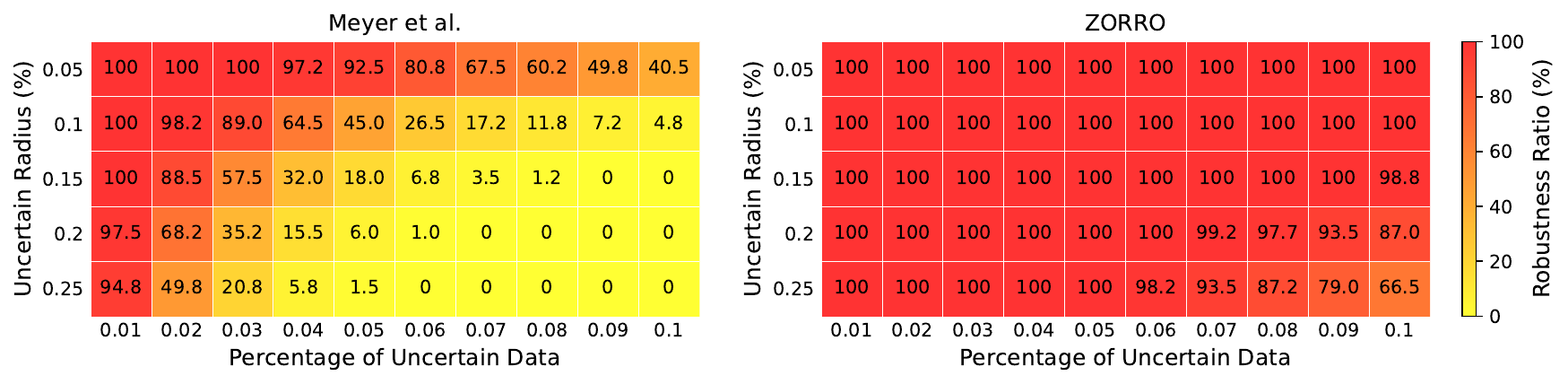}
    \caption{Mpg}
    \label{fig:mpg-heatmap}
    \end{subfigure}
    \begin{subfigure}{1\linewidth}
        \includegraphics[width=\linewidth]{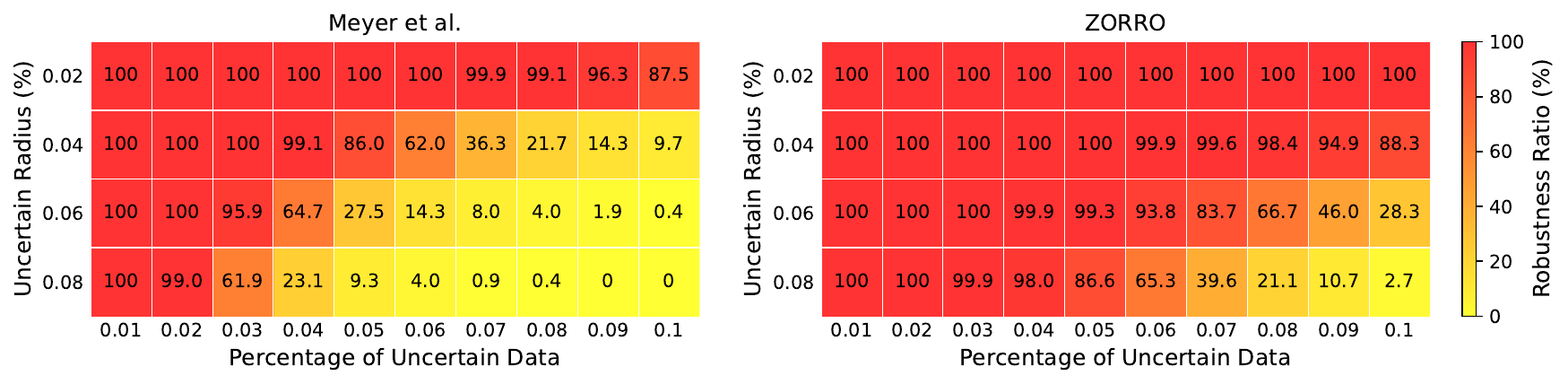}
    \caption{Insurance}
    \label{fig:ins-heatmap}
    \end{subfigure}
    \label{fig:heatmap}
    \caption{Robustness verification under label errors using intervals (\meyer) or zonotopes (\sys).}
\end{figure}

\subsection{Micro Benchmark}
% To evaluate how specific characteristics of the data affect the approaches, we generated synthetic datasets by varying one of the settings about the data. We introduce uncertainty to the data by randomly injecting nulls to the data and use a set of different imputation methods the repair the data. Except where noted, we default to 2\% rows of uncertain data, 4 different imputers produce uncertain imputed results for each uncertain data points for samplings, 7 dimensions and total of 400 training data points. \sufeng{Do we want to introduce uncertainty into testing data/Y?} 
We use range of loss as the metric to measure the quality (tightness) of uncertain training result. range of loss for samples are calculated by max loss - min loss across all sampled results. Loss range for symbolic representations are calculated my measuring the range of the interval concretization of the loss function result. Notice that samplings return a subset of all possible result which has no soundness guarantees, as a result, produces an under-approximation of the loss range. The symbolic fixed point approach produces an over-approximated loss range. We take 1,000 (1k) and 10,000 (10k) samples from all possible worlds as under approximations to the concrete range when number of possible worlds are to large to compute the ground truth range.
\paragraph{Varying ratio of tuples contains uncertain data} Figure~\ref{fig:app-micro_1} shows the loss range results while updating the ratio of uncertain rows from 0.01 to 0.12. \sys has a tight range considering all samples are under approximations of ground truth range especially in . As uncertainty increases, the over-approximation gap widens due to the increased coefficient of higher order terms in the gradient, which are linearized, leading to higher linearization errors.
\paragraph{Increasing uncertainty radius} Figure~\ref{fig:app-micro_2} shows the result by increasing the radius of uncertain data (imputation results) by multiplier of 0.06 to 0.44 where 0.06 means the range for each uncertain data is increased by 6\%. Similar to amount of uncertainty, 
\paragraph{Varying dimensions} Figure~\ref{fig:app-micro_3} Shows the result by change number of dimensions to the training data. Result indicates tightness of \sys’s over-approximation is not affected by the dimension of the data.

Figure~\ref{fig:app-micro_gt_all} added sampling result to Figure~\ref{fig:micro-benchmark-with-gt-main} shows additionally that sampling result is an under approximation of the ground truth result.
\begin{figure}
    \centering
    \begin{subfigure}{0.32\linewidth}
        \includegraphics[width=\linewidth,trim={3cm 6cm 0 6cm}]{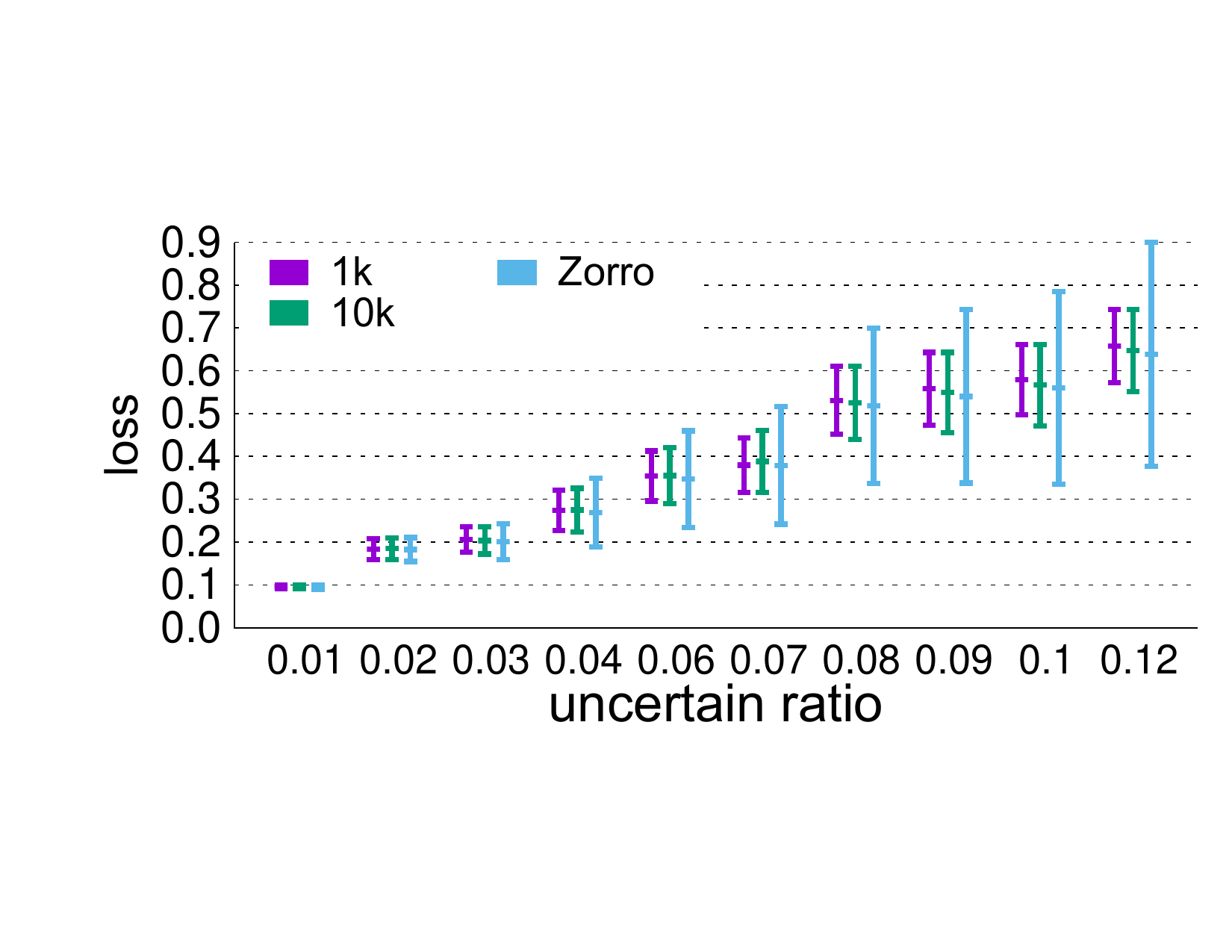}
        \caption{varying amount of uncertainty}
        \label{fig:app-micro_1}
    \end{subfigure}
   \begin{subfigure}{0.32\linewidth}
        \includegraphics[width=\linewidth,trim={3cm 6cm 0 6cm}]{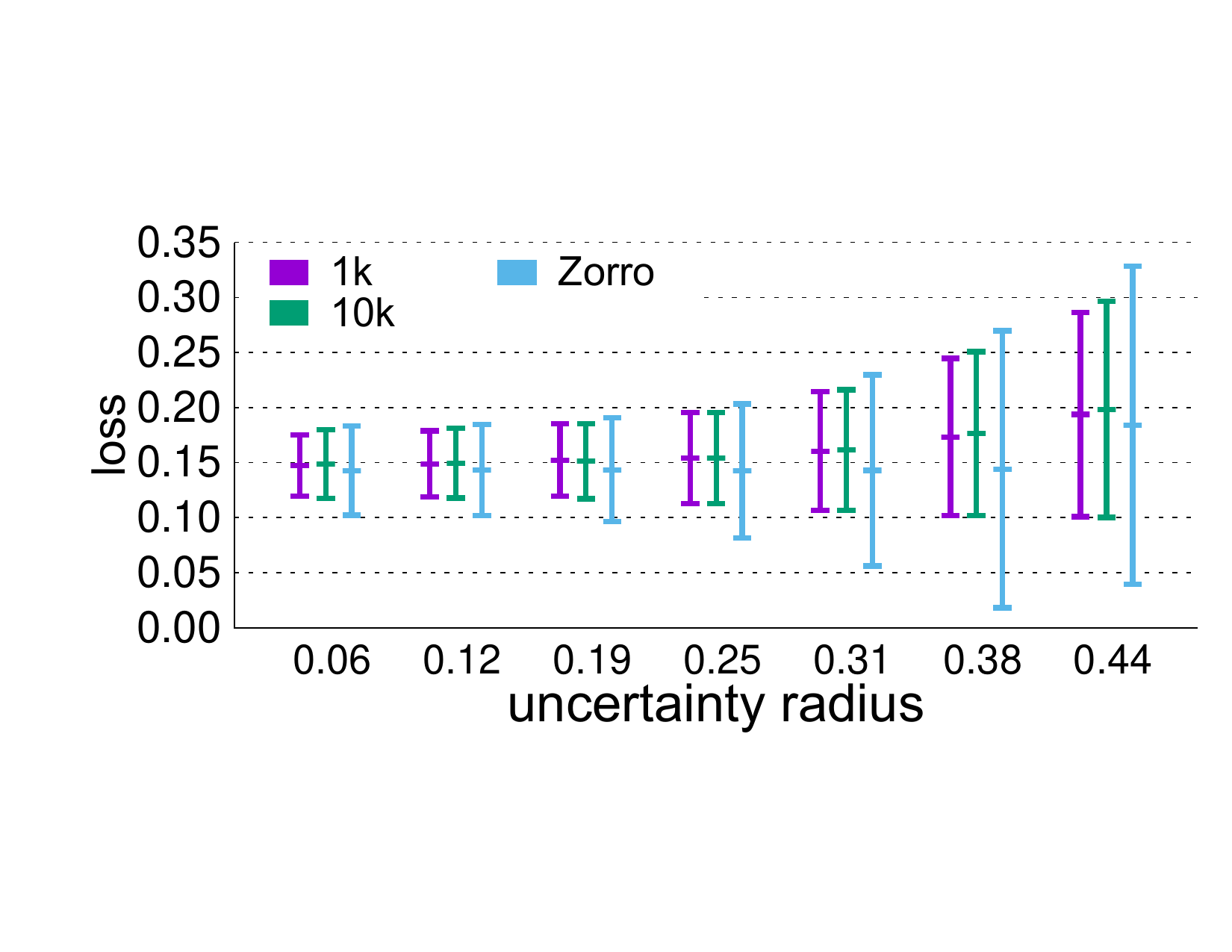}
        \caption{varying uncertainty radius}
        \label{fig:app-micro_2}
    \end{subfigure}
    \begin{subfigure}{0.32\linewidth}
        \includegraphics[width=\linewidth,trim={3cm 6cm 0 6cm}]{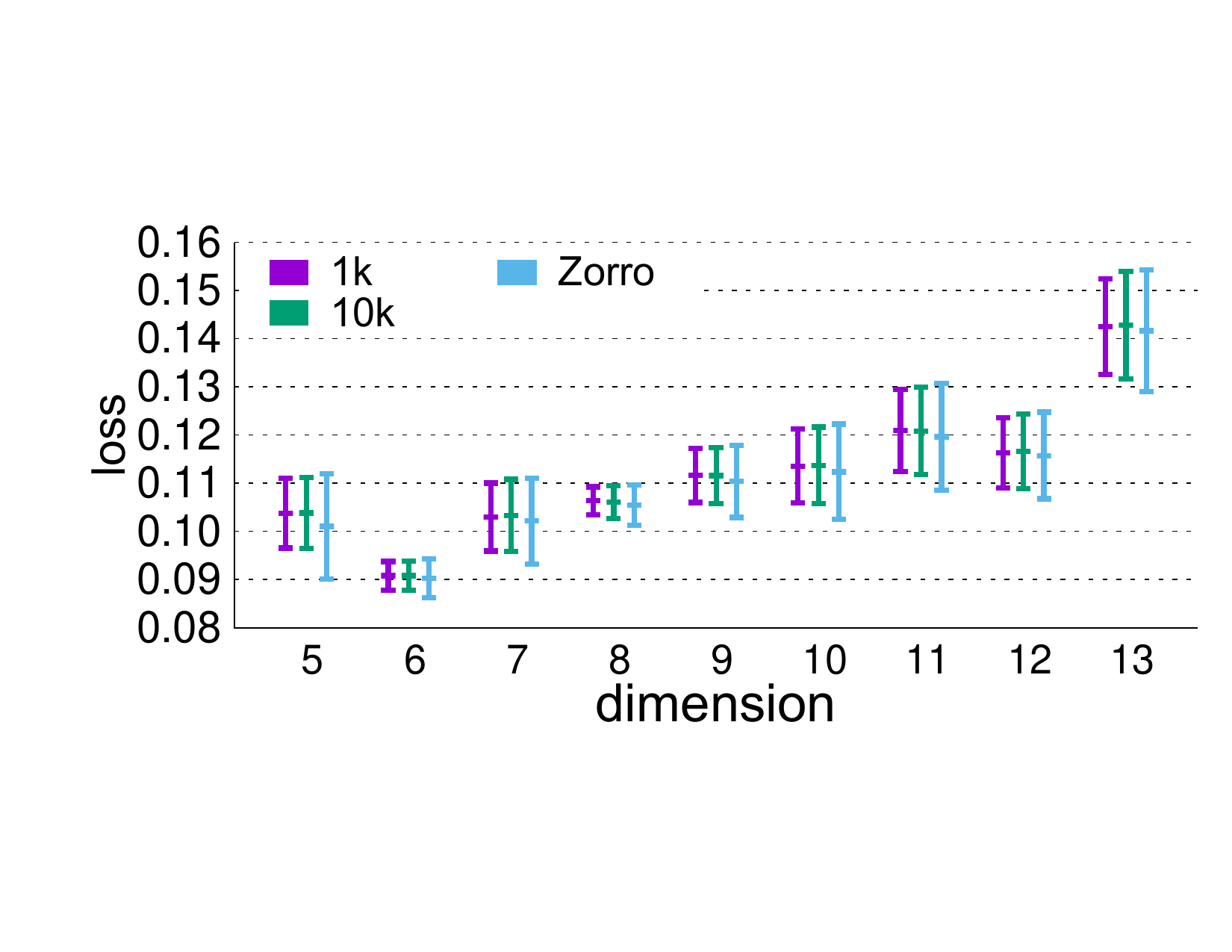}
        \caption{varying dimension}
        \label{fig:app-micro_3}
    \end{subfigure}
    \caption{Fixed point versus samplings}
\end{figure}

\begin{figure}\label{fig:micro-benchmark-with-gt}
    \centering
    \begin{subfigure}{0.4\linewidth}
    \includegraphics[width=\linewidth,trim={3cm 7cm 0 7cm}]{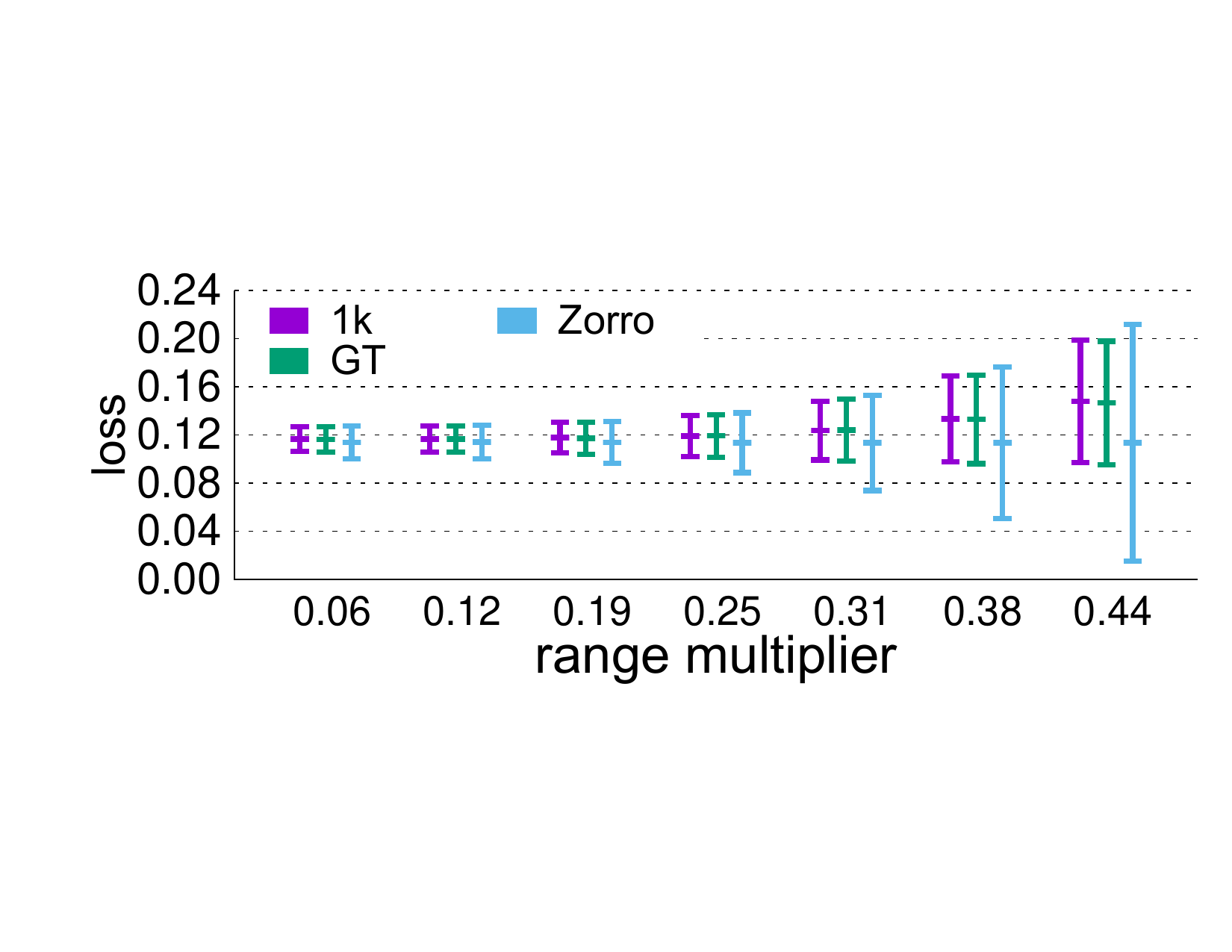}
        \caption{varying range size}
        \label{fig:app-micro_gt_2}
    \end{subfigure}
    \hspace{1.2cm}
    \begin{subfigure}{0.4\linewidth}
        \includegraphics[width=\linewidth,trim={3cm 7cm 0 7cm}]{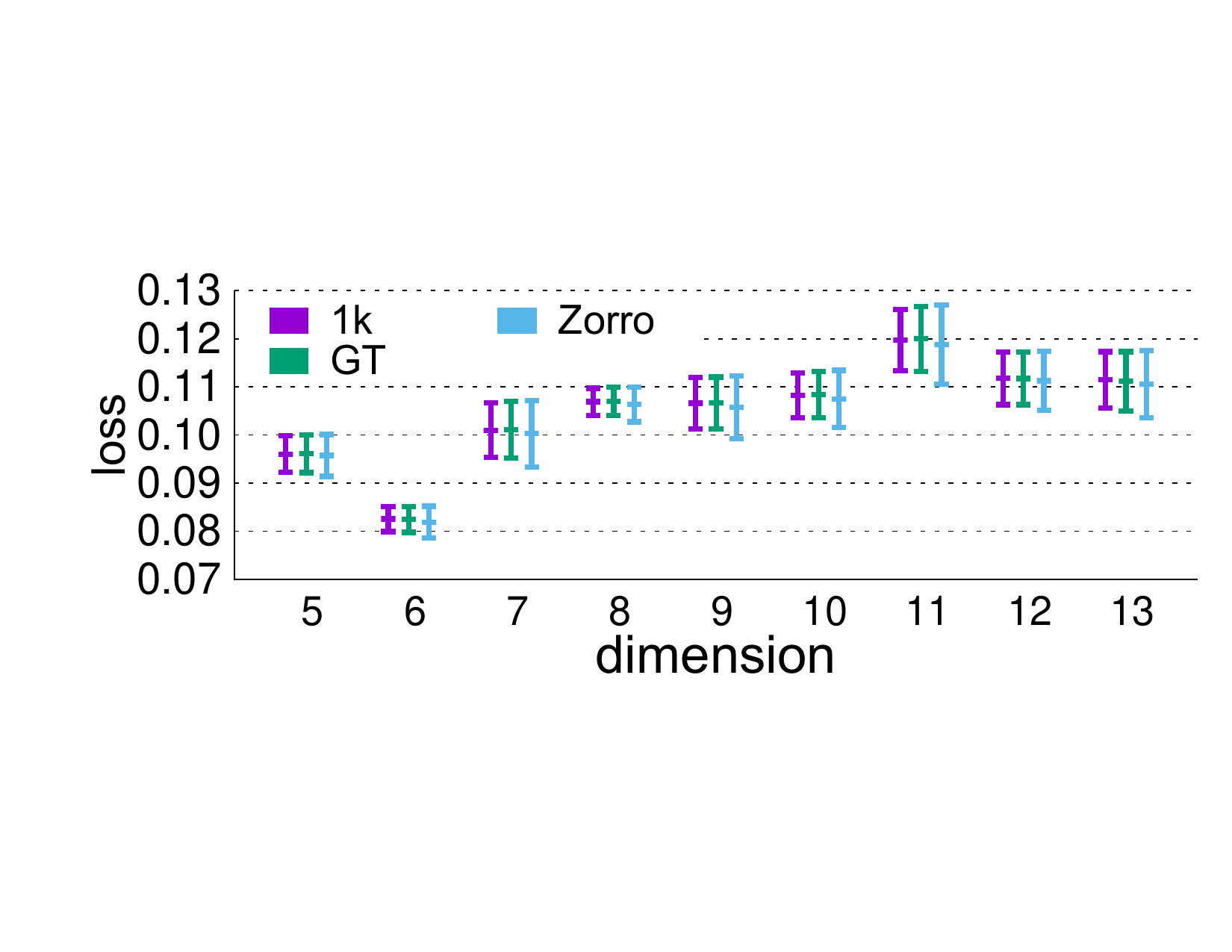}
        \caption{varying data dimension}
        \label{fig:app-micro_gt_3}
    \end{subfigure}
    \caption{The range of losses obtained by enumerating all possible worlds (GT), sampling 1000 possible worlds (1k), and \sys.}
    \label{fig:app-micro_gt_all}
\end{figure}

\subsection{Additional Experiments on Varying Regularization Coefficients}
\begin{figure}[t]\label{fig:mpg-regularization-lineplot}
    \centering
    % \begin{subfigure}{\linewidth}
        % \centering
    \includegraphics[width=\linewidth]{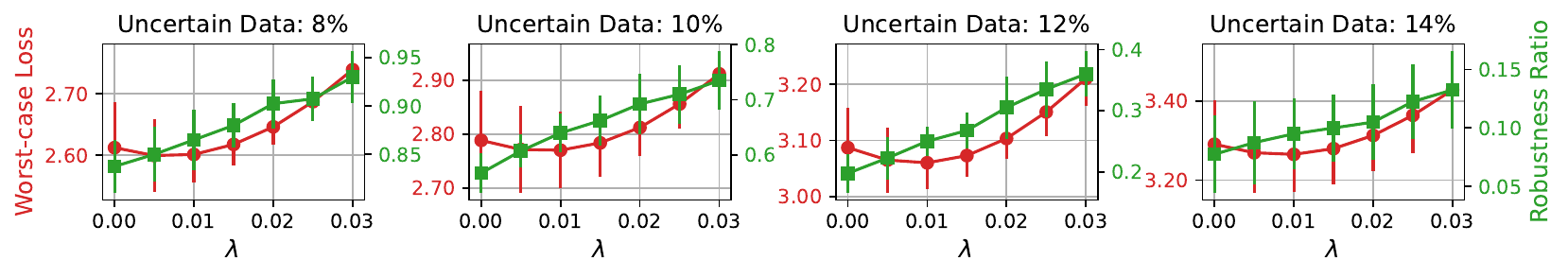}
    \caption{Results with uncertain training labels.}
    \label{fig:mpg-labels-regularization-lineplot}
    % \end{subfigure}
    % \begin{subfigure}{\linewidth}
    %     \centering
    %     \includegraphics[width=\linewidth]{figs/robustness/mpg-regularization-features-3sigma-lineplot.pdf}
    %     \caption{Results with uncertain training features.}
    %     % \label{fig:mpg-features-regularization-lineplot}
    % \end{subfigure}
    % \label{fig:reg}
    \caption{Robustness ratio (red y-axis) and worst-case test loss (green y-axis) vs. regularization coefficient $\RegularizationCoef$ (x-axis), with varying percentages of uncertain labels.}
\end{figure}
\Cref{fig:mpg-labels-regularization-lineplot} shows the effect of varying regularization coefficients on the worst-case loss and prediction robustness ratio.
To avoid much overlapping in the plots, we used 1 standard deviation as the error bar in them.
Similar as the conclusions in \Cref{sec:exp:micro-benchmark}, $\RegularizationCoef=0$ is often not the optimal regularization coefficient in terms of accuracy or robustness. 
In fact, a small, positive $\RegularizationCoef$ could result in higher accuracy and better robustness, and this optimal $\RegularizationCoef$ varies across different settings.
In general, higher data uncertainty requires higher $\RegularizationCoef$, which coincides with the intuition of using regularization in traditional settings of linear regressions, which aims to mitigate the effect of data noises or errors.

\section{Extended Related Work}\label{sec:ext-related-work}

%\BGI{Related work structure:
%  - ``basic abstract interpretation'' + zonotopes (control theory) + order reduction (short)
%  - learning robust models (robust statistics, differentiable abstract interpretation (ETH)\cite{mirman2018differentiable} ) + certain models (Ce Zhang, Schelter, Zhu? (GTech), Loris D'Antoni, Vechev) \cite{jordan-22-zdlnnv}\& robustness verification (abstract interpretation one technique) - focus on data quality
%  - predictive multiplicity (competing models, but not from multiple datasets) and dataset multiplicity (what we are doing~\cite{meyer-22-cdbrlr, meyer-21-crpdbdt})
 % - positioning inspired by ``Sources of Uncertainty in Machine Learning - A Statisticians View''
 % - uncertainty estimation in ML (conformal prediction)
%}

Predictive multiplicity has shown that a single dataset can produce multiple best-fitted models due to variations in training processes~\cite{bouthillier2021accounting,cooper2021hyperparameter,mehrer2020individual,shumailov2021manipulating,snoek2012practical, breiman1996heuristics,d2022underspecification,teney2022predicting,marx2020predictive} or modifications to training parameters such as random seeds, data ordering, and hyperparameters. For predictive multiplicity due to missing data, Khosravi et al.\cite{10.48550/arxiv.1903.01620} address the issue using a probabilistic method that computes expected predictions for all possible imputations. Our approach can be seen as an extreme case of multiple imputation~\cite{rubin-96-mimafy, rubin-78-mb}, where we consider all possible data variations rather than just a few plausible scenarios. Meyer et al.~\cite{meyer2023dataset} recently introduced the concept of dataset multiplicity, using possible worlds semantics to highlight how uncertain, biased, or noisy training data can lead to predictive multiplicity. However, their focus is on uncertainty in training labels, and they use interval arithmetic for over-approximation of prediction intervals for linear regression. In contrast, our approach handles arbitrary uncertainty in features and labels during both training and testing phases using zonotope-based learning for over-approximation of prediction ranges. We show that interval arithmetic fails to provide tight prediction ranges even for uncertainty in labels. % making it less comprehensive.
%In contrast, our approach provides a range and aims to compute the worst-case scenario.  Khosravi et al.\cite{10.48550/arxiv.1903.01620} introduce an approach to handle missing data using a probabilistic method that computes expected predictions.

Our work is broadly related to robust model learning, which focuses on ensuring robustness against data quality issues such as attacks~\cite{jia2021intrinsic,zhang2018training,steinhardt2017certified,rosenfeld2020certified,paudice2019label,muller2020data,jagielski2018manipulating}. Distributional robustness~\cite{ben2013robust,namkoong2016stochastic,ShafieezadehAbadeh2015DistributionallyRL} studies model reliability against varying data distributions, while robust statistics~\cite{diakonikolas2019recent} examines model performance under outliers or data errors. Our approach provides exact provable robustness guarantees by exploring the entire range of models under extreme dataset variations, which is crucial for individual-level predictions and reasoning about the robustness of specific parameters.

Our work is also related to robustness certification, which aims to certify the robustness of ML models' predictions against various perturbations and uncertainties in the data~\cite{huang2021statistical,singh2018fast,mirman2021robustness}. These efforts primarily focus on test-time robustness, validating whether the vicinity of an input receives the same prediction. In contrast, our work addresses the more complex problem of train-time robustness, considering the effects of training on a set of possible datasets. Closest to our approach is the work by Meyer et al.\cite{meyer-22-cdbrlr}, who address this problem for decision tree models, and Karlas et al.\cite{DBLP:journals/pvldb/KarlasLWGC0020}, who address the problem for nearest neighbor classifiers. Our work uses zonotopes to over-approximate viable prediction ranges for linear regression, which can be used to generate robustness certificates. While zonotopes have been used before for test-time robustness~\cite{mirman2018differentiable,du2021cert,gehr2018ai2,muller2023abstract}, our work is the first to use zonotopes for training-time robustness with iterative learning algorithms and provide convergence guarantees.

%%% Local Variables:
%%% mode: LaTeX
%%% TeX-master: "main"
%%% End:

% outdated?
% \input{app-pws.tex}
% \input{old-bound-preservation.tex}

%%% Local Variables:
%%% mode: LaTeX
%%% TeX-master: "main"
%%% End:

%%%%%%%%%%%%%%%%%%%%%%%%%%%%%%%%%%%%%%%%%%%%%%%%%%%%%%%%%%%%

\end{document}